\newcommand\loss{\mathcal{L}}
\newcommand\TV{\mathrm{TV}}
\def\E{\mathbb{E}}
\def\P{\mathbb{P}}
\def\R{\mathbb{R}}
\def\cA{\mathcal{A}}
\def\cB{\mathcal{B}}
\def\cC{\mathcal{C}}
\def\cD{\mathcal{D}}
\def\cF{\mathcal{F}}
\def\cI{\mathcal{I}}
\def\cL{\mathcal{L}}
\def\cP{\mathcal{P}}
\def\cR{\mathcal{R}}
\theoremstyle{plain}
\newtheorem{theorem}{Theorem}[section]
\newtheorem{lemma}[theorem]{Lemma}
\newtheorem{corollary}[theorem]{Corollary}
\theoremstyle{definition}
\newtheorem{definition}[theorem]{Definition}
\newtheorem{assumption}[theorem]{Assumption}
\newtheorem{remark}[theorem]{Remark}
\begin{document}

%\title{Minima stability for logistic regression}
%
\title{Does Flatness imply Generalization for Logistic Loss in Univariate Two-Layer ReLU Network?}
\author[1]{Dan Qiao}
%\author[3]{Peter Bartlett}
\author[2,1]{Yu-Xiang Wang}
\affil[1]{Department of Computer Science $\&$ Engineering, UC San Diego}
\affil[2]{Halıcıoğlu Data Science Institute, UC San Diego}
%\affil[3]{EECS,  UC Berkeley}
\affil[ ]{\texttt{\{d2qiao,yuxiangw\}@ucsd.edu}}
        
\date{}

\maketitle

\begin{abstract}
We consider the problem of generalization of arbitrarily overparameterized two-layer ReLU Neural Networks with univariate input. Recent work showed that under square loss, flat solutions (motivated by flat / stable minima and Edge of Stability phenomenon) provably cannot overfit, but it remains unclear whether the same phenomenon holds for logistic loss. This is a puzzling open problem because existing work on logistic loss shows that gradient descent with increasing step size converges to interpolating solutions (at infinity, for the margin-separable cases). In this paper, we prove that the \emph{flatness implied generalization} is more delicate under logistic loss. On the positive side, we show that flat solutions enjoy near-optimal generalization bounds within a region between the left-most and right-most \emph{uncertain} sets determined by each candidate solution. On the negative side, we show that there exist arbitrarily flat yet overfitting solutions at infinity that are (falsely) certain everywhere, thus certifying that flatness alone is insufficient for generalization in general.  We demonstrate the effects predicted by our theory in a well-controlled simulation study. 
\end{abstract}

\newpage
\tableofcontents
\newpage

\section{Introduction}

In modern deep learning, the optimization landscape is highly nonconvex and often filled with numerous local minima and saddle points. Therefore, it is natural and important to consider the role of the optimizer in guiding the model toward particular types of solutions among many possible low-loss regions \citep{zhang2021understanding}. One important phenomenon in this context is the implicit bias of minima stability, which refers to the tendency of optimization algorithms---particularly gradient-based methods---to prefer solutions that are more stable under perturbations \citep{wu2018sgd}. These stable solutions, often characterized by flatter curvature in the loss landscape, are less sensitive to small changes in input or model parameters. This implicit preference arises not from the explicit objective function, but from the dynamics of the training process itself, and has important implications for generalization performance even in the absence of global optimality \citep{qiao2024stable}.

The implicit bias of minima stability has been extensively studied in the context of squared loss, particularly through analysis conducted in function space \citep{mulayoff2021implicit,nacson2022implicit}. These studies have shown that gradient-based optimization algorithms tend to converge to predictors that exhibit greater stability---often corresponding to smoother or simpler functions---even in the absence of explicit regularization \citep{qiao2024stable}. This perspective has provided valuable insight into how model generalization arises from the training dynamics themselves. However, many practical classification problems are trained using logistic or cross-entropy loss rather than squared loss. Extending the analysis of implicit bias (in function space) to logistic loss is therefore both natural and necessary, as it enables a deeper understanding of how stability and generalization emerge in classification settings, where the geometry of the loss landscape and the behavior of the optimization algorithm can differ significantly from regression \citep{chizat2020implicit}.

\noindent\textbf{Our contributions.} In this paper, we study the implicit bias of minima stability under logistic regression. More concretely, we study whether flatness (resulting from stability) could still imply generalization, as under nonparametric regression. We answer the question from two sides. 
\begin{enumerate}[leftmargin=*]
    \item We construct an example (Theorem \ref{thm:example}) showing that interpolating solutions (at infinity) can be arbitrarily flat. This demonstrates that flatness alone does not guarantee generalization. 

    \item On the positive side, we provide several structural and statistical guarantees for the stable solutions that gradient descent converges to.
    We first prove a generalization gap bound depending on the scale of the learned coefficients (Theorem \ref{thm:multigengap}), ensuring that sparse functions provably do not overfit. Moreover, under a mild additional assumption on gradient descent finding ``weak generalization'' solutions, we obtain near-optimal excess risk rates for estimating first-order bounded variation functions within the convex hull of ``uncertain regions'' (Theorem \ref{thm:erbound}); see Figure \ref{fig:main}. Our bounds result from the regularity implied by flatness within ``uncertain regions'' (Theorem \ref{thm:bias}) and further analysis over ``uncertain regions'' (Theorem \ref{thm:tvb}), which together exclude the pathological interpolating solutions. Such analysis might be of independent interest.

    \item We perform comprehensive numerical experiments to support our theoretical predictions, verify key technical assumptions, and visualize both the structure of ReLU neural networks and the basis functions discovered by gradient descent under varying step sizes. These findings offer fresh insights into how gradient descent training effectively captures representations and naturally promotes implicit sparsity under logistic regression. 
\end{enumerate}

\begin{figure}[t]
    \centering
\includegraphics[width=0.27\textwidth]{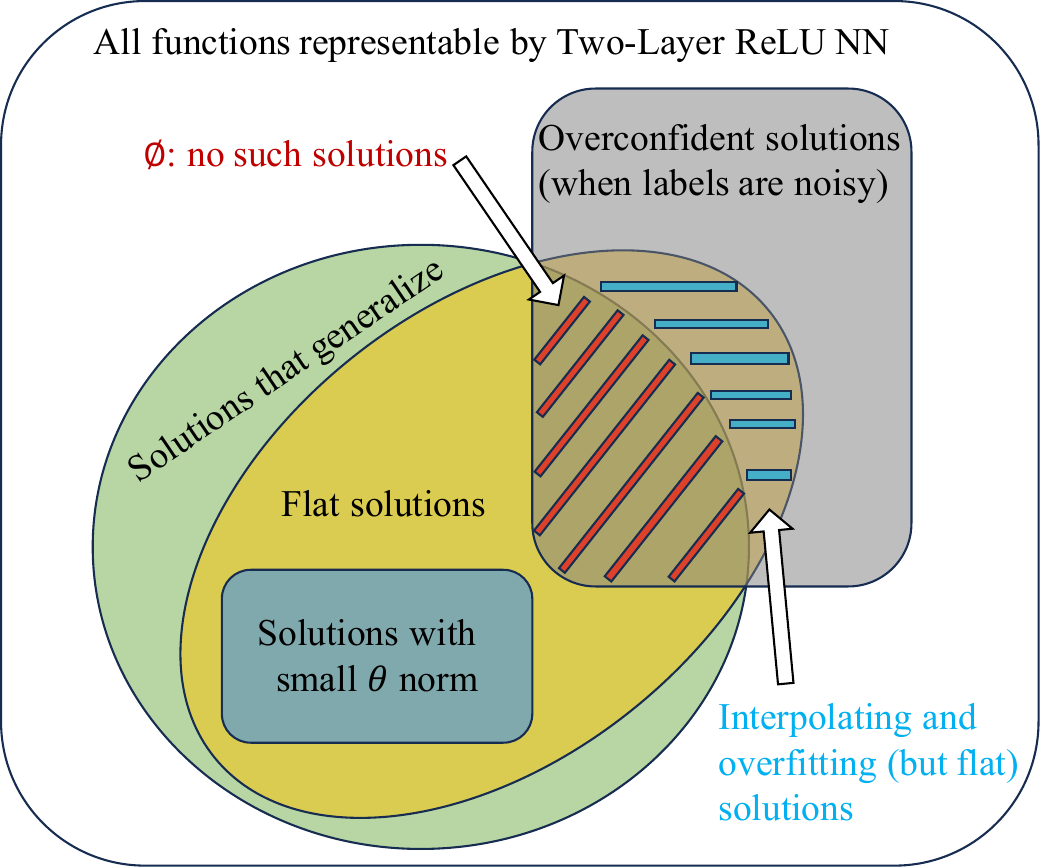}
\includegraphics[width=0.33\textwidth]{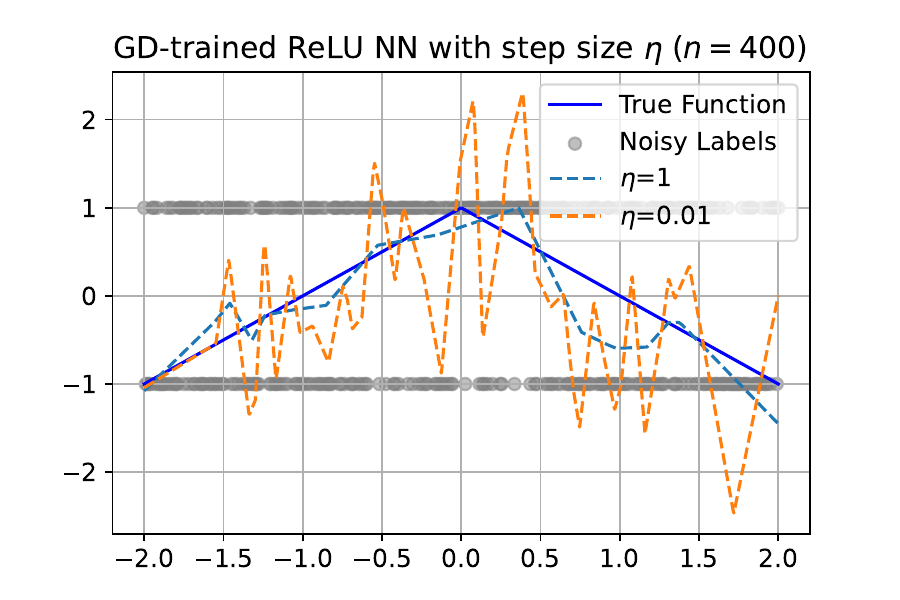}
\includegraphics[width=0.31\textwidth]{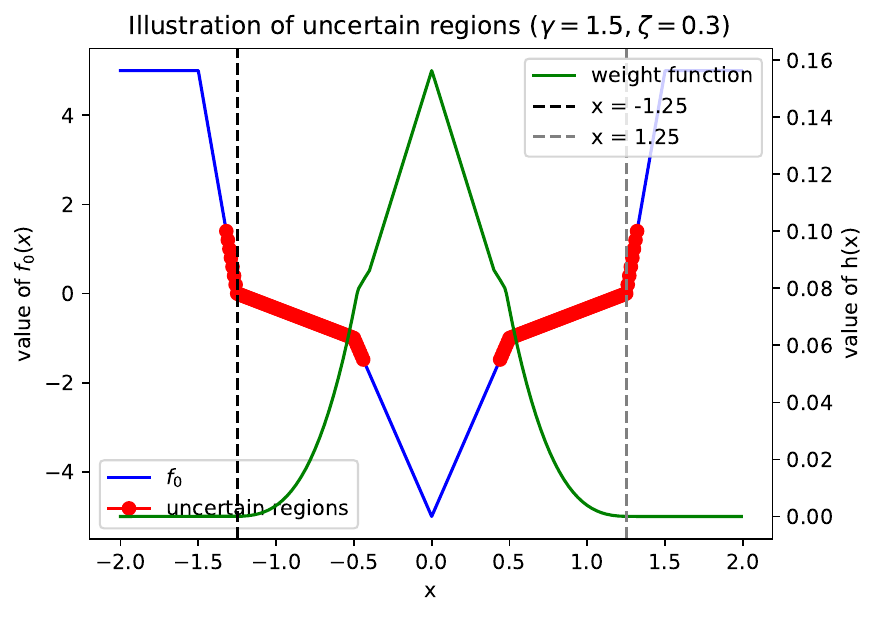}
\caption{The \textbf{left panel} summarizes our findings about flatness, generalization and interpolation in logistic regression. The \textbf{middle panel} compares the learned function by GD with large and small learning rates, the stable solution of large learning rate is simpler and smoother. The \textbf{right panel} provides an illustration for the ``uncertain region'' (red part) of the function and the weight function (the $h$ function in \eqref{equ:bias} and \eqref{equ:tvbmain}) supported in the interior of uncertain regions. Briefly speaking, a larger weight function poses stronger smoothness guarantee in the corresponding region. Here we plot the asymptotic weight function in Theorem \ref{thm:tvb} with $\gamma=1.5,\zeta=0.3$ and $\cP_x=\text{Unif}([-2,2])$. The weight function in Theorem \ref{thm:bias} can be derived identically by replacing $f_0$ with $f_\theta$.}
    \label{fig:main}
\end{figure}

\noindent\textbf{Technical Novelty.} We build on top of \citet{qiao2024stable} to analyze the generalization ability of trained NNs under logistic regression. However, different from the squared loss where all interpolating solutions are sharp, the relationship between flatness and smoothness is more delicate for logistic loss. To handle the increasing flatness when the prediction approaches infinity, we separate the data points with ``certain'' and ``uncertain'' predictions. Moreover, we manage to derive smoothness guarantees based on the part with ``uncertain’’ predictions, where the guarantee becomes weaker as the extent of uncertainty decreases. Meanwhile, based on a diminishing excess risk assumption, we take a novel approach to approximate the ``uncertain set’’ of the learned function by the uncertain regions of the ground-truth, thus achieving a prediction-independent weighted TV1 bound from flatness, which excludes those arbitrarily flat but overfitting solutions. As a result, together with a careful analysis on the metric entropy of bounded variation class, this allows us to amplify the diminishing excess risk to a nearly optimal rate, when restricted to the interior of uncertain regions. 

\subsection{Disclaimers and Limitations} It is worth noting that our results hold for the learned functions satisfying several conditions (\emph{e.g.} flatness, optimized), while we do not have guarantees whether GD could actually converge to such solutions. We acknowledge that initialization and dynamics of GD are important in training, while we focus on the global properties of the solutions instead. The (somewhat loose) connection to GD is established via local linear stability theory (see Appendix \ref{app:stabflat} for completeness) and via empirical observations of ``Edge of Stability'' phenomenon (which appears for logistic loss). It is important to note that our results on generalization do not depend on the optimization being successful, \emph{i.e.}, our generalization gap bound works for every candidate solution with low-loss-curvature. Our findings lie in an intermediate regime between classical learning theory which disregards optimization consideration and modern theoretical frameworks centered on optimization dynamics.

Admittedly, we focus on the univariate case and logistic loss, which may be perceived as restrictive. We remark that the problem is sufficiently interesting and challenging even for the univariate case. The challenges and our technical innovations needed for handling logistic loss are complementary to understanding the multivariate inputs. %Moreover, our results also apply to the special cases when there is an important 1D signal embedded in a high-dimensional input, \emph{e.g.}, \dq{what does this mean?} second layer NN of single-index type model for learning a non-parametric link function.
The restriction to logistic loss (instead of more general family of losses) is a deliberate choice too. Logistic loss is by far the most popular in practice, and it is well understood in various settings, which implies more concrete connections between our results and other theoretical work. Meanwhile, extending our results to more general loss functions is possible and promising (\emph{e.g.} in Appendix \ref{app:pfgpbound1} we extend the generalization bound to Lipschitz losses).

\subsection{Related Work}

\noindent\textbf{Implicit bias of gradient descent.} The implicit bias of gradient descent training of overparameterized NN is well-studied for both squared loss \citep{arora2019fine,mei2019mean,jin2023implicit} and logistic loss. Under logistic regression with (linearly) separable data, linear predictors are shown to converge to the direction
of the max-margin solution for small enough learning rates \citep{soudry2018implicit}. The result is extended to the maximal linearly separable subset of nonseparable data \citep{ji2019implicit}, wide two-layer neural networks \citep{chizat2020implicit}, more general loss functions \citep{schliserman2022stability} and multi-class classification \citep{ravi2024implicit}. However, none of the results implies generalization bounds when the labels are noisy, which is our focus.

\noindent\textbf{Implicit bias of minima stability.}
The most relevant works are the series of papers studying the implicit bias of minima stability \citep{ma2021linear,mulayoff2021implicit,nacson2022implicit,wu2023implicit,qiao2024stable}. Among the results above, \citet{mulayoff2021implicit} interprets the minima stability from the function space, relying on the minima interpolating the data. Later, \citet{qiao2024stable} considers noisy data and derives generalization bounds without the assumption of interpolation. However, all these works focus on the squared loss, which is inherently different from the widely applied logistic loss. We consider the implicit bias of minima stability from the view of function space under logistic regression, which raises substantial technical challenges. 

\noindent\textbf{Edge of Stability.} Our problem setup is motivated by the empirical observations of the ``edge of stability'' phenomenon \citep{cohen2020gradient}, where large learning rate training of NN finds solutions with Hessian's largest eigenvalue oscillating around $2/\eta$. A growing body of research attempts to understand such behavior of GD training \citep{kong2020stochasticity,arora2022understanding,ahn2022understanding,wang2022analyzing,damian2022self,zhu2022understanding,lyu2022understanding,ahn2023learning,kreisler2023gradient,even2023s,lu2023benign}. Our work is complementary in that we provide generalization bounds to the final solution GD stabilizes on no matter how GD gets there.

\noindent\textbf{Logistic regression with large step size.} Previous works \citep{wujing2023implicit,wu2024large,cai2024large,zhang2025minimax} show that GD with large learning rate could accelerate the convergence of logistic regression for linearly separable data. Our results are different in two aspects. First, we consider labels generated randomly, which can be arbitrarily nonseparable. In addition, we do not study optimization dynamics, but on the generalization ability of the final learned function.

\noindent\textbf{Flat minima and generalization.} Early empirical evidence \citep{hochreiter1997flat,keskar2017large} supports the hypothesis that ``flat solutions generalize better''. More recently \citet{wu2023implicit,qiao2024stable,ding2024flat} established theoretical generalization bounds for flat solutions in various settings. Among them, \citet{ding2024flat} focuses on matrix sensing models and neural networks with quadratic activations. Both \citet{wu2023implicit,ding2024flat} require interpolation. Moreover, they all use square loss, while we consider logistic loss, which reveals surprising new insight on this hypothesis.

%\vspace{-2mm}
\section{Problem Setup}\label{sec:setup}
%\vspace{-1mm}
In this section, we introduce our problem setup. Throughout the paper, $[n]=\{1,2,\cdots,n\}$. We use standard notation $O(\cdot),\Omega(\cdot),\widetilde{O}(\cdot)$ to absorb constants or logarithmic terms, respectively.

\noindent\textbf{Two-layer ReLU neural network.} The function class we consider is two-layer (\emph{i.e.} one-hidden-layer) univariate ReLU networks, defined as
\begin{equation}
    \mathcal{F} = \left\{f:\mathbb{R}\rightarrow\mathbb{R} \;\bigg|\; f(x)=\sum_{i=1}^{k} w_i^{(2)}\phi\left(w_i^{(1)}x+b_i^{(1)}\right)+b^{(2)} \right\},
\end{equation}
where the network consists of $k$ hidden neurons and $\phi(\cdot)$ denotes the ReLU activation function.

\noindent\textbf{Binary classification and (Nonlinear) logistic regression.} The training dataset is denoted by $\mathcal{D}=\{(x_i,y_i)\in \R \times \{-1,1\}, i\in[n]\}$. The univariate feature  $x_i$ is assumed to be supported by $[-x_{\max},x_{\max}]$ for some constant $x_{\max}>0$, while the label $y_i\in\{-1,1\}$. We consider the logistic regression problem with loss $\ell(f,(x,y))= \log(1+e^{-yf(x)})$. Then the training loss is $\cL(f) = \frac{1}{n}\sum_{i=1}^n\log\left(1+e^{-y_i f(x_i)}\right)$. We denote the parameters of $f$ by $\theta := [w^{(1)}_{1:k},b^{(1)}_{1:k},w^{(2)}_{1:k},b^{(2)}] \in \R^{3k+1}$. In addition, we let $\ell_i(\theta):=\ell(f_\theta,(x_i,y_i))$ and $\cL(\theta):=\frac{1}{n}\sum_{i\in[n]}\ell_i(\theta)$ as a short hand.

\noindent\textbf{Gradient descent.} Let $\theta_0$ be the initial parameter, we optimize the logistic loss above using Gradient descent (GD). At the $t$-th step, we update $\theta$ as $\theta_{t}=\theta_{t-1}-\eta \nabla\loss(\theta_{t-1}),$
where $\eta>0$ is the learning rate (a.k.a. step size). We leave the calculation of the gradient for logistic loss and two-layer ReLU networks to Appendix \ref{app:calculate}. 

\noindent\textbf{Minima stability.} The stability of gradient descent (GD) is often described by the local curvature of the training loss function, \emph{i.e.}, $\lambda_{\max }(\nabla^2 \cL(\theta))$ at $\theta$. Classical optimization theory teaches us to use step size $\eta < 2 / \sup_{\theta}\lambda_{\max }(\nabla^2 \cL(\theta))$ such that gradient descent steps do not diverge.  However, for overparameterized neural networks, there is no global upper bound of the smoothness. Instead, it has been observed that gradient descent with any fixed $\eta>0$ only discovers solutions such that $\lambda_{\max }(\nabla^2 \cL(\theta))\approx 2/\eta$. This phenomenon is known as the edge of stability (EoS) regime \citep{cohen2020gradient,damian2022self}. Moreover, the linear stability theory \citep{wu2018sgd,mulayoff2021implicit} also suggests that gradient descent can only stabilize around solutions $\theta$ where the Hessian's largest eigenvalue is smaller than $2/\eta$ in the local neighborhood of $\theta$. For completeness, we state the formal theorem and its proof in Appendix \ref{app:stabflat}.

This motivated \citet{qiao2024stable} to define and analyze the following set of functions.

\begin{equation}\label{eq:masterset}
\cF(\eta,\cD):=\left\{ f_\theta\; \middle| \; \lambda_{\max}(\nabla^2\cL(\theta)) \leq \frac{2}{\eta}  \right\}.
\end{equation}

Unless otherwise specified, a \emph{``stable solution''} or a \emph{``flat solution''} are used interchangeably in the rest of the paper to refer to an element of $\cF(\eta,\cD)$. Note that while we talk about ``solutions'', it does not require $f_\theta$ to be the (global or local) minimizer of the loss function $\cL$. $\cF(\eta,\cD)$ includes functions representable by any point $\theta$ in the low-curvature regions of $\cL$. Moreover, the definition automatically excludes possible non-differentiable points of $\cL$ --- a measure $0$ set in our problem. The exclusion is \emph{without loss of generality} if $f_\theta(x)$ is locally Lipschitz in $\theta$ (ReLU NN for any finite $\theta$ and $x$ is Lipschitz), since there is always a differentiable point $\theta'$ infinitesimally away from any non-differentiable $\theta$.

\noindent\textbf{Data generation.} We assume that the features $\{x_i\}_{i=1}^n$ are i.i.d. samples from some distribution $\cP_x$ supported on $[-x_{\max},x_{\max}]$. Meanwhile, we assume that there exists a ground-truth function $f_0:[-x_{\max},x_{\max}]\rightarrow \mathbb{R}$ such that conditional on the feature $x_i$, the label $y_i$ is sampled independently from the following distribution:
\begin{equation}\label{equ:dis}
    y=
    \begin{cases}
        1\;\;\;\;\;\;\;\text{with probability}\;p=\sigma(f_0(x))\\
        -1\;\;\;\;\text{with probability}\;1-p
    \end{cases}
\end{equation}
where $\sigma(t)=\frac{1}{1+e^{-t}}$ is the sigmoid function. Therefore, the populational (testing) loss is defined as $\bar{\cL}(f):=\E_{x\sim\cD}\E_{y\sim \cB(x)}\log\left(1+e^{-yf(x)}\right)$, where $x\sim\cD$ is short for uniformly sampling from the dataset (features) and $\cB(x)$ is short for the distribution \eqref{equ:dis} throughout the paper. According to direct calculation (details in Lemma \ref{lem:optimal}), the optimal prediction function is $f^\star=\arg\min_f \bar{\cL}(f)=f_0$. Our goal is to find a function $f_\theta$ using the dataset $\cD$ to minimize the Excess Risk:
\begin{equation}
    \text{ExcessRisk}(f)=\bar{\cL}(f)-\bar{\cL}(f_0).
\end{equation}
We consider the nonparametric logistic regression task where we do not require $f_0$ to be represented by a limited number of parameters. Instead, we impose certain regularity conditions. Specifically, our focus is on estimating target functions within the class of first-order bounded variation: $$f_0 \in \text{BV}^{(1)}(B,C_n) := \left\{ f:[-x_{\max},x_{\max}]\rightarrow \R \;\middle|\; \max_x |f(x)|\leq B,  \int_{-x_{\max}}^{x_{\max}} |f^{\prime\prime}(x)| d x  \leq C_n\right\}.$$
Here, $f^{\prime\prime}$ is the second-order weak derivative of $f$, and we introduce the shorthand $\TV^{(1)}(f):=\int_{-x_{\max}}^{x_{\max}} |f^{\prime\prime}(x)| dx$, which we refer to as the $\TV^{(1)}$ norm of $f$ throughout the paper. Some discussions of the historical significance and challenges in estimating BV functions can be found in Section 1.2 of \citet{hu2022voronoigram}. The complexity of such function class is further explored in Appendix \ref{app:bv1}.

\section{Main Results}

In this section, we provide answers to the question of whether flatness could imply generalization under logistic regression (i.e. whether functions in $\cF(\eta,\cD)$ could generalize). On one side, Section \ref{sec:negativeexample} first states the negative result that there exist arbitrarily flat yet overfitting solutions, which further shows that flatness alone does not ensure generalization. On the other side,  we consider two cases where flat solutions actually generalize, aiming to explain the results of our experiments.  First, in Section \ref{sec:gpbound1}, we show that solutions with a bounded parameter norm (weight decay) can not overfit by proving a generalization gap bound. Moreover, based on such ``weak generalization'' assumption, Section \ref{sec:erbound} derives a near optimal Excess Risk bound within the convex hull of ``uncertain regions'' by leveraging the implicit bias of minima stability.

\subsection{Flatness alone does not ensure generalization}\label{sec:negativeexample}
In this part, we provide a negative answer to the question whether flatness could imply generalization under logistic regression. We construct the following general example and show that for any choice of labels, we can construct a solution that is arbitrarily flat and overconfident at infinity.

\noindent\textbf{Example setup.} Let $x_{\max}=1$, then the interval becomes $[-1,1]$. The design $\{x_i\}_{i=1}^n$ is chosen to be $n$ equally spaced points in $[-1,1]$. For any label set $\{y_i\}_{i=1}^n\in\{-1,1\}^n$, let $\gamma_{\max}>0$ be some arbitrarily large constant, below we consider the two-layer ReLU NN $f$ where $f(x_i)y_i=\gamma_{\max}$ and $f$ is (almost) linear between any two neighboring design points. We will show that there exists some function $f=f_\theta$ satisfying the conditions above while $\lambda_{\max}(\nabla_\theta^2 \cL(\theta))$ is small. By saying ``almost linear'', we mean that we need to perturb the knots from the design points by a small $\epsilon$ to satisfy the twice differentiable assumption.

\begin{theorem}\label{thm:example}
    For the example above, there exists a choice of $\theta$ such that $f_\theta(x_i)=y_i\gamma_{\max}$ for all $i\in[n]$, $\cL(\theta)$ is twice differentiable w.r.t. $\theta$ and $\lambda_{\max}\left(\nabla_\theta^2 \cL(\theta)\right)\leq O\left((n^2 \gamma_{\max}+1) e^{-\gamma_{\max}}\right)$.
\end{theorem}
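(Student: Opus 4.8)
The plan is to construct $\theta$ explicitly so that $f_\theta$ is a piecewise-linear interpolant of the points $(x_i, y_i\gamma_{\max})$ with knots placed near the design points, and then bound the Hessian of $\cL$ at this particular $\theta$ directly. First I would realize such a piecewise-linear function with $k = O(n)$ ReLU neurons: place one breakpoint $b_j$ infinitesimally offset (by $\epsilon$) from each interior design point $x_j$, choose the outer-layer weights $w_j^{(2)}$ to match the prescribed slope changes, and choose $b^{(2)}$ and one initial slope to match the values; since consecutive values $y_i\gamma_{\max}$ differ by at most $2\gamma_{\max}$ over a spacing of order $1/n$, every slope is $O(n\gamma_{\max})$ and every first-layer weight can be taken of order $1$ with $|w_j^{(2)}| = O(n\gamma_{\max})$. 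The $\epsilon$-perturbation of the knots off the $x_i$ is what makes $\cL$ twice differentiable in $\theta$: at each $x_i$ none of the ReLU kinks is active, so $f_\theta$ is locally smooth (in fact linear) in $\theta$ in a neighborhood, hence $\ell_i(\theta)$ is $C^2$.

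Next I would compute $\nabla_\theta^2 \cL(\theta)$ using the standard decomposition for a composition of a scalar loss with a network: for each sample,
\begin{equation}
\nabla_\theta^2 \ell_i(\theta) = \ell_i''(\gamma_i)\,\nabla_\theta f_\theta(x_i)\,\nabla_\theta f_\theta(x_i)^T + \ell_i'(\gamma_i)\,\nabla_\theta^2 f_\theta(x_i),
\end{equation}
where $\gamma_i := y_i f_\theta(x_i) = \gamma_{\max}$ and, with slight abuse, $\ell_i(\gamma) = \log(1+e^{-\gamma})$. The key point is that both $\ell_i'(\gamma_{\max})$ and $\ell_i''(\gamma_{\max})$ are $\Theta(e^{-\gamma_{\max}})$: since the margin is uniformly $\gamma_{\max}$, the logistic loss is evaluated deep in its flat tail at every data point, so the curvature contributed by each sample is exponentially damped. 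It then remains to bound the two network-dependent factors $\|\nabla_\theta f_\theta(x_i)\|^2$ and $\|\nabla_\theta^2 f_\theta(x_i)\|_{\mathrm{op}}$ in terms of $n$ and $\gamma_{\max}$; because only $O(1)$ neurons are ``active'' at any fixed $x_i$ (the ones whose kink lies to the left of $x_i$ contribute, but the gradient entries are controlled by $|x_i| \le 1$, $|w_j^{(2)}| = O(n\gamma_{\max})$, and $|w_j^{(1)}| = O(1)$), one gets $\|\nabla_\theta f_\theta(x_i)\|^2 = O(n^2\gamma_{\max}^2)$ — or more carefully $O(n^2\gamma_{\max} + 1)$ after tracking which factors of $\gamma_{\max}$ actually appear — and $\|\nabla_\theta^2 f_\theta(x_i)\|_{\mathrm{op}} = O(n\gamma_{\max} + 1)$. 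Summing over $i$, using $\lambda_{\max}(\nabla^2\cL) \le \frac{1}{n}\sum_i \|\nabla^2 \ell_i\|_{\mathrm{op}}$, and pulling out the $e^{-\gamma_{\max}}$ factor yields $\lambda_{\max}(\nabla_\theta^2\cL(\theta)) = O((n^2\gamma_{\max}+1)e^{-\gamma_{\max}})$.

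The main obstacle I anticipate is the bookkeeping in the second step: getting the \emph{exact} polynomial dependence on $n$ and $\gamma_{\max}$ claimed in the statement (rather than a cruder bound like $O(n^4\gamma_{\max}^2 e^{-\gamma_{\max}})$) requires a careful accounting of (i) how many ReLU units are simultaneously active and how their gradients add up across the parameter blocks $w^{(1)}, b^{(1)}, w^{(2)}, b^{(2)}$, and (ii) the precise magnitudes one is free to choose for $w_j^{(1)}$ versus $w_j^{(2)}$ — there is a rescaling freedom $w_j^{(1)} \mapsto c\, w_j^{(1)}$, $w_j^{(2)}\mapsto w_j^{(2)}/c$ that changes individual gradient norms, and the construction should exploit it to balance the contributions. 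A secondary technical point is confirming that the $\epsilon$-perturbation can be taken small enough that it changes the interpolated values and the Hessian bound only negligibly (e.g. by absorbing it into the $O(\cdot)$), which is routine but should be stated. Everything else — existence of the interpolating ReLU representation, twice-differentiability away from the kinks, the Hessian decomposition — is standard.
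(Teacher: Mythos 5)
Your overall strategy coincides with the paper's: build a near-interpolating piecewise-linear ReLU network with knots $\epsilon$-perturbed off the design points (so $\cL$ is twice differentiable at $\theta$), use the per-sample decomposition $\nabla_\theta^2\ell_i = \ell''\,\nabla_\theta f\nabla_\theta f^T + \ell'\,\nabla_\theta^2 f$, note that both loss derivatives at margin $\gamma_{\max}$ are $O(e^{-\gamma_{\max}})$, and then bound the network-dependent factors. However, the quantitative step that actually produces the claimed rate is not established in your write-up, and as written it would fail. First, the claim that only $O(1)$ neurons are active at any fixed $x_i$ is false: each ReLU is active on a half-line, so at the extreme design points on the order of $n$ units are active for any construction with $\sim n$ knots. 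Second, with your stated scaling $|w_j^{(1)}|=O(1)$, $|w_j^{(2)}|=O(n\gamma_{\max})$, the gradient coordinates in the $w^{(1)}$ and $b^{(1)}$ blocks are each $O(n\gamma_{\max})$, so summing squares over $\Omega(n)$ active units gives $\|\nabla_\theta f_\theta(x_i)\|_2^2 = O(n^3\gamma_{\max}^2)$, hence only $O(n^3\gamma_{\max}^2 e^{-\gamma_{\max}})$ — not the stated $O((n^2\gamma_{\max}+1)e^{-\gamma_{\max}})$. Your parenthetical ``more carefully $O(n^2\gamma_{\max}+1)$ after tracking which factors of $\gamma_{\max}$ actually appear'' is exactly the unproved step.

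The missing idea, which the paper implements, is to fix the layer balance rather than leave it as a freedom to be exploited later: choose $w_i^{(2)}=\pm\sqrt{n\gamma_{\max}}$ for every active unit, which (since the slope changes $|w_i^{(1)}w_i^{(2)}|$ are $O(n\gamma_{\max})$ and the activation thresholds lie in $[-1,1]$) forces $|w_i^{(1)}|=O(\sqrt{n\gamma_{\max}})$. Then every coordinate of $\nabla_\theta f_\theta(x_j)$ is $O(\sqrt{n\gamma_{\max}})$, and summing over at most $3n+1$ coordinates gives $\max_j\|\nabla_\theta f_\theta(x_j)\|_2^2 = O(n^2\gamma_{\max}+1)$, which yields the theorem. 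You name this rescaling symmetry as the ``main obstacle'' but do not resolve it, so the proof of the stated bound is incomplete. A smaller inaccuracy: you bound $\|\nabla_\theta^2 f_\theta(x_i)\|_{\mathrm{op}}$ by $O(n\gamma_{\max}+1)$, whereas at a twice-differentiable point the only nonzero entries are mixed partials such as $x\,\mathds{1}(w_i^{(1)}x+b_i^{(1)}>0)$, so this operator norm is at most $2\max\{x_{\max},1\}$ independently of the weights (the paper's Lemma \ref{lem:operatornorm}); your overestimate happens not to hurt the final rate, but it signals that the second-derivative structure was not actually computed.
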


The proof of Theorem \ref{thm:example} is deferred to Appendix \ref{app:example}. Since $\lim_{\gamma\rightarrow\infty} \gamma e^{-\gamma}=0$, Theorem \ref{thm:example} shows that as the learned function tends to interpolate (\emph{i.e.} the training loss gets close to 0), the landscape of the parameter is actually becoming flat. In other words, the solution that predicts $\infty$ for $y=1$ and $-\infty$ for $y=-1$ is an arbitrarily flat global minimum with $0$-training loss, while these solutions
are heavily overfitting if the labels are noisy. This is inherently different from nonparametric regression with squared loss \citep{qiao2024stable}, where all interpolating solutions must be sharp.

Our result is different from previous works studying the relationship between flatness and generalization. Previous results \citep{dinh2017sharp} show that a flat solution can be reparameterized to an arbitrarily sharp solution via multiplying a positive constant to the first layer weight and dividing the second layer weight by the same constant. In this way, they conclude that \emph{sharp solutions could generalize}. However, this does not imply that \emph{flat solutions may not generalize}, which is what Theorem~\ref{thm:example} is about. Actually, the reparameterization step could not transfer a sharp solution to an arbitrarily flat solution, since the flatness ($\lambda_{\max}(\text{Hessian})$) would reach its minimum value when the first layer weight equals the second layer weight due to AM-GM inequality. Admittedly,  \emph{flat solutions may not generalize} has been empirically observed by \citet{neyshabur2017exploring} for cross-entropy loss under a different notion of sharpness. Theorem~\ref{thm:example} can be viewed as a formalization of such ``folklore''. Therefore, our results add new insight to the relationship between flatness and generalization.

Lastly, although flatness alone could not ensure generalization, in our experiments GD with a constant step size does not actually converge to such interpolating solutions, especially when the number of samples $n$ is large (see Figure \ref{fig:highlights}). To explain such phenomenon, we will show in the following sections that under some mild assumptions supported by our experiments, flatness could imply generalization within certain regions of the data support.

\subsection{Generalization by Weight Decay}\label{sec:gpbound1}

In this section, we will show that solutions with weight decay on parameters provably could not overfit. Recall that the populational loss is defined as $\bar{\cL}(f)=\E_{x\sim\cD}\E_{y\sim \cB(x)}\log\left(1+e^{-yf(x)}\right)$. Theorem \ref{thm:multigengap} states that the generalization gap $\left|\cL(f)-\bar{\cL}(f)\right|$ can be bounded by the $\ell_2$ norm of parameters. 

\begin{theorem}\label{thm:multigengap}
    For any constant $B>0$, with probability $1-\delta$, for any two-layer ReLU NN $f=f_\theta$ supported on $[-x_{\max},x_{\max}]$ such that $\|f\|_\infty\leq B$, it holds that
     \begin{equation}\label{equ:gpbyell2norm}
        \left|\cL(f)-\bar{\cL}(f)\right|\leq O\left(\left[\frac{\log(1+e^B)^4 \cdot \max\{\|\theta\|_2^2,1\}\cdot x_{\max}\log\left(\frac{2\max\{\|\theta\|_2^2,1\}}{\delta}\right)^2}{n^2}\right]^{\frac{1}{5}}\right),
    \end{equation}
    where the randomness is over the data generation process.
\end{theorem}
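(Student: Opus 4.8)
The plan is to observe that, on the interval $[-x_{\max},x_{\max}]$, an $\ell_2$-norm-bounded two-layer ReLU network is automatically a function of bounded second variation, and then to run a metric-entropy-plus-peeling argument. A useful simplification is that $\bar{\cL}$ is an average over the \emph{empirical} feature distribution, so the generalization gap $\cL(f)-\bar{\cL}(f)=\frac1n\sum_{i}\big(\ell_i(f)-\E_{y\sim\cB(x_i)}\ell_i(f)\big)$ involves only the (independent) label noise; conditioning on the realized features $x_{1:n}$, there is nothing random about the design to control.

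First I would reduce to the $\TV^{(1)}$ class. For $f=f_\theta$ the weak second derivative is a sum of point masses $w_i^{(2)}|w_i^{(1)}|\,\delta_{t_i}$ at the knots $t_i=-b_i^{(1)}/w_i^{(1)}$ (neurons with $w_i^{(1)}=0$ are affine and contribute nothing), so by AM--GM
$$\TV^{(1)}(f_\theta)\;\le\;\sum_{i=1}^k|w_i^{(2)}|\,|w_i^{(1)}|\;\le\;\tfrac12\sum_{i=1}^k\big((w_i^{(2)})^2+(w_i^{(1)})^2\big)\;\le\;\tfrac12\|\theta\|_2^2 ,$$
hence $f_\theta\in\mathrm{BV}^{(1)}(B,\tfrac12\|\theta\|_2^2)$ whenever $\|f_\theta\|_\infty\le B$. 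It therefore suffices to bound $\sup_{f\in\mathrm{BV}^{(1)}(B,V)}|\cL(f)-\bar{\cL}(f)|$ uniformly for each $V$ on a dyadic grid.

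Next, fix $V$ and build a uniform-convergence bound. The logistic loss is bounded by $\log(1+e^B)$ and $1$-Lipschitz in $f(x)$ (its derivative is $\sigma(-yf(x))\in[0,1]$); so for a fixed $g$ with $\|g\|_\infty\le B$, Hoeffding over the independent label noise gives $|\cL(g)-\bar{\cL}(g)|\lesssim\log(1+e^B)\sqrt{\log(1/\delta')/n}$ with probability $1-\delta'$, and a $1$-Lipschitz loss lets an $\epsilon$-net of $\mathrm{BV}^{(1)}(B,V)$ in $\|\cdot\|_\infty$ transfer this to all of $\mathrm{BV}^{(1)}(B,V)$ at the cost of an extra $2\epsilon$. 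Plugging in the first-order BV metric-entropy estimate $\log N(\epsilon,\mathrm{BV}^{(1)}(B,V),\|\cdot\|_\infty)\lesssim\sqrt{Vx_{\max}/\epsilon}$ (Appendix \ref{app:bv1}, cf. \citet{hu2022voronoigram}) and a union bound over the net yields, with probability $\ge1-\delta_V$,
$$\sup_{f\in\mathrm{BV}^{(1)}(B,V)}\big|\cL(f)-\bar{\cL}(f)\big|\;\lesssim\;\log(1+e^B)\sqrt{\frac{\sqrt{Vx_{\max}/\epsilon}+\log(1/\delta_V)}{n}}\;+\;\epsilon .$$
Merging the two terms under the square root via $a+b\le2ab$ and optimizing $\epsilon$ (balancing $\epsilon$ against $\log(1+e^B)(Vx_{\max})^{1/4}\log(1/\delta_V)^{1/2}\epsilon^{-1/4}n^{-1/2}$) produces a bound of order $\big(\log(1+e^B)^4\,Vx_{\max}\,\log(1/\delta_V)^2/n^2\big)^{1/5}$. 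Finally I would peel: apply this with $V_j=2^j$ and failure budget $\delta_{V_j}=\delta 2^{-(j+1)}$, so $\sum_j\delta_{V_j}=\delta$; any $f_\theta$ with $\|f_\theta\|_\infty\le B$ lies in $\mathrm{BV}^{(1)}(B,V_j)$ for the least $j$ with $2^j\ge\max\{\tfrac12\|\theta\|_2^2,1\}$, for which $V_j\le\max\{\|\theta\|_2^2,2\}$ and $\log(1/\delta_{V_j})\lesssim\log(2\max\{\|\theta\|_2^2,1\}/\delta)$; since the bound holds for every feature realization it holds unconditionally, giving \eqref{equ:gpbyell2norm}.

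The routine heart of this is standard; the steps that need care are (i) quoting the $\mathrm{BV}^{(1)}$ metric-entropy bound in precisely the right form --- it is the exponent $1/2$ on $V/\epsilon$, together with the dependence on $x_{\max}$ (and on $B$, if tracked), that is responsible for the unusual $n^{-2/5}$ rate and for the $\|\theta\|_2^{2/5}$, $x_{\max}^{1/5}$, $\log(1+e^B)^{4/5}$ scalings --- and (ii) arranging the $\epsilon$-optimization and dyadic peeling so that the constants and logarithmic factors come out exactly as $\max\{\|\theta\|_2^2,1\}$ and $\log(2\max\{\|\theta\|_2^2,1\}/\delta)^2$ rather than merely of that order. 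I expect (i), i.e.\ controlling the covering numbers of $\mathrm{BV}^{(1)}$ with the right constants, to be the main technical obstacle.
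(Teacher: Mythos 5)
Your proposal is correct and follows essentially the same route as the paper's own proof (Lemmas \ref{lem:l2normtv}--\ref{lem:multigengap} and Corollary \ref{cor:multigengap}): bound $\TV^{(1)}(f_\theta)\le\tfrac12\|\theta\|_2^2$ by AM--GM, cover the resulting class in $\|\cdot\|_\infty$ via the rescaled BV$^{(1)}$ entropy bound $\lesssim\sqrt{Vx_{\max}/\epsilon}$, combine Hoeffding over the label noise with a union bound over the net and the $1$-Lipschitzness of the logistic loss, optimize $\epsilon$ to get the $n^{-2/5}$ rate, and finish with dyadic peeling over $V$ with geometrically split failure probability. The only cosmetic difference is that the paper parameterizes the covered class through bounds on $|f(0)|$ and $|f'(0)|$ deduced from $\|\theta\|_2$ (Lemmas \ref{lem:l2normtv}--\ref{lem:entropy3}) rather than using the sup-norm bound $B$ directly, which is immaterial.
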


Theorem \ref{thm:multigengap}, whose proof is deferred to Appendix \ref{app:pfgpbound1}, claims that if the scale of $\theta$ is small, then the generalization gap of $f_\theta$ will also be small. We first point out that the usage of $\|\theta\|_2$ is mainly for the ease of presentation, and the $\|\theta\|_2^2$ in \eqref{equ:gpbyell2norm} can be replaced by the scale of coefficients only $\sum_{i:b_i^{(1)}/w_i^{(1)}\in[-x_{\max},x_{\max}]}\left|w_i^{(1)}w_i^{(2)}\right|$, which upper bounds the TV$^{(1)}$ norm of $f_\theta$ over $[-x_{\max},x_{\max}]$. The other parts of $\|\theta\|_2$ (\emph{i.e.} the biases) only contribute to a logarithmic term. The rigorous analysis is deferred to Appendix \ref{app:refineresult}. Therefore, under various cases (\emph{e.g.} training with weight decay \citep{zhang2022deep} or large learning rate) where the $\|\theta\|_2$ or $\sum_{i:b_i^{(1)}/w_i^{(1)}\in[-x_{\max},x_{\max}]}\left|w_i^{(1)}w_i^{(2)}\right|$ grows at a rate of order $o(n)$, Theorem \ref{thm:multigengap} implies a diminishing generalization gap bound as $n\rightarrow\infty$.

Meanwhile, under the case where the $\|\theta\|_2$ or $\sum_{i:b_i^{(1)}/w_i^{(1)}\in[-x_{\max},x_{\max}]}\left|w_i^{(1)}w_i^{(2)}\right|$ grows at a rate of order $o(n)$, if the optimization is effective such that the training loss $\cL(f_\theta)$ is smaller than $\cL(f_0)$ (this is supported by our experiments), we can further derive a diminishing excess risk bound for $f_\theta$ of the same order as \eqref{equ:gpbyell2norm}. The assumptions and excess risk bounds are formalized in Appendix \ref{app:crudeexcessrisk}. Finally, Theorem \ref{thm:multigengap} focuses on the setting with logistic loss and fixed design (features). The analysis naturally extends to more general loss function that is Lipschitz continuous w.r.t. $f$ (Lemma \ref{lem:multigengap}) or the statistical learning setting where the data $(x_i,y_i)$ are i.i.d. samples from some joint distribution $\cP$ with populational risk defined as $\cR(f):=\E_{(x,y)\sim\cP}\ell(f,(x,y))$ (Corollary \ref{cor:lrclose}).

\subsection{Generalization by Flatness: Weak to Strong Generalization within ``uncertain regions''}\label{sec:erbound}

In this part, we show that as long as the learned function is a consistent estimator (i.e. the excess risk would converge to $0$ as the number of samples $n$ converges to infinity), then the implicit bias of minima stability could \emph{enhance} such (arbitrarily slow) diminishing rate to a near optimal one within certain regions of the data support. 

As a concrete example, we state the following corollary that obtains the consistent estimator by assuming a mild bound on $\|\theta\|_2 = o(n)$. More general theorem statements with complete technical details are deferred to Section \ref{sec:proofsketch} -- Section \ref{sec:last}. 

\begin{corollary} [Corollary of Theorem \ref{thm:erbound}]\label{cor:main}
    Let $\cD$ be a labeled dataset of size $n$ where $x_1,...,x_n\sim \cP$ i.i.d with $\cP$ supported on $[-x_{\max},x_{\max}]$ and $y_i$ generated according to \eqref{equ:dis}
     for a ground truth function $f_0\in\mathrm{BV}^{(1)}\left(B,O\left(\frac{1}{\eta}+x_{\max}(1+B)\right)\right)$. Let $\cI_0\subset [-x_{\max},x_{\max}]$ be a $\gamma$-``active interval'' associated with $f_0$ as defined in Theorem~\ref{thm:tvb}.  Let $f = f_\theta \in \cF(\eta,\cD)$ satisfy that (1) the training loss $\cL$ is twice differentiable at $\theta$; (2) $\|f\|_\infty\leq B$; and (3) $\|\theta\|_2 = O(n^{1-\alpha})$ for an $\alpha>0$.  Then there exists a sequence of intervals $\cI_n\subset [-x_{\max},x_{\max}]$ satisfying that  $\cI_n \rightarrow \cI_0$ as $n\rightarrow \infty$ such that if $f$ is ``optimized'' over $\cI_n$, i.e., $\sum_{x_i\in\cI_n}\ell(f,(x_i,y_i))\leq\sum_{x_i\in\cI_n}\ell(f_0,(x_i,y_i))$, then with probability $1-\delta$,
    \begin{equation}
        \mathrm{ExcessRisk}_{\cI_n}(f)=\bar{\cL}_{\cI_n}(f)-\bar{\cL}_{\cI_n}(f_0)\leq O\left(\left[\frac{(B+1)^4 \left(\frac{x_{\max}}{\eta}+x_{\max}^2(1+B)\right)\log(1/\delta)^2}{n_{\cI_n}^2}\right]^{\frac{1}{5}}\right),
    \end{equation}
    where $\bar{\cL}_{\cI_n}(f)=\frac{1}{n_{\cI_n}}\sum_{x_i\in\cI_n}\E_{y_i^\prime\sim \cB(x_i)}\ell(f,(x_i,y_i^\prime))$, $n_{\cI_n}$ is the number of data points $x_i$ in $\cI_n$.
\end{corollary}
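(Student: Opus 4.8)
\textbf{Proof proposal for Corollary~\ref{cor:main}.}
The plan is to derive this as a consequence of the main Excess Risk bound (Theorem~\ref{thm:erbound}) by checking that the three hypotheses on $f = f_\theta$ supply exactly what that theorem requires, and by using the growth rate $\|\theta\|_2 = O(n^{1-\alpha})$ to produce the ``weak generalization'' / consistent estimator input that Theorem~\ref{thm:erbound} needs as a starting point. First I would invoke Theorem~\ref{thm:multigengap} with the given uniform bound $\|f\|_\infty \le B$: this yields a generalization gap bound of order $\widetilde{O}((\|\theta\|_2^2 / n^2)^{1/5})$, which under $\|\theta\|_2 = O(n^{1-\alpha})$ is $\widetilde O(n^{-2\alpha/5})$, hence diminishing. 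Combined with hypothesis (1) that $\cL$ is twice differentiable at $\theta$ (so $f_\theta \in \cF(\eta,\cD)$ is a genuine stable solution) and an analogue of the ``optimized'' condition at the global level, this certifies that $f$ is a consistent estimator of $f_0$ in the sense required by the weak-to-strong amplification argument.

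Next I would localize to the interval $\cI_n$. The key structural facts are Theorem~\ref{thm:bias} and Theorem~\ref{thm:tvb}: flatness of $\theta$ together with $\|f\|_\infty\le B$ forces $f_\theta$ to have controlled $\TV^{(1)}$ norm weighted by the $h$-function supported in the interior of the uncertain regions, and since $f_0 \in \mathrm{BV}^{(1)}(B, O(1/\eta + x_{\max}(1+B)))$ one gets that the uncertain regions of $f_\theta$ asymptotically contain a neighborhood of the $\gamma$-active interval $\cI_0$ of $f_0$. This is what produces the sequence $\cI_n \to \cI_0$: $\cI_n$ is chosen as a slightly shrunk sub-interval of the convex hull of $f_\theta$'s uncertain set that still exhausts $\cI_0$ in the limit, and on which the $h$-weight is bounded below. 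On this $\cI_n$, the weighted TV bound becomes an honest (unweighted) $\TV^{(1)}(f_\theta|_{\cI_n}) = O(1/\eta + x_{\max}(1+B))$ bound, so $f_\theta$ restricted to $\cI_n$ lies (up to constants) in the same $\mathrm{BV}^{(1)}$ class as $f_0$.

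With both $f_\theta|_{\cI_n}$ and $f_0|_{\cI_n}$ in a fixed $\mathrm{BV}^{(1)}$ ball, I would then run the metric-entropy / localized-complexity argument for bounded-variation classes (the content of Theorem~\ref{thm:erbound}, drawing on the entropy estimates in Appendix~\ref{app:bv1}): the ``optimized over $\cI_n$'' hypothesis, $\sum_{x_i\in\cI_n}\ell(f,(x_i,y_i)) \le \sum_{x_i\in\cI_n}\ell(f_0,(x_i,y_i))$, is the empirical-risk comparison inequality that, fed through a one-sided uniform deviation bound over the $\TV^{(1)}$-ball localized at radius matching the diminishing weak rate, amplifies to the stated rate $O(((B+1)^4(x_{\max}/\eta + x_{\max}^2(1+B))\log(1/\delta)^2 / n_{\cI_n}^2)^{1/5})$ on $\mathrm{ExcessRisk}_{\cI_n}$. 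Concretely one bounds $\bar\cL_{\cI_n}(f) - \bar\cL_{\cI_n}(f_0)$ by (empirical risk difference, which is $\le 0$) plus twice the supremum of $\bar\cL_{\cI_n} - \widehat\cL_{\cI_n}$ over the localized class, and the latter is controlled by Dudley's entropy integral for $\TV^{(1)}$ against the Lipschitz logistic loss, with $n_{\cI_n}$ in place of $n$ since only points in $\cI_n$ enter.

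The main obstacle I anticipate is the interface between the global weak-generalization rate and the local amplification: one must verify that the interval $\cI_n$ can be chosen \emph{measurably from the data} (it depends on $f_\theta$'s uncertain set, hence on $\theta$) while still having enough points $n_{\cI_n} = \Theta(n)$ for the final bound to be meaningful, and that the convergence $\cI_n \to \cI_0$ is fast enough that the lower bound on the $h$-weight on $\cI_n$ does not degrade the constants. Handling this requires the active-interval construction of Theorem~\ref{thm:tvb} to be quantitative — i.e., the $\gamma$-active interval has a margin that translates into a uniform lower bound on $h$ over $\cI_n$ for all large $n$ — and it is there that the diminishing-excess-risk assumption (entering via $\|\theta\|_2 = o(n)$ and Theorem~\ref{thm:multigengap}) is used a second time, to guarantee that $f_\theta$'s predictions are genuinely ``uncertain'' on $\cI_0$ rather than spuriously certain in the style of the Theorem~\ref{thm:example} counterexample.
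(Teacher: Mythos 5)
Your proposal follows essentially the same route as the paper: the norm bound $\|\theta\|_2=O(n^{1-\alpha})$ is fed through Theorem~\ref{thm:multigengap} (i.e., Lemma~\ref{lem:erbound}, which also needs the global optimized comparison you correctly flag) to obtain the diminishing excess risk of Assumption~\ref{ass:errorsmall}, this is plugged into Theorem~\ref{thm:tvb} to get the $\TV^{(1)}$ control on the uncertain region, and Theorem~\ref{thm:erbound}'s metric-entropy argument combined with the ``optimized over $\cI_n$'' inequality yields the stated rate in $n_{\cI_n}$. The only deviation is cosmetic: you construct $\cI_n$ from $f_\theta$'s uncertain set and then worry about its data-dependence and measurability, whereas the paper defines $\cI_n$ and the weight $\widetilde{h}_{\gamma,\zeta}$ directly from $f_0$'s uncertain region $\bar{\cA}_\gamma$ via Theorem~\ref{thm:tvb} (with the correction $p(n)\to 0$ giving $\cI_n\to\cI_0$), so the obstacle you anticipate does not arise.
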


%\begin{corollary}[Corollary of Theorem \ref{thm:erbound}]\label{cor:main}
%    Assume the case in Lemma \ref{lem:erbound} holds ($\|\theta\|_2\leq t(n)$) and let interval $\cI\subset[-x_{\max},x_{\max}]$ be a shrunken version of the convex hull of the ``uncertain regions'' of the ground-truth function satisfying the condition in Theorem \ref{thm:erbound} with $\epsilon(n)=\widetilde{O}\left((t(n)/n)^{\frac{2}{5}}\right)$ in Theorem \ref{thm:tvb}. 
%    \yw{Actually let me try rewriting the statement so I don't break what you already have.  SEE THE VERSION ABOVE.  Do you like it better?}
%   For any stable solution $f=f_\theta$ in $\cF(\eta,\cD)$\yw{This needs to also satisfy the $\|\theta\|_2$ bound?} such that the training loss $\cL$ is twice differentiable at $\theta$ and $\|f\|_\infty\leq B$, if $f$ is ``optimized'' over $\cI$, i.e., $\sum_{x_i\in\cI}\ell(f,(x_i,y_i))\leq\sum_{x_i\in\cI}\ell(f_0,(x_i,y_i))$ and $f_0\in\mathrm{BV}^{(1)}\left(B,O\left(\frac{1}{\eta}+x_{\max}(1+B)\right)\right)$, then with probability $1-\delta$,
%    \begin{equation}
%        \mathrm{ExcessRisk}_{\cI}(f)=\bar{\cL}_{\cI}(f)-\bar{\cL}_{\cI}(f_0)\leq O\left(\left[\frac{(B+1)^4 \left(\frac{x_{\max}}{\eta}+x_{\max}^2(1+B)\right)\log(1/\delta)^2}{n_{\cI}^2}\right]^{\frac{1}{5}}\right),
%    \end{equation}
%    where $\bar{\cL}_{\cI}(f)=\frac{1}{n_{\cI}}\sum_{x_i\in\cI}\E_{y_i^\prime\sim \cB(x_i)}\ell(f,(x_i,y_i^\prime))$, $n_{\cI}$ is the number of features $x_i$ in $\cI$ and the $O(\cdot)$ also absorbs $\frac{1}{|\cI|}$.
%\end{corollary}

Corollary \ref{cor:main} can be directly derived by plugging Lemma \ref{lem:erbound} into Theorem \ref{thm:tvb} and Theorem \ref{thm:erbound}. Corollary \ref{cor:main} considers the case where $\|\theta\|_2=o(n)$ as we discussed in the previous section (which is supported by our experiments). In this way, we can enhance the excess risk upper bound from $\epsilon(n)=\widetilde{O}\left(n^{-\frac{2\alpha}{5}}\right)$ in Lemma \ref{lem:erbound} which can be arbitrarily slow (due to a small $\alpha$) to a near optimal rate in a certain region $\cI_n$. The region $\cI_n$ is a shrunken version of the convex hull of the “uncertain regions” of the ground-truth function $\cI_0$, and the region converges to $\cI_0$ when $n$ converges to infinity. Note that $\cI_0$ is the convex hull of the ``uncertain regions'' $\bar{\cA}_\gamma$ in Theorem \ref{thm:tvb}, which is formally defined in Theorem \ref{thm:tvb} and illustrated in Figure \ref{fig:main}. 

The Excess Risk upper bound (restricted to $\mathcal{I}$) is of order $\widetilde{O}(n_{\mathcal{I}}^{-2/5})$, which matches the minimax optimal rate for estimating $\mathrm{BV}^{(1)}$ functions \citep{zhang2024nonparametric}. In addition, Corollary \ref{cor:main} does not have explicit dependence on the width of NN $k$, and therefore holds for arbitrary k, even if the neural network is heavily over-parameterized ($k\gg n$). Meanwhile, compared to Theorem \ref{thm:multigengap}, Corollary \ref{cor:main} provides a refined analysis focusing on the interior of the convex hull of ``uncertain regions'' instead of the whole interval. The superiority of Corollary \ref{cor:main} over Theorem \ref{thm:multigengap} will be significant when the learned function $f$ has huge volatility at the boundary while being smoother in the middle, where the $\TV^{(1)}$ norm can be much smaller when restricted to $\cI$. More discussions about the result can be found in Section \ref{sec:pf2} and \ref{sec:last}.

\subsubsection{``Weak Generalization'' Assumption and Outline of Technical Results}\label{sec:proofsketch}

In this part, we start with the assumption of ``weak generalization'', where we assume an diminishing excess risk upper bound w.r.t. the sample size $n$ while the convergence speed can be arbitrarily slow. Based on such assumption, we will show that the implicit bias of minima stability could enhance such ``weak generalization'' to a near optimal excess risk result within the convex hull of the ``uncertain regions'' of the ground-truth function. We begin with the assumption.

\begin{assumption}\label{ass:errorsmall}
    We assume that the learned function $f=f_\theta$ satisfies $\bar{\cL}(f)-\bar{\cL}(f_0)\leq \epsilon(n)$, \emph{i.e.}
    $$\E_{x\sim\cD}\E_{y\sim \cB(x)}\log\left(1+e^{-yf(x)}\right)\leq \E_{x\sim\cD}\E_{y\sim \cB(x)}\log\left(1+e^{-yf_0(x)}\right)+\epsilon(n),$$
    for some function $\epsilon(n)$ such that $\lim_{n\rightarrow\infty}\epsilon(n)=0$.
\end{assumption}

The Assumption \ref{ass:errorsmall} holds by choosing $\epsilon(n)$ to be any diminishing excess risk bound. In addition, $\epsilon(n)$ can be the bound in Appendix \ref{app:crudeexcessrisk} as a special case. Below we briefly go over the technical results about the ``weak to strong'' generalization:
\begin{enumerate}[leftmargin=*]
    \item In Section \ref{sec:pf1}, we prove a (weighted) total variation bound for the learned function within the convex hull of the ``uncertain sets'' of the learned function itself.
    \item Based on the ``weak generalization'' assumption, Section \ref{sec:pf2} shows that the ``uncertain set'' of the learned function must be close to the ``uncertain region'' of the ground-truth function.
    \item In this way, the learned function must be regular within the ``uncertain regions'' of the ground-truth function. Together with metric entropy analysis, we finally prove a near optimal excess risk upper bound in Section \ref{sec:last}.
\end{enumerate}

\subsubsection{Implicit Bias of Minima Stability in the Function Space}\label{sec:pf1}
Theorem \ref{thm:bias} provides a weighted TV(1) upper bound for the learned stable solution $f=f_\theta$.

\begin{theorem}\label{thm:bias}
For a threshold $\gamma>0$, define the ``uncertain'' data points with respect to $f_\theta$ as $\cA_\gamma = \left\{x_i\in\cD\; \bigg|\; \left|f_\theta(x_i)\right|\leq \gamma \right\}$ and let $n_\gamma=\left|\cA_\gamma\right|$. Define the function $\Gamma(\gamma)=\frac{e^{-\gamma}}{\left(1+e^{-\gamma}\right)^2}=\Omega(e^{-\gamma})$. Then for any function $f=f_\theta$ such that the training loss $\cL$ is twice differentiable at $\theta$ and any $\gamma>0$\footnote{W.l.o.g. we assume that $x_{\max} \geq 1$. If this does not hold, we can directly replace $x_{\max}$ in the bounds by $1$.},
\begin{equation}\label{equ:bias}
    1+2\int_{-x_{\max}}^{x_{\max}} \left|f^{\prime\prime}(x)\right|h_\gamma(x)dx\leq \frac{n}{n_\gamma \Gamma(\gamma)}\left(\lambda_{\max}(\nabla^2\loss(\theta))+2x_{\max}\cL(f)\right),
\end{equation}
where $h_\gamma(x)=\min\left\{h^+_\gamma(x),h^-_\gamma(x)\right\}$ and
$$h^+_\gamma(x)=\P^2\left(X>x\;\big|\;X\in\cA_{\gamma}\right)\cdot\sqrt{1+\left(\E\left[X\;\big|\;X\in\cA_{\gamma},\;X>x\right]\right)^2}\cdot\E\left[X-x\;\bigg|X\in\cA_{\gamma},\;X>x\right],$$
$$
h^-_\gamma(x)=\P^2\left(X<x\;\big|\;X\in\cA_{\gamma}\right)\cdot\sqrt{1+\left(\E\left[X\;\big|\;X\in\cA_{\gamma},\;X<x\right]\right)^2}\cdot\E\left[x-X\;\bigg|X\in\cA_{\gamma},\;X<x\right],$$
where for both functions, $X$ is a random sample from the dataset under the uniform distribution. Moreover, if $f=f_\theta$ is a solution in $\cF(\eta,\cD)$, we can replace $\lambda_{\max}(\nabla^2\loss(\theta))$ in \eqref{equ:bias} by $\frac{2}{\eta}$.
\end{theorem}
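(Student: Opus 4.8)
The plan is to lower‑bound $\lambda_{\max}(\nabla^2\loss(\theta))$ by evaluating the Rayleigh quotient of $\nabla^2\loss(\theta)$ at a single, carefully chosen direction — the conditional mean gradient $v_\star:=\E[\nabla_\theta f_\theta(X)\mid X\in\cA_\gamma]$ over the uncertain points — and then to reduce the whole statement to the inequality $\|v_\star\|^2\ge 1+2\int|f''(x)|h_\gamma(x)\,dx$, which can be checked coordinate‑block by coordinate‑block. First I would write $\loss(\theta)=\tfrac1n\sum_i\ell(z_i)$ with $z_i:=y_if_\theta(x_i)$, and, using $y_i^2=1$, record the exact Hessian $\nabla^2\loss(\theta)=\tfrac1n\sum_i\big[\ell''(z_i)\nabla f_\theta(x_i)\nabla f_\theta(x_i)\T+\ell'(z_i)y_i\nabla^2 f_\theta(x_i)\big]$, noting $\ell''(z)=\tfrac{e^{-z}}{(1+e^{-z})^2}=\Gamma(|z|)$ is even and strictly decreasing in $|z|$, so $\ell''(z_i)\ge\Gamma(\gamma)$ exactly on $\cA_\gamma$ while $\ell''\ge0$ everywhere. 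Twice differentiability of $\loss$ at $\theta$ is used here precisely so that $\nabla^2 f_\theta(x_i)$ exists and the indicators $\phi'(w_j^{(1)}x_i+b_j^{(1)})$ at data points are unambiguous.

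Second, I would control the second‑order term: for a two‑layer ReLU net $\nabla^2 f_\theta(x)$ is block diagonal over neurons, the $j$‑th block being $e_{w_j^{(2)}}u_j\T+u_j e_{w_j^{(2)}}\T$ with $u_j=\phi'(w_j^{(1)}x+b_j^{(1)})\big(x\,e_{w_j^{(1)}}+e_{b_j^{(1)}}\big)$ (second derivatives among first‑layer weights vanish away from kinks, and $b^{(2)}$ couples to nothing), so $\|\nabla^2 f_\theta(x)\|_{\mathrm{op}}=\max_j\|u_j\|\le\sqrt{x_{\max}^2+1}\le 2x_{\max}$ using $x_{\max}\ge1$ w.l.o.g. Together with the elementary inequality $|\ell'(z)|=\tfrac{1}{1+e^{z}}\le\log(1+e^{-z})=\ell(z)$ (that is, $\tfrac{u}{1+u}\le\log(1+u)$ with $u=e^{-z}$), this yields, for every unit vector $v$, $v\T\nabla^2\loss(\theta)v\ge \tfrac{\Gamma(\gamma)}{n}\sum_{x_i\in\cA_\gamma}(v\T\nabla f_\theta(x_i))^2-2x_{\max}\loss(\theta)$, since nonnegativity of $\ell''$ lets us discard the non‑uncertain Gauss--Newton terms.

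Third, I would set $v=v_\star/\|v_\star\|$ (legitimate since $v_\star\ne0$: its $b^{(2)}$‑coordinate equals $\E[\partial_{b^{(2)}}f_\theta(X)\mid\cA_\gamma]=1$) and apply Jensen: $\tfrac{1}{n_\gamma}\sum_{x_i\in\cA_\gamma}(v\T\nabla f_\theta(x_i))^2=\E[(v\T\nabla f_\theta(X))^2\mid\cA_\gamma]\ge(v\T v_\star)^2=\|v_\star\|^2$. Rearranging gives $\tfrac{n}{n_\gamma\Gamma(\gamma)}\big(\lambda_{\max}(\nabla^2\loss(\theta))+2x_{\max}\loss(f)\big)\ge\|v_\star\|^2$, and for $f_\theta\in\cF(\eta,\cD)$ we may replace $\lambda_{\max}(\nabla^2\loss(\theta))$ by $2/\eta$. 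It remains to prove $\|v_\star\|^2\ge 1+2\int|f''|h_\gamma$. Splitting $\|v_\star\|^2$ over the coordinate blocks $b^{(2)}$, $\{w_j^{(2)}\}_j$ and $\{(w_j^{(1)},b_j^{(1)})\}_j$ gives $\|v_\star\|^2=1+\sum_j\big[(\E[\phi_j(X)\mid\cA_\gamma])^2+\|\E[(\partial_{w_j^{(1)}}f_\theta,\partial_{b_j^{(1)}}f_\theta)(X)\mid\cA_\gamma]\|^2\big]$ (the $1$ from $\partial_{b^{(2)}}f_\theta\equiv1$), and AM--GM applied neuron by neuron bounds this below by $1+2\sum_j|\E[\phi_j(X)\mid\cA_\gamma]|\cdot\|\E[(\partial_{w_j^{(1)}}f_\theta,\partial_{b_j^{(1)}}f_\theta)(X)\mid\cA_\gamma]\|$. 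For a neuron active to the right ($w_j^{(1)}>0$, knot $t_j=-b_j^{(1)}/w_j^{(1)}$) a direct computation gives $|\E[\phi_j\mid\cA_\gamma]|=|w_j^{(1)}|\,\P(X>t_j\mid\cA_\gamma)\,\E[X-t_j\mid\cA_\gamma,X>t_j]$ and $\|\E[(\partial_{w_j^{(1)}}f_\theta,\partial_{b_j^{(1)}}f_\theta)\mid\cA_\gamma]\|=|w_j^{(2)}|\,\P(X>t_j\mid\cA_\gamma)\sqrt{1+(\E[X\mid\cA_\gamma,X>t_j])^2}$, whose product is exactly $|w_j^{(1)}w_j^{(2)}|\,h^+_\gamma(t_j)$; the mirror computation for a left‑active neuron gives $|w_j^{(1)}w_j^{(2)}|\,h^-_\gamma(t_j)$. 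Since $h_\gamma=\min(h^+_\gamma,h^-_\gamma)$ and every such product is nonnegative, I can discard knots outside $[-x_{\max},x_{\max}]$ and use the triangle inequality when several knots coincide to conclude $\sum_j|w_j^{(1)}w_j^{(2)}|h_\gamma(t_j)\ge\int_{-x_{\max}}^{x_{\max}}|f''(x)|h_\gamma(x)\,dx$, recalling $f''=\sum_j w_j^{(2)}|w_j^{(1)}|\delta_{t_j}$; this closes the argument.

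The step I expect to be the main obstacle is the last one: verifying that AM--GM across the two weight blocks of each neuron, fed with the conditional expectations of the ReLU gradient, reproduces precisely the three factors $\P^2(X>t_j\mid\cA_\gamma)$, $\sqrt{1+(\E[X\mid\cA_\gamma,X>t_j])^2}$, and $\E[X-t_j\mid\cA_\gamma,X>t_j]$ in $h^+_\gamma$ (and their mirror images in $h^-_\gamma$), and bookkeeping carefully the knots that fall outside the support or coincide with one another. Everything else — the Hessian identity, the operator‑norm bound on $\nabla^2 f_\theta$, the inequality $|\ell'|\le\ell$, and the Jensen step — is routine once one commits to the mean‑gradient test direction $v_\star$.
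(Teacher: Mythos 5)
Your proposal is correct and is essentially the paper's own argument in different clothing: the paper also bounds the curvature term via $|\ell'|\le\ell$ and the operator norm of $\nabla^2 f_\theta$, and its choice of $u\in S^{n-1}$ with $u_j\propto D_{jj}^{-1/2}$ supported on $\cA_\gamma$ makes $\Phi D^{1/2}u$ proportional to your conditional mean gradient $v_\star$, so your Jensen step and block-wise AM--GM reproduce exactly the paper's per-neuron expansion, the factors of $h^\pm_\gamma$ at the knots, and the final weighted $\TV^{(1)}$ integral.
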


The proof of Theorem \ref{thm:bias} is deferred to Appendix \ref{app:pfmain}. The theorem connects the flatness (measured by the maximum eigenvalue of the Hessian matrix) in the parameter space to the smoothness of the output function (measured by a weighted TV(1) norm) in the function space. More specifically, the theorem claims that between the left-most and right-most ``uncertain sets'' where the learned function has uncertain predictions, the learned function must be regular in the sense of first-order total variation. Below we discuss about the parameters in the theorem.

\noindent\textbf{Trade-off between the parameters.} First, as the threshold $\gamma$ increases, the ``uncertain set '' $\cA_\gamma$ will also become larger. As a result, the term $\frac{n}{n_\gamma}$ on the R.H.S will become smaller, while the interior of $\cA_\gamma$ where the TV constraint is strong will also be larger. Both effects tend to make the result more significant. However, the term $\frac{1}{\Gamma(\gamma)}\approx e^\gamma$ on the R.H.S will grow exponentially, and thus balance out the aforementioned effects. Therefore, there is a trade-off in the choice of $\gamma$. Meanwhile, since we do not have any guarantee on the scale of $f_\theta$, the portion or even the existence of the uncertain set $\cA_\gamma$ for a fair $\gamma$ is not ensured.

Unfortunately, according to our negative result (Theorem \ref{thm:example}), the scale and existence of such ``uncertain sets'' is not guaranteed, which means that Theorem \ref{thm:bias} alone could not exclude those flat yet overfitting solutions. In the following part, we prove that under the ``weak generalization'' assumption, the ``uncertain set'' of the learned function must be close to the ``uncertain regions'' of the ground-truth function. 

\subsubsection{Total Variation Bound Independent of the Output Function}\label{sec:pf2}
Based on Assumption \ref{ass:errorsmall}, we are ready to derive a TV(1) bound independent of $f_\theta$.

\begin{theorem}\label{thm:tvb}
    For a threshold $\gamma>0$ and probability $p>0$, define the uncertain set (w.r.t. $f_0$) as $\bar{\cA}_\gamma:=\left\{x\in[-x_{\max},x_{\max}]\;\bigg|\;|f_0(x)|\leq \gamma\right\}$ and the truncated uncertain set as\\ $\bar{\cA}_\gamma^p:= \left\{x\in\bar{\cA}_\gamma\;\bigg |\; \min\left(\P_{X\sim\cD}(X\in\bar{\cA}_\gamma,\;X>x),\;\P_{X\sim\cD}(X\in\bar{\cA}_\gamma,\;X<x)\right)\geq p\right\}$. Define $\Gamma(\gamma)=\frac{e^{-\gamma}}{\left(1+e^{-\gamma}\right)^2}=\Omega(e^{-\gamma})$. Then for any function $f=f_\theta$ such that the training loss $\cL$ is twice differentiable at $\theta$ and any $\gamma>\zeta>0$, if Assumption \ref{ass:errorsmall} holds with $\epsilon(n)$, for some function $p(n)=O\left(\epsilon(n)\cdot e^{\gamma+\zeta}\cdot \zeta^{-2}\right)$,
    \begin{equation}\label{equ:tvbmain}
\Gamma(\gamma)\cdot\left(\P_{X\sim\cD}\left(X\in\bar{\cA}_{\gamma-\zeta}\right)-p(n)\right)\cdot\left(1+2\int_{-x_{\max}}^{x_{\max}} \left|f^{\prime\prime}(x)\right|\bar{h}_{\gamma,\zeta}(x,n)dx\right) \leq \lambda_{\max}(\nabla^2\loss(\theta))+2x_{\max}\cL(f),
    \end{equation}
    where $\bar{h}_{\gamma,\zeta}(x,n)=\min\left\{\bar{h}^+_{\gamma,\zeta}(x,n),\bar{h}^-_{\gamma,\zeta}(x,n)\right\}$ and
    $$\bar{h}_{\gamma,\zeta}^+(x,n)=\left(\frac{\P_{X\sim\cD}(X>x,\;X\in\bar{\cA}_{\gamma-\zeta})-p(n)}{\P_{X\sim\cD}(X\in\bar{\cA}_{\gamma+\zeta})+p(n)}\right)^2\cdot\frac{\E_{X\sim\cD}\left[(X-x)\mathds{1}\left(X>x,\;X\in\bar{\cA}_{\gamma-\zeta}^{p(n)}\right)\right]}{\P_{X\sim\cD}\left[X>x,\;X\in\bar{\cA}_{\gamma+\zeta}\right]+p(n)},$$
    $$\bar{h}_{\gamma,\zeta}^-(x,n)=\left(\frac{\P_{X\sim\cD}(X<x,\;X\in\bar{\cA}_{\gamma-\zeta})-p(n)}{\P_{X\sim\cD}(X\in\bar{\cA}_{\gamma+\zeta})+p(n)}\right)^2\cdot\frac{\E_{X\sim\cD}\left[(x-X)\mathds{1}\left(X<x,\;X\in\bar{\cA}_{\gamma-\zeta}^{p(n)}\right)\right]}{\P_{X\sim\cD}\left[X<x,\;X\in\bar{\cA}_{\gamma+\zeta}\right]+p(n)}.$$
    Furthermore, since the features $\{x_i\}_{i=1}^n$ are i.i.d. samples from $\cP_x$, as $n\rightarrow\infty$, the asymptotic total variation guarantee can be derived by plugging in $p(n)=0$ and replacing $X\sim\cD$ with $X\sim\cP_x$ in \eqref{equ:tvbmain} and $\bar{h}_{\gamma,\zeta}(x,n)$. Moreover, if $f=f_\theta$ is a stable solution in $\cF(\eta,\cD)$, we can replace $\lambda_{\max}(\nabla^2\loss(\theta))$ in \eqref{equ:tvbmain} and the corresponding asymptotic guarantee by $\frac{2}{\eta}$.
\end{theorem}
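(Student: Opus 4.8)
The plan is to derive \eqref{equ:tvbmain} by post-processing Theorem~\ref{thm:bias}. That theorem already controls a weighted $\TV^{(1)}$ norm of $f=f_\theta$, but only through the \emph{self-referential} uncertain set $\cA_\gamma=\{x_i\in\cD:|f_\theta(x_i)|\le\gamma\}$ and the weight $h_\gamma$ built from it. So it suffices to show, under Assumption~\ref{ass:errorsmall}, that $\cA_\gamma$ is squeezed between the $f_0$-determined sets $\bar\cA_{\gamma-\zeta}$ and $\bar\cA_{\gamma+\zeta}$ up to an exceptional set of empirical frequency $p(n)=O(\epsilon(n)e^{\gamma+\zeta}\zeta^{-2})$, and then to substitute these inclusions into \eqref{equ:bias}. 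The entry point is the pointwise decomposition of the population risk: writing $g_x(v):=\E_{y\sim\cB(x)}\log(1+e^{-yv})$, we have $\bar\cL(f)-\bar\cL(f_0)=\frac1n\sum_{i=1}^n\big(g_{x_i}(f(x_i))-g_{x_i}(f_0(x_i))\big)$, and each summand is nonnegative since $g_x$ is convex and minimized at $f_0(x)$ (indeed $g_x'(f_0(x))=0$).

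The quantitative heart of the argument is that $g_x''(v)=\sigma'(v)$ --- the true probability $\sigma(f_0(x))$ cancels out of the second derivative --- and $\sigma'(v)\ge\Gamma(\gamma)$ whenever $|v|\le\gamma$. From the integral Taylor form around the minimizer, $g_x(f(x))-g_x(f_0(x))=\int_{f_0(x)}^{f(x)}(f(x)-u)\,g_x''(u)\,du$, one checks that if $|f_0(x)|\le\gamma-\zeta$ but $|f(x)|>\gamma$, then the segment joining $f_0(x)$ to $f(x)$ contains an interval of length $\zeta$ inside $[-\gamma,\gamma]$, so the summand is at least $\frac12\Gamma(\gamma)\zeta^2$; symmetrically, if $|f_0(x)|>\gamma+\zeta$ but $|f(x)|\le\gamma$ the summand is at least $\frac12\Gamma(\gamma+\zeta)\zeta^2$. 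Combining with Assumption~\ref{ass:errorsmall} and $1/\Gamma(\gamma+\zeta)=O(e^{\gamma+\zeta})$, the two discrepancy sets $S:=\{x_i:|f_0(x_i)|\le\gamma-\zeta,\ |f(x_i)|>\gamma\}$ and $S':=\{x_i:|f_0(x_i)|>\gamma+\zeta,\ |f(x_i)|\le\gamma\}$ each have empirical frequency $O(\epsilon(n)e^{\gamma+\zeta}\zeta^{-2})$; I will take $p(n)$ to be a sufficiently large constant multiple of this (also absorbing $x_{\max}$) so that it dominates $|S|/n$, $|S'|/n$ and $x_{\max}|S|/n$ at once.

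These estimates give the data-point inclusions $\bar\cA_{\gamma-\zeta}\cap\cD\subseteq\cA_\gamma\cup S$ and $\cA_\gamma\subseteq(\bar\cA_{\gamma+\zeta}\cap\cD)\cup S'$. From the first, $n_\gamma/n\ge\P_{X\sim\cD}(X\in\bar\cA_{\gamma-\zeta})-p(n)$, and refinements of it lower-bound each ingredient of $h^{\pm}_\gamma$ --- the tail mass $\P(X>x,X\in\cA_\gamma)$, the total mass $\P(X\in\cA_\gamma)$, and the tail expectation $\E[(X-x)\mathds{1}(X>x,X\in\cA_\gamma)]$ --- by the corresponding $\bar\cA_{\gamma\mp\zeta}$-quantities with an additive $p(n)$ slack; the truncation to $\bar\cA^{p(n)}_{\gamma-\zeta}$ appearing in $\bar h_{\gamma,\zeta}$ is exactly the device that makes the tail-expectation numerator vanish on the region where $\P_{X\sim\cD}(X>x,X\in\bar\cA_{\gamma-\zeta})-p(n)<0$ (there is then no point of $\bar\cA^{p(n)}_{\gamma-\zeta}$ to the right of $x$), so no conditional expectation ever degenerates. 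Dropping the factor $\sqrt{1+(\E[X\mid X\in\cA_\gamma,X>x])^2}\ge1$ then yields $h_\gamma(x)\ge\bar h_{\gamma,\zeta}(x,n)$ pointwise; plugging this and the bound on $n_\gamma/n$ into \eqref{equ:bias} and multiplying through by $n_\gamma\Gamma(\gamma)/n$ (legitimate because $1+2\int|f''|h_\gamma\ge1$, and the case $\P_{X\sim\cD}(X\in\bar\cA_{\gamma-\zeta})\le p(n)$ renders \eqref{equ:tvbmain} vacuous) gives \eqref{equ:tvbmain}; replacing $\lambda_{\max}(\nabla^2\cL(\theta))$ by $2/\eta$ when $f_\theta\in\cF(\eta,\cD)$ is immediate. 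For the asymptotic version, $\epsilon(n)\to0$ forces $p(n)\to0$, and since $\bar\cA_{\gamma\pm\zeta}$ and the events $\{X>x\}$, $\{X<x\}$ are fixed and do not depend on the sample, the uniform law of large numbers replaces every $\P_{X\sim\cD}(\cdot)$ by $\P_{X\sim\cP_x}(\cdot)$ in the limit.

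I expect the main obstacle to be the third step: the per-point lower bound is a short convexity/calculus estimate, but checking that the set-inclusion errors propagate into \emph{precisely} the stated weight $\bar h_{\gamma,\zeta}$ --- in particular that the $p(n)$-truncation is the right way to tame the boundary of $\bar\cA_{\gamma-\zeta}$, where the conditional expectations become unstable, and that the $p(n)$-offsets line up so that one obtains $h_\gamma\ge\bar h_{\gamma,\zeta}$ rather than only $h_\gamma\ge\bar h_{\gamma,\zeta}-O(p(n))$ --- is where the real care is needed.
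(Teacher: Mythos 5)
Your proposal is correct and follows essentially the same route as the paper: the paper's proof likewise establishes the closeness of the two uncertain sets via the per-point population loss and its second derivative (Lemma \ref{lem:closeness}, giving the same $p(n)=O(\epsilon(n)e^{\gamma+\zeta}\zeta^{-2})$), then lower-bounds $n_\gamma/n$ and $h_\gamma$ by the $f_0$-based quantities (Lemmas \ref{lem:ngamma} and \ref{lem:hgamma}, with the truncated set $\bar\cA_{\gamma-\zeta}^{p(n)}$ playing exactly the compensating role you describe) and substitutes into \eqref{equ:bias}, with the $2/\eta$ replacement and the $n\to\infty$ limit handled as you indicate. The step you flag as delicate (that the truncation yields $h_\gamma\ge\bar h_{\gamma,\zeta}$ exactly, via the tail-expectation numerator) is indeed the same point the paper treats most tersely, and your Taylor-remainder version of the per-point bound is just a cleaner presentation of the paper's derivative chain.
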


The complete version and proof of Theorem \ref{thm:tvb} is deferred to Appendix \ref{app:pftvb}. The first part of Theorem \ref{thm:tvb} is a relaxation of \eqref{equ:bias} in Theorem \ref{thm:bias}. More specifically, under the assumption (Assumption \ref{ass:errorsmall}) that the excess risk of $f_\theta$ is bounded, we can bound the gap of $f_\theta$ and $f_0$ (Lemma \ref{lem:closeness}). As a result, we can approximate the uncertain set $\cA$ in Theorem \ref{thm:bias} by $\bar{\cA}$ or $\bar{\cA}^p$, with a correction term $p(n)$ that is proportional to $\epsilon(n)$. Although the functions satisfying \eqref{equ:tvbmain} is actually a superset of \eqref{equ:bias}, note that for a fixed $\gamma>0$, $\bar{\cA}_\gamma$ only depends on $f_0$, while $\bar{\cA}_\gamma^p$ only depends on $f_0$ and the feature set $\{x_i\}_{i=1}^n$. This is in striking contrast to Theorem \ref{thm:bias}, where $n_\gamma$ and $h_\gamma(x)$ all depend on the output function $f_\theta$, which can be arbitrary functions. Meanwhile, as $n$ converges to infinity, $p(n)=O(\epsilon(n))$ will converge to $0$, while the empirical distribution $x\sim\cD$ will converge to $x\sim\cP_x$. Therefore the asymptotic smoothness guarantee only depends on $f_0$ and the feature distribution $\cP_x$. The asymptotic smoothness guarantee can be found in Theorem \ref{thm:tvb1}, while we leave the trade-offs between parameters and the relationship with nonparametric regression to Appendix \ref{app:disparam}.

\subsubsection{Minima Stability of GD Leads to Near-optimal Excess Risk}\label{sec:last}

With the weighted TV(1) bound in Theorem \ref{thm:tvb}, we are ready to refine the excess risk bound in Section \ref{sec:gpbound1} over the region where $\widetilde{h}_{\gamma,\zeta}(x,n):=\Gamma(\gamma)\left(\P_{X\sim\cD}\left(X\in\bar{\cA}_{\gamma-\zeta}\right)-p(n)\right)\bar{h}_{\gamma,\zeta}(x,n)$ in Theorem \ref{thm:tvb} is lower bounded. Recall that the features $\{x_i\}_{i=1}^n$ are i.i.d. samples from some distribution $\cP_x$ supported on $[-x_{\max},x_{\max}]$.

\begin{theorem}\label{thm:erbound}
    For any interval $\cI\subset[-x_{\max},x_{\max}]$ such that there exists $\gamma>\zeta>0$ satisfying that with probability at least $1-\frac{\delta}{2}$, $\widetilde{h}_{\gamma,\zeta}(x,n)\geq c,\;\forall\; x\in\cI$ for some constant $c>0$, for any stable solution $f=f_\theta$ in $\cF(\eta,\cD)$ such that the training loss $\cL$ is twice differentiable at $\theta$ and $\|f\|_\infty\leq B$, if $f$ is ``optimized'' over $\cI$, i.e., $\sum_{x_i\in\cI}\ell(f,(x_i,y_i))\leq\sum_{x_i\in\cI}\ell(f_0,(x_i,y_i))$ and $f_0\in\mathrm{BV}^{(1)}\left(B,\frac{1}{c}\left(\frac{1}{\eta}+x_{\max}(1+B)\right)\right)$, then with probability $1-\delta$,
    \begin{equation}
        \mathrm{ExcessRisk}_{\cI}(f)=\bar{\cL}_{\cI}(f)-\bar{\cL}_{\cI}(f_0)\leq O\left(\left[\frac{(B+1)^4 \left(\frac{x_{\max}}{\eta}+x_{\max}^2(1+B)\right)\log(1/\delta)^2}{n_{\cI}^2}\right]^{\frac{1}{5}}\right),
    \end{equation}
    where $\bar{\cL}_{\cI}(f)=\frac{1}{n_{\cI}}\sum_{x_i\in\cI}\E_{y_i^\prime\sim \cB(x_i)}\ell(f,(x_i,y_i^\prime))$, $n_{\cI}$ is the number of features $x_i$ in $\cI$ and the $O(\cdot)$ also absorbs the constant $\frac{1}{c}$ and $\frac{1}{|\cI|}$.
\end{theorem}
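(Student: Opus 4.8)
The plan is to convert the weighted total-variation guarantee of Theorem~\ref{thm:tvb} into an honest $\TV^{(1)}$ bound for $f$ restricted to $\cI$, and then run a deliberately crude single-scale covering argument over the first-order bounded variation class localized to $\cI$, using the ``optimized over $\cI$'' hypothesis to turn a generalization-gap bound into an excess-risk bound. The final rate matches the $n_{\cI}^{-2/5}$-type exponent exactly because the covering estimate for $\mathrm{BV}^{(1)}$ (entropy $\asymp\epsilon^{-1/2}$) optimizes at $\epsilon\asymp((B+1)^4 x_{\max}C/n_{\cI}^2)^{1/5}$.

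\textbf{Step 1 ($\TV^{(1)}$ control on $\cI$).} Since $\|f\|_\infty\le B$, every residual satisfies $-y_if(x_i)\le B$, so $\cL(f)\le\log(1+e^{B})\le B+1$. As $f\in\cF(\eta,\cD)$, Theorem~\ref{thm:tvb} lets us replace $\lambda_{\max}(\nabla^2\cL(\theta))$ by $2/\eta$ in \eqref{equ:tvbmain}. Expanding the left-hand side and recalling that $\widetilde{h}_{\gamma,\zeta}(x,n)=\Gamma(\gamma)\bigl(\P_{X\sim\cD}(X\in\bar{\cA}_{\gamma-\zeta})-p(n)\bigr)\bar{h}_{\gamma,\zeta}(x,n)\ge 0$, the bound reads
\[
\Gamma(\gamma)\bigl(\P_{X\sim\cD}(X\in\bar{\cA}_{\gamma-\zeta})-p(n)\bigr)+2\int_{-x_{\max}}^{x_{\max}}|f''(x)|\,\widetilde{h}_{\gamma,\zeta}(x,n)\,dx\ \le\ \frac{2}{\eta}+2x_{\max}(B+1).
\]
Dropping the first, nonnegative summand and, on the event of probability $\ge 1-\delta/2$ that $\widetilde{h}_{\gamma,\zeta}(x,n)\ge c$ on $\cI$, restricting the integral to $\cI$ gives $\int_{\cI}|f''(x)|\,dx\le \frac1c\bigl(\frac1\eta+x_{\max}(B+1)\bigr)=:C$. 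With $\|f\|_\infty\le B$ and the hypothesis $f_0\in\mathrm{BV}^{(1)}(B,C)$, both $f|_{\cI}$ and $f_0|_{\cI}$ lie in $\cG:=\{g:\cI\to\R:\|g\|_\infty\le B,\ \int_\cI|g''|\le C\}$, and $|\cI|\le 2x_{\max}$.

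\textbf{Step 2 (decomposition).} Condition on the features, so $n_{\cI}$, the points falling in $\cI$, and the Step-1 event are fixed while the $y_i$ stay independent; set $\widehat{\cL}_{\cI}(g):=\frac1{n_{\cI}}\sum_{x_i\in\cI}\ell(g,(x_i,y_i))$. The ``optimized over $\cI$'' hypothesis is $\widehat{\cL}_{\cI}(f)\le\widehat{\cL}_{\cI}(f_0)$, hence
\[
\mathrm{ExcessRisk}_{\cI}(f)\ \le\ \bigl[\bar{\cL}_{\cI}(f)-\widehat{\cL}_{\cI}(f)\bigr]+\bigl[\widehat{\cL}_{\cI}(f_0)-\bar{\cL}_{\cI}(f_0)\bigr].
\]
The second bracket is a Hoeffding deviation of an average of $n_{\cI}$ independent terms of range $\le B+1$, so it is $\le (B+1)\sqrt{\log(2/\delta)/(2n_{\cI})}$ with probability $\ge 1-\delta/2$, which is $o(n_{\cI}^{-2/5})$ and will be absorbed.

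\textbf{Step 3 (covering and assembly).} Since $f|_{\cI}\in\cG$, the first bracket is at most $\sup_{g\in\cG}|\bar{\cL}_{\cI}(g)-\widehat{\cL}_{\cI}(g)|$. Fix $\epsilon>0$, take a minimal $\epsilon$-net of $\cG$ in $\|\cdot\|_\infty$ of log-size $\log N(\epsilon)\lesssim\sqrt{x_{\max}C/\epsilon}$ (up to lower-order terms; cf.\ Appendix~\ref{app:bv1}); as $\ell$ is $1$-Lipschitz in its first argument, passing to the net costs $2\epsilon$, and Hoeffding with a union bound over the net gives, with probability $\ge 1-\delta/2$,
\[
\sup_{g\in\cG}\bigl|\bar{\cL}_{\cI}(g)-\widehat{\cL}_{\cI}(g)\bigr|\ \le\ 2\epsilon+(B+1)\sqrt{\frac{\log N(\epsilon)+\log(2/\delta)}{2n_{\cI}}}.
\]
Optimizing $\epsilon\asymp\bigl((B+1)^4 x_{\max}C/n_{\cI}^2\bigr)^{1/5}$ balances the two leading terms, so the supremum is $O\bigl(((B+1)^4 x_{\max}C/n_{\cI}^2)^{1/5}\bigr)+(B+1)\sqrt{\log(1/\delta)/n_{\cI}}$. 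Combining with Step~2, intersecting the $\le 3$ events of probability $1-\delta/2$ (a union bound costs only a constant), substituting $x_{\max}C=\frac1c(\frac{x_{\max}}{\eta}+x_{\max}^2(B+1))$, and absorbing $\frac1c$, $\frac1{|\cI|}$ and the dominated $n_{\cI}^{-1/2}$ term into the $O(\cdot)$ and the $\log(1/\delta)^2$ (using $\log(1/\delta)\ge1$), we get $\mathrm{ExcessRisk}_{\cI}(f)\le O\bigl(\bigl[\frac{(B+1)^4(\frac{x_{\max}}{\eta}+x_{\max}^2(1+B))\log(1/\delta)^2}{n_{\cI}^2}\bigr]^{1/5}\bigr)$.

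\textbf{Main obstacle.} The technically delicate part is the covering step: establishing the metric entropy of $\mathrm{BV}^{(1)}$ restricted to an interval with the correct dependence on the interval length ($x_{\max}$) and curvature budget $C$, and verifying that a single-resolution net (no chaining) already yields the advertised exponent with the stated constants. The remaining issue is organizational: $\cI$ is deterministic but $n_{\cI}$, the points landing in $\cI$, and the event $\{\widetilde{h}_{\gamma,\zeta}\ge c\text{ on }\cI\}$ are all functions of the random features, so one must condition on the features before applying the label-wise Hoeffding bounds of Steps~2--3 and then pay a union bound over the feature-level event --- which is exactly why the final statement holds with probability $1-\delta$ rather than $1-\delta/2$.
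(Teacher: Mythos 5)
Your proposal is correct and follows essentially the same route as the paper: extract $\int_{\cI}|f''|\le \frac1c(\frac1\eta+x_{\max}(1+B))$ from Theorem \ref{thm:tvb} on the event $\widetilde{h}_{\gamma,\zeta}\ge c$, place both $f|_{\cI}$ and $f_0|_{\cI}$ in the localized $\mathrm{BV}^{(1)}$ class, and convert a single-scale covering-plus-Hoeffding uniform deviation bound into an excess-risk bound via the ``optimized over $\cI$'' hypothesis, with the same $\delta/2+\delta/2$ accounting. The one step you assert rather than prove --- the metric entropy of the class restricted to $\cI$ with the stated dependence on $x_{\max}$, the curvature budget, and $1/|\cI|$ --- is exactly what the paper supplies in Lemma \ref{lem:metricentropy} (controlling $|f'|$ at the midpoint of $\cI$ via $\|f\|_\infty\le B$ and then extending to $[\tilde{x}-x_{\max},\tilde{x}+x_{\max}]$ to invoke Lemma \ref{lem:entropy3}), so your argument is complete once that lemma is cited or reproduced.
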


The proof of Theorem \ref{thm:erbound} is deferred to Appendix \ref{app:pferbound}. Theorem \ref{thm:erbound} focuses on an interval $\mathcal{I}$ where $\widetilde{h}$ can be lower bounded. In this way, we ignore the extreme data points and derive an Excess Risk upper bound (restricted to $\mathcal{I}$) of order $\widetilde{O}(n_{\mathcal{I}}^{-2/5})$, which matches the minimax optimal rate for estimating $\mathrm{BV}^{(1)}$ functions \citep{zhang2024nonparametric}. In addition, Theorem \ref{thm:erbound} does not have explicit dependence on the width of NN $k$, and therefore holds for arbitrary k, even if the neural network is heavily over-parameterized ($k\gg n$). For the choice of $\cI$, similar to the analysis in \citet[App. G]{qiao2024stable}, $\cI$ can be chosen to incorporate most of the data points in the ``uncertain region''. 

Meanwhile, note that minima stability (Theorem \ref{thm:bias} $\&$ Theorem \ref{thm:tvb}) poses stronger smoothness constraint in the middle of the interval, while the constraint becomes weaker towards the boundary. Therefore, compared to Theorem \ref{thm:multigengap}, Theorem \ref{thm:erbound} provides a refined analysis focusing on the interior of ``uncertain regions'' instead of the whole interval. The superiority of Theorem \ref{thm:erbound} over Theorem \ref{thm:multigengap} will be significant when the learned function $f$ has huge volatility at the boundary while being smoother in the middle, where the $\TV^{(1)}$ norm can be much smaller when restricted to $\cI$.

Lastly, the Excess Risk bound depends on the learning rate $\eta$ as $\eta^{-1/5}$. This arises because a larger learning rate leads to a smaller bound on $\TV^{(1)}$ of the learned function $f$. As a result, the hypothesis space excludes many non-smooth functions, which tightens the Excess Risk bound. However, a larger learning rate is not always beneficial. If $\eta$ is too large, GD may diverge. Even in cases where convergence is maintained, the resulting class of stable solutions may be insufficiently expressive to approximate the ground truth function $f_0$, particularly when $\int_{-x_{\max}}^{x_{\max}}|f_0^{\prime\prime}(x)|dx \gg \frac{1}{c\eta} + \frac{x_{\max}(1+B)}{c}$, thereby invalidating the assumption of an “optimized” solution. In our experiments, we numerically verify this assumption across a wide range of $\eta$ values, and show that by tuning $\eta$, we effectively adapt to the unknown value of $\TV^{(1)}(f_0)$.

\section{Experiments}

\begin{figure}[h!]
\centering
\includegraphics[width=0.34\textwidth]{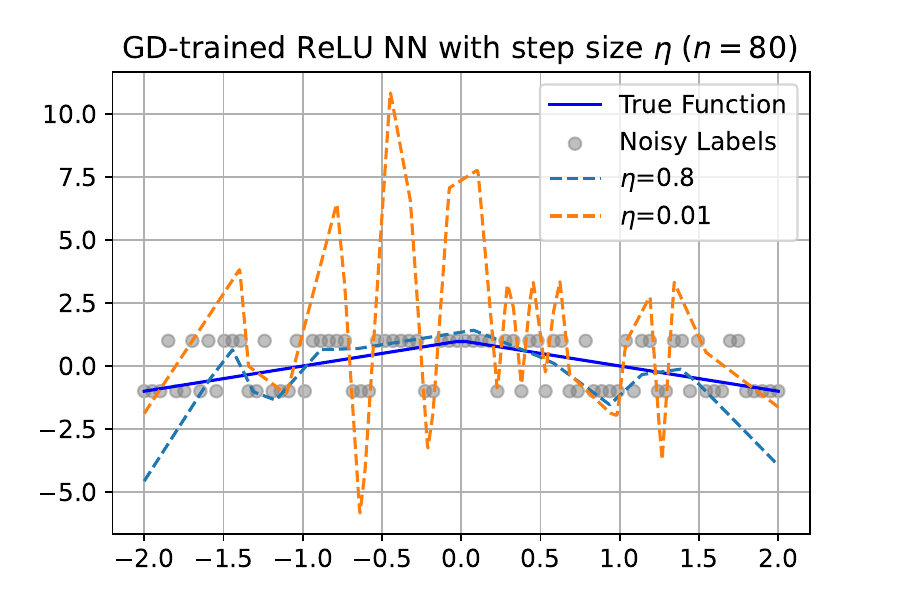}
\includegraphics[width=0.32\textwidth]{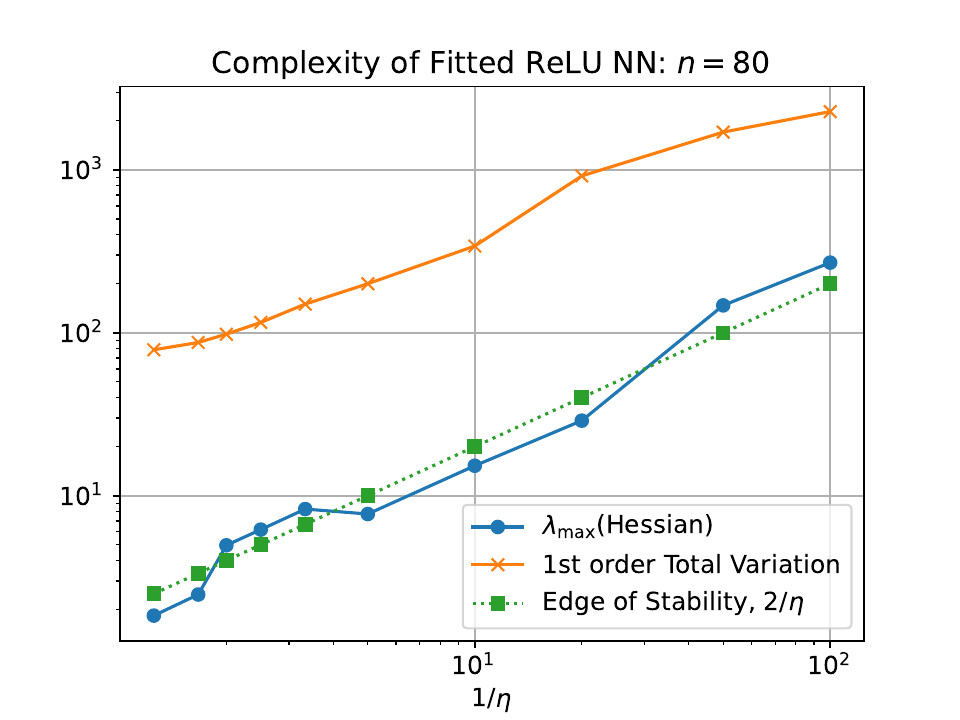}
\includegraphics[width=0.32\textwidth]{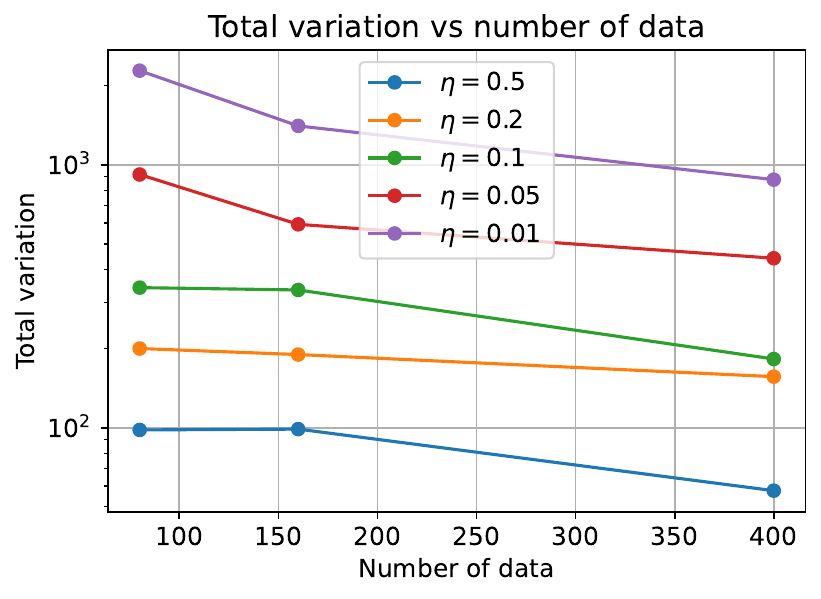}\\
\includegraphics[width=0.34\textwidth]{figs/400master.pdf}
\includegraphics[width=0.32\textwidth]{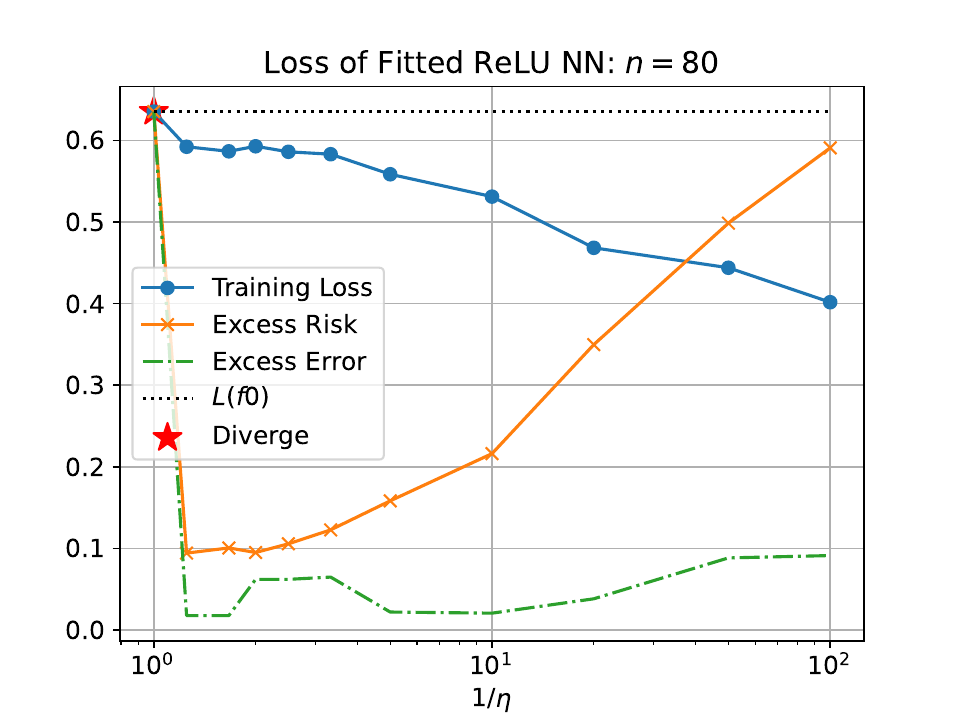}
\includegraphics[width=0.32\textwidth]{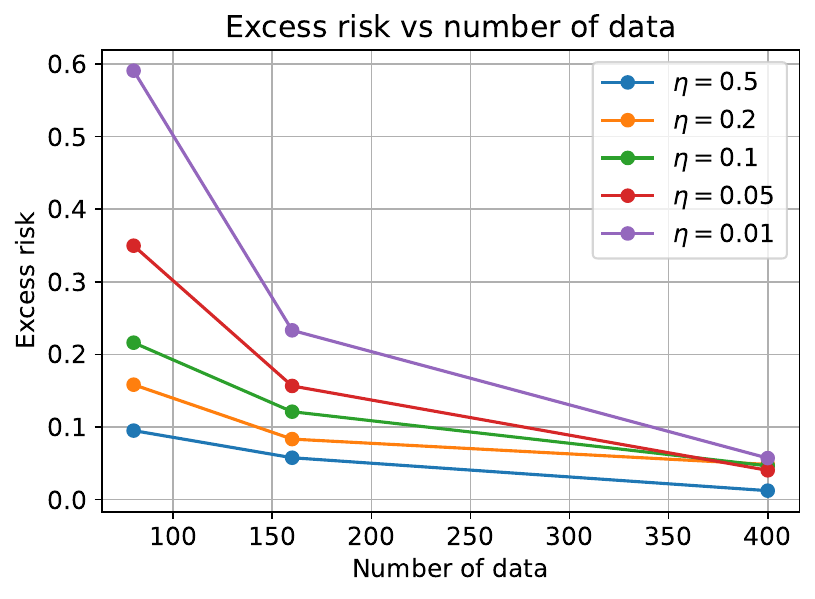}\\
\caption{Highlight of our empirical results. The \textbf{left panel} illustrates the learned functions for GD with large (0.8 or 1) and small (0.01) learning rates using different numbers of samples. The \textbf{middle panel} plots the impact of varying learning rate on the complexity and performance of the learned function. The \textbf{right panel} showcases the following relationships: (1) TV$^{(1)}$ norm vs number of data and (2) excess risk vs number of data under several fixed choices of learning rate.}\label{fig:highlights}
\end{figure}

In this section, we empirically justify our results by training a (mildly overparameterized) two-layer ReLU neural network to minimize logistic loss using gradient descent with varying step sizes. The input dataset comprises of $n$ equally spaced fixed design points $\{x_i\}_{i=1}^n$ located in $[-2,2]$, where the number of data $n$ is chosen to be 80, 160, 400 in different trials. The noisy labels $y_i$ follow the distribution \eqref{equ:dis} with the ground-truth function $f_0(x) = (x+1)\mathbf{1}(x\leq 0) + (-x+1)\mathbf{1}(x>0)$. The two-layer ReLU NN is parameterized by $\theta$ as in Section \ref{sec:setup} with the number of neurons $k=400$. The network uses standard parameterization (scale factor of 1) and the parameters are initialized randomly (see Figure~\ref{fig:learnedbasis} for the initial basis functions). 

\subsection{Experimental Results}
Figure \ref{fig:highlights} illustrates the relationship between the learned ReLU NN that GD-training stabilizes on and the choice of step size. The main take-aways are (a) GD with large learning rate stably converges to flatter minima which represent more regular functions (in $\TV^{(1)}$); (b) Training with smaller learning rate will lead to smaller training loss, while the learned function tends to overfit and showcase the standard U-shape for excess risk and excess error\footnote{Excess error for $f$ is $\widetilde{\cL}(f)-\widetilde{\cL}(f_0)$, where $\widetilde{\cL}(f):=\E_{x\sim\cD}[\mathds{1}(f(x)\geq0)(1-\sigma(f_0(x)))+\mathds{1}(f(x)<0)\sigma(f_0(x))]$, which measures the expected 0-1 loss if we use the sign of $f$ as the prediction.}; (c) When we fix the learning rate $\eta$ and increase the number of data $n$, overfitting is relieved and the excess risk converges to $0$ quickly, thus justifying Assumption \ref{ass:errorsmall}; (d) Meanwhile, the TV$^{(1)}$ norm of the learned function does not increase with $n$, which implies the optimality of the refined version of Theorem \ref{thm:multigengap} (in the order of $n$). 

We leave more experimental details to Appendix \ref{app:experiments}. The learned functions and learning curves with more choices of $\eta$ and $n$ are shown in Appendix \ref{app:functionandcurve}, which further supports the claims in Theorem \ref{thm:bias} and \ref{thm:tvb}. Meanwhile, Appendix \ref{app:losscomplexity} provides more illustrations of the relationship between complexity $\&$ performance $\&$ sparsity of the learned function and the learning rate, where the results are generally consistent with the theorems. Moreover, we highlight that all results satisfy the ``optimized'' assumption in Theorem \ref{thm:erbound}. Lastly, Appendix \ref{app:basis} visualizes the learned basis functions under different learning rates. The learned representations are very different from initialization, thus our experiments are clearly describing phenomena not covered by the ``kernel'' regime.
%\vspace{-2mm}
\section{Conclusion}
%\vspace{-1mm}
In this paper, we consider the well-motivated generalization ability under logistic loss from a lens of minima stability. The key take-aways are: (1) Flatness alone could not ensure generalization due to the existence of arbitrarily flat yet interpolating solutions; (2) Flatness + low-confidence predictions together could render generalization, while the guarantee is tighter within the region with less confidence (more uncertainty); (3) GD with large learning rate could converge to a sharpness of $2/\eta$ by either making the function smoother or more confident, which could depend on the initialization (and other factors). Therefore, an interesting direction is to study the relationship between smoothness and confidence by incorporating training dynamics, which we leave to future work.

\section*{Acknowledgments}
The research is partially supported by NSF DMS \#2134214. The authors thank Peter Bartlett and Jingfeng Wu for a helpful discussion at the Simons Foundation that partially motivated the research.

\bibliography{reference}

@article{ding2024flat,
  title={Flat minima generalize for low-rank matrix recovery},
  author={Ding, Lijun and Drusvyatskiy, Dmitriy and Fazel, Maryam and Harchaoui, Zaid},
  journal={Information and Inference: A Journal of the IMA},
  volume={13},
  number={2},
  pages={iaae009},
  year={2024},
  publisher={Oxford University Press}
}

@inproceedings{kreisler2023gradient,
  title={Gradient descent monotonically decreases the sharpness of gradient flow solutions in scalar networks and beyond},
  author={Kreisler, Itai and Nacson, Mor Shpigel and Soudry, Daniel and Carmon, Yair},
  booktitle={International Conference on Machine Learning},
  pages={17684--17744},
  year={2023},
  organization={PMLR}
}

@article{ahn2023learning,
  title={Learning threshold neurons via edge of stability},
  author={Ahn, Kwangjun and Bubeck, S{\'e}bastien and Chewi, Sinho and Lee, Yin Tat and Suarez, Felipe and Zhang, Yi},
  journal={Advances in Neural Information Processing Systems},
  volume={36},
  year={2023}
}

@inproceedings{arora2022understanding,
  title={Understanding gradient descent on the edge of stability in deep learning},
  author={Arora, Sanjeev and Li, Zhiyuan and Panigrahi, Abhishek},
  booktitle={International Conference on Machine Learning},
  pages={948--1024},
  year={2022},
  organization={PMLR}
}

@inproceedings{cohen2020gradient,
  title={Gradient descent on neural networks typically occurs at the edge of stability},
  author={Cohen, Jeremy and Kaur, Simran and Li, Yuanzhi and Kolter, J Zico and Talwalkar, Ameet},
  booktitle={International Conference on Learning Representations},
  year={2020}
}

@article{zhang2021understanding,
  title={Understanding deep learning (still) requires rethinking generalization},
  author={Zhang, Chiyuan and Bengio, Samy and Hardt, Moritz and Recht, Benjamin and Vinyals, Oriol},
  journal={Communications of the ACM},
  volume={64},
  number={3},
  pages={107--115},
  year={2021},
  publisher={ACM New York, NY, USA}
}

@article{jin2023implicit,
  title={Implicit bias of gradient descent for mean squared error regression with two-layer wide neural networks},
  author={Jin, Hui and Mont{\'u}far, Guido},
  journal={Journal of Machine Learning Research},
  volume={24},
  number={137},
  pages={1--97},
  year={2023}
}

@inproceedings{mei2019mean,
  title={Mean-field theory of two-layers neural networks: dimension-free bounds and kernel limit},
  author={Mei, Song and Misiakiewicz, Theodor and Montanari, Andrea},
  booktitle={Conference on Learning Theory},
  pages={2388--2464},
  year={2019},
  organization={PMLR}
}

@inproceedings{arora2019fine,
  title={Fine-grained analysis of optimization and generalization for overparameterized two-layer neural networks},
  author={Arora, Sanjeev and Du, Simon and Hu, Wei and Li, Zhiyuan and Wang, Ruosong},
  booktitle={International Conference on Machine Learning},
  pages={322--332},
  year={2019},
  organization={PMLR}
}

@inproceedings{chizat2020implicit,
  title={Implicit bias of gradient descent for wide two-layer neural networks trained with the logistic loss},
  author={Chizat, Lenaic and Bach, Francis},
  booktitle={Conference on learning theory},
  pages={1305--1338},
  year={2020},
  organization={PMLR}
}

@article{hochreiter1997flat,
  title={Flat minima},
  author={Hochreiter, Sepp and Schmidhuber, J{\"u}rgen},
  journal={Neural computation},
  volume={9},
  number={1},
  pages={1--42},
  year={1997},
  publisher={MIT Press One Rogers Street, Cambridge, MA 02142-1209, USA journals-info~…}
}

@inproceedings{keskar2017large,
  title={On Large-Batch Training for Deep Learning: Generalization Gap and Sharp Minima},
  author={Keskar, Nitish Shirish and Mudigere, Dheevatsa and Nocedal, Jorge and Smelyanskiy, Mikhail and Tang, Ping Tak Peter},
  booktitle={International Conference on Learning Representations},
  year={2017}
}

@inproceedings{wu2023implicit,
  title={The implicit regularization of dynamical stability in stochastic gradient descent},
  author={Wu, Lei and Su, Weijie J},
  booktitle={International Conference on Machine Learning},
  pages={37656--37684},
  year={2023},
  organization={PMLR}
}

@article{ma2021linear,
  title={On linear stability of sgd and input-smoothness of neural networks},
  author={Ma, Chao and Ying, Lexing},
  journal={Advances in Neural Information Processing Systems},
  volume={34},
  pages={16805--16817},
  year={2021}
}

@inproceedings{zhang2022deep,
  title={Deep Learning meets Nonparametric Regression: Are Weight-Decayed DNNs Locally Adaptive?},
  author={Zhang, Kaiqi and Wang, Yu-Xiang},
  booktitle={International Conference on Learning Representations},
  year={2022}
}

@article{hu2022voronoigram,
  title={The voronoigram: Minimax estimation of bounded variation functions from scattered data},
  author={Hu, Addison J and Green, Alden and Tibshirani, Ryan J},
  journal={arXiv preprint arXiv:2212.14514},
  year={2022}
}

@article{mulayoff2021implicit,
  title={The implicit bias of minima stability: A view from function space},
  author={Mulayoff, Rotem and Michaeli, Tomer and Soudry, Daniel},
  journal={Advances in Neural Information Processing Systems},
  volume={34},
  pages={17749--17761},
  year={2021}
}

@inproceedings{nacson2022implicit,
  title={The Implicit Bias of Minima Stability in Multivariate Shallow ReLU Networks},
  author={Nacson, Mor Shpigel and Mulayoff, Rotem and Ongie, Greg and Michaeli, Tomer and Soudry, Daniel},
  booktitle={The Eleventh International Conference on Learning Representations},
  year={2022}
}

@article{wu2018sgd,
  title={How sgd selects the global minima in over-parameterized learning: A dynamical stability perspective},
  author={Wu, Lei and Ma, Chao and E, Weinan},
  journal={Advances in Neural Information Processing Systems},
  volume={31},
  year={2018}
}

@book{devore1993constructive,
  title={Constructive approximation},
  author={DeVore, Ronald A and Lorentz, George G},
  volume={303},
  year={1993},
  publisher={Springer Science \& Business Media}
}

@article{nickl2007bracketing,
  title={Bracketing metric entropy rates and empirical central limit theorems for function classes of Besov-and Sobolev-type},
  author={Nickl, Richard and P{\"o}tscher, Benedikt M},
  journal={Journal of Theoretical Probability},
  volume={20},
  pages={177--199},
  year={2007},
  publisher={Springer}
}

@book{wainwright2019high,
  title={High-dimensional statistics: A non-asymptotic viewpoint},
  author={Wainwright, Martin J},
  volume={48},
  year={2019},
  publisher={Cambridge university press}
}

@article{edmunds1996function,
  title={Function spaces, entropy numbers, differential operators},
  author={Edmunds, David Eric and Triebel, Hans},
  journal={(No Title)},
  year={1996},
  publisher={Cambridge University Press}
}

@article{qiao2024stable,
  title={Stable minima cannot overfit in univariate ReLU networks: Generalization by large step sizes},
  author={Qiao, Dan and Zhang, Kaiqi and Singh, Esha and Soudry, Daniel and Wang, Yu-Xiang},
  journal={Advances in Neural Information Processing Systems},
  volume={37},
  pages={94163--94208},
  year={2024}
}

@inproceedings{wu2024large,
  title={Large Stepsize Gradient Descent for Logistic Loss: Non-Monotonicity of the Loss Improves Optimization Efficiency},
  author={Wu, Jingfeng and Bartlett, Peter L and Telgarsky, Matus and Yu, Bin},
  booktitle={The Thirty Seventh Annual Conference on Learning Theory},
  pages={5019--5073},
  year={2024},
  organization={PMLR}
}

@article{zhang2024nonparametric,
  title={Nonparametric classification on low dimensional manifolds using overparameterized convolutional residual networks},
  author={Zhang, Zixuan and Zhang, Kaiqi and Chen, Minshuo and Takeda, Yuma and Wang, Mengdi and Zhao, Tuo and Wang, Yu-Xiang},
  journal={Advances in Neural Information Processing Systems},
  volume={37},
  pages={65738--65764},
  year={2024}
}

@article{soudry2018implicit,
  title={The implicit bias of gradient descent on separable data},
  author={Soudry, Daniel and Hoffer, Elad and Nacson, Mor Shpigel and Gunasekar, Suriya and Srebro, Nathan},
  journal={Journal of Machine Learning Research},
  volume={19},
  number={70},
  pages={1--57},
  year={2018}
}

@article{ravi2024implicit,
  title={The implicit bias of gradient descent on separable multiclass data},
  author={Ravi, Hrithik and Scott, Clay and Soudry, Daniel and Wang, Yutong},
  journal={Advances in Neural Information Processing Systems},
  volume={37},
  pages={81324--81359},
  year={2024}
}

@inproceedings{schliserman2022stability,
  title={Stability vs implicit bias of gradient methods on separable data and beyond},
  author={Schliserman, Matan and Koren, Tomer},
  booktitle={Conference on Learning Theory},
  pages={3380--3394},
  year={2022},
  organization={PMLR}
}

@inproceedings{ji2019implicit,
  title={The implicit bias of gradient descent on nonseparable data},
  author={Ji, Ziwei and Telgarsky, Matus},
  booktitle={Conference on learning theory},
  pages={1772--1798},
  year={2019},
  organization={PMLR}
}

@article{wujing2023implicit,
  title={Implicit bias of gradient descent for logistic regression at the edge of stability},
  author={Wu, Jingfeng and Braverman, Vladimir and Lee, Jason D},
  journal={Advances in Neural Information Processing Systems},
  volume={36},
  pages={74229--74256},
  year={2023}
}

@article{cai2024large,
  title={Large stepsize gradient descent for non-homogeneous two-layer networks: Margin improvement and fast optimization},
  author={Cai, Yuhang and Wu, Jingfeng and Mei, Song and Lindsey, Michael and Bartlett, Peter},
  journal={Advances in Neural Information Processing Systems},
  volume={37},
  pages={71306--71351},
  year={2024}
}

@article{zhang2025minimax,
  title={Minimax Optimal Convergence of Gradient Descent in Logistic Regression via Large and Adaptive Stepsizes},
  author={Zhang, Ruiqi and Wu, Jingfeng and Lin, Licong and Bartlett, Peter L},
  journal={arXiv preprint arXiv:2504.04105},
  year={2025}
}

@inproceedings{ahn2022understanding,
  title={Understanding the unstable convergence of gradient descent},
  author={Ahn, Kwangjun and Zhang, Jingzhao and Sra, Suvrit},
  booktitle={International conference on machine learning},
  pages={247--257},
  year={2022},
  organization={PMLR}
}

@article{wang2022analyzing,
  title={Analyzing sharpness along GD trajectory: Progressive sharpening and edge of stability},
  author={Wang, Zixuan and Li, Zhouzi and Li, Jian},
  journal={Advances in Neural Information Processing Systems},
  volume={35},
  pages={9983--9994},
  year={2022}
}

@article{zhu2022understanding,
  title={Understanding edge-of-stability training dynamics with a minimalist example},
  author={Zhu, Xingyu and Wang, Zixuan and Wang, Xiang and Zhou, Mo and Ge, Rong},
  journal={arXiv preprint arXiv:2210.03294},
  year={2022}
}

@article{damian2022self,
  title={Self-stabilization: The implicit bias of gradient descent at the edge of stability},
  author={Damian, Alex and Nichani, Eshaan and Lee, Jason D},
  journal={arXiv preprint arXiv:2209.15594},
  year={2022}
}

@article{even2023s,
  title={(S) GD over Diagonal Linear Networks: Implicit bias, Large Stepsizes and Edge of Stability},
  author={Even, Mathieu and Pesme, Scott and Gunasekar, Suriya and Flammarion, Nicolas},
  journal={Advances in Neural Information Processing Systems},
  volume={36},
  pages={29406--29448},
  year={2023}
}

@article{lu2023benign,
  title={Benign oscillation of stochastic gradient descent with large learning rates},
  author={Lu, Miao and Wu, Beining and Yang, Xiaodong and Zou, Difan},
  journal={arXiv preprint arXiv:2310.17074},
  year={2023}
}

@article{lyu2022understanding,
  title={Understanding the generalization benefit of normalization layers: Sharpness reduction},
  author={Lyu, Kaifeng and Li, Zhiyuan and Arora, Sanjeev},
  journal={Advances in Neural Information Processing Systems},
  volume={35},
  pages={34689--34708},
  year={2022}
}

@article{kong2020stochasticity,
  title={Stochasticity of deterministic gradient descent: Large learning rate for multiscale objective function},
  author={Kong, Lingkai and Tao, Molei},
  journal={Advances in neural information processing systems},
  volume={33},
  pages={2625--2638},
  year={2020}
}

@inproceedings{dinh2017sharp,
  title={Sharp minima can generalize for deep nets},
  author={Dinh, Laurent and Pascanu, Razvan and Bengio, Samy and Bengio, Yoshua},
  booktitle={International Conference on Machine Learning},
  pages={1019--1028},
  year={2017},
  organization={PMLR}
}

@article{neyshabur2017exploring,
  title={Exploring generalization in deep learning},
  author={Neyshabur, Behnam and Bhojanapalli, Srinadh and McAllester, David and Srebro, Nati},
  journal={Advances in neural information processing systems},
  volume={30},
  year={2017}
}
\bibliographystyle{plainnat}

\newpage
\appendix

\section{Full Experimental Results}\label{app:experiments}
\subsection{Stable Minima GD Converges to and Learning Curves}\label{app:functionandcurve}
\begin{figure}[h!]
    \centering
    \includegraphics[width=0.24\textwidth]{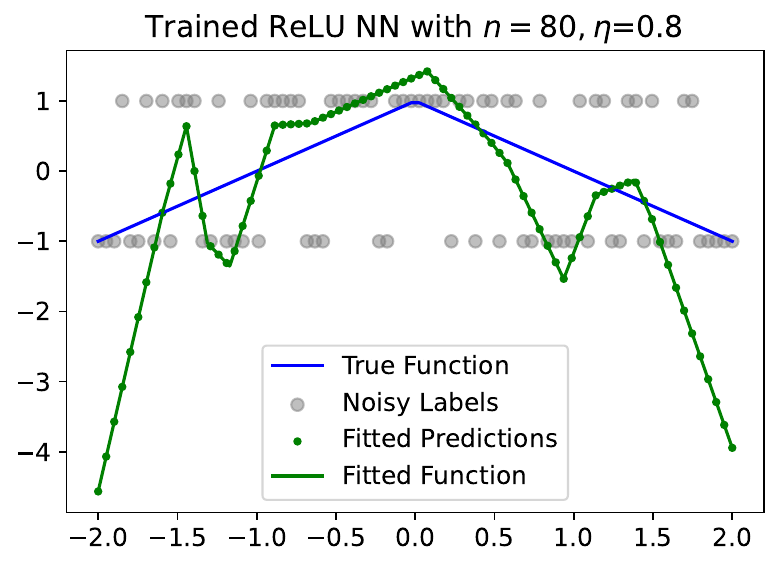}
    \includegraphics[width=0.24\textwidth]{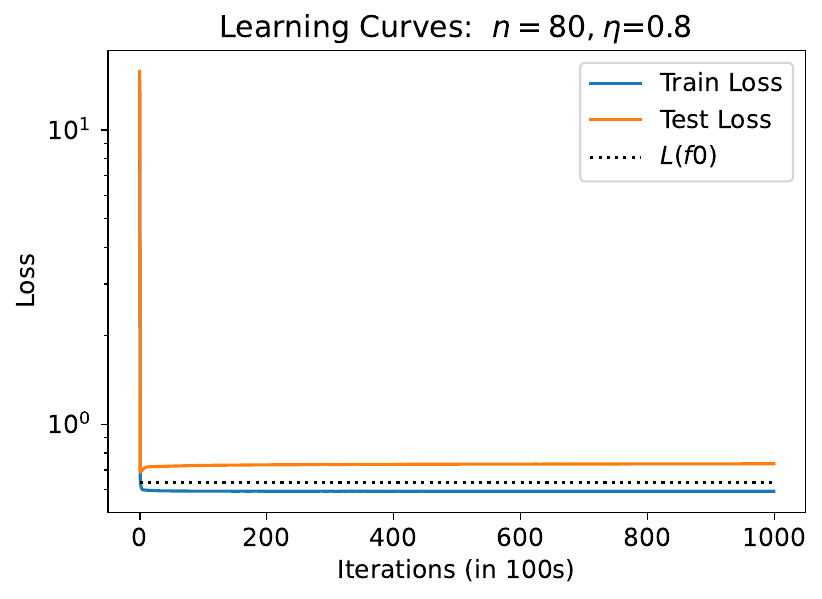}
    \includegraphics[width=0.24\textwidth]{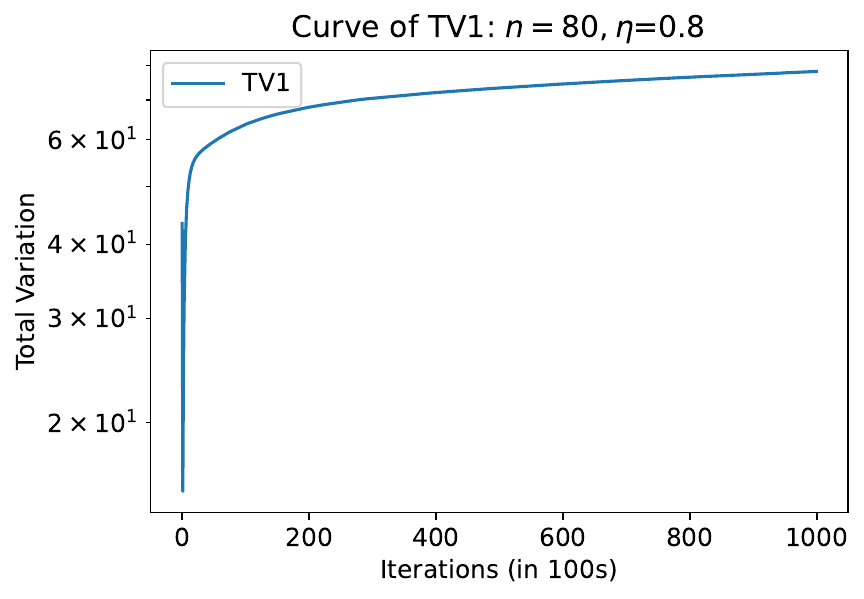}\\
    \includegraphics[width=0.24\textwidth]{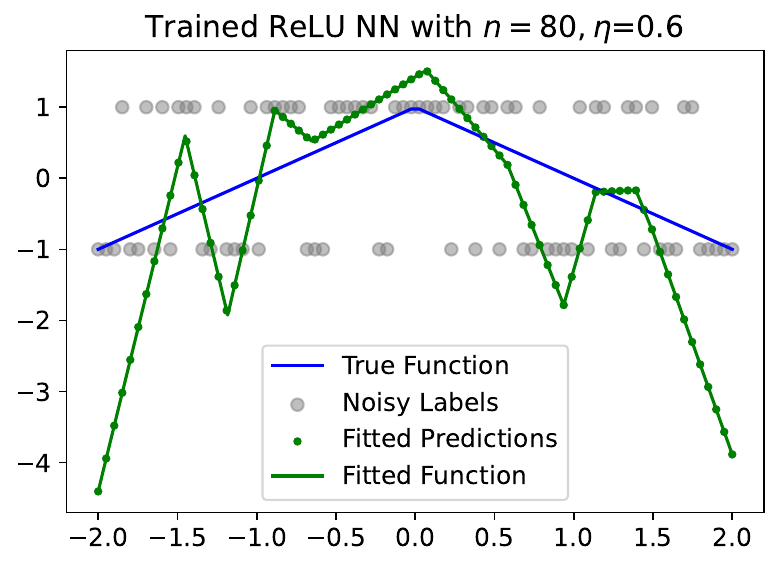}
    \includegraphics[width=0.24\textwidth]{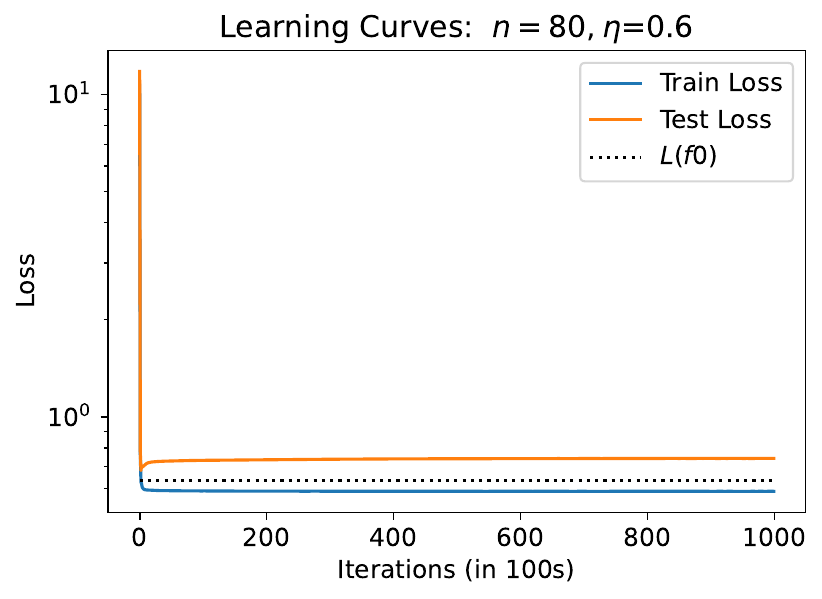}
    \includegraphics[width=0.24\textwidth]{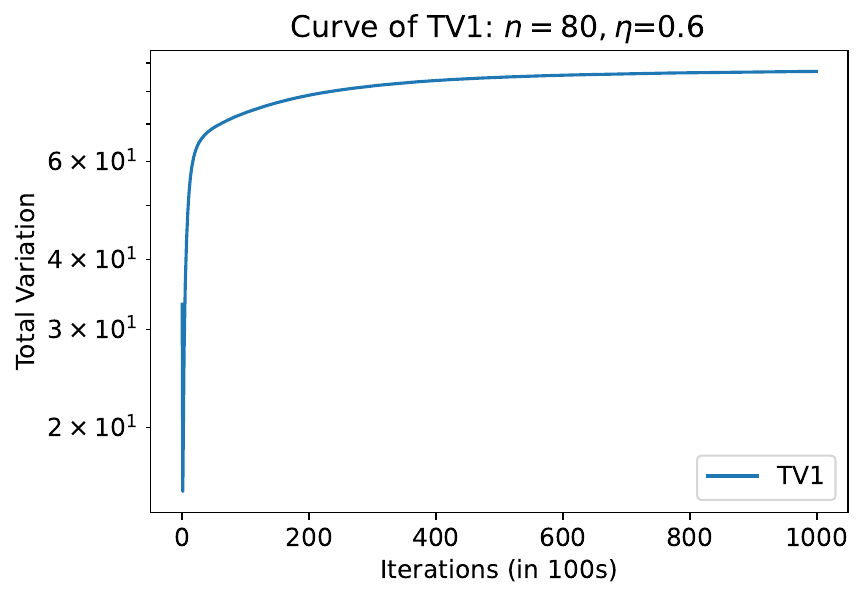}\\
    \includegraphics[width=0.24\textwidth]{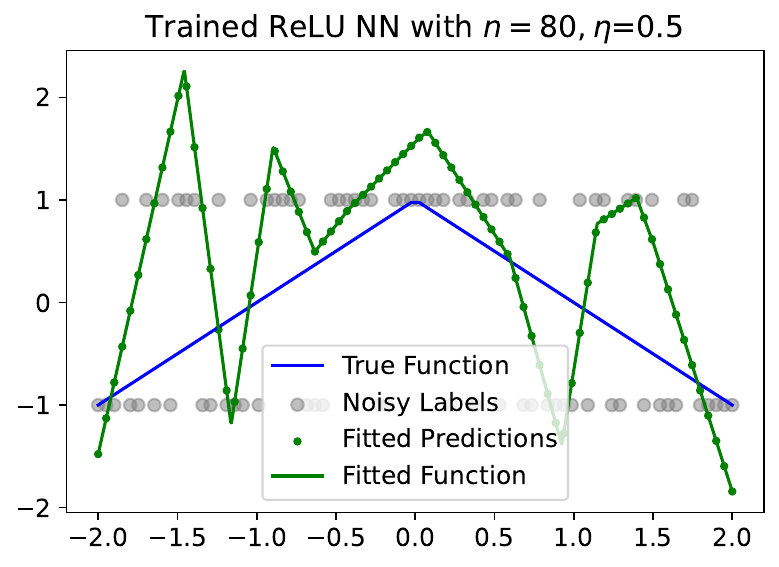}
    \includegraphics[width=0.24\textwidth]{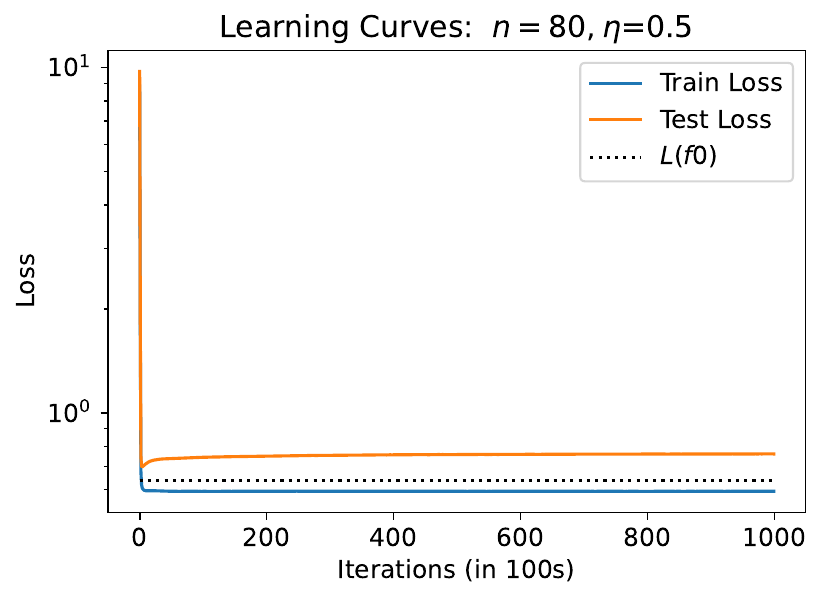}
    \includegraphics[width=0.24\textwidth]{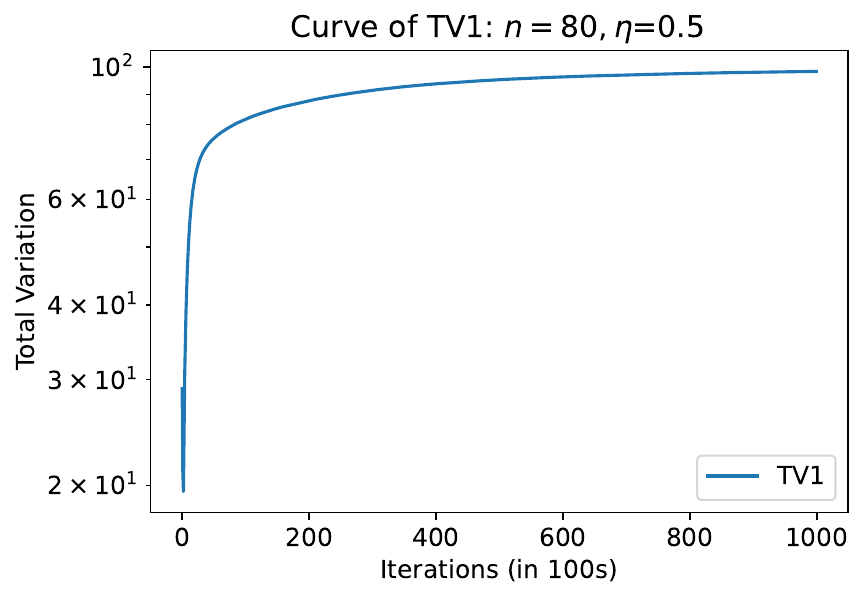}\\
    \includegraphics[width=0.24\textwidth]{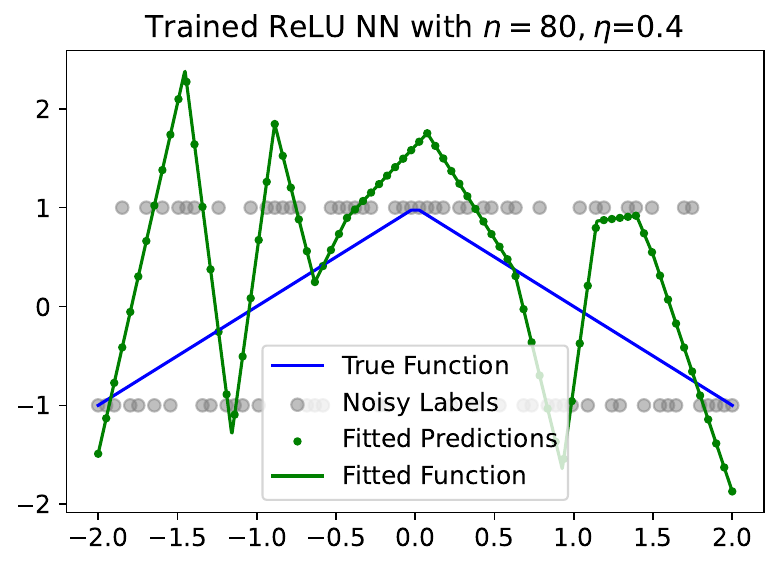}
    \includegraphics[width=0.24\textwidth]{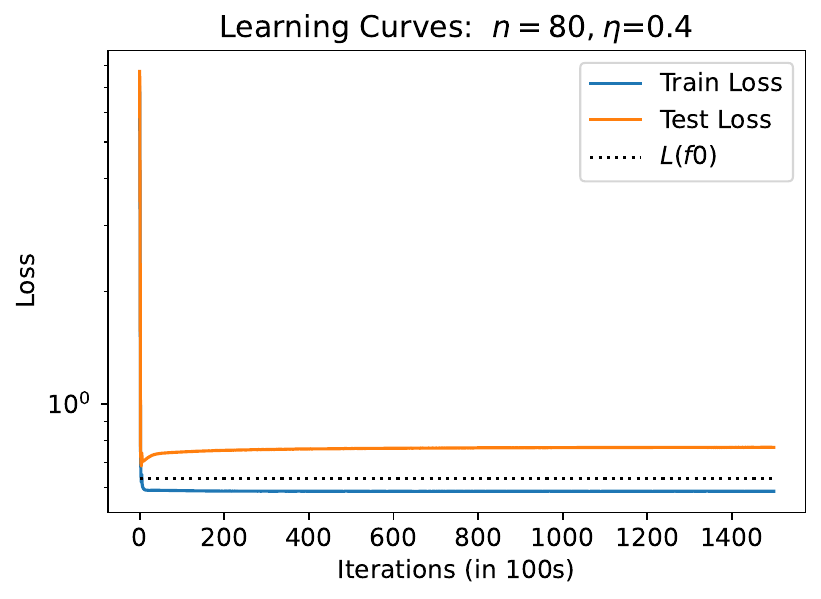}
    \includegraphics[width=0.24\textwidth]{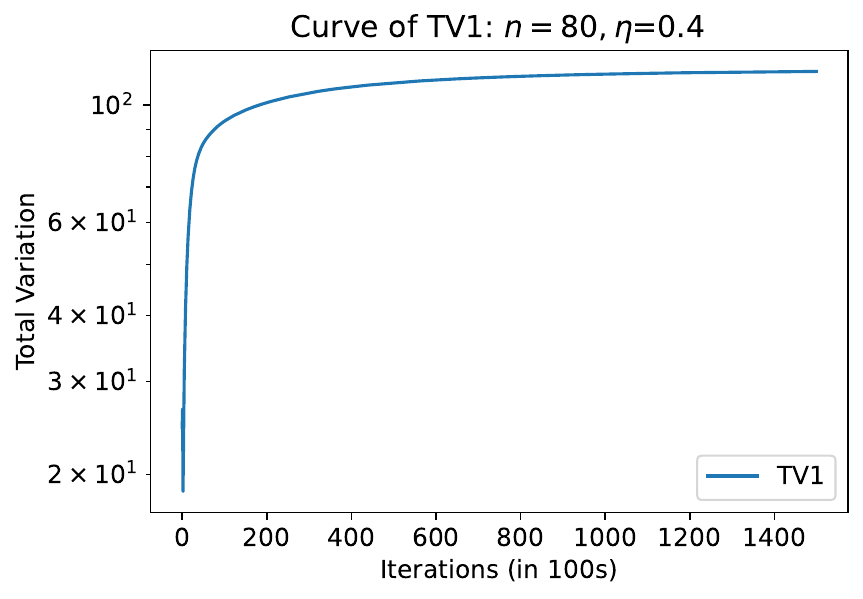}\\
    \includegraphics[width=0.24\textwidth]{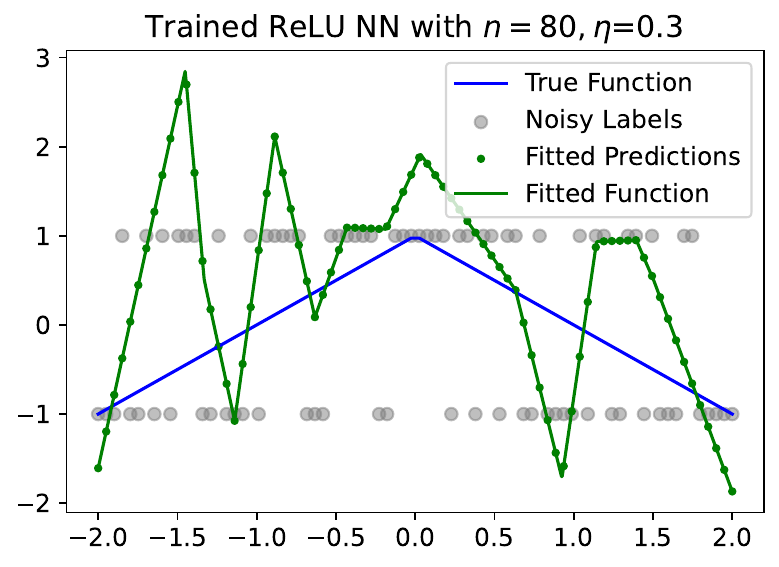}
    \includegraphics[width=0.24\textwidth]{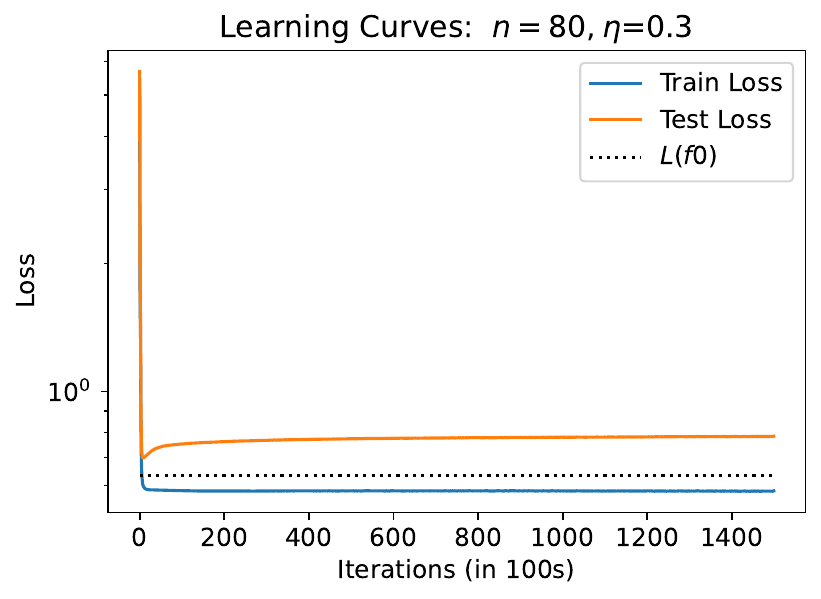}
    \includegraphics[width=0.24\textwidth]{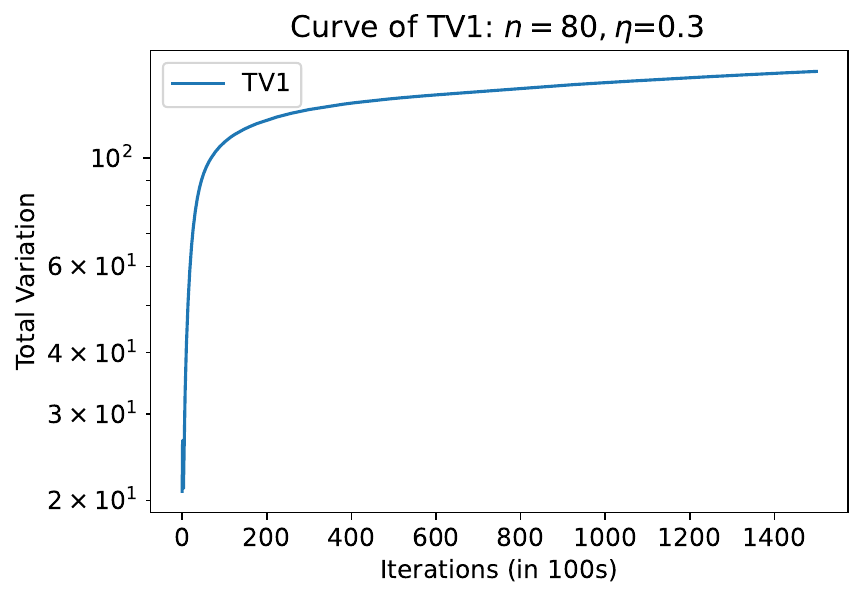}\\
    \includegraphics[width=0.24\textwidth]{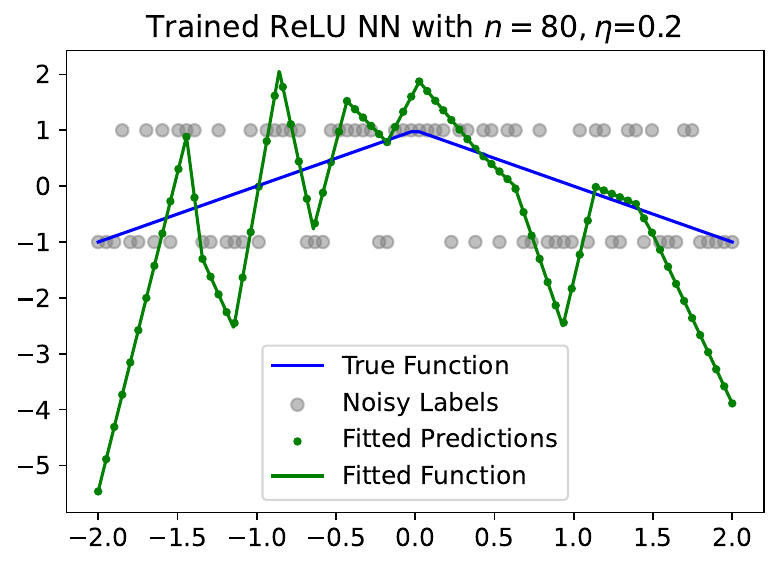}
    \includegraphics[width=0.24\textwidth]{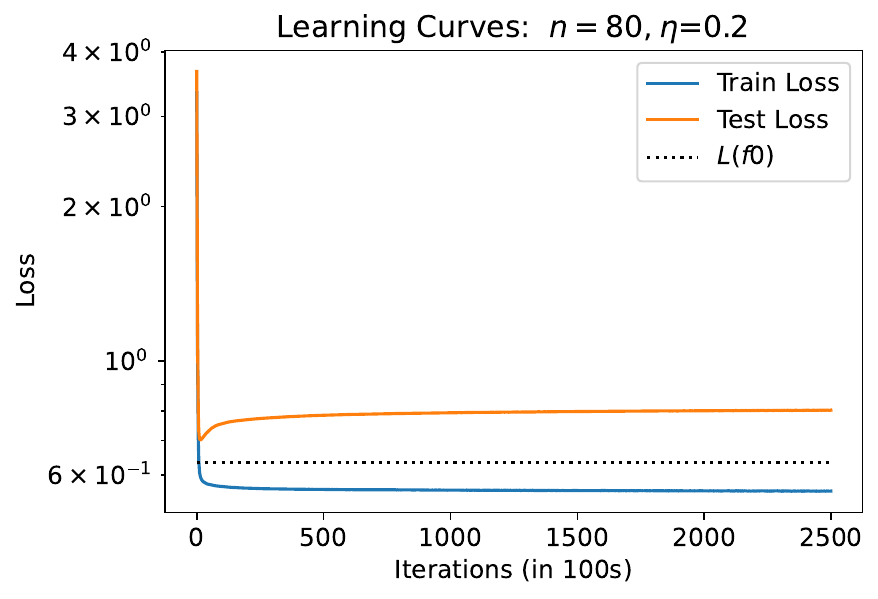}
    \includegraphics[width=0.24\textwidth]{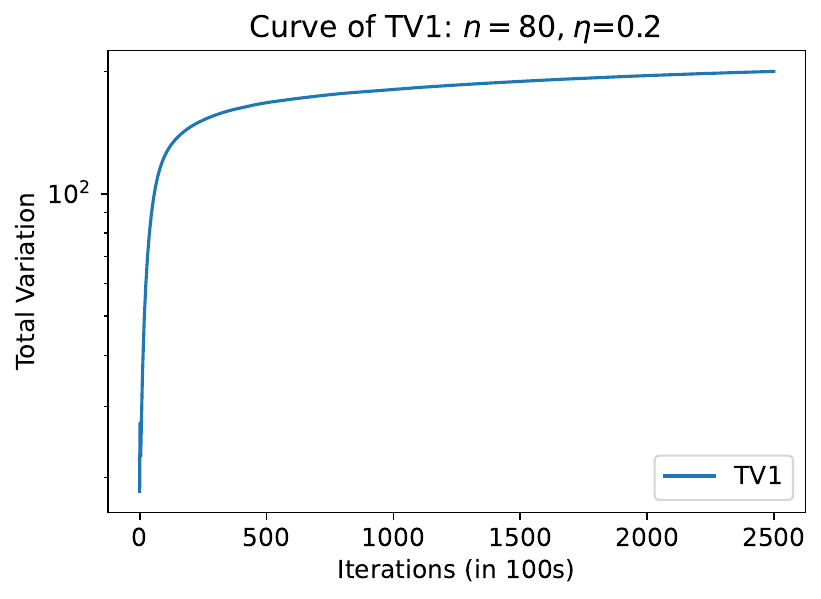}\\
    \caption{Illustration of the solutions gradient descent with learning rate $\eta$ converges to ($n=80$: Part I). As $\eta$ decreases, the fitted function goes from simple to complex. Any line below the $\cL(f_0)$ line satisfies the “optimized” assumption from Theorem \ref{thm:erbound} and Lemma \ref{lem:erbound}. Test loss denotes $\bar{\cL}(f)$.}
    \label{fig:80part1}
\end{figure}

\newpage

\begin{figure}[h!]
    \centering
    \includegraphics[width=0.24\textwidth]{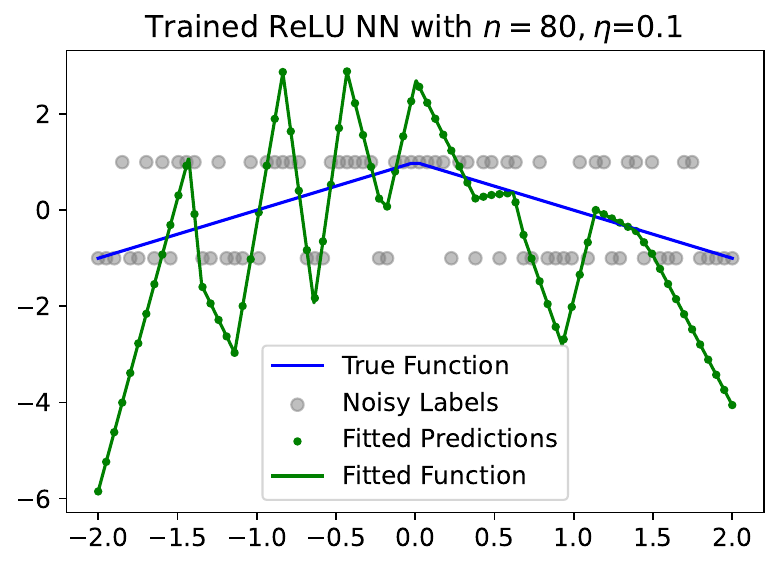}
    \includegraphics[width=0.24\textwidth]{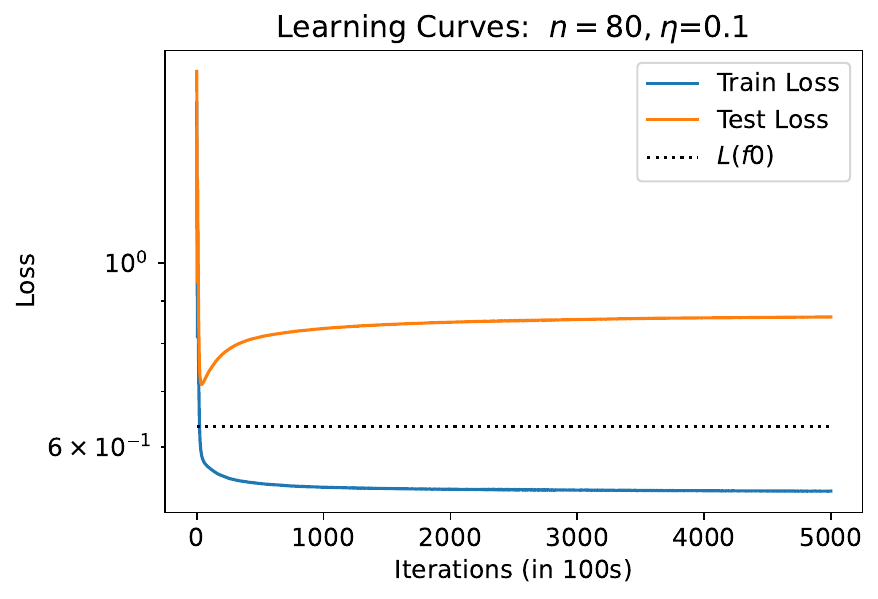}
    \includegraphics[width=0.24\textwidth]{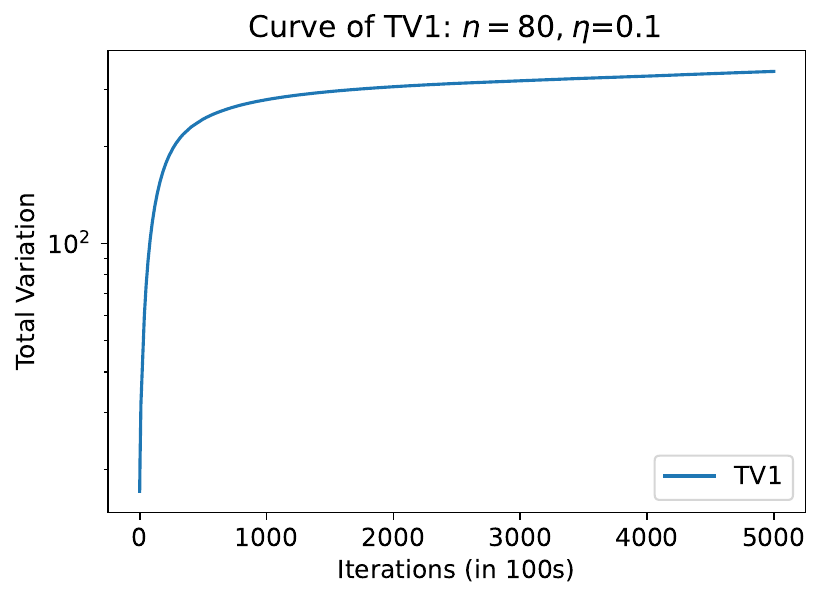}\\
    \includegraphics[width=0.24\textwidth]{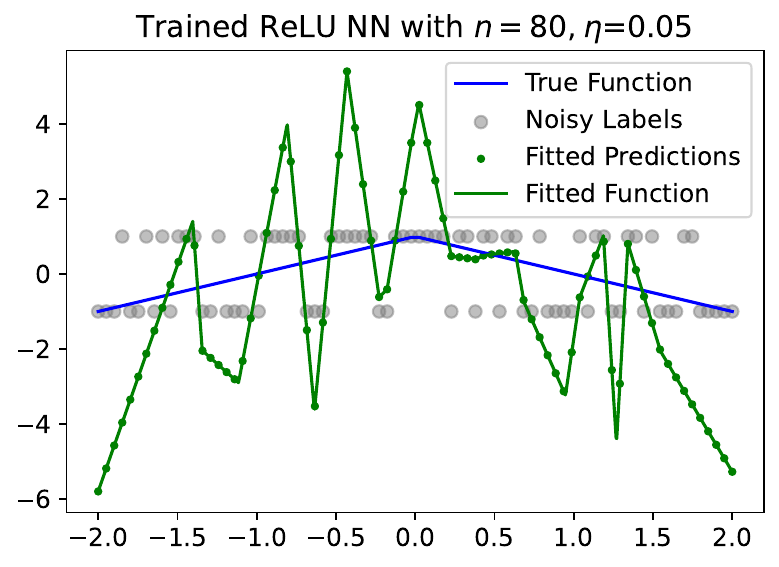}
    \includegraphics[width=0.24\textwidth]{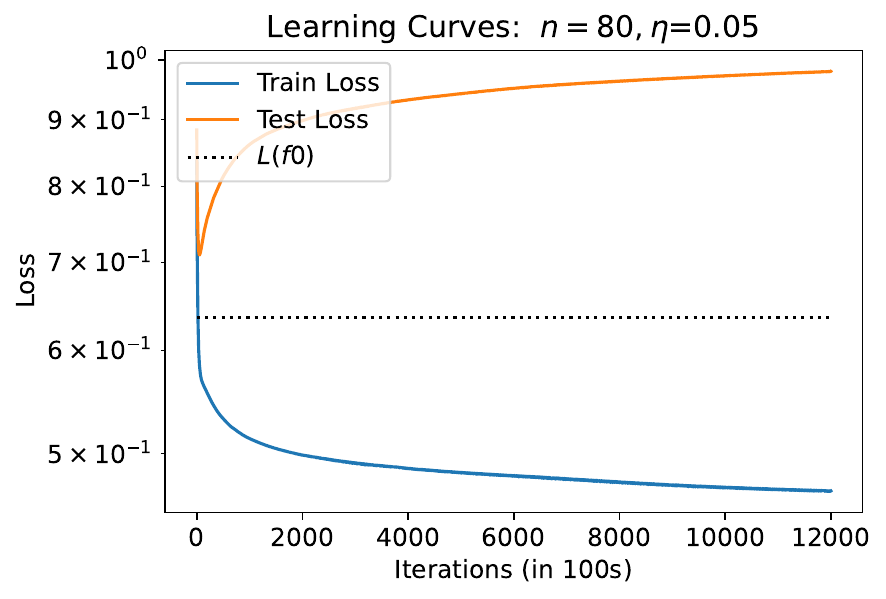}
    \includegraphics[width=0.24\textwidth]{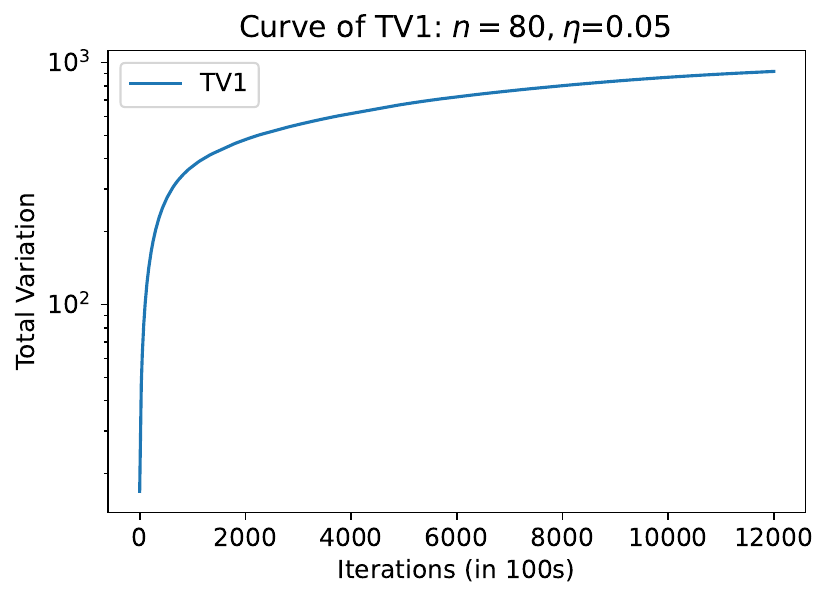}\\
    \includegraphics[width=0.24\textwidth]{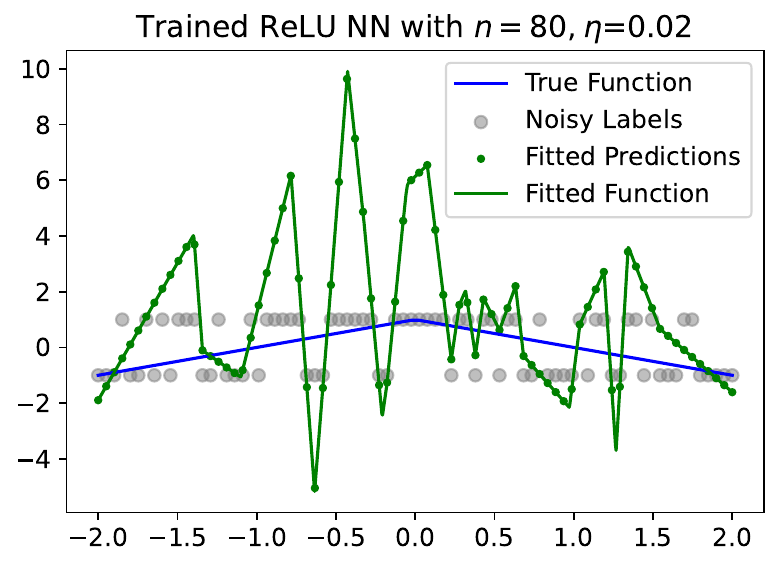}
    \includegraphics[width=0.24\textwidth]{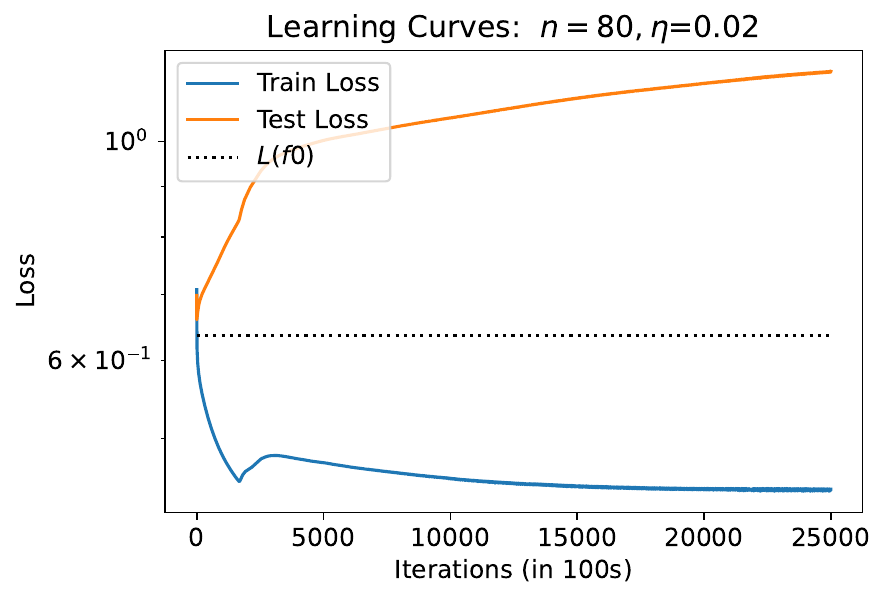}
    \includegraphics[width=0.24\textwidth]{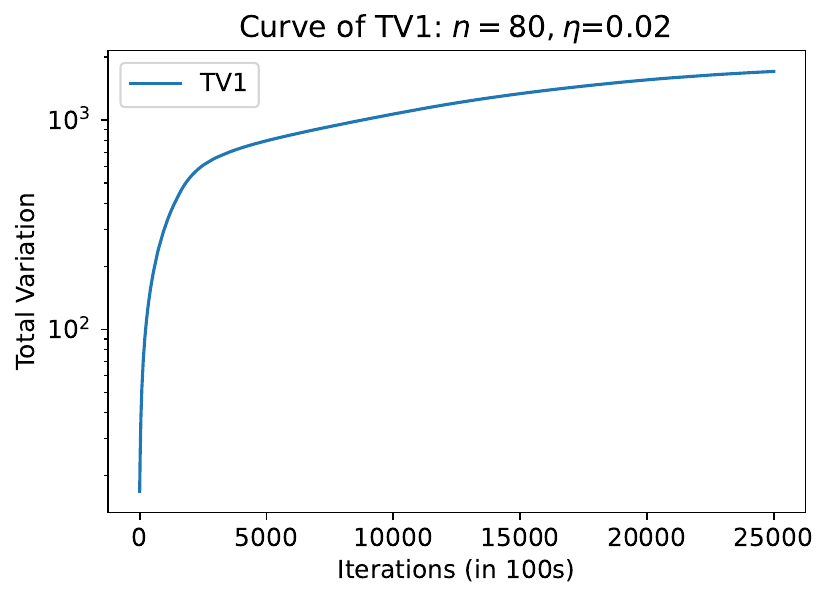}\\
    \includegraphics[width=0.24\textwidth]{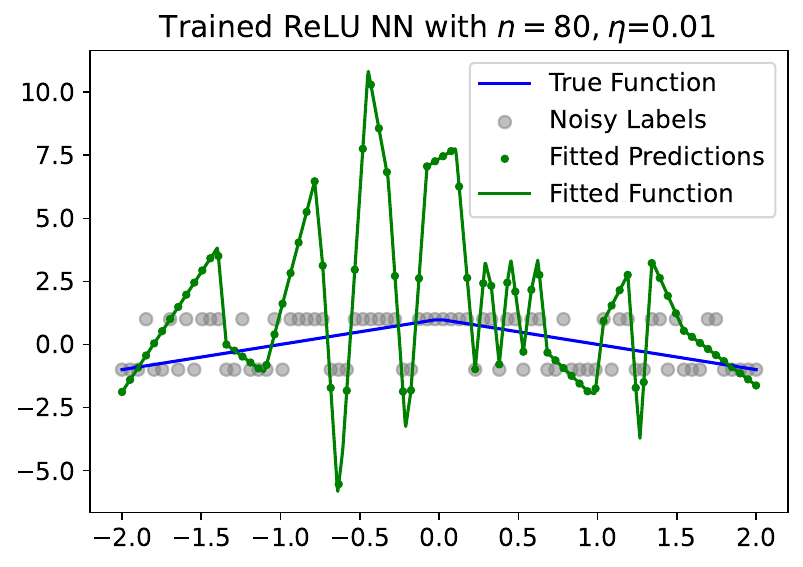}
    \includegraphics[width=0.24\textwidth]{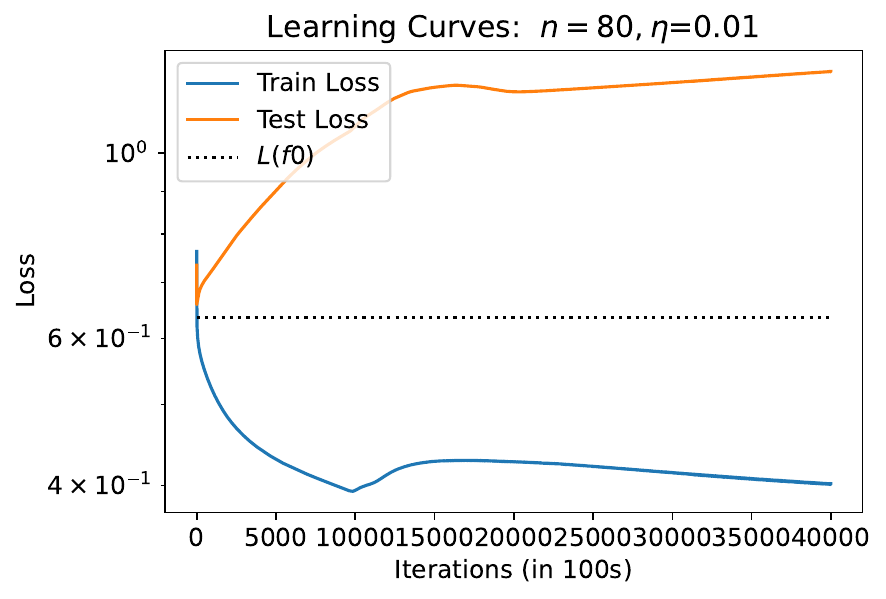}
    \includegraphics[width=0.24\textwidth]{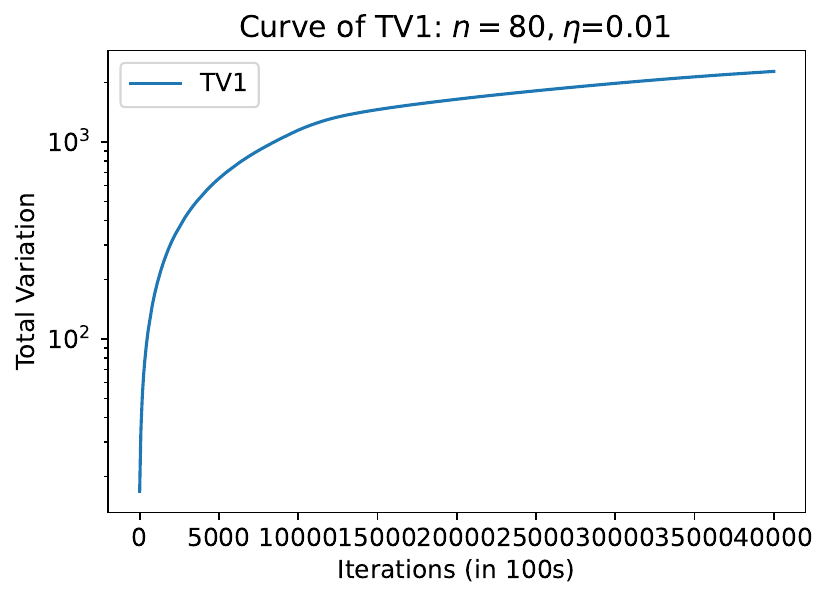}\\
\caption{Illustrations ($n=80$: Part II). As $\eta$ decreases further, the fitted function starts to overfit.}\label{fig:80part2}
\end{figure}

\begin{figure}[h!]
    \centering
    \includegraphics[width=0.24\textwidth]{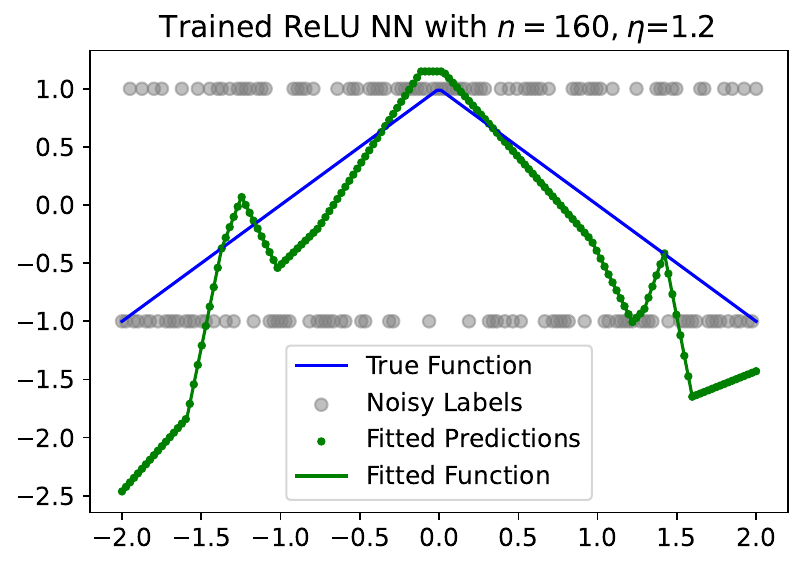}
    \includegraphics[width=0.24\textwidth]{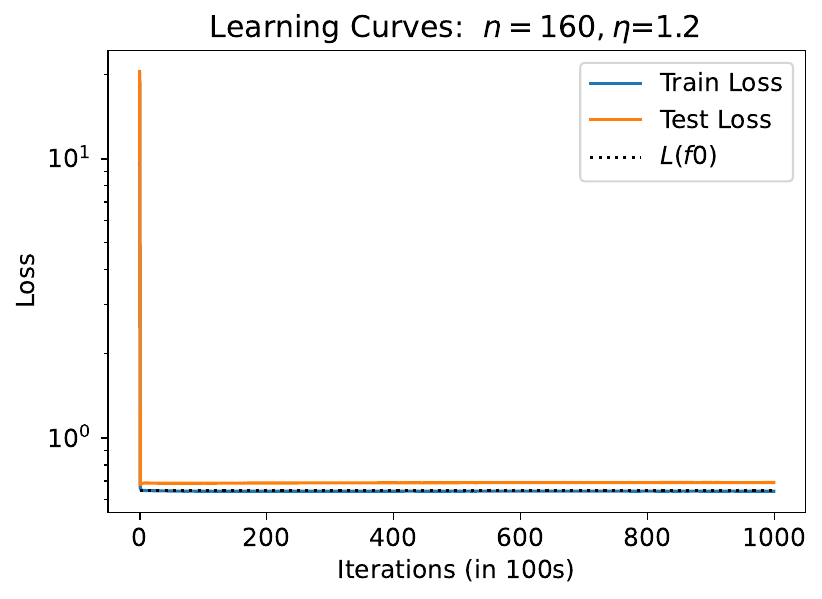}
    \includegraphics[width=0.24\textwidth]{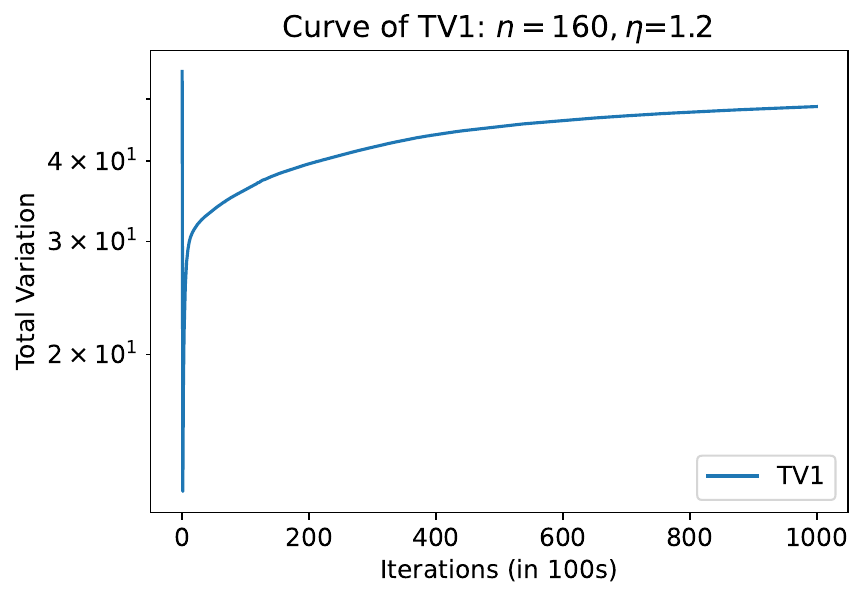}\\
    \includegraphics[width=0.24\textwidth]{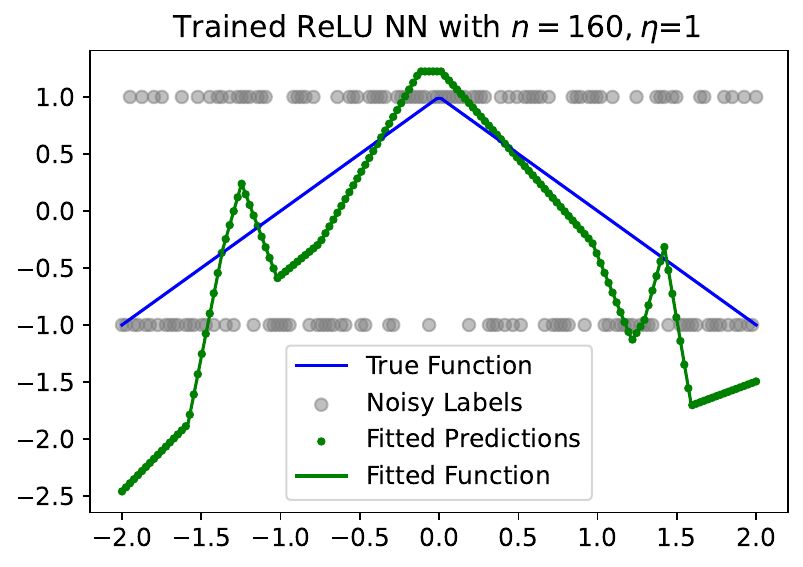}
    \includegraphics[width=0.24\textwidth]{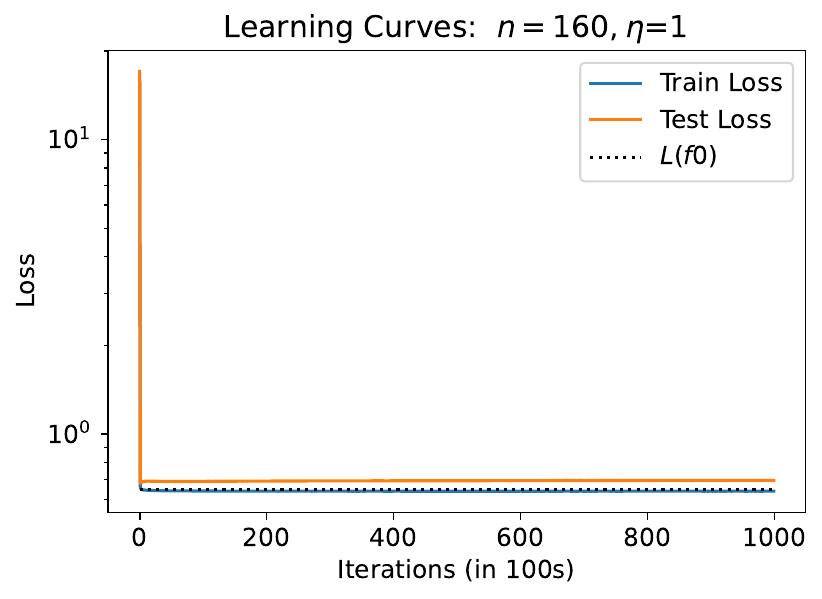}
    \includegraphics[width=0.24\textwidth]{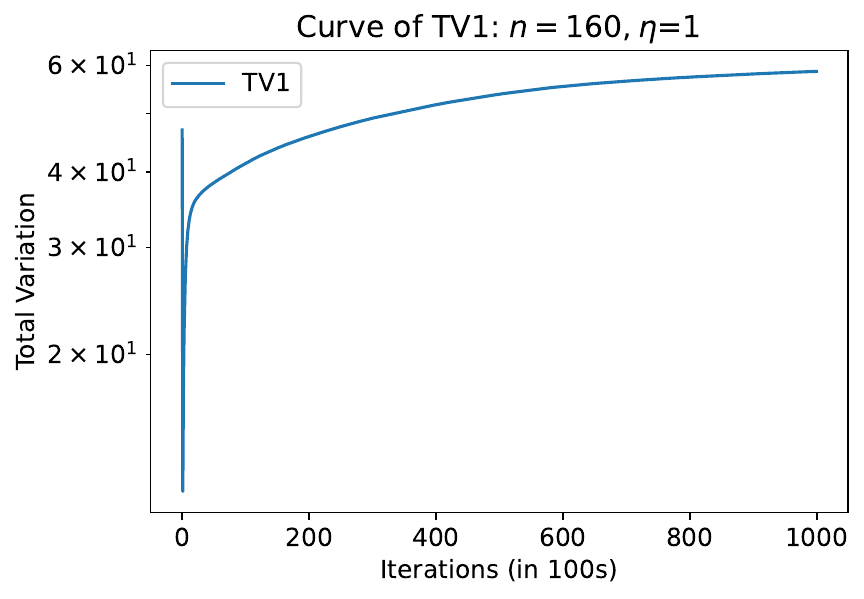}\\
    \caption{Illustration of the solutions GD with learning rate $\eta$ converges to ($n=160$: Part I).}
    \label{fig:160part1}
\end{figure}

\newpage

\begin{figure}[h!]
    \centering
    \includegraphics[width=0.24\textwidth]{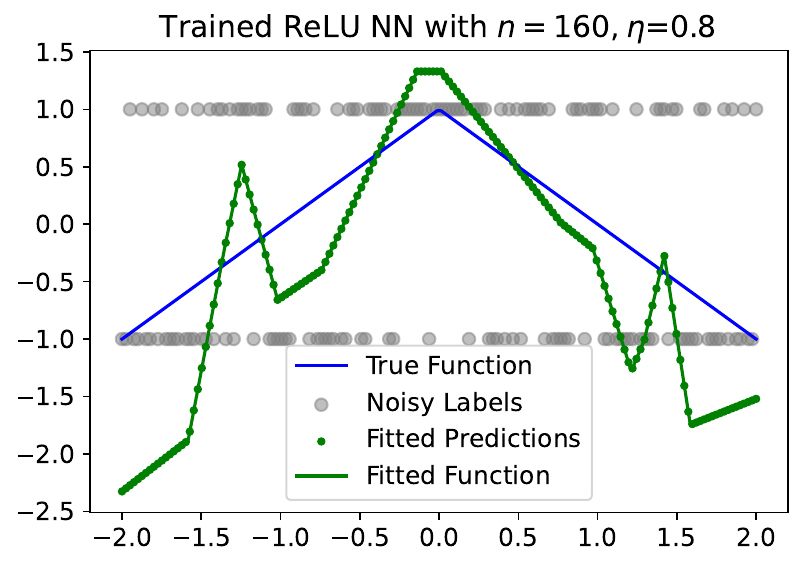}
    \includegraphics[width=0.24\textwidth]{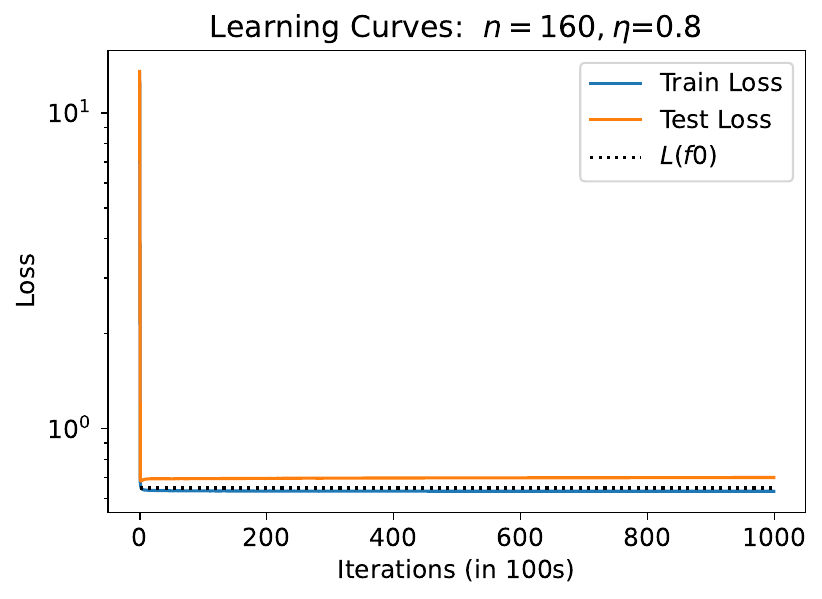}
    \includegraphics[width=0.24\textwidth]{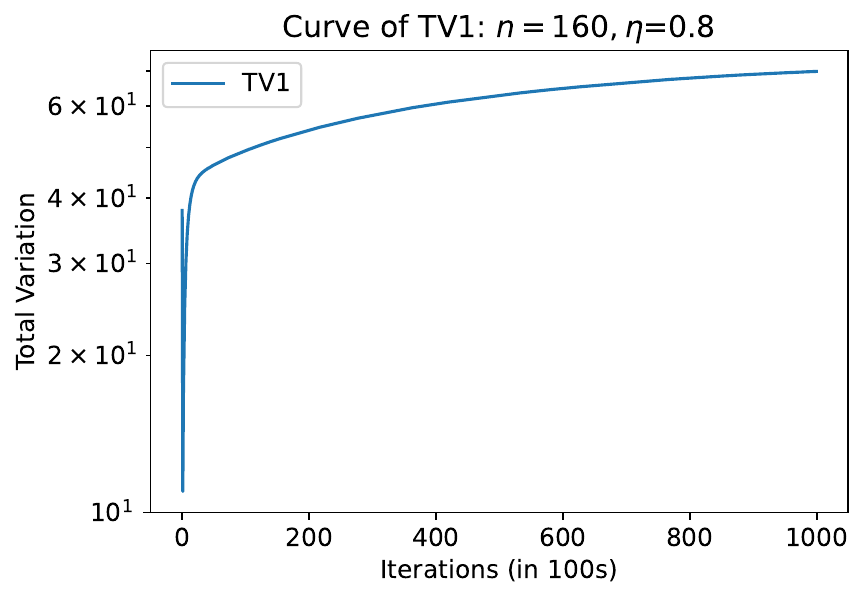}\\
    \includegraphics[width=0.24\textwidth]{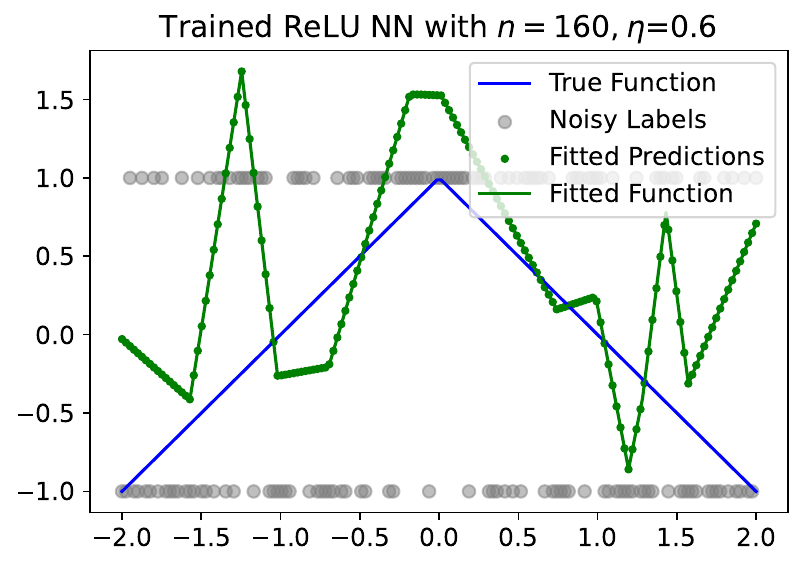}
    \includegraphics[width=0.24\textwidth]{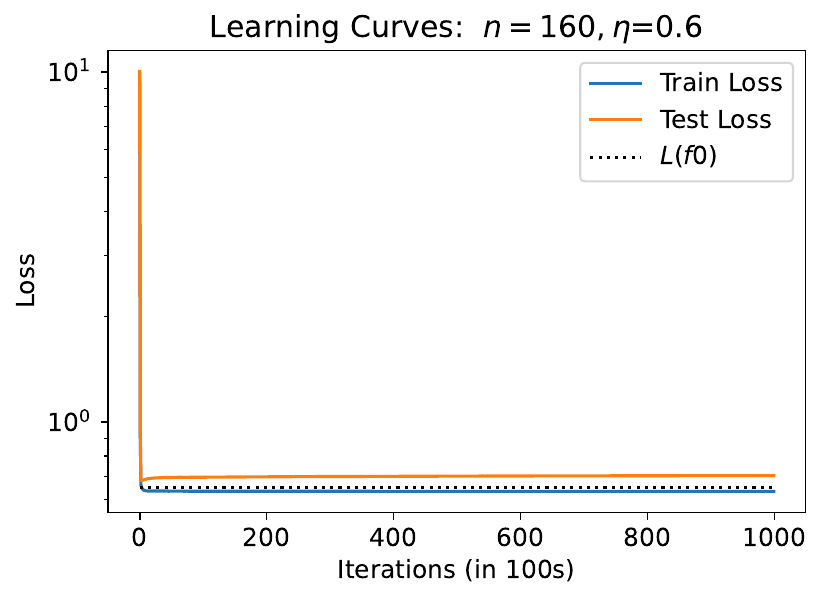}
    \includegraphics[width=0.24\textwidth]{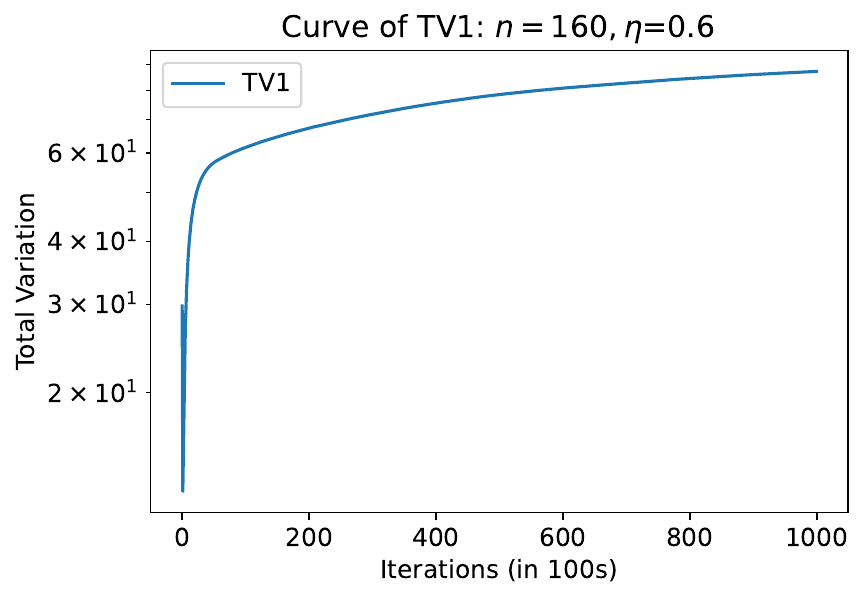}\\
    \includegraphics[width=0.24\textwidth]{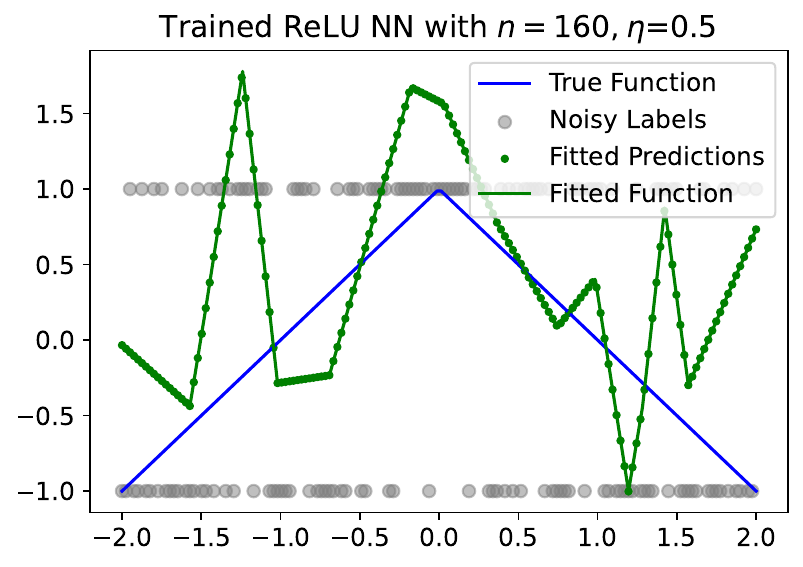}
    \includegraphics[width=0.24\textwidth]{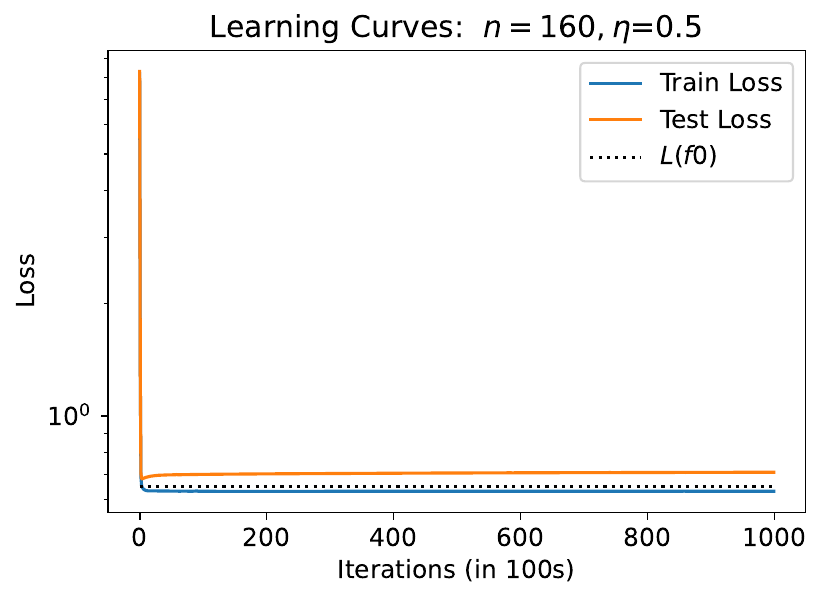}
    \includegraphics[width=0.24\textwidth]{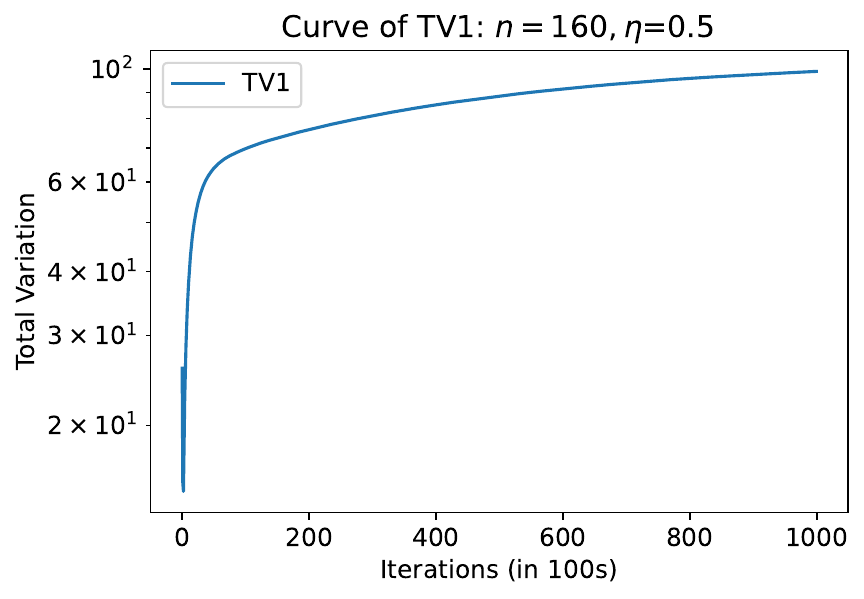}\\
    \includegraphics[width=0.24\textwidth]{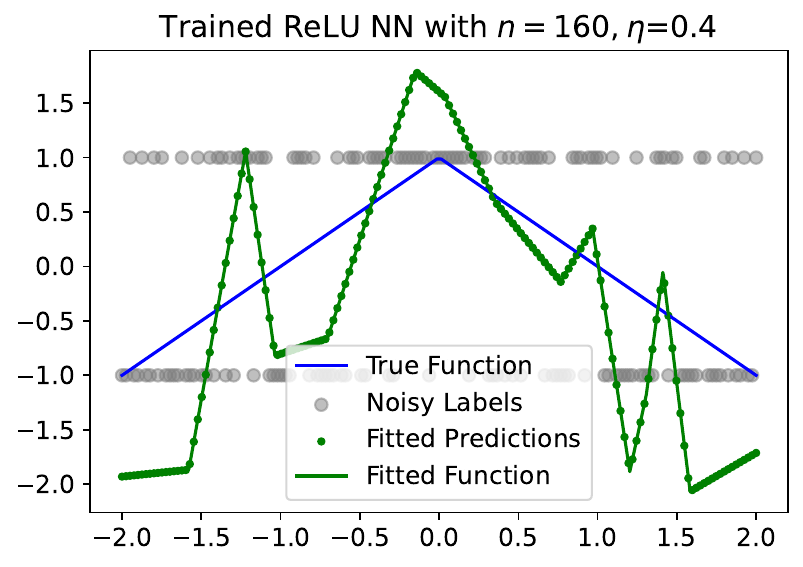}
    \includegraphics[width=0.24\textwidth]{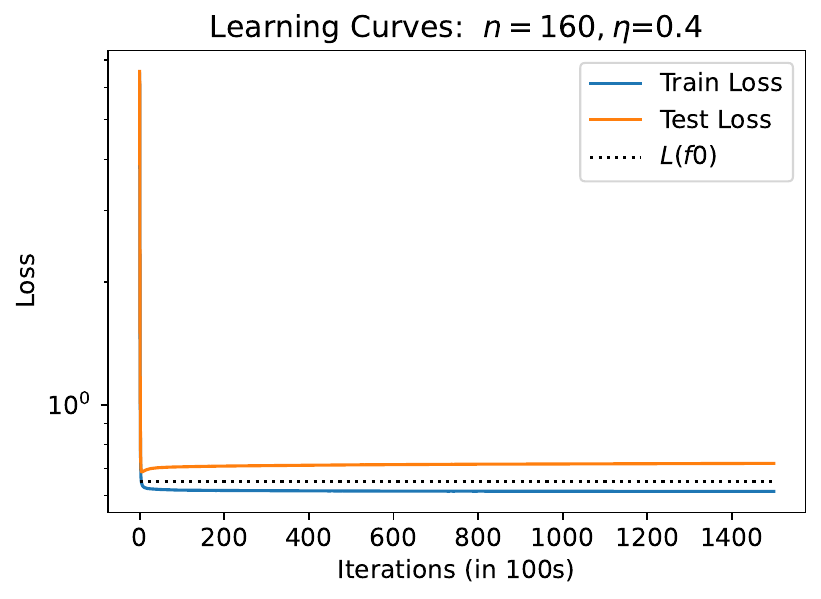}
    \includegraphics[width=0.24\textwidth]{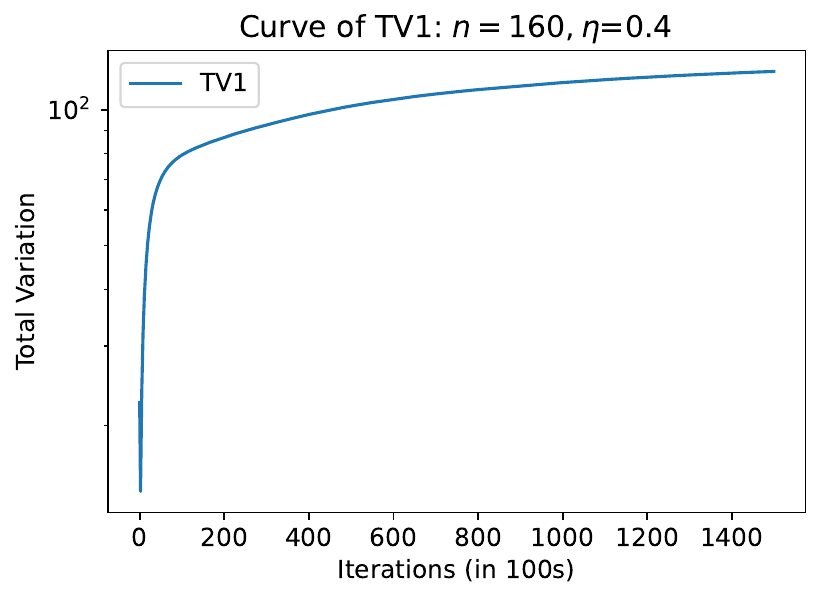}\\
    \includegraphics[width=0.24\textwidth]{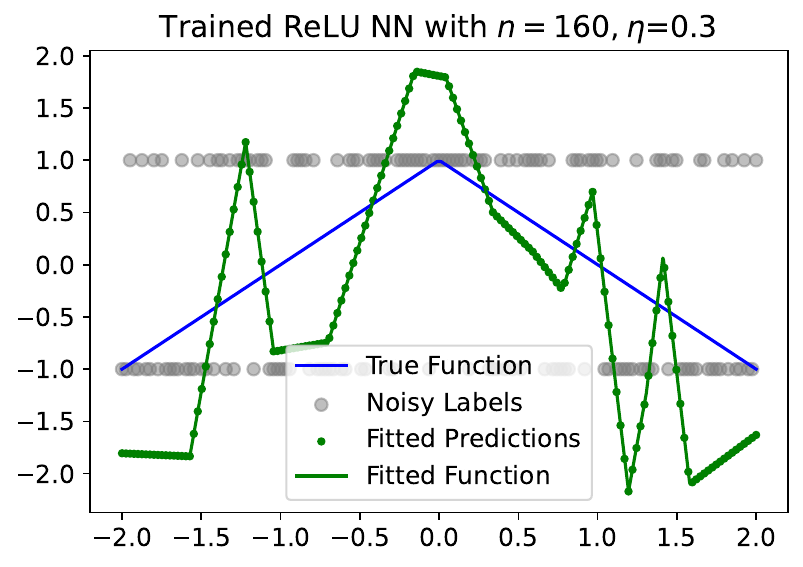}
    \includegraphics[width=0.24\textwidth]{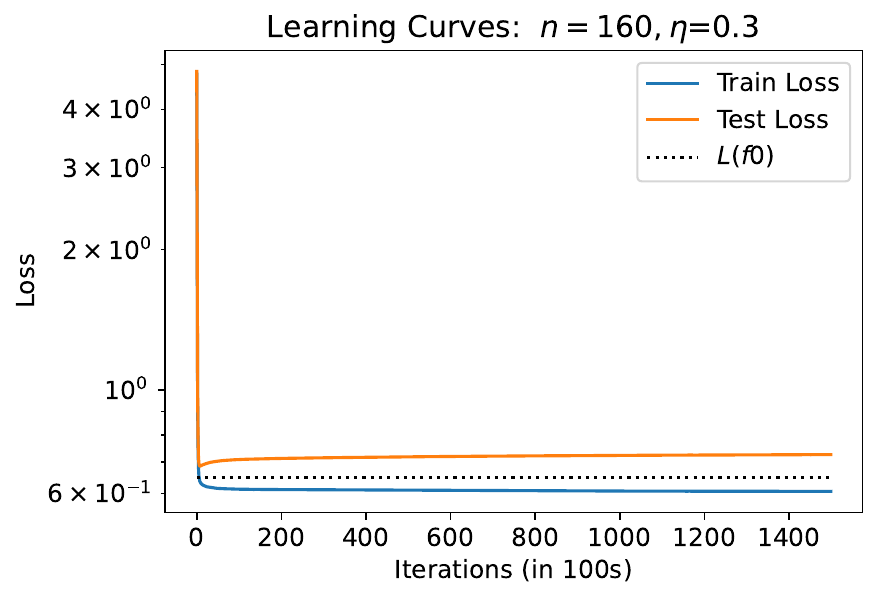}
    \includegraphics[width=0.24\textwidth]{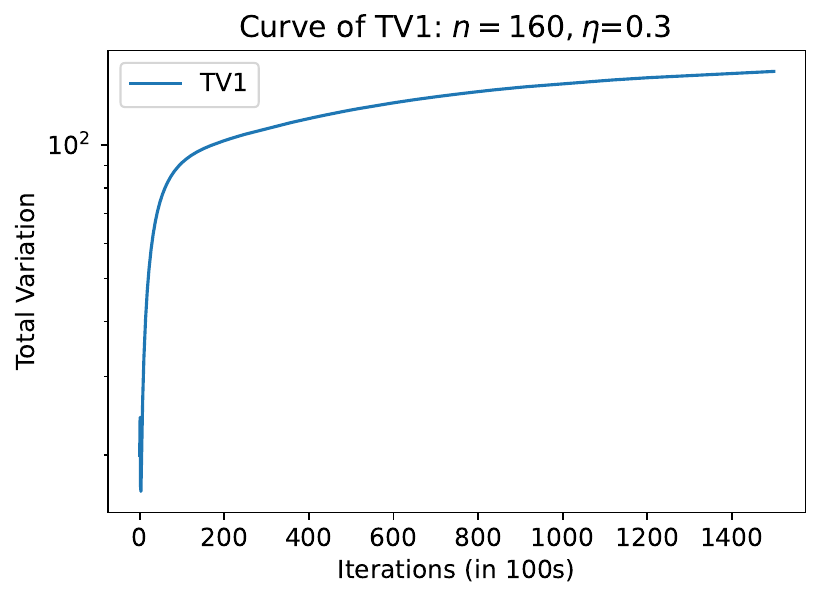}\\
    \includegraphics[width=0.24\textwidth]{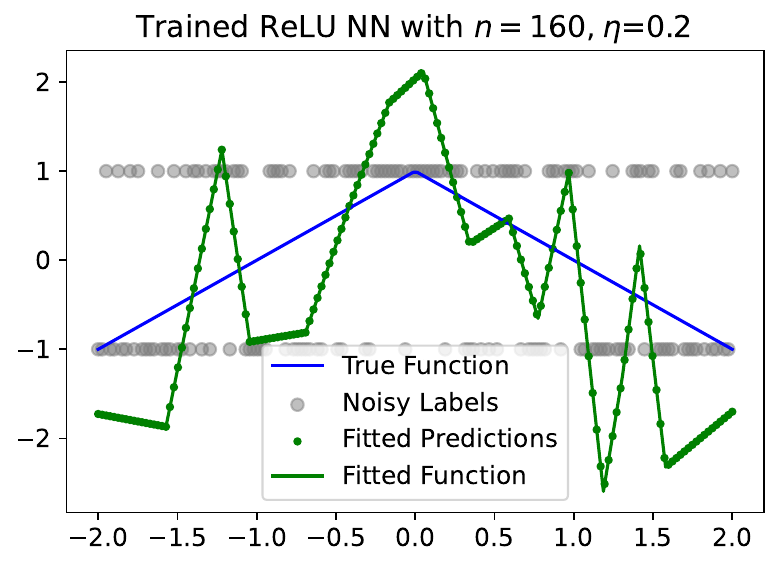}
    \includegraphics[width=0.24\textwidth]{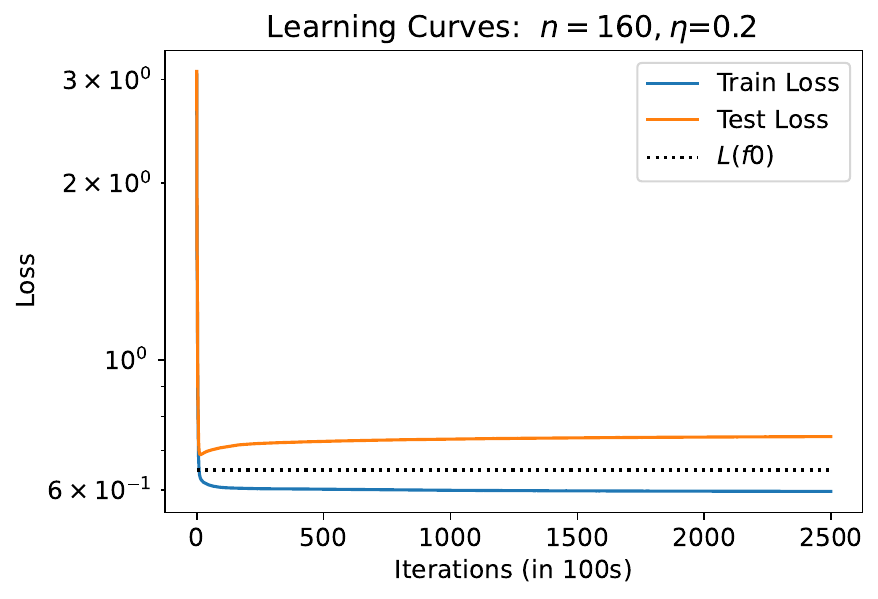}
    \includegraphics[width=0.24\textwidth]{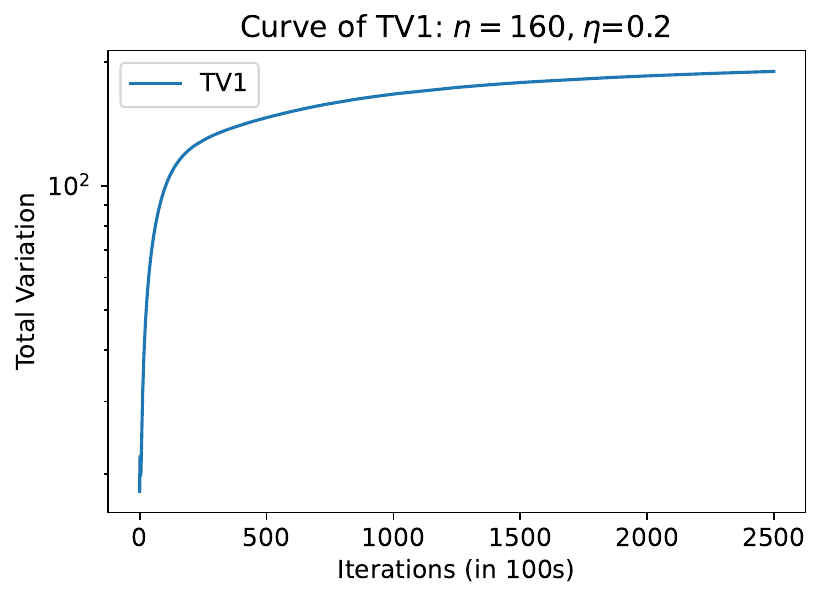}\\
    \caption{Illustration of the solutions gradient descent with learning rate $\eta$ converges to ($n=160$: Part II). As $\eta$ decreases, the fitted function goes from simple to complex. Compared to the case where $n=80$ in Figure \ref{fig:80part1}, for the same $\eta$, the learned function approximates the ground-truth better.}
    \label{fig:160part2}
\end{figure}

\newpage

\begin{figure}[h!]
    \centering
    \includegraphics[width=0.24\textwidth]{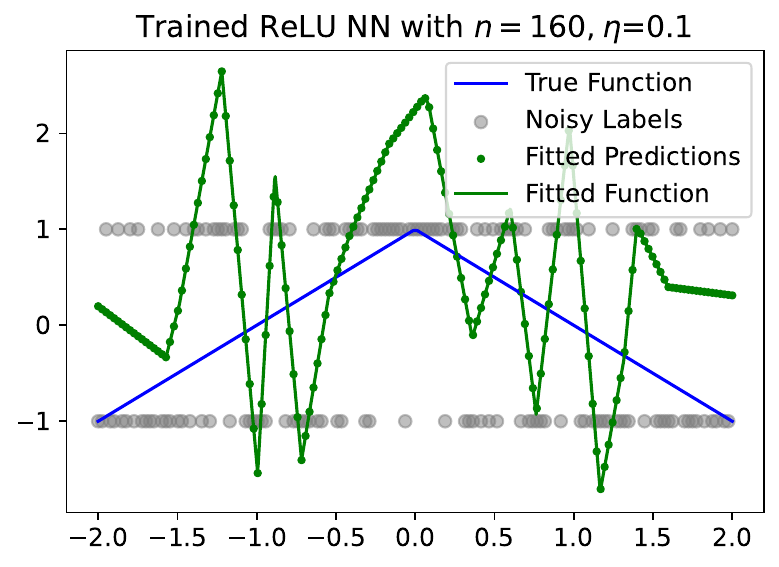}
    \includegraphics[width=0.24\textwidth]{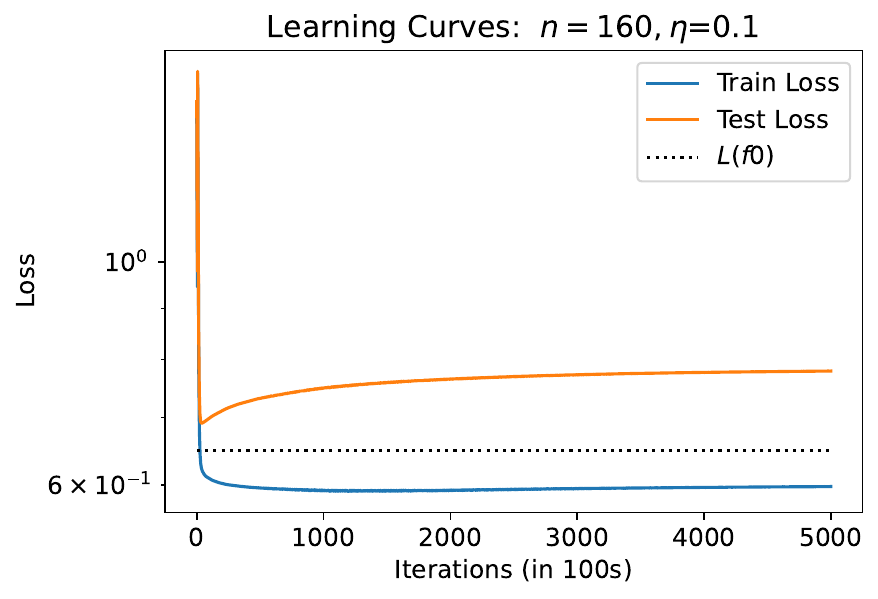}
    \includegraphics[width=0.24\textwidth]{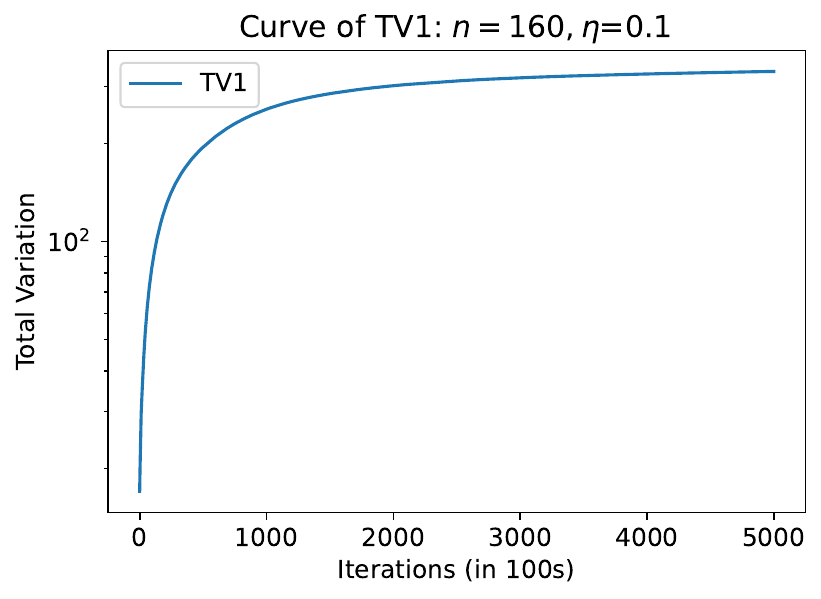}\\
    \includegraphics[width=0.24\textwidth]{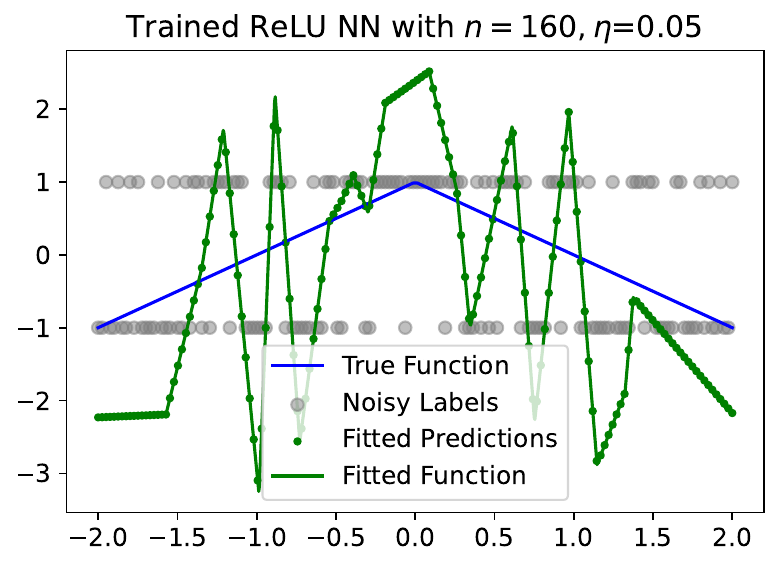}
    \includegraphics[width=0.24\textwidth]{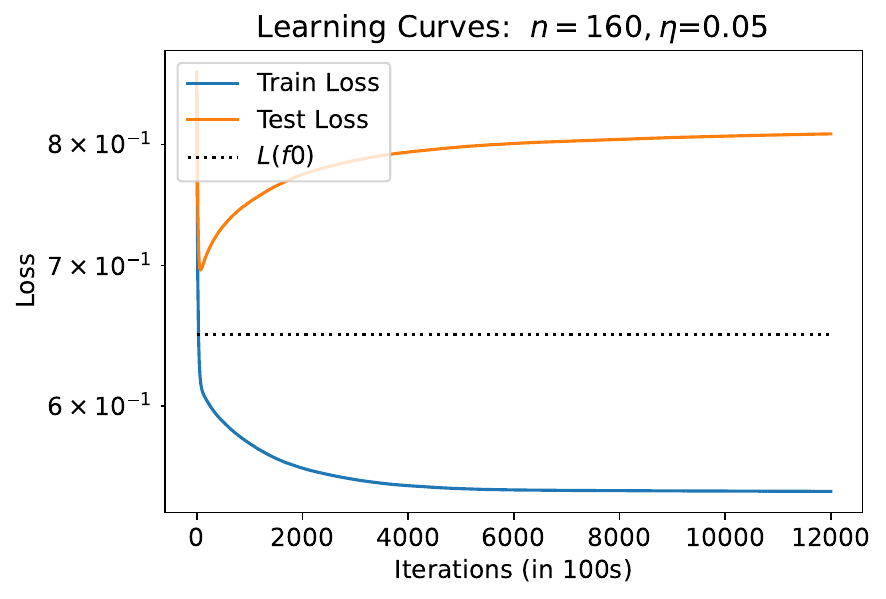}
    \includegraphics[width=0.24\textwidth]{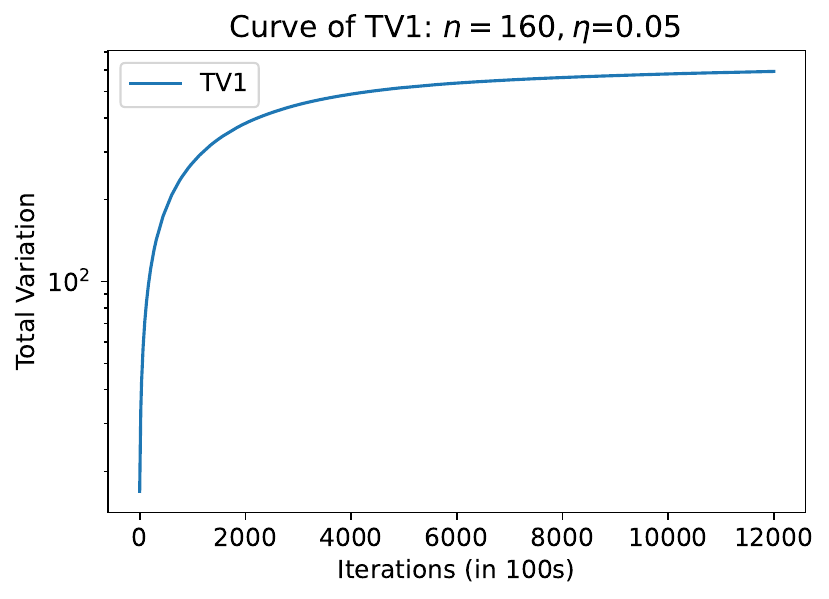}\\
    \includegraphics[width=0.24\textwidth]{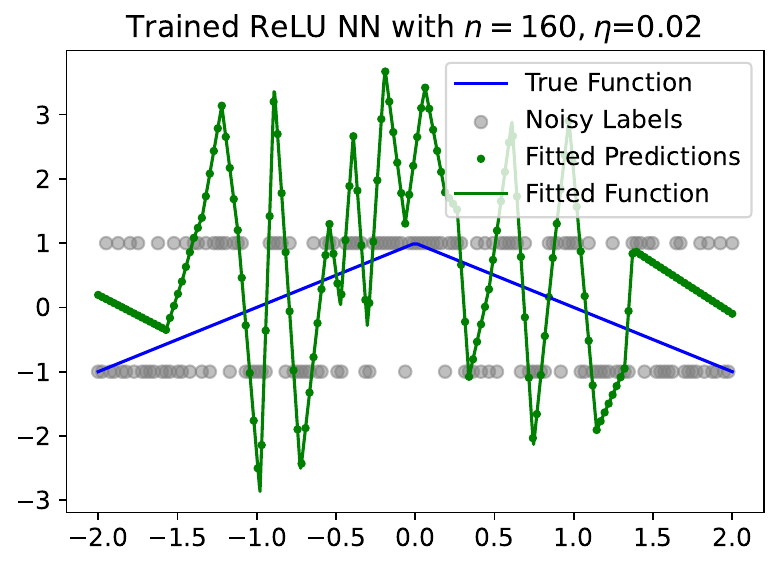}
    \includegraphics[width=0.24\textwidth]{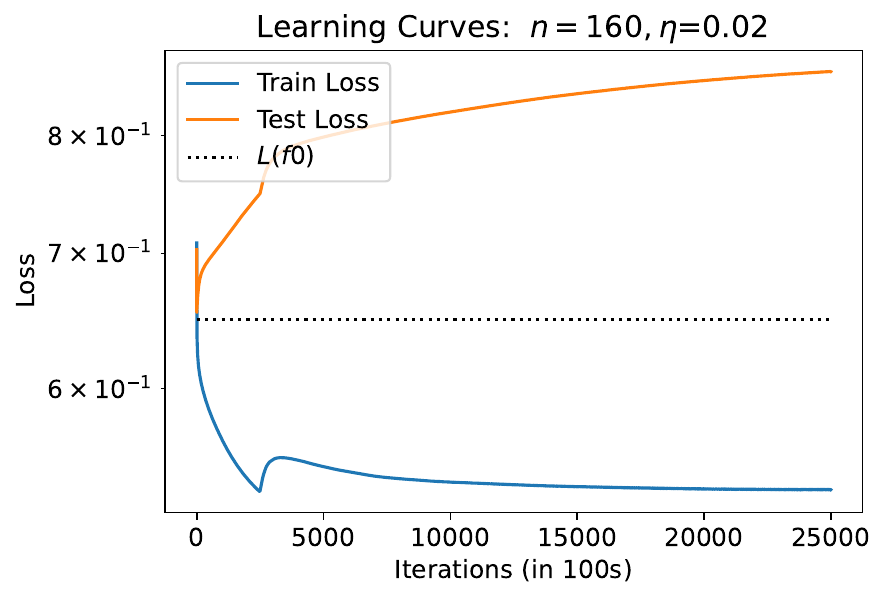}
    \includegraphics[width=0.24\textwidth]{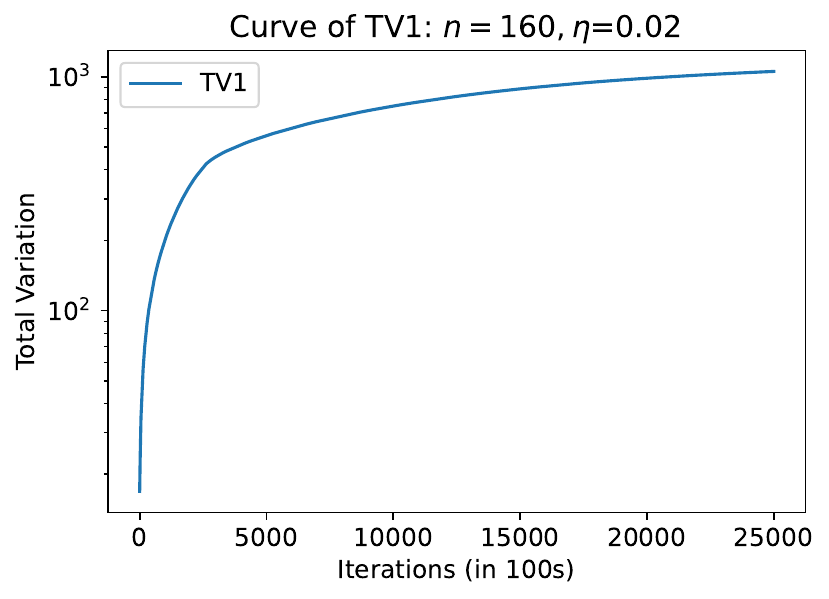}\\
    \includegraphics[width=0.24\textwidth]{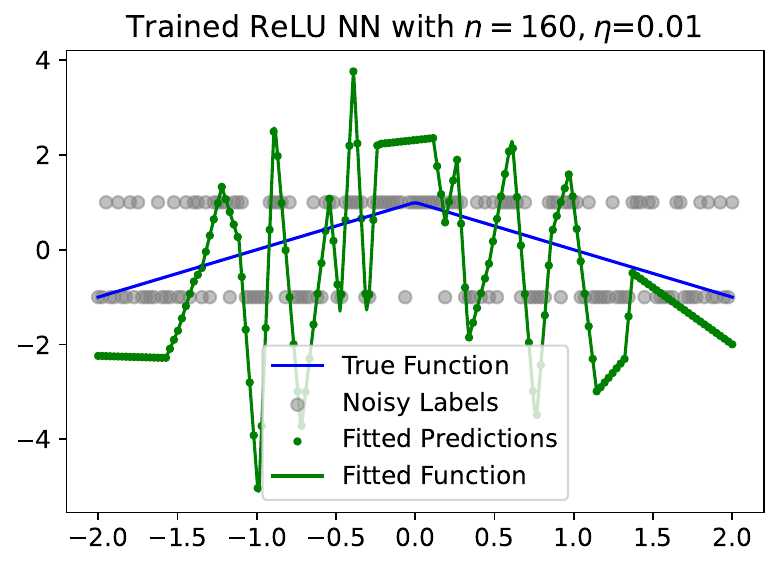}
    \includegraphics[width=0.24\textwidth]{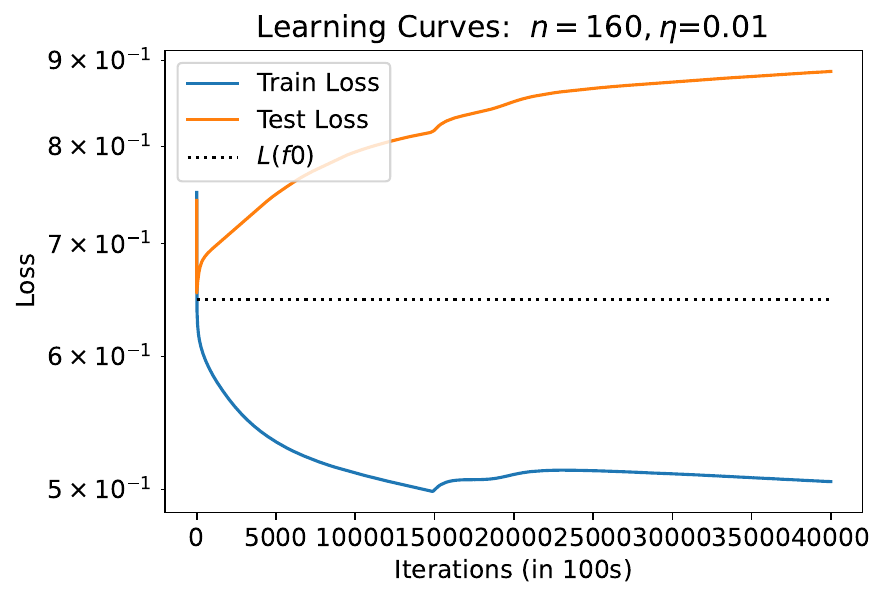}
    \includegraphics[width=0.24\textwidth]{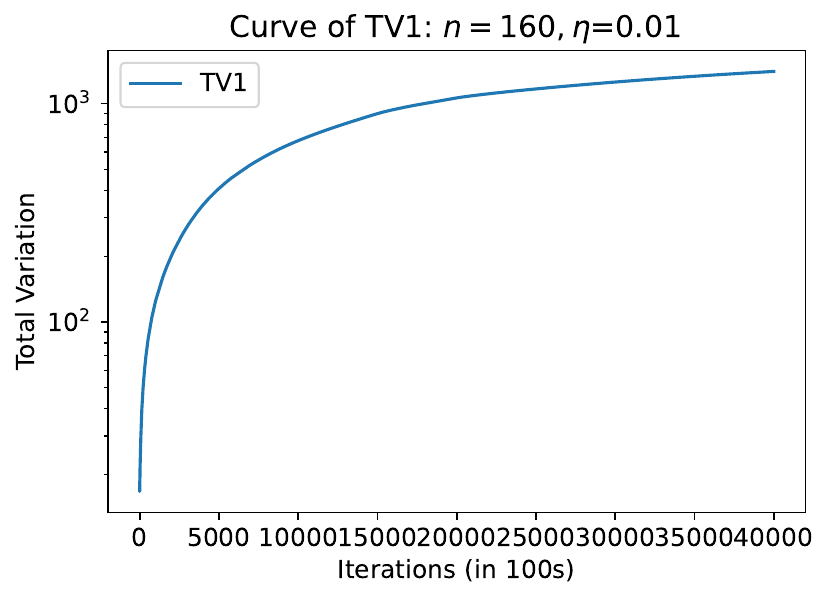}\\
    \caption{Illustration of the solutions ($n=160$: Part III). As $\eta$ decreases further, the fitted function starts to overfit, while the overfitting is not as catastrophic as the case with fewer samples (Figure \ref{fig:80part2}).}
    \label{fig:160part3}
\end{figure}

\begin{figure}[h!]
    \centering
    \includegraphics[width=0.24\textwidth]{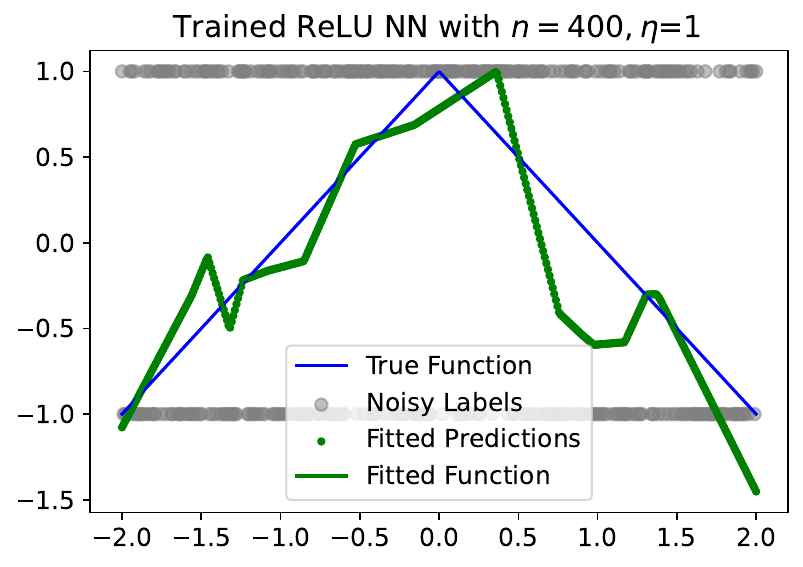}
    \includegraphics[width=0.24\textwidth]{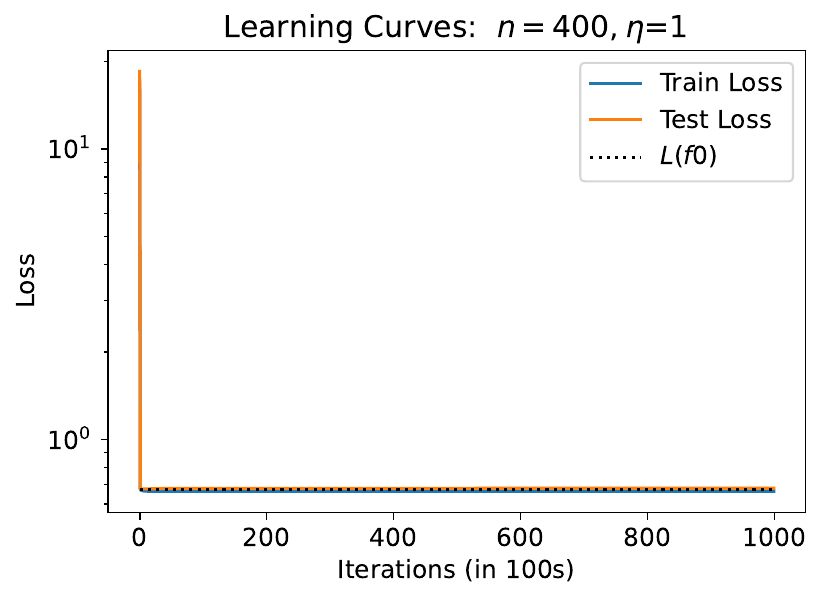}
    \includegraphics[width=0.24\textwidth]{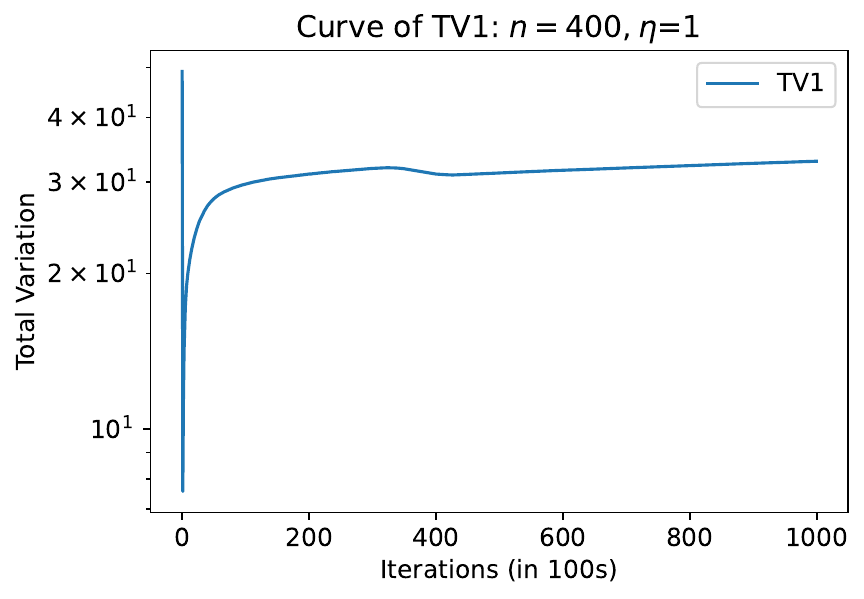}\\
    \includegraphics[width=0.24\textwidth]{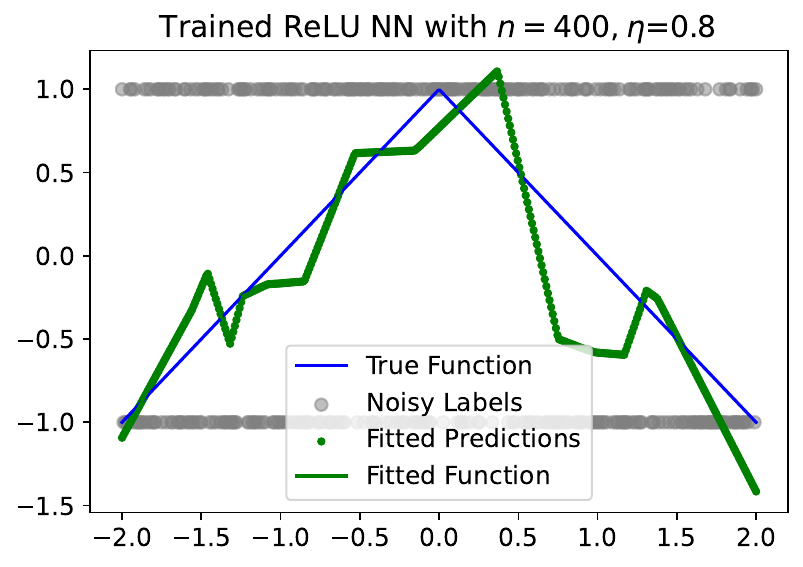}
    \includegraphics[width=0.24\textwidth]{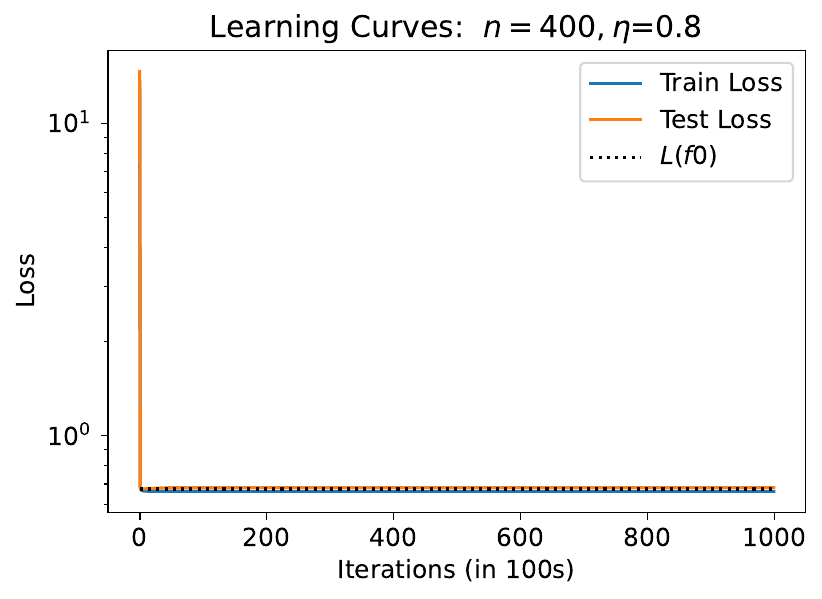}
    \includegraphics[width=0.24\textwidth]{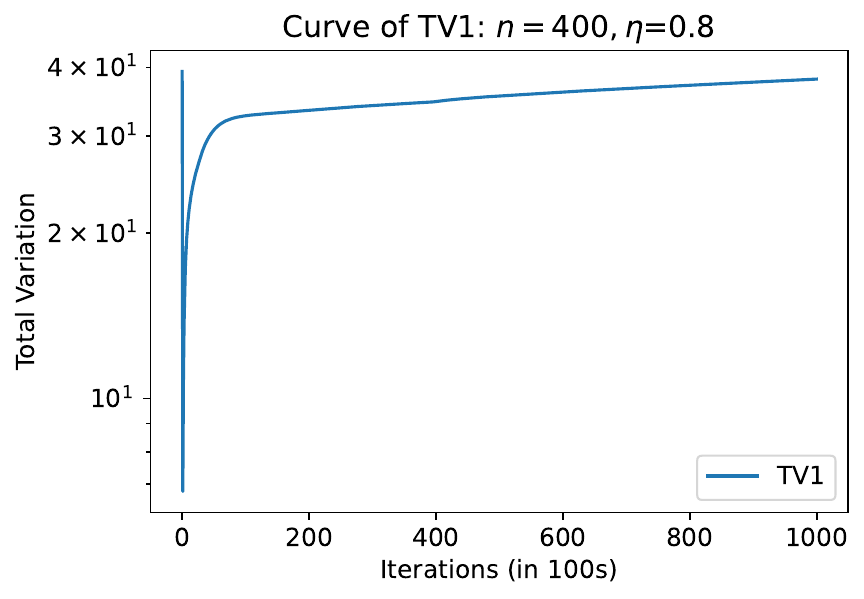}\\
    \caption{Illustrations ($n=400$: Part I). For large learning rate, the learned function is close to $f_0$.}
    \label{fig:400part1}
\end{figure}

\newpage

\begin{figure}[h!]
    \centering
    \includegraphics[width=0.24\textwidth]{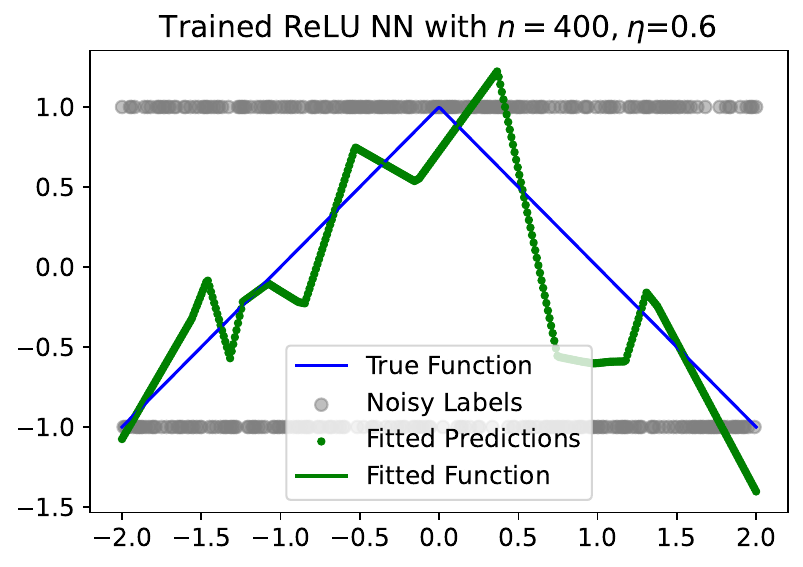}
    \includegraphics[width=0.24\textwidth]{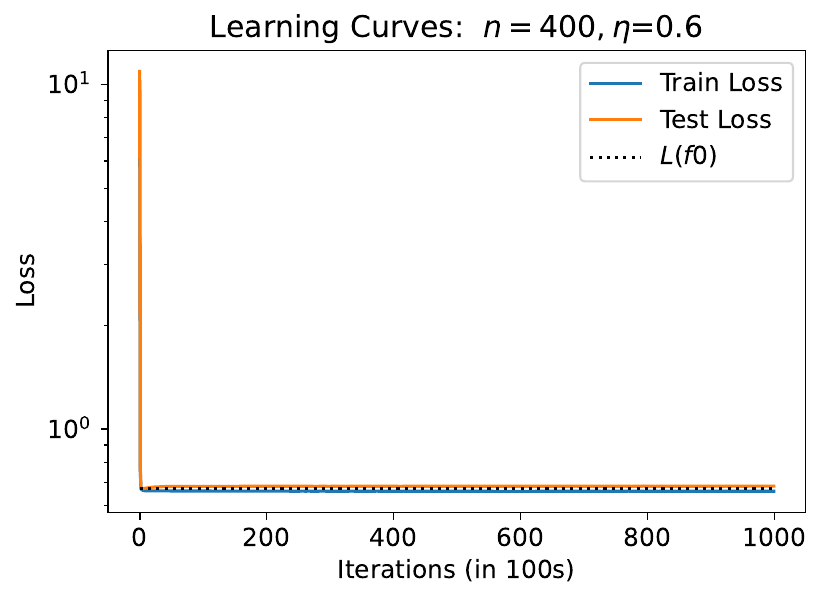}
    \includegraphics[width=0.24\textwidth]{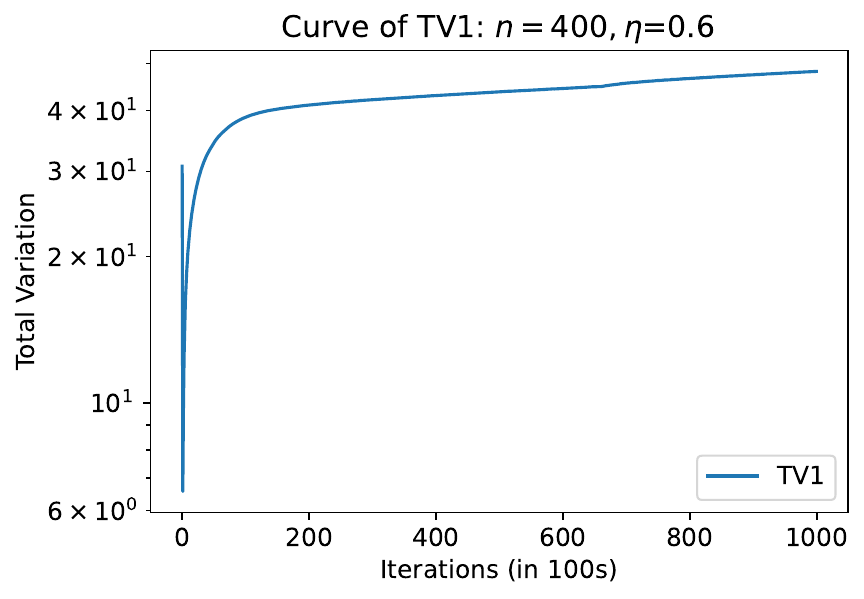}\\
    \includegraphics[width=0.24\textwidth]{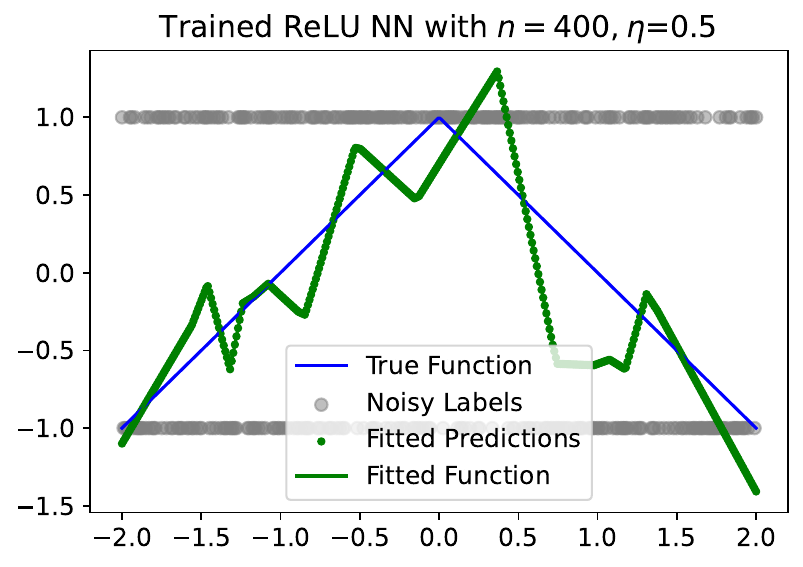}
    \includegraphics[width=0.24\textwidth]{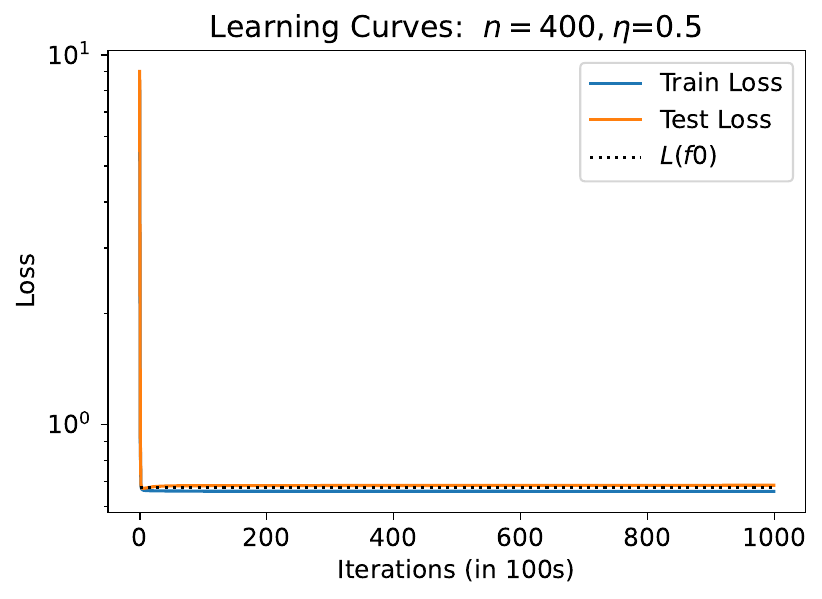}
    \includegraphics[width=0.24\textwidth]{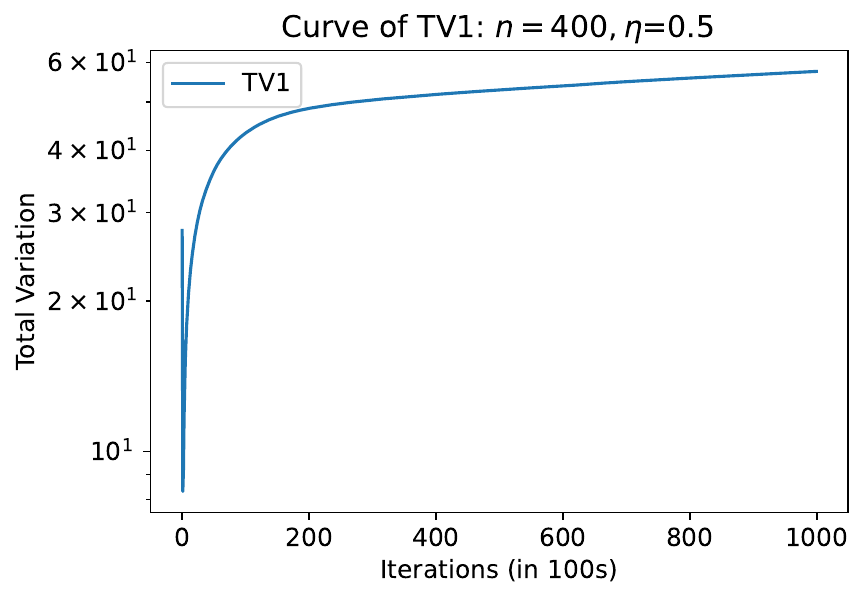}\\
    \includegraphics[width=0.24\textwidth]{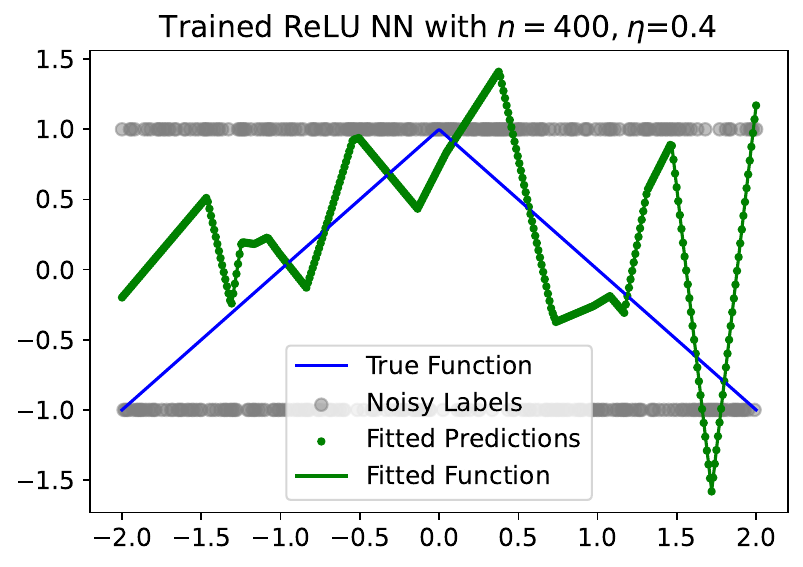}
    \includegraphics[width=0.24\textwidth]{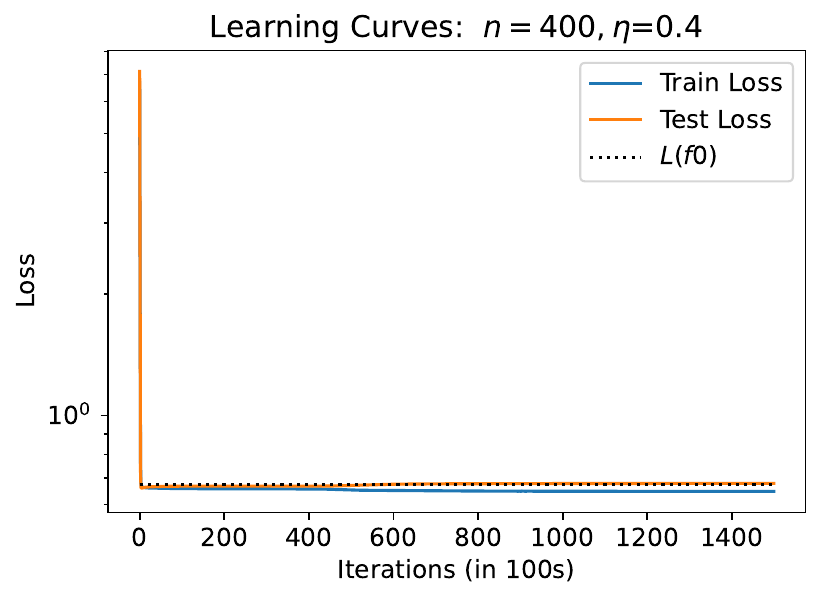}
    \includegraphics[width=0.24\textwidth]{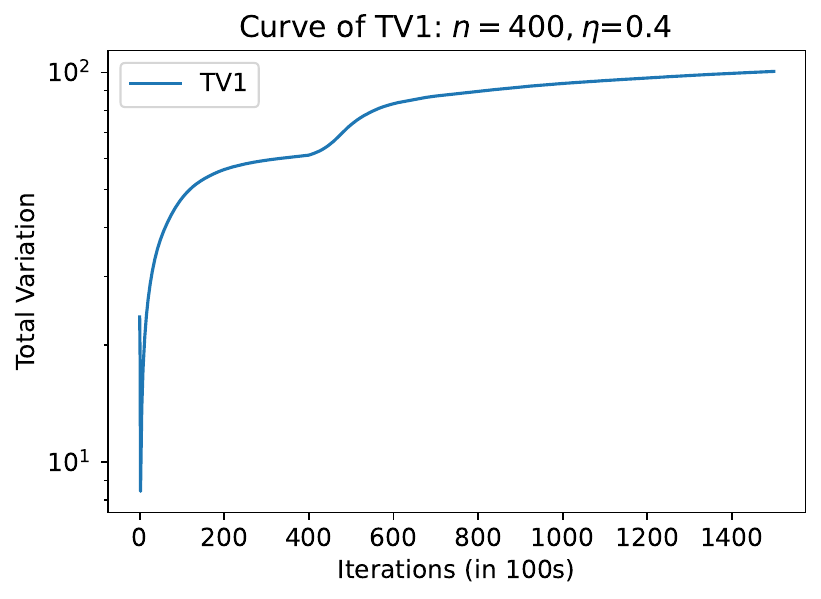}\\
    \includegraphics[width=0.24\textwidth]{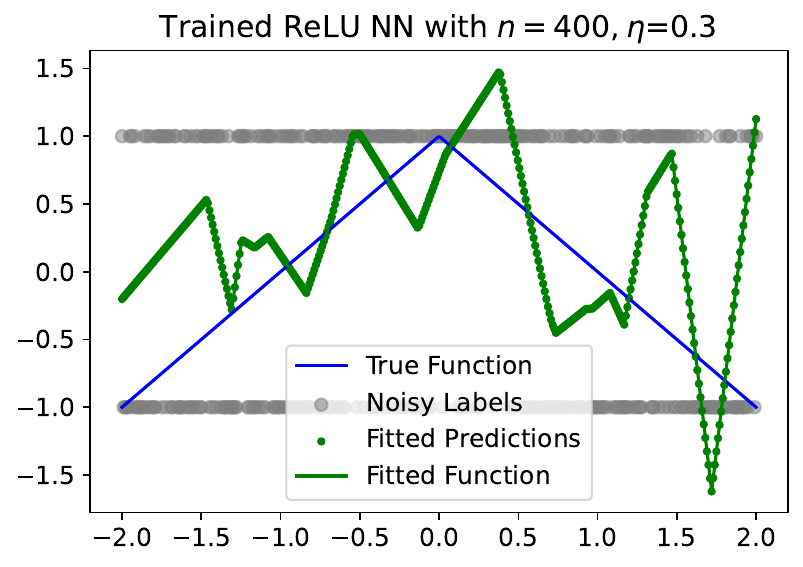}
    \includegraphics[width=0.24\textwidth]{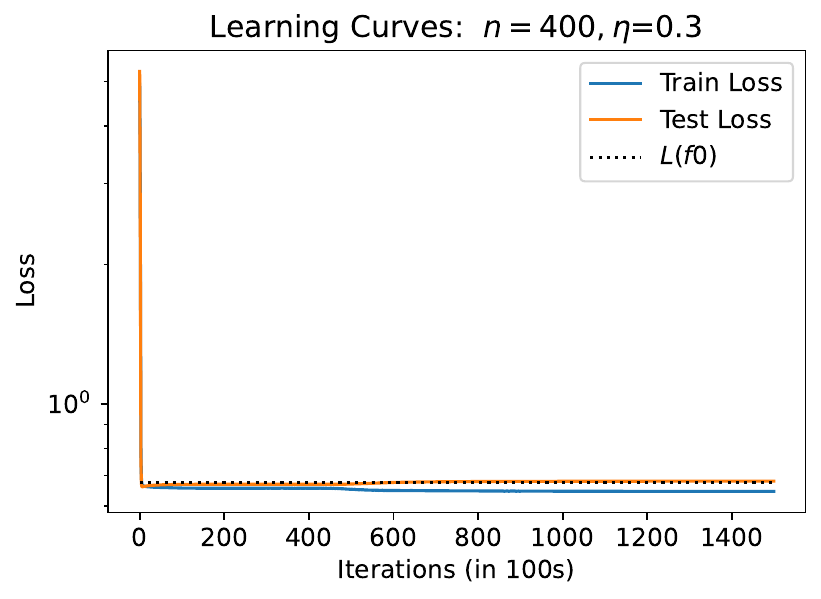}
    \includegraphics[width=0.24\textwidth]{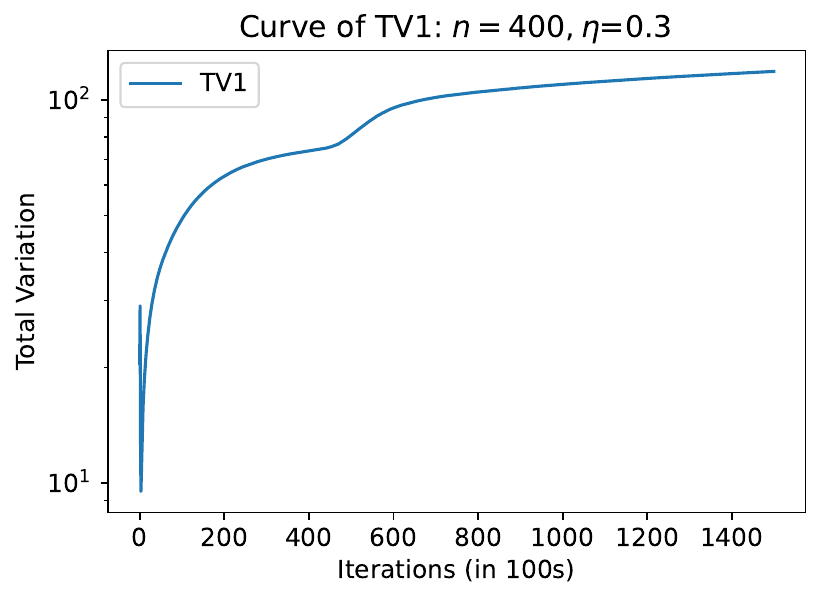}\\
    \includegraphics[width=0.24\textwidth]{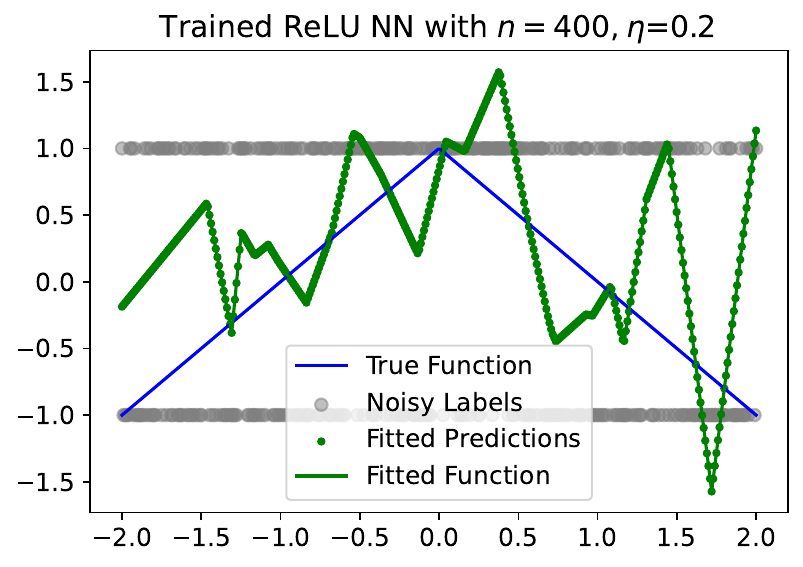}
    \includegraphics[width=0.24\textwidth]{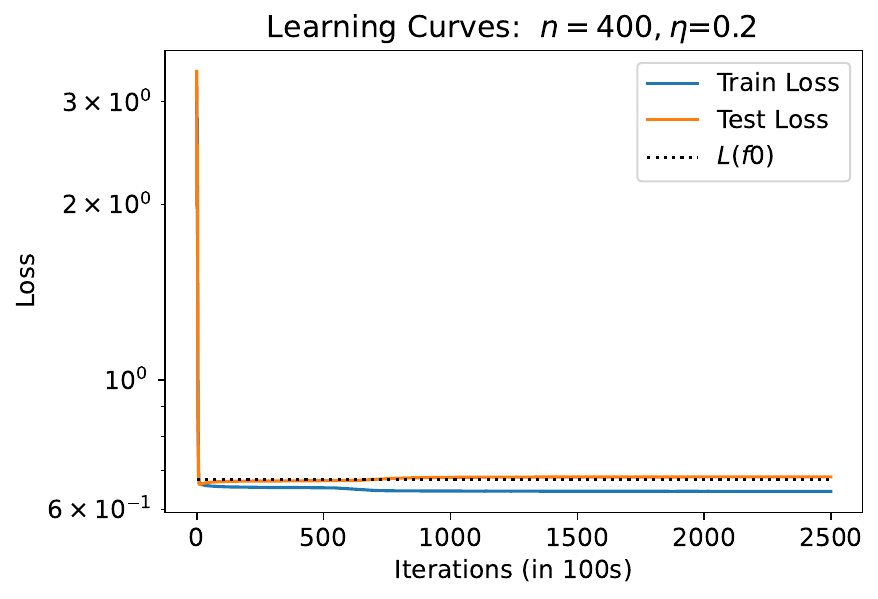}
    \includegraphics[width=0.24\textwidth]{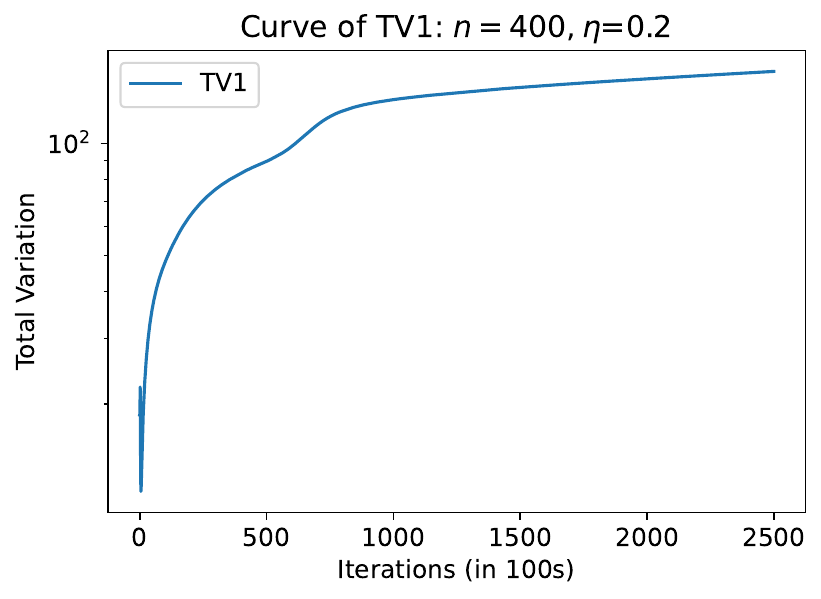}\\
     \includegraphics[width=0.24\textwidth]{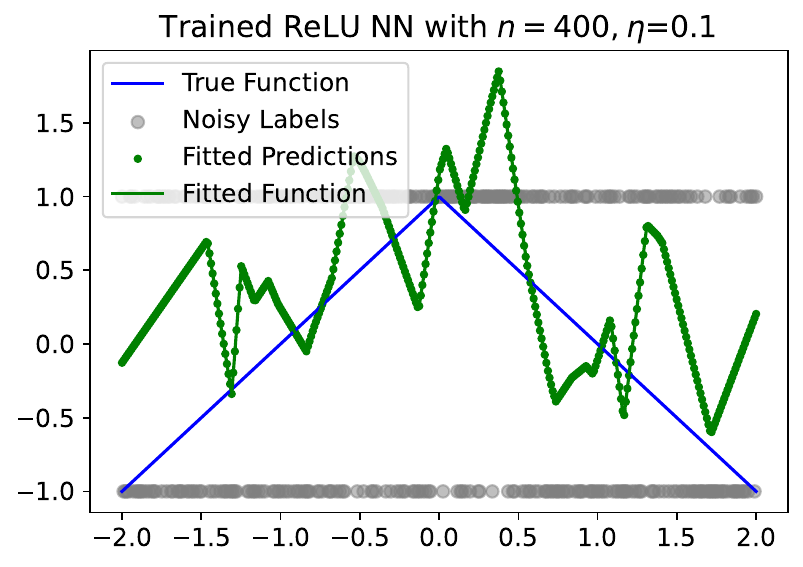}
    \includegraphics[width=0.24\textwidth]{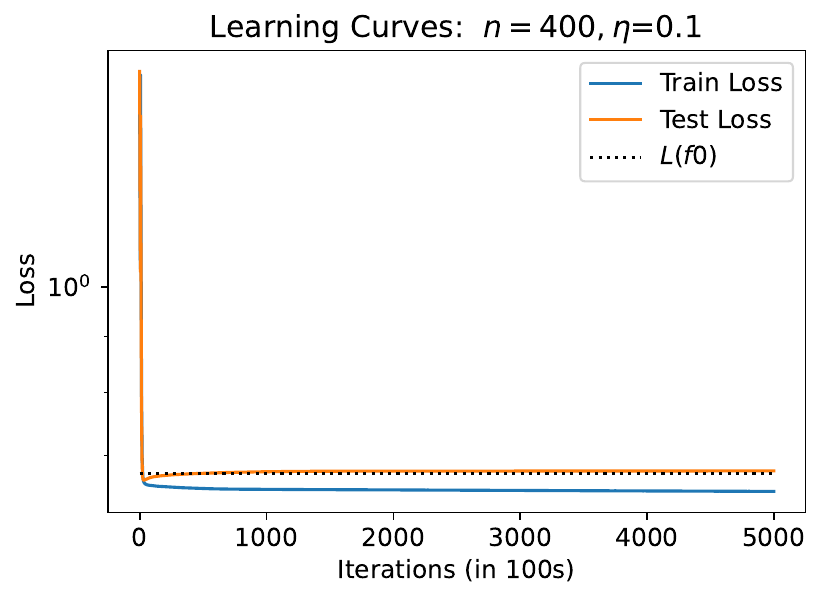}
    \includegraphics[width=0.24\textwidth]{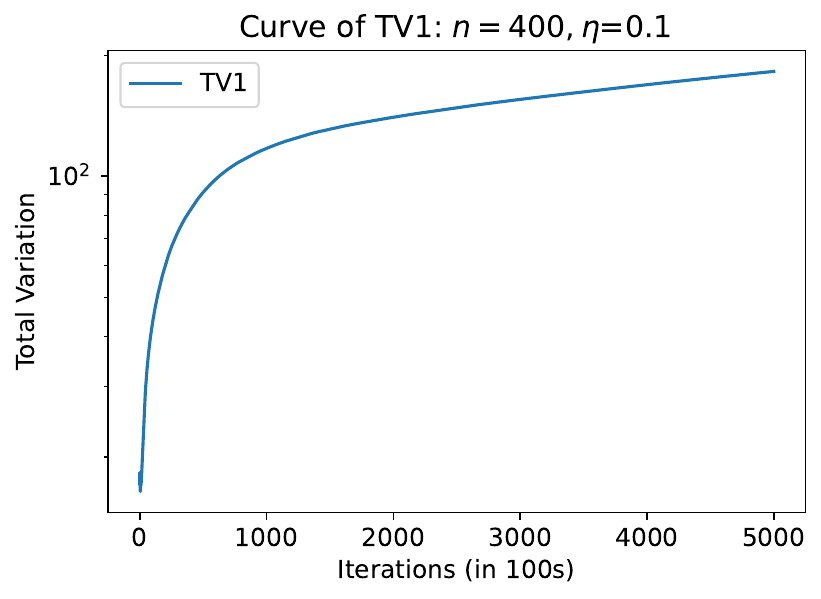}\\

    \caption{Illustration of the solutions gradient descent with learning rate $\eta$ converges to ($n=400$: Part II). As $\eta$ decreases, the fitted function goes from simple to complex. Compared to the cases where $n=80$ in Figure \ref{fig:80part1} or $n=160$ in Figure \ref{fig:160part2}, for the same $\eta$, the learned function approximates the ground-truth much better.}
    \label{fig:400part2}
\end{figure}

\newpage

\begin{figure}[h!]
    \centering
    \includegraphics[width=0.24\textwidth]{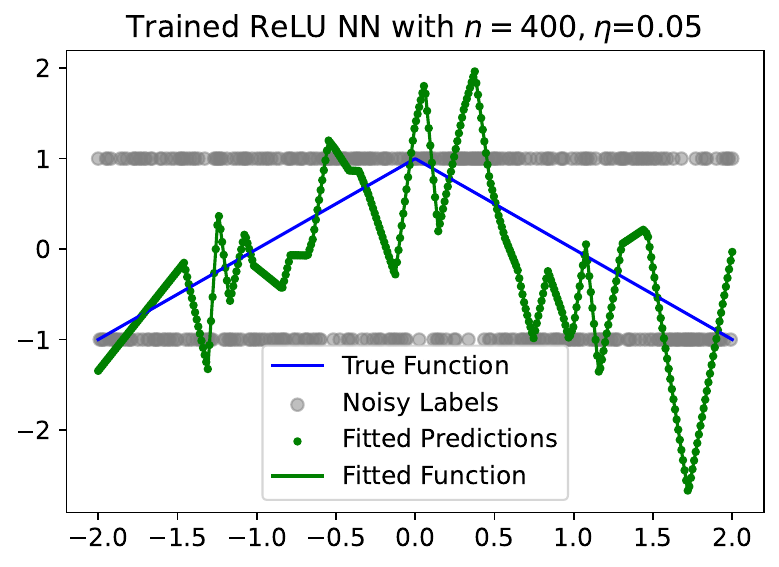}
    \includegraphics[width=0.24\textwidth]{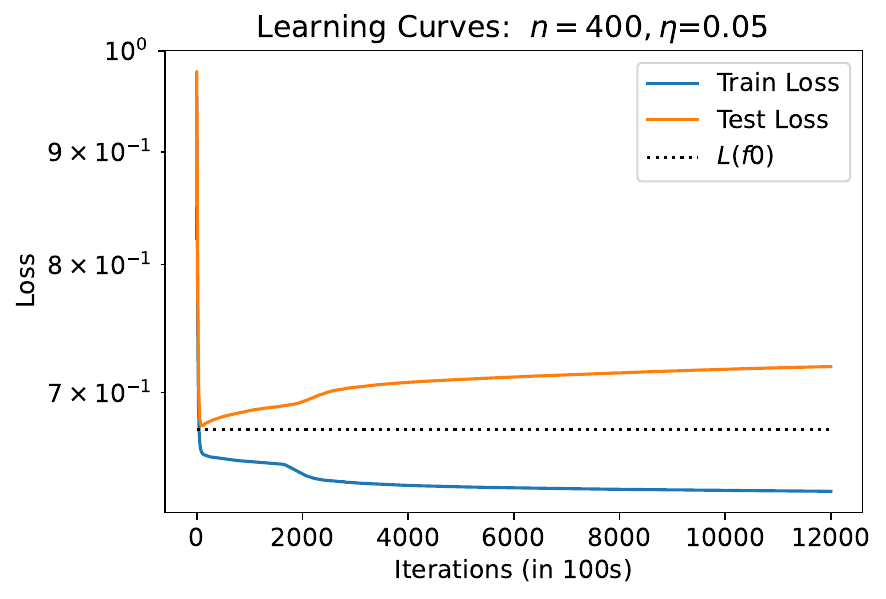}
    \includegraphics[width=0.24\textwidth]{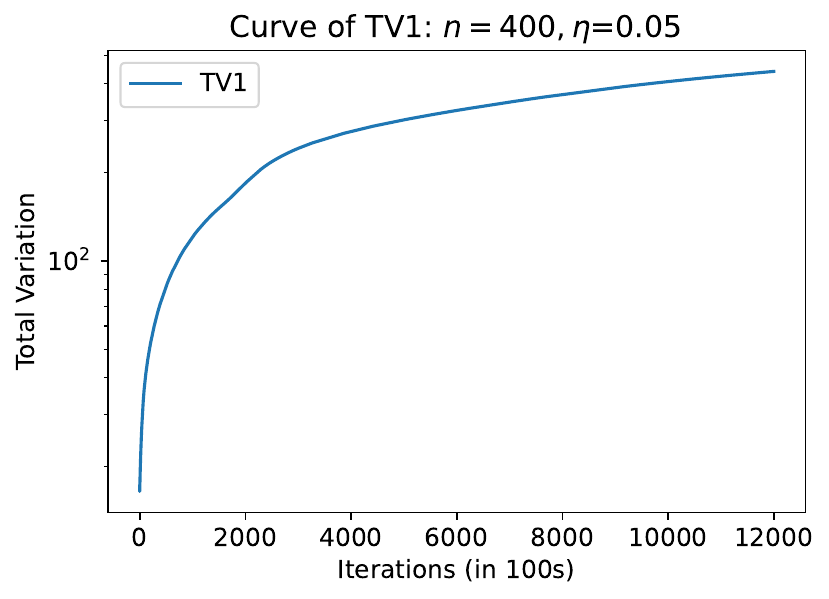}\\
    \includegraphics[width=0.24\textwidth]{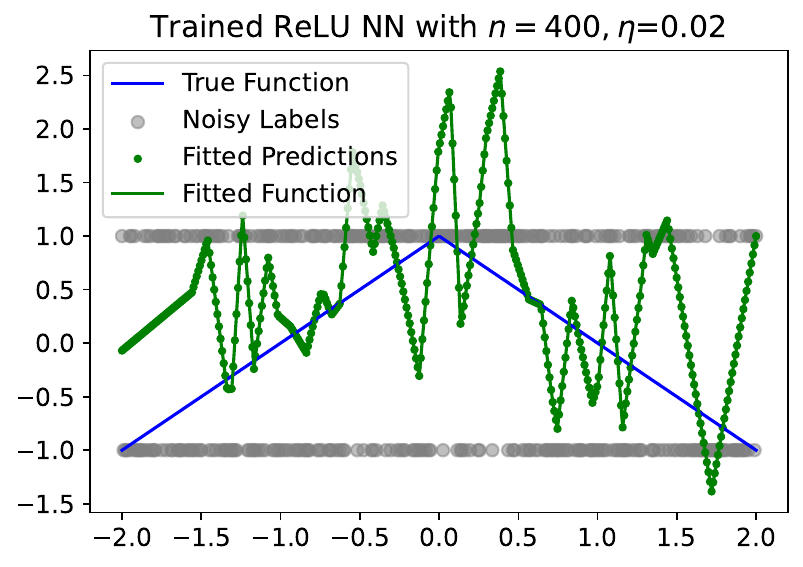}
    \includegraphics[width=0.24\textwidth]{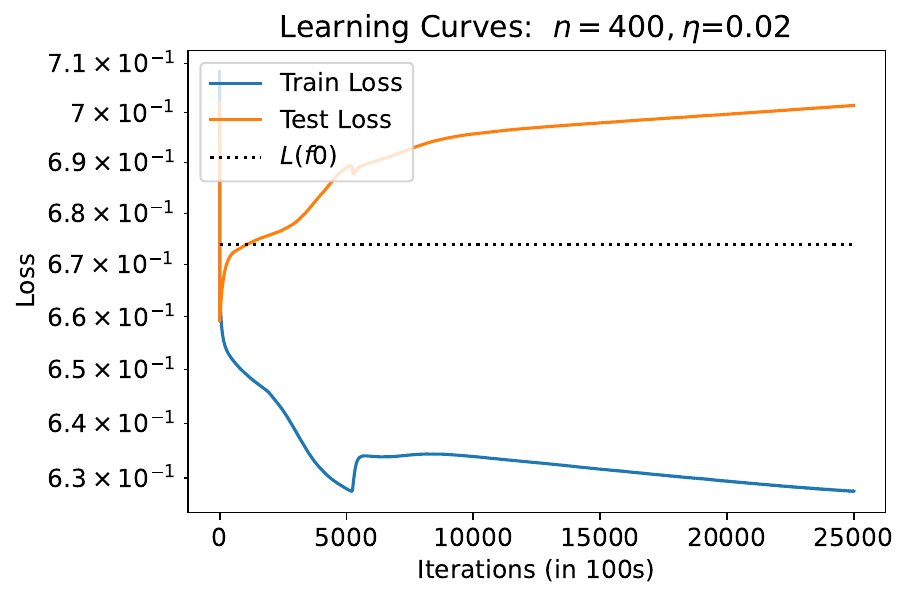}
    \includegraphics[width=0.24\textwidth]{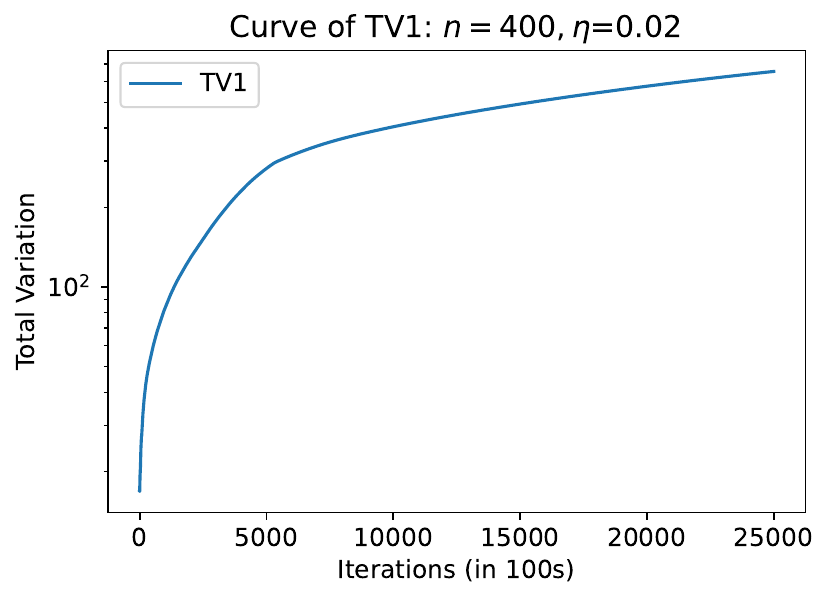}\\
    \includegraphics[width=0.24\textwidth]{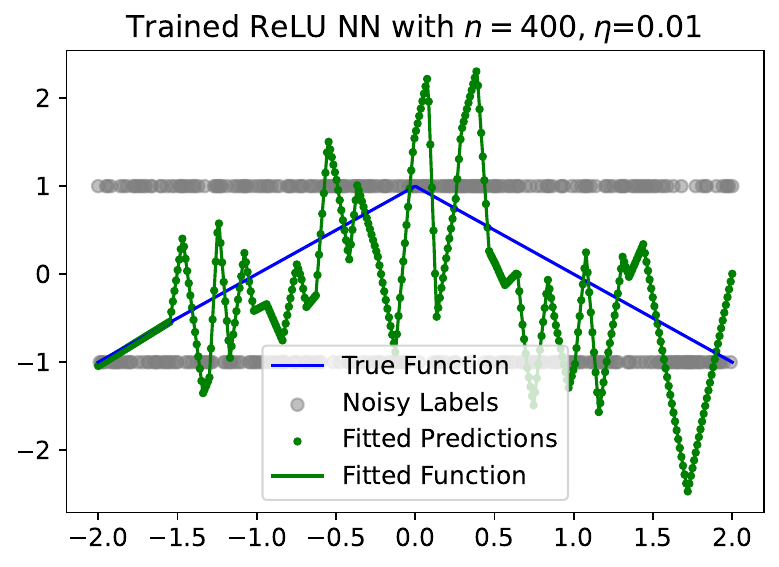}
    \includegraphics[width=0.24\textwidth]{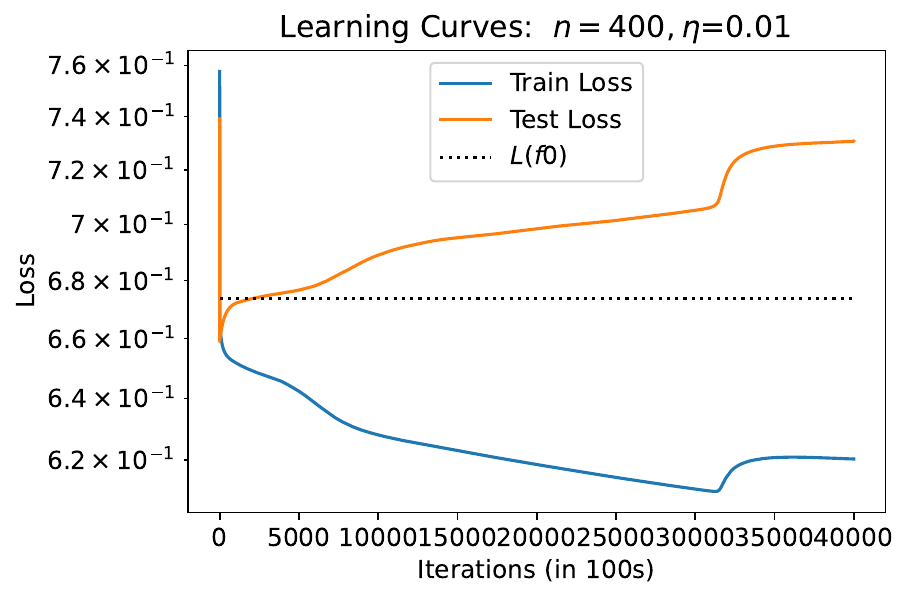}
    \includegraphics[width=0.24\textwidth]{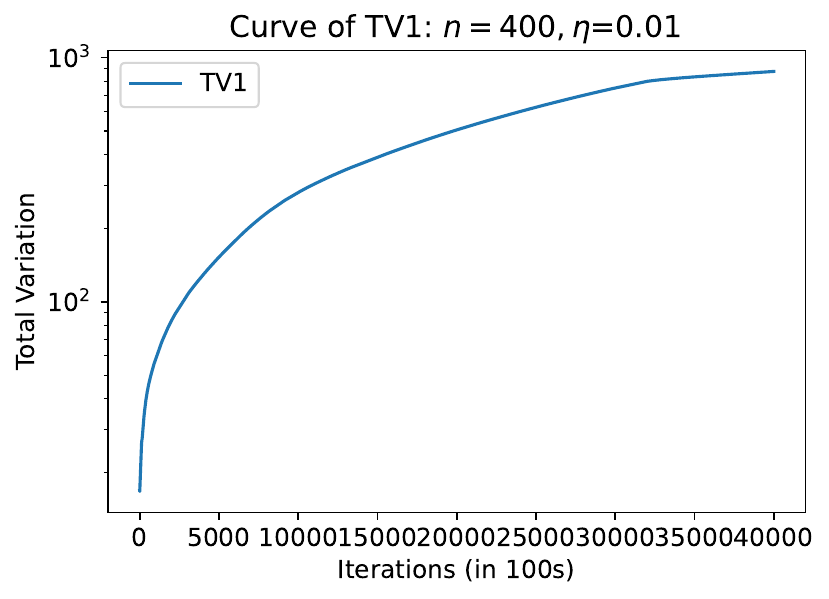}\\
    \caption{Illustration of the solutions gradient descent with learning rate $\eta$ converges to ($n=400$: Part III). As $\eta$ decreases further, the fitted function starts to overfit, while the overfitting is mild compared to the cases with fewer samples (Figure \ref{fig:80part2} and Figure \ref{fig:160part3}).}
    \label{fig:400part3}
\end{figure}

In conclusion, for datasets with different numbers of data, as $\eta$ decreases, the fitted function goes from simple to complex. Moreover, as $\eta$ decreases further, the fitted function starts to overfit. Among different cases, for the case with larger number of data, the overfitting is less catastrophic.

\subsection{Loss $\&$ Complexity of the Learned Functions vs Step Size $\&$ Number of Data}\label{app:losscomplexity}

\begin{figure}[h!]
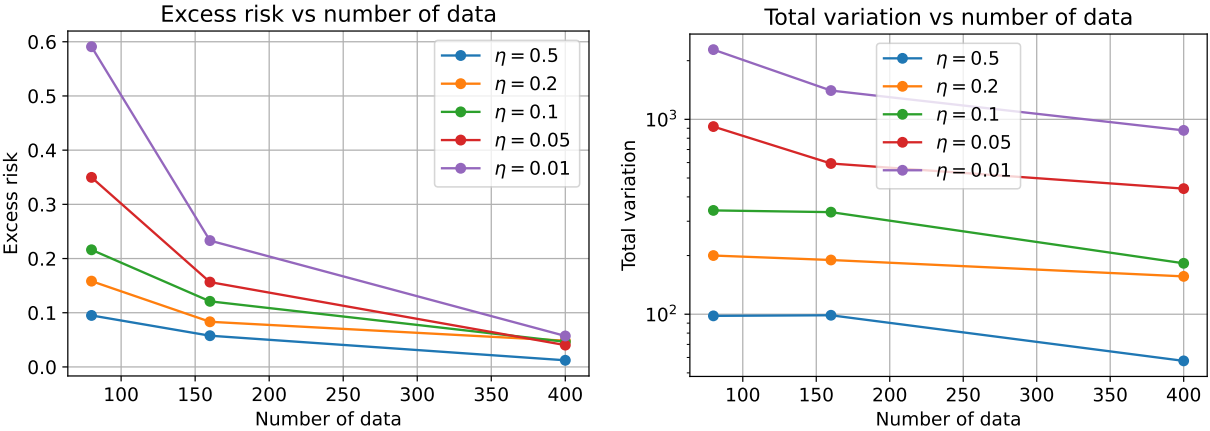

    \centering
    \includegraphics[width=0.49\textwidth]{figs/loss_vs_n.pdf}
    \includegraphics[width=0.49\textwidth]{figs/complexity_vs_n.pdf}\\
    \caption{Loss $\&$ complexity of the learned function vs number of data. For a fixed $\eta$, as $n$ goes larger, the 1st order total variation does not increase while the excess risk converges to 0 quickly.}
    \label{fig:lossvsn}
\end{figure}

In Figure \ref{fig:lossvseta}, for datasets with different numbers of data, as $\eta$ decreases, the training loss becomes smaller. Meanwhile, tuning $\eta$ gives the classical U-shape for excess risk and excess error, which is consistent with Theorem \ref{thm:erbound}. At the same time, as $\eta$ becomes small, $\lambda_{\max}(Hessian)$ approximates $2/\eta$, thus justifying our consideration of ``edge of stability'' and stable solutions defined as \eqref{eq:masterset}. Lastly, the 1st order total variation (TV$^{(1)}$) of the learned function increases monotonically as $n\rightarrow0$, which supports Theorem \ref{thm:bias} and \ref{thm:tvb}. Figure \ref{fig:sparsityvslr} indicates that the weighted TV1 constraint is indeed making the learned function sparse
(in the coefficient vector).

\begin{figure}[h!]
    \centering
    \includegraphics[width=0.43\textwidth]{figs/80loss_vs_eta.pdf}
    \includegraphics[width=0.43\textwidth]{figs/80complexity_vs_eta.pdf}\\
    \includegraphics[width=0.43\textwidth]{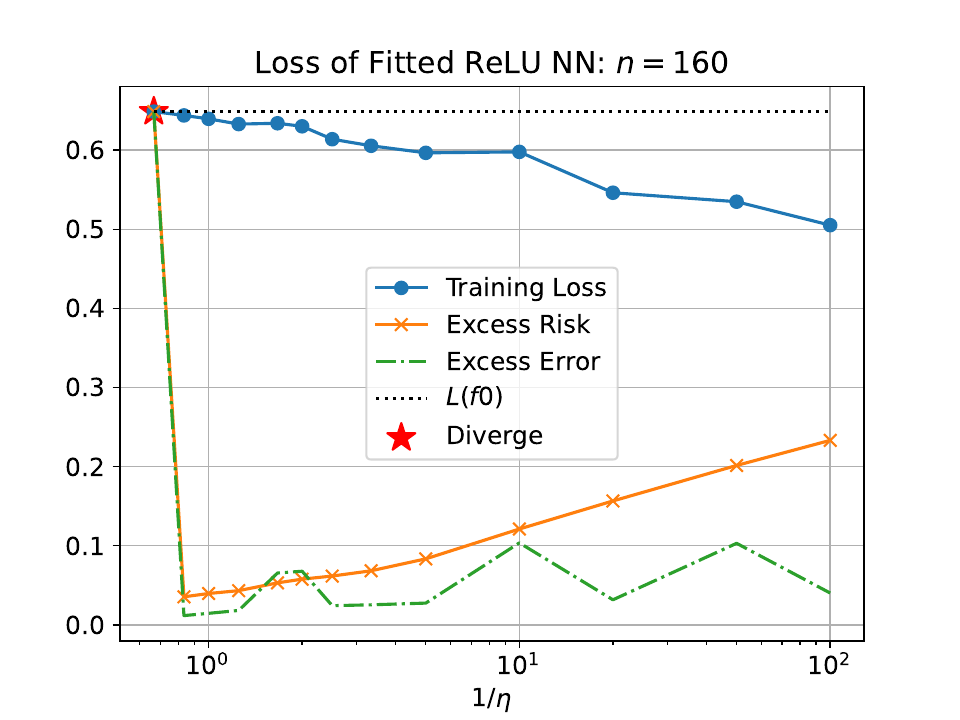}
    \includegraphics[width=0.43\textwidth]{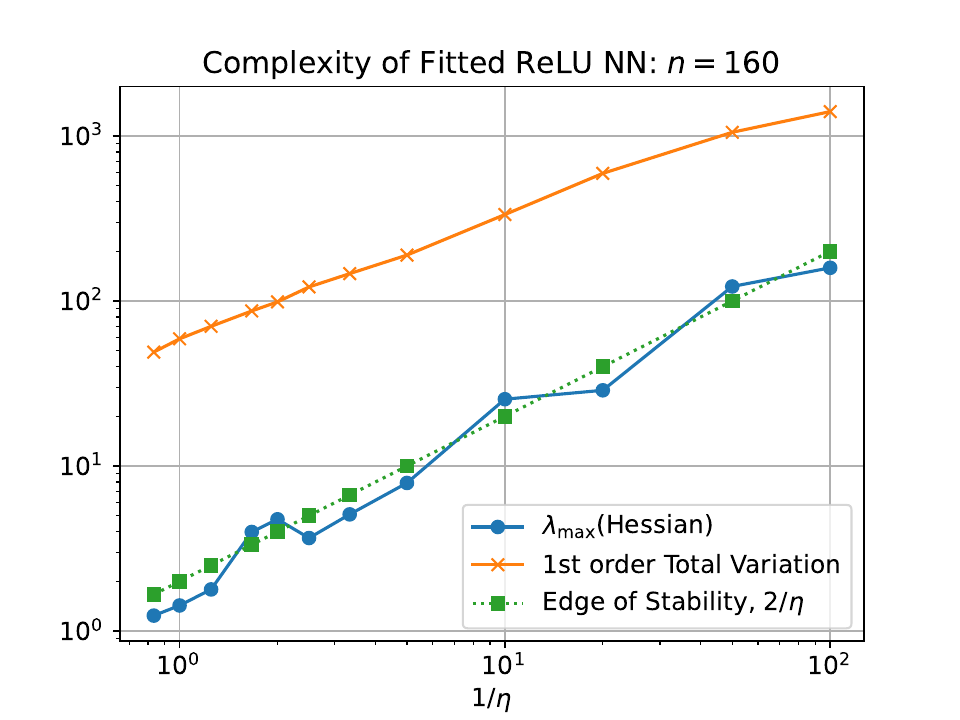}\\
    \includegraphics[width=0.43\textwidth]{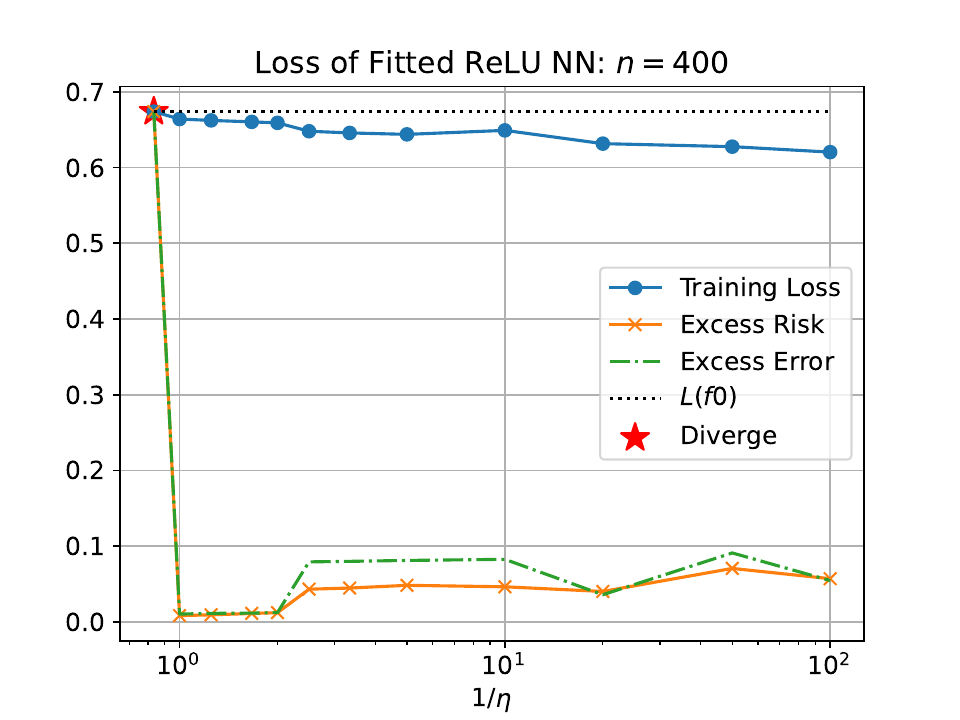}
    \includegraphics[width=0.43\textwidth]{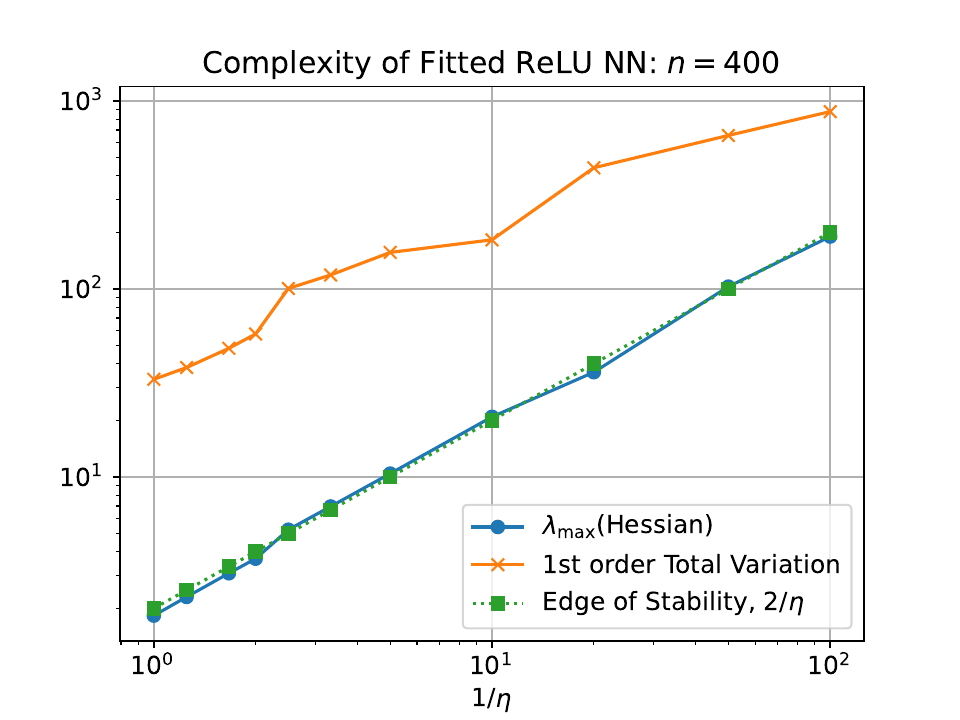}\\
    \caption{The \textbf{left panel} plots the performance (training loss, excess risk, excess error) of the learned functions under different learning rates. The \textbf{right panel} illustrates the impact of learning rate on the flatness ($\lambda_{\max}(Hessian)$) and complexity (TV$^{(1)}$) of the learned function.}
    \label{fig:lossvseta}
\end{figure}

\begin{figure}[h!]
    \centering
    \includegraphics[width=0.32\textwidth]{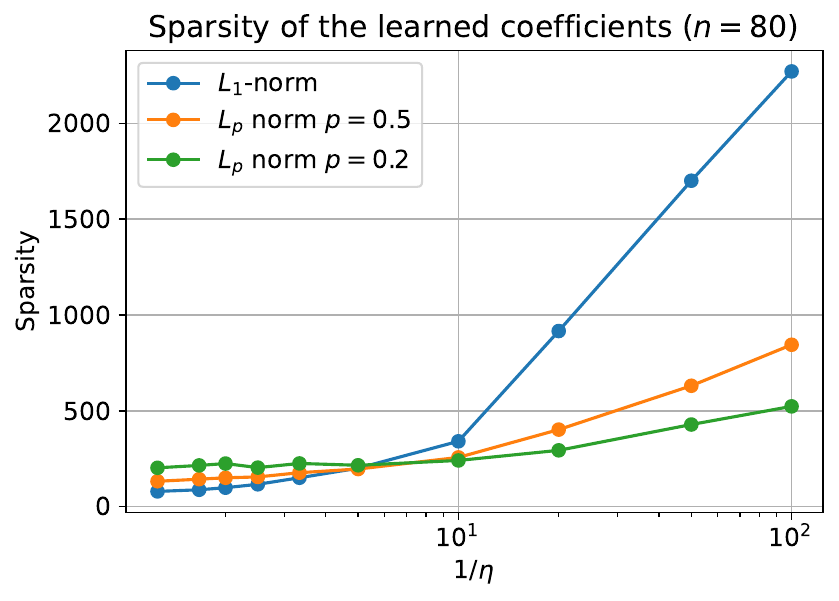}
    \includegraphics[width=0.32\textwidth]{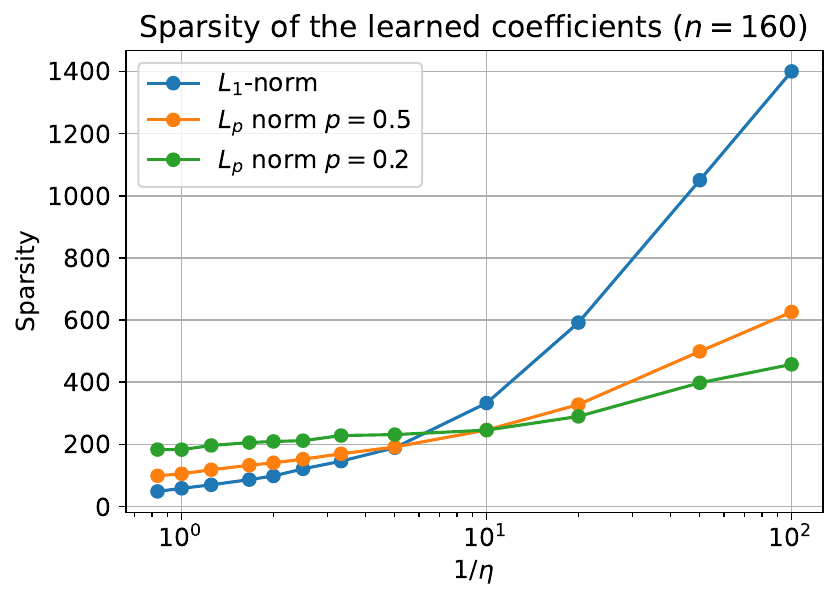}
     \includegraphics[width=0.32\textwidth]{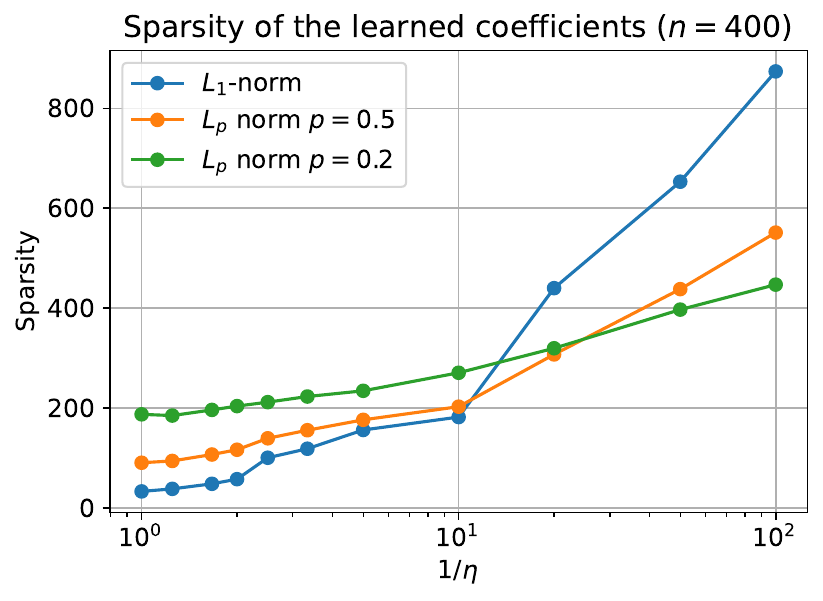}\\
    \caption{Sparsity of the learned coefficients in sparse $L_1$ and $L_p$ norm as the function of $1/\eta$.}
    \label{fig:sparsityvslr}
\end{figure}

\subsection{Representation Learning: Visualization of Learned Basis Functions}\label{app:basis}

In this part, we visualize the basis functions at initialization and after training with different step sizes.

\begin{figure}[h!]
    \centering
    \includegraphics[width=0.24\textwidth]{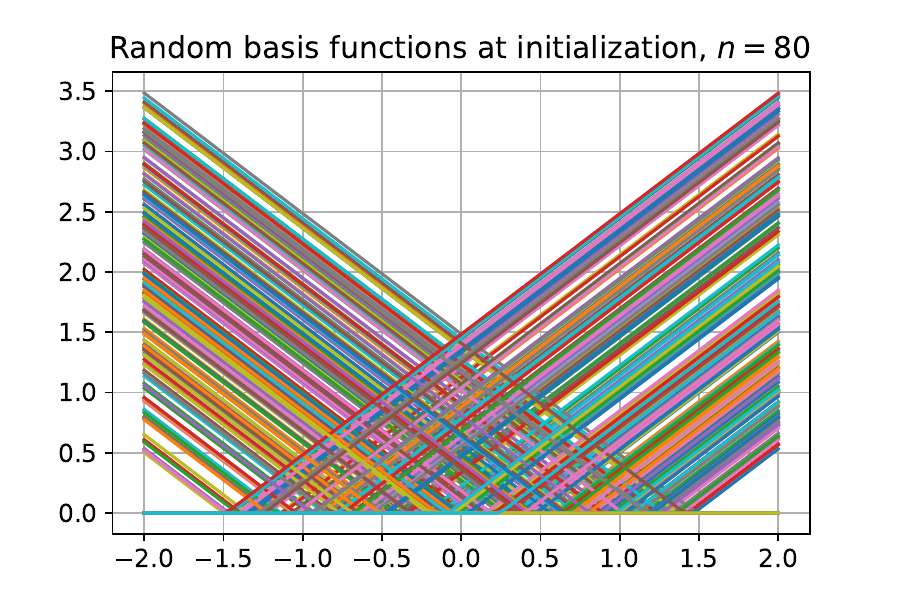}
    \includegraphics[width=0.24\textwidth]{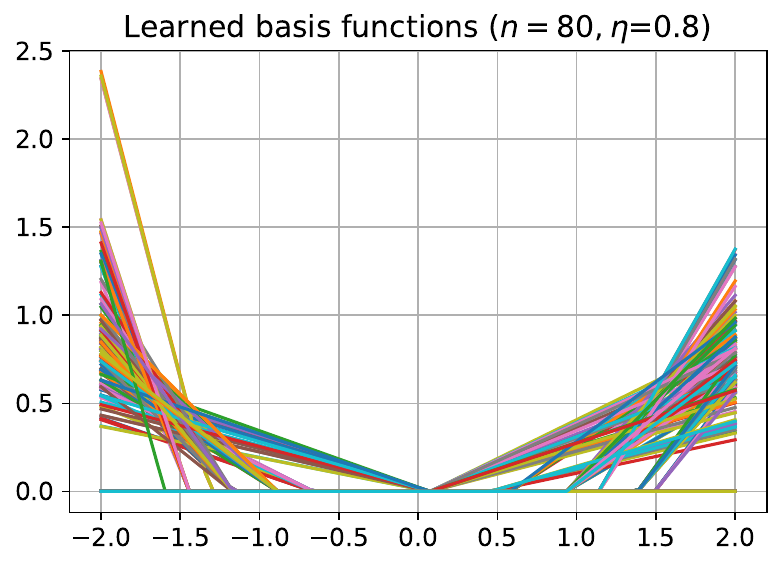}
    \includegraphics[width=0.24\textwidth]{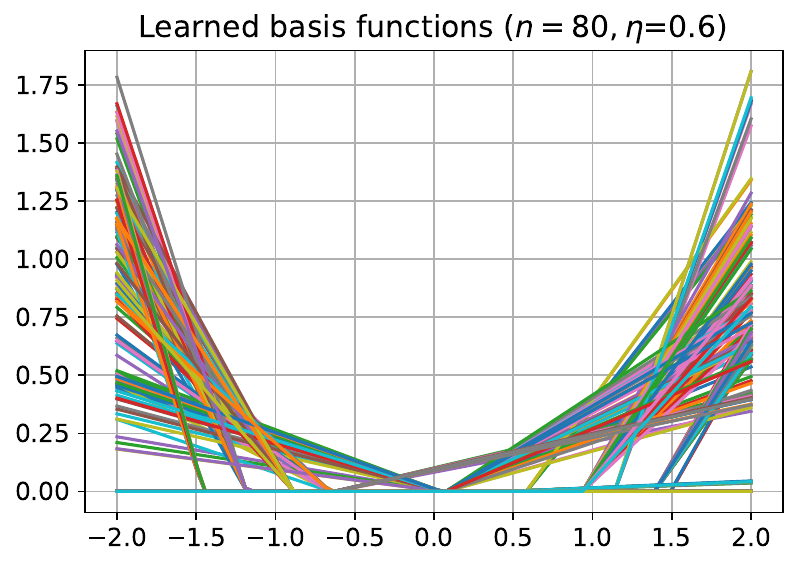}\\
    \includegraphics[width=0.24\textwidth]{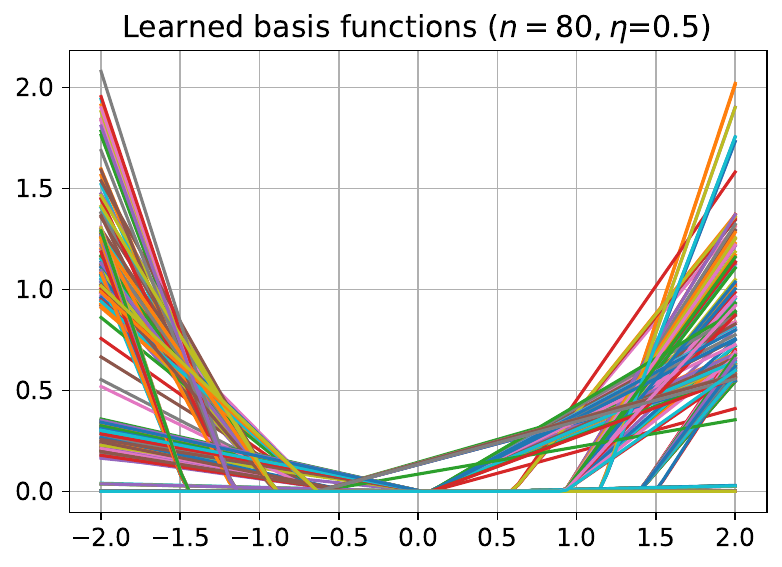}
    \includegraphics[width=0.24\textwidth]{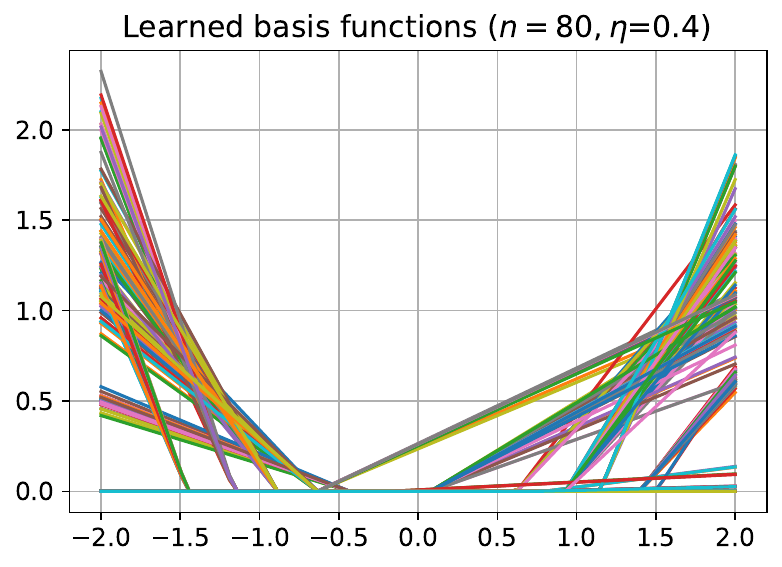}
    \includegraphics[width=0.24\textwidth]{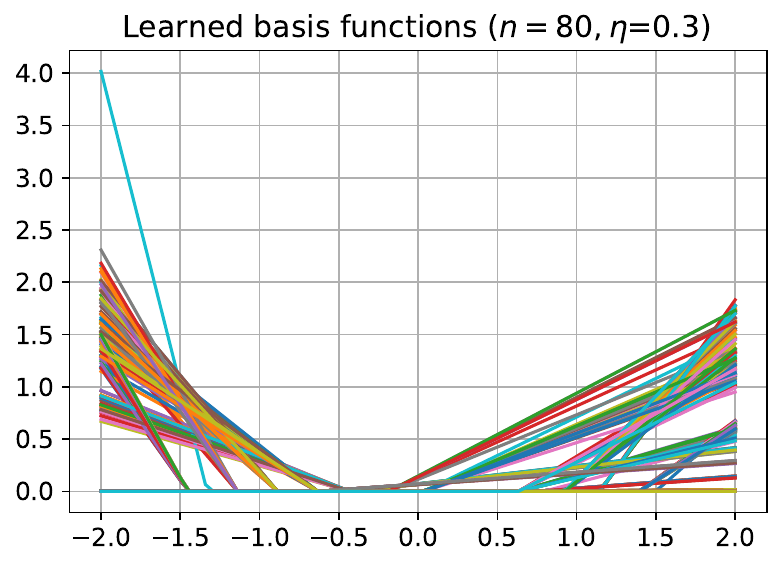}\\
    \includegraphics[width=0.24\textwidth]{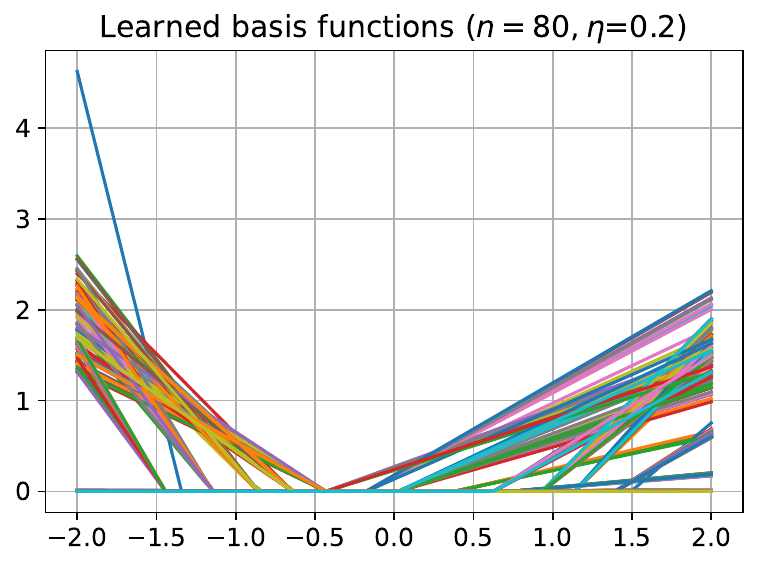}
    \includegraphics[width=0.24\textwidth]{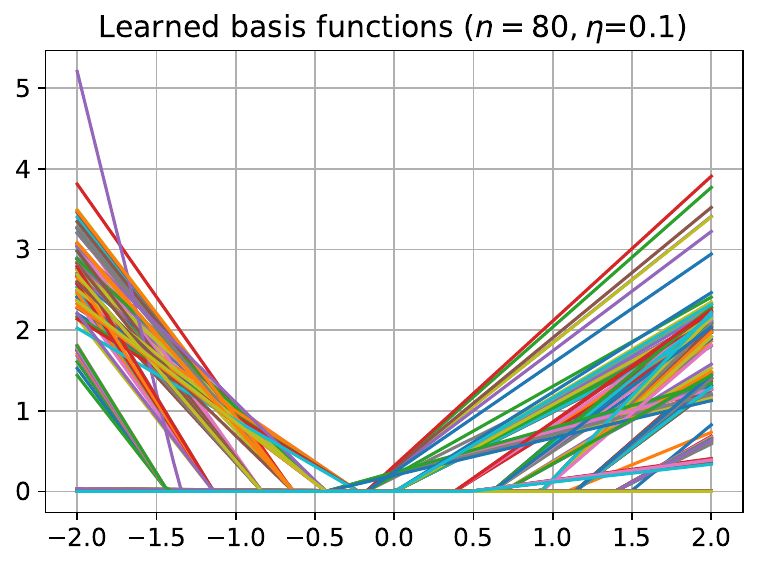}
    \includegraphics[width=0.24\textwidth]{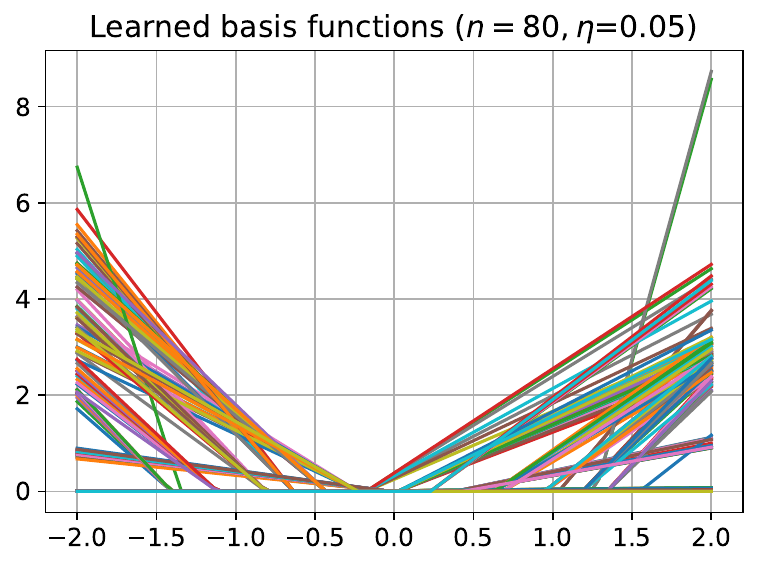}\\
    \includegraphics[width=0.24\textwidth]{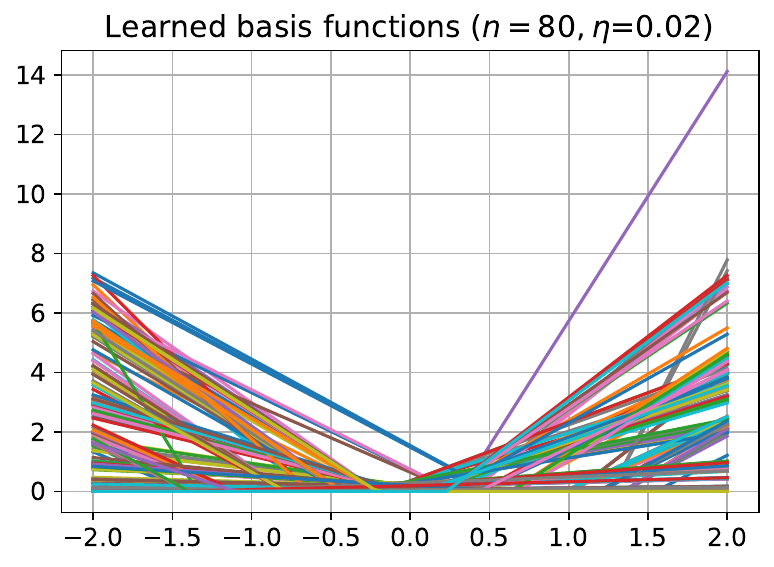}
    \includegraphics[width=0.24\textwidth]{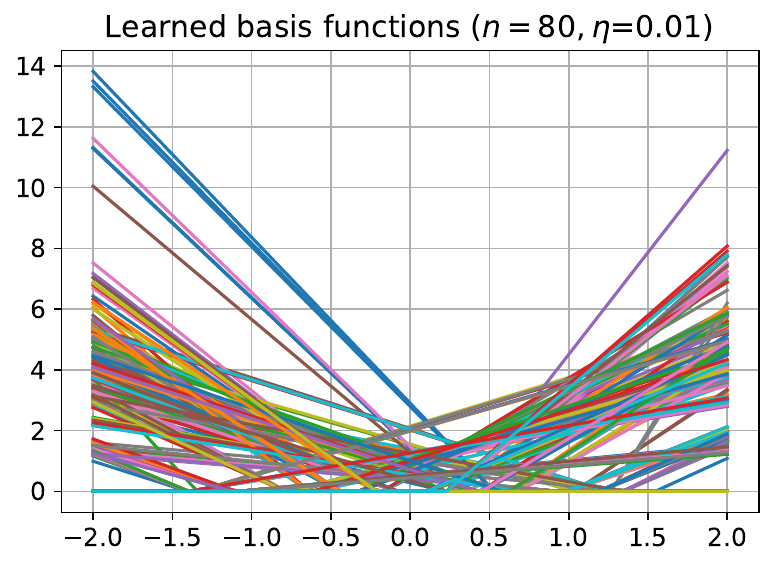}
    \caption{Illustration of the learned basis functions with learning rate $\eta$. It is clear from the figures that there are substantial representation learning, where the learned basis functions are very different from initialization. Also, the structure of active basis functions gets simpler as the learning rate $\eta$ gets bigger.}
    \label{fig:learnedbasis}
\end{figure}

There are several interesting insights from Figure \ref{fig:learnedbasis}. First, the learned basis functions are very different from the initialization, so a lot of representation learning is happening, in comparison to the ``kernel'' regime in which nearly no representation learning is happening. Second, as $\eta$ gets smaller, the complexity of the
learned basis functions and the number of knots in the fitted
function increase. Third, even with large learning rate, the number of active basis functions is still large, which is quite different from the representation leaning for training a two-layer ReLU NN to minimize the MSE loss \citep[Figure 7]{qiao2024stable}, where large learning rate leads to very few active basis functions. Lastly, the learned basis function displays a strong ``clustering'' effect in the sense that
despite overparameterization, many learned basis functions end up having the same activation threshold on the data support. 

\subsection{Example of Convergence to Interpolating Solutions}\label{app:exampleexperiment}

\begin{figure}[h!]
    \centering
    \includegraphics[width=0.32\textwidth]{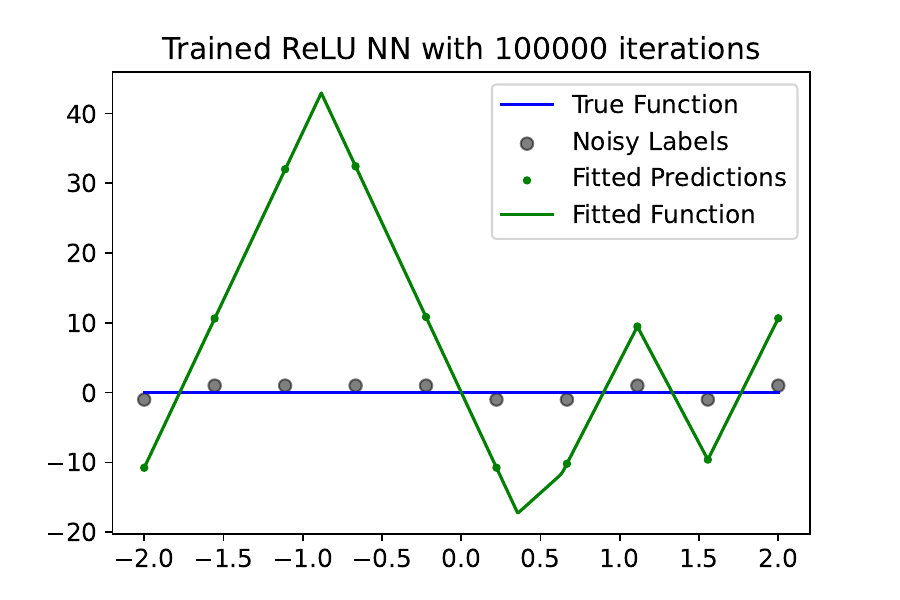}
    \includegraphics[width=0.32\textwidth]{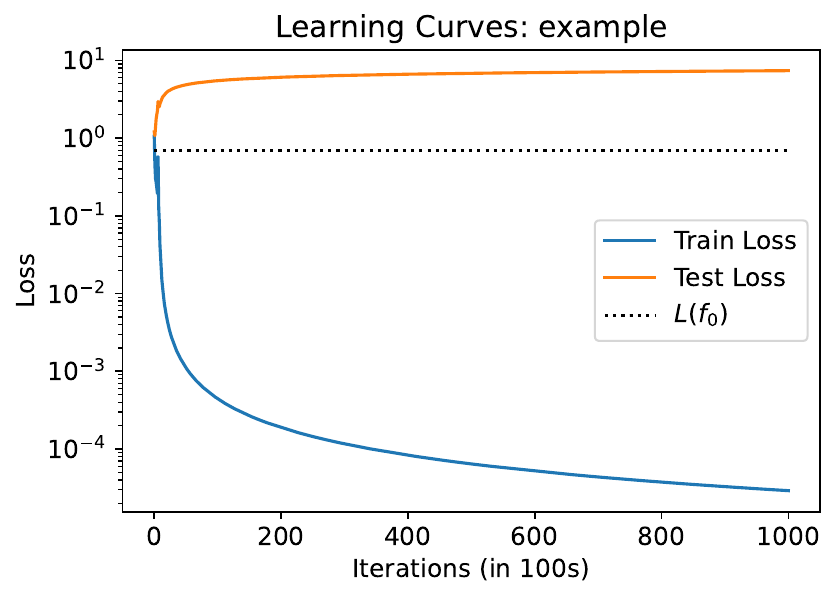}
    \includegraphics[width=0.32\textwidth]{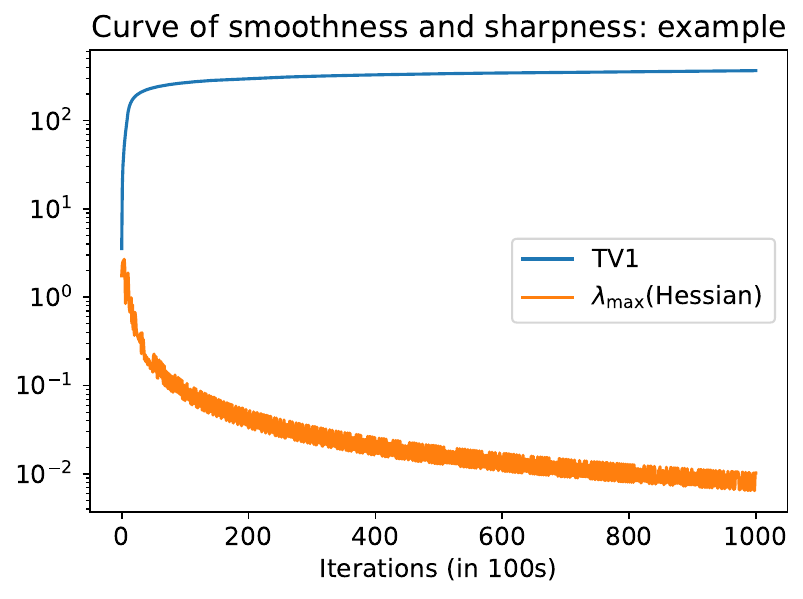}\\
    \caption{Illustration of the output function, learning curves and the curve of smoothness $\&$ sharpness. Interestingly, ``Edge of Stability'' does not happen and $\lambda_{\max}$(Hessian) decreases to 0.}
    \label{fig:example1}
\end{figure}

\begin{figure}[h!]
    \centering
\includegraphics[width=0.32\textwidth]{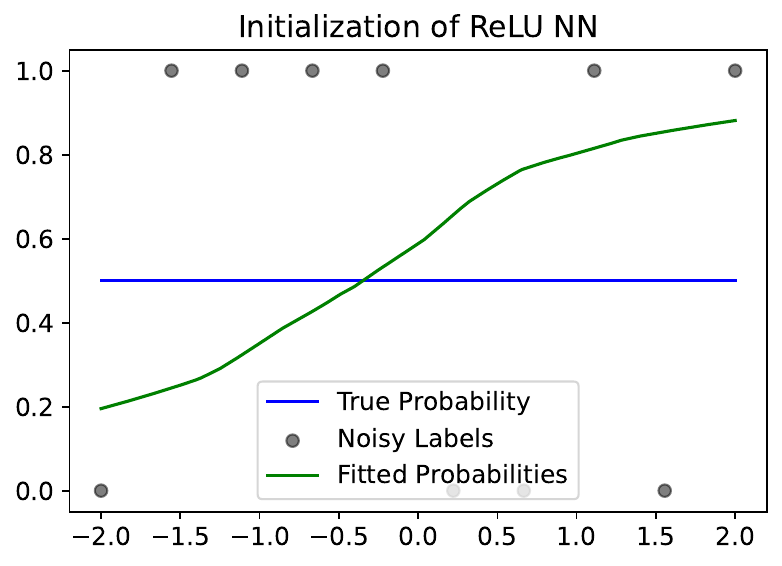}
    \includegraphics[width=0.32\textwidth]{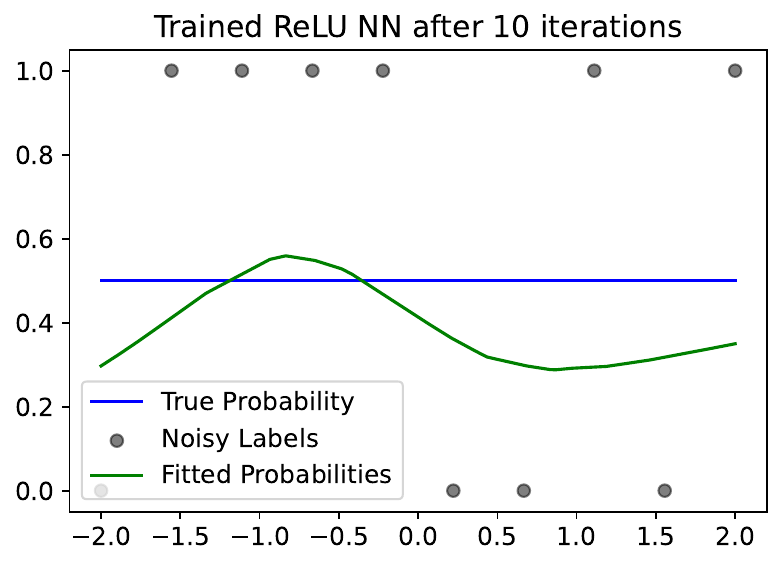}
    \includegraphics[width=0.32\textwidth]{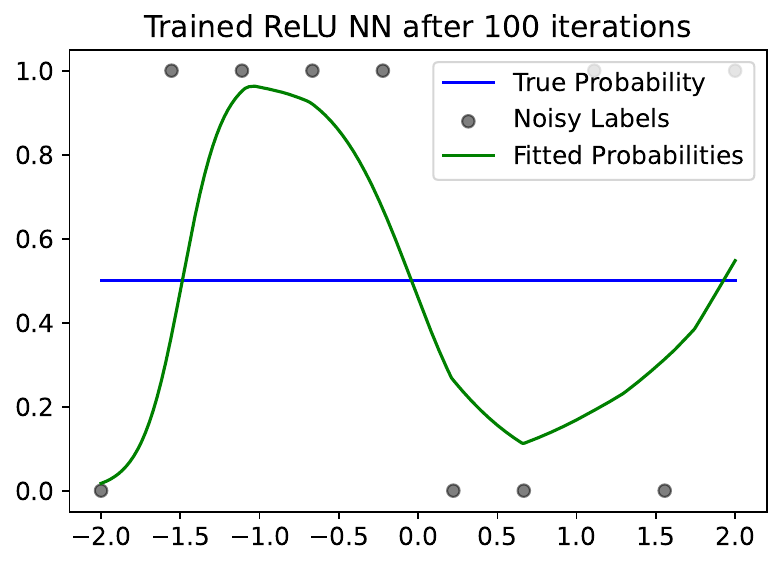}\\
    \includegraphics[width=0.32\textwidth]{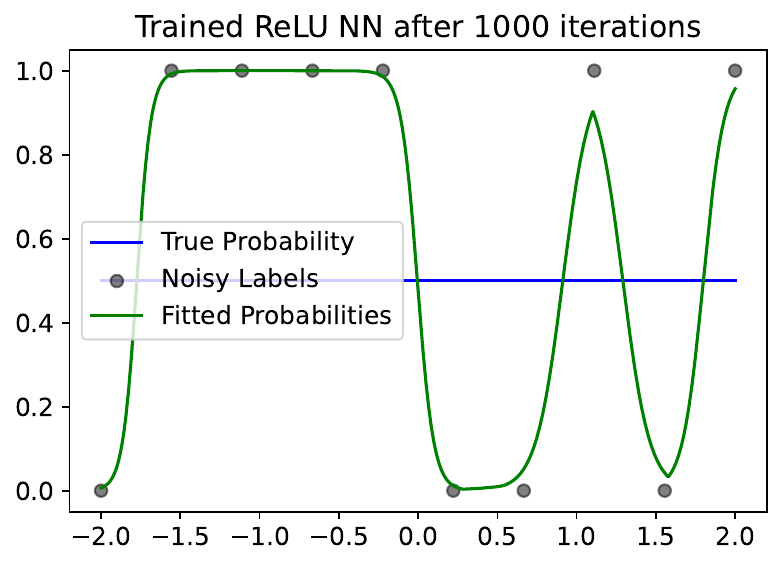}
    \includegraphics[width=0.32\textwidth]{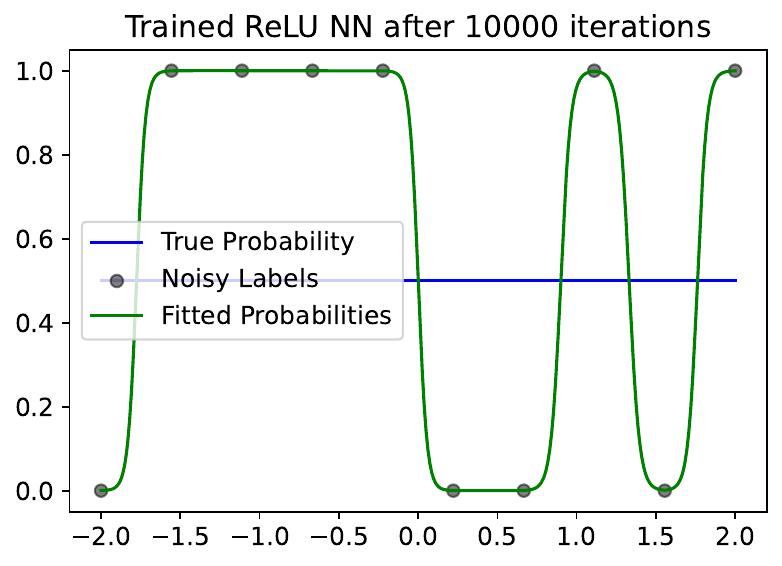}
    \includegraphics[width=0.32\textwidth]{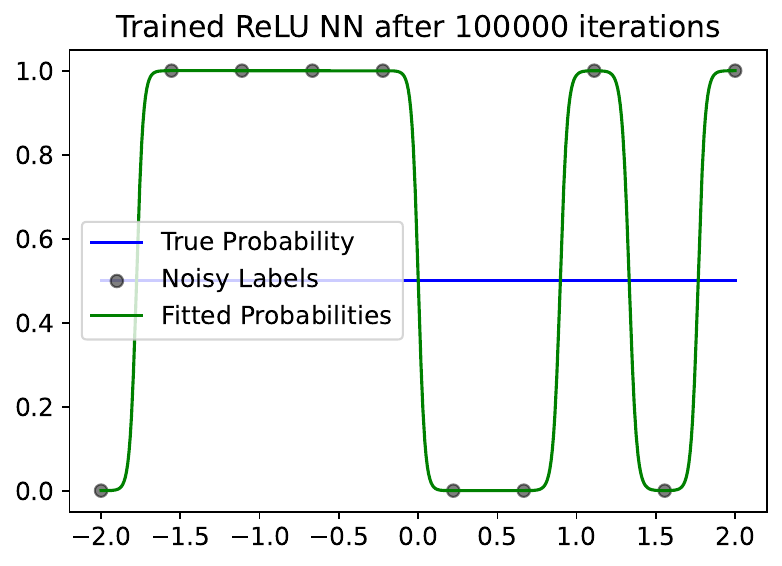}\\
    \caption{Illustration of the training dynamics. We plot true probability (\emph{i.e.} $\sigma(f_0)$) vs fitted probability (\emph{i.e.} $\sigma(f_\theta)$). Also, we shift the labels from $\{-1,1\}$ to $\{0,1\}$ for a better illustration. The learned function tends to interpolate the data after a small number of iterations.}
    \label{fig:example2}
\end{figure}

In this part, we construct an example to show that when the number of data points $n$ is small, GD with large learning rate can actually converge to interpolating solutions at infinity, which supports the negative result in Section \ref{sec:negativeexample}. 

In the example, the design set $\{x_i\}_{i=1}^n$ is chosen to be $n$ equally spaced points in $[-2,2]$ with $n=10$. The ground-truth function $f_0(x)=0$ for any $x\in[-2,2]$. The two-layer ReLU NN is mildly over-parameterized with $k=20$ while initialized randomly. During the training, we minimize the logistic loss via GD with learning rate $\eta=1$, and we train for 100000 iterations. 

In the \textbf{left panel} of Figure \ref{fig:example1}, the final output function (after 100000 iterations) tends to be interpolating and heavily overfitting. As a result, the training loss converges to 0, as shown in the \textbf{middle panel}. Most interestingly, as supported by the \textbf{right panel} of Figure \ref{fig:example1}, GD actually \emph{bypasses} the ``Edge of Stability'' regime. Instead of oscillating around $2/\eta=2$, the largest eigenvalue of the Hessian matrix tends to converge to 0, although in a non-monotonic way. The performance is consistent with our Theorem \ref{thm:example}, which claims that interpolating solutions can be arbitrarily flat. Together with Figure \ref{fig:example2}, the training dynamics also corroborate the implication of Theorem \ref{thm:example}: the more confident the correct prediction is, the flatter the landscape.

\begin{figure}[h!]
    \centering
    \includegraphics[width=0.49\textwidth]{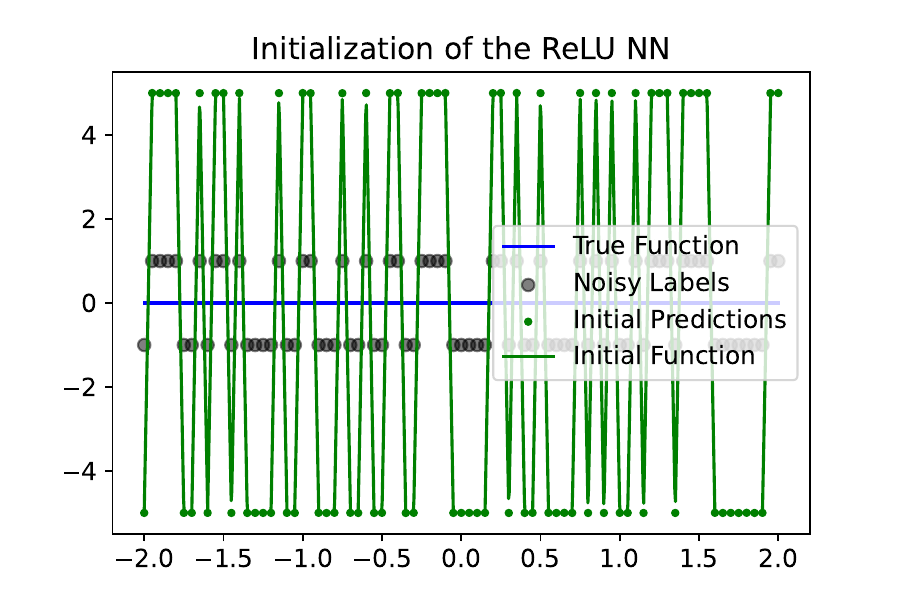}
    \includegraphics[width=0.49\textwidth]{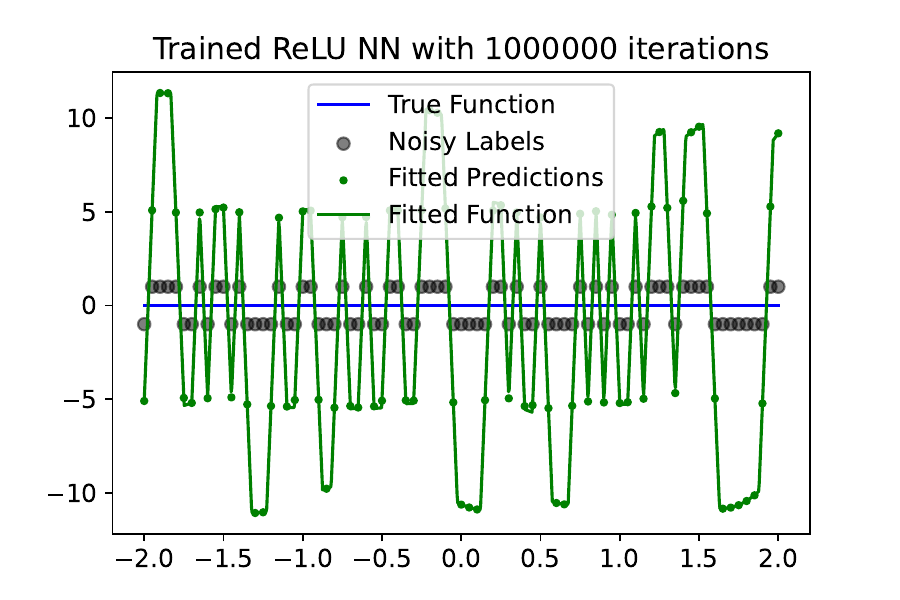}\\
    \includegraphics[width=0.49\textwidth]{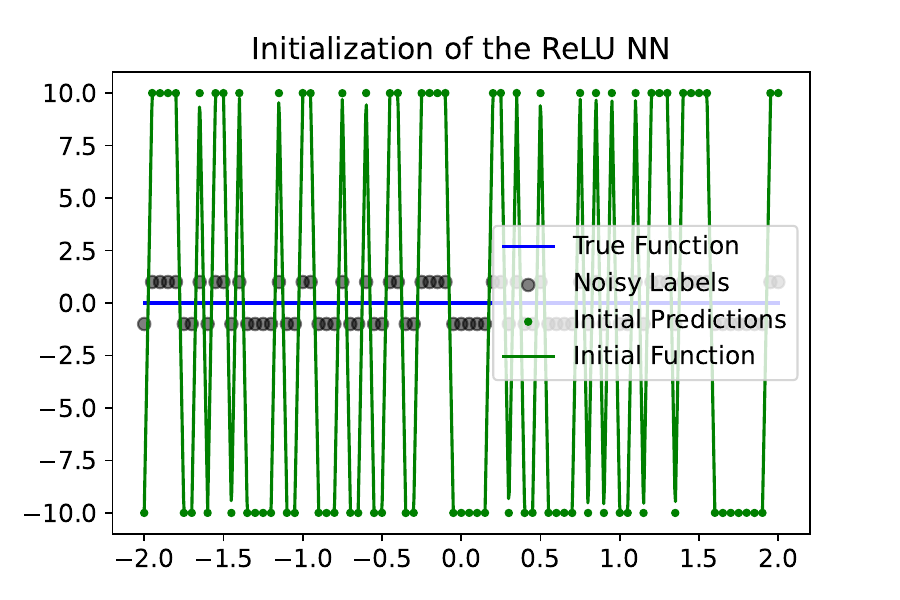}
    \includegraphics[width=0.49\textwidth]{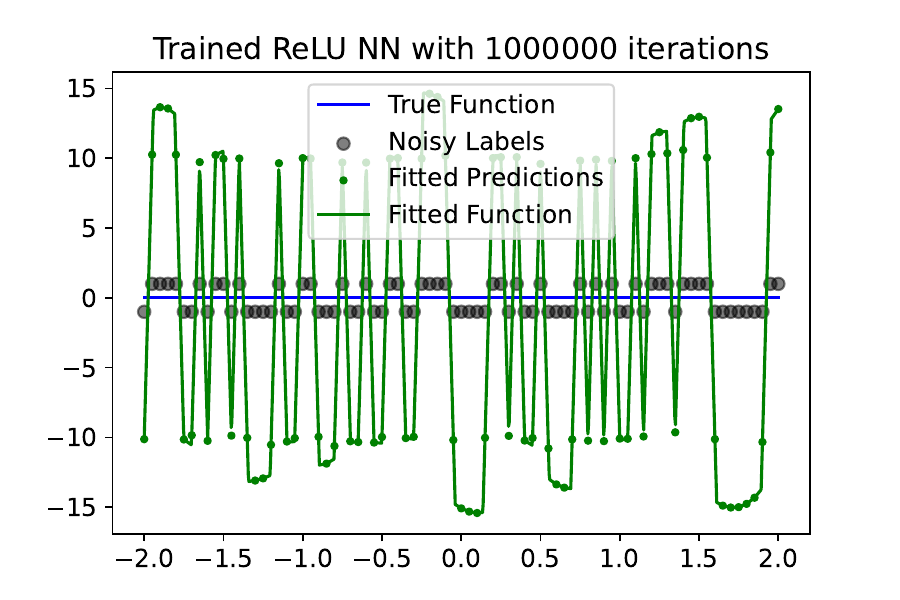}\\
    \caption{Illustration of the confidently initialized NN and the trained NN after 1000000 iterations. For both rows, $n$ is chosen to be $80$ while $k$ is slightly larger than $n$. The learning rate $\eta$ is tuned to be as large as possible such that GD does not diverge. The \textbf{first row} initializes the NN such that $f_\theta(x_i)y_i\approx 5$ for all $i\in[n]$, and trains via GD with $\eta=0.005$. The \textbf{second row} initializes the NN such that $f_\theta(x_i)y_i\approx 10$ for all $i\in[n]$, and trains via GD with $\eta=0.5$.}
    \label{fig:example3}
\end{figure}

\noindent\textbf{The case with large $n$.} When $n$ becomes large, GD with a constant learning rate will probably converge to some finite function if initialized randomly, as shown in Figure \ref{fig:highlights}. However, flat yet interpolating solutions still exist. Moreover, as shown in Figure \ref{fig:example3}, when the ReLU NN is initialized to be \emph{correct and confident}, GD with constant learning rate will further push the predictions to be more confident, although very slowly. The main observations are: (1) the prediction for the part with consecutive 1's or -1's as labels will become more confident at a faster rate compared to the part with joint 1 and -1; (2) as we initialize the NN to be more confident, the scale of the predictions will grow at a slower speed.

\section{Technical Lemmas}
\subsection{Connection between Stability and Flatness}\label{app:stabflat}
For a twice differentiable minimum $\theta^\star$, direct application of Taylor's expansion implies
\begin{equation}
     \loss(\theta)\approx\loss(\theta^\star)+(\theta-\theta^\star)^T\nabla\loss(\theta^\star)+\frac{1}{2}(\theta-\theta^\star)^T\nabla^2\loss(\theta^\star)(\theta-\theta^\star),
\end{equation}
 where $\nabla^2\loss$ is the Hessian matrix. Note that $\nabla\loss(\theta^\star)=0$. Therefore, as $\theta_t$ converges to $\theta^\star$, we can approximate the formula for GD as $\theta_{t+1}\approx\theta_t-\eta\left(\nabla^2\loss(\theta^\star)(\theta_t-\theta^\star)\right)$. Motivated by the approximation, \citet{wu2018sgd} first brought up the notion of linear stability.

\begin{definition}[Linear stability]\label{def:ls}
     With the update rule $\theta_{t+1}=\theta_t-\eta\left(\nabla^2\loss(\theta^\star)(\theta_t-\theta^\star)\right)$, a twice differentiable local minimum $\theta^\star$ of $\loss$ is said to be $\epsilon$ linearly stable if for any $\theta_0$ in the $\epsilon$-ball $\mathcal{B}_\epsilon(\theta^\star)$, it holds that $\limsup_{t\rightarrow\infty}\|\theta_t-\theta^\star\|\leq\epsilon$.
\end{definition}

We remark that since the update of GD does not have any randomness, we remove the expectation before $\|\theta_t-\theta^\star\|$ (which appears in the definition in \citet{wu2018sgd,mulayoff2021implicit}). The definition of ``linear stability'' ensures that the optimization could stabilize around the local minimum $\theta^\star$ once it gets close enough to $\theta^\star$. The following lemma connects linear stability to the flatness of the local minima by showing that the set of stable minima is equivalent to the set of flat local minima whose largest eigenvalue of Hessian matrix is bounded by $2/\eta$.

\begin{lemma}\label{lem:stable}
    Consider the update rule in Definition \ref{def:ls}, for any $\epsilon>0$, a local minimum $\theta^\star$ is an $\epsilon$ linearly stable minimum of $\loss$ if and only if $\lambda_{\max}(\nabla^2\loss(\theta^\star))\leq\frac{2}{\eta}$.
\end{lemma}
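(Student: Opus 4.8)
The plan is to reduce this apparently dynamical statement to the standard spectral-radius criterion for a linear iteration. Set $H := \nabla^2\loss(\theta^\star)$; since $\theta^\star$ is a twice differentiable local minimum of $\loss$, $H$ is symmetric and positive semidefinite. Subtracting $\theta^\star$ from both sides of the update rule in Definition~\ref{def:ls} turns it into the linear recursion $\theta_{t+1}-\theta^\star = (I-\eta H)(\theta_t-\theta^\star)$, hence $\theta_t-\theta^\star = (I-\eta H)^t(\theta_0-\theta^\star)$. Diagonalizing $H = \sum_i \lambda_i v_i v_i\T$ in an orthonormal eigenbasis $\{v_i\}$ with $\lambda_i\ge 0$, and writing $\theta_0-\theta^\star = \sum_i c_i v_i$, I get $\theta_t-\theta^\star = \sum_i (1-\eta\lambda_i)^t c_i v_i$, so that $\norm{\theta_t-\theta^\star}^2 = \sum_i (1-\eta\lambda_i)^{2t} c_i^2$ by orthonormality. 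Everything then hinges on the scalars $|1-\eta\lambda_i|$.

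For the ``if'' direction, suppose $\lambda_{\max}(H)\le 2/\eta$. Then $0\le \eta\lambda_i\le 2$ for every $i$, so $|1-\eta\lambda_i|\le 1$, and the displayed identity gives $\norm{\theta_t-\theta^\star}\le\norm{\theta_0-\theta^\star}\le\epsilon$ for all $t$ whenever $\theta_0\in\mathcal{B}_\epsilon(\theta^\star)$; hence $\limsup_{t\to\infty}\norm{\theta_t-\theta^\star}\le\epsilon$, i.e. $\theta^\star$ is $\epsilon$ linearly stable. For the ``only if'' direction I argue by contrapositive: if $\lambda_{\max}(H)>2/\eta$, pick a unit eigenvector $v$ for $\lambda_{\max}$ and initialize $\theta_0=\theta^\star+\frac{\epsilon}{2}v\in\mathcal{B}_\epsilon(\theta^\star)$. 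Then $\theta_t-\theta^\star=\frac{\epsilon}{2}(1-\eta\lambda_{\max})^t v$ with $|1-\eta\lambda_{\max}|>1$, so $\norm{\theta_t-\theta^\star}=\frac{\epsilon}{2}|1-\eta\lambda_{\max}|^t\to\infty$, contradicting the $\limsup\le\epsilon$ requirement; thus $\theta^\star$ is not $\epsilon$ linearly stable. Combining the two directions yields the equivalence for every $\epsilon>0$.

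I do not expect a genuine obstacle here — this is the textbook stability analysis of a linear recurrence. The only points requiring a touch of care are: (i) explicitly invoking positive semidefiniteness of $H$ at a local minimum, which makes the upper bound $1-\eta\lambda_i\le 1$ automatic so that the binding constraint collapses to $\lambda_{\max}(H)\le 2/\eta$; and (ii) checking the boundary case $\lambda_{\max}(H)=2/\eta$, where $1-\eta\lambda_{\max}=-1$ produces a bounded, merely oscillating (non-decaying) trajectory with $\norm{\theta_t-\theta^\star}=\norm{\theta_0-\theta^\star}\le\epsilon$, so it is correctly placed on the stable side — consistent with the non-strict inequality in the statement.
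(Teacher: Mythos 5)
Your proposal is correct and follows essentially the same route as the paper's proof: unroll the linearized update to $\theta_t-\theta^\star=(I-\eta\nabla^2\loss(\theta^\star))^t(\theta_0-\theta^\star)$, bound the eigenvalues of $I-\eta\nabla^2\loss(\theta^\star)$ in $[-1,1]$ (using positive semidefiniteness at the minimum) for the ``if'' direction, and initialize along the top eigenvector for the ``only if'' direction. Your eigenbasis expansion and contrapositive phrasing are only cosmetic variations of the paper's operator-norm bound and direct argument, and your explicit handling of the boundary case $\lambda_{\max}=2/\eta$ and of the PSD assumption is a minor extra touch of care, not a different method.
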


\begin{proof}[Proof of Lemma \ref{lem:stable}]
    It holds that
\begin{equation}
    \begin{split}
        \theta_{t+1}-\theta^\star &= \theta_t-\theta^\star-\eta\nabla^2\loss(\theta^\star)(\theta_t-\theta^\star)\\&=\left(I-\eta\nabla^2\loss(\theta^\star)\right)(\theta_t-\theta^\star),
    \end{split}
\end{equation}
where the first equation is from the update rule in Definition \ref{def:ls}. As a result,
\begin{equation}
    \theta_t-\theta^\star=\left(I-\eta\nabla^2\loss(\theta^\star)\right)^t(\theta_0-\theta^\star).
\end{equation}

On one hand, if $\lambda_{\max}(\nabla^2\loss(\theta^\star))\leq\frac{2}{\eta}$, it holds that 
\begin{equation}
    \|\theta_t-\theta^\star\|\leq\left\|I-\eta\nabla^2\loss(\theta^\star)\right\|_2^t\cdot \|\theta_0-\theta^\star\|\leq \|\theta_0-\theta^\star\|,
\end{equation}
where the second inequality is because all the eigenvalues of $I-\eta\nabla^2\loss(\theta^\star)$ is bounded between $[-1,1]$. Therefore, $\theta^\star$ is $\epsilon$ linearly stable for any $\epsilon$.

On the other hand, if $\theta^\star$ is $\epsilon$ linearly stable, we choose $\theta_0$ such that $\frac{\theta_0-\theta^\star}{\|\theta_0-\theta^\star\|}$ is the top eigenvector of $\nabla^2\loss(\theta^\star)$ and $\|\theta_0-\theta^\star\|=\epsilon$. Then we have 
\begin{equation}
    \|\theta_t-\theta^\star\|=\left|1-\eta\lambda_{\max}\left(\nabla^2\loss(\theta^\star)\right)\right|^t\cdot \epsilon,
\end{equation}
which implies that $\limsup_{t\rightarrow\infty}\left|1-\eta\lambda_{\max}\left(\nabla^2\loss(\theta^\star)\right)\right|^t\leq 1$, and therefore $\lambda_{\max}(\nabla^2\loss(\theta^\star))\leq\frac{2}{\eta}$, which finishes the proof.
\end{proof}

\subsection{Properties of Logistic Loss}

Recall that the distribution $\cB(x)$ is our data generation assumption \eqref{equ:dis}. The first property is that the optimal prediction function $f^\star$ is the ground-truth function $f_0$.

\begin{lemma}\label{lem:optimal}
    For any feature $x$ in the dataset, the optimal prediction function $f^\star$ satisfies that $f^\star(x)=\mathrm{argmin}_f\,\E_{y\sim\cB(x)}\log\left(1+e^{-yf}\right)=f_0(x)$.
\end{lemma}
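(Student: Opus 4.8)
\textbf{Proof proposal for Lemma \ref{lem:optimal}.} The plan is to reduce the problem to minimizing a one-dimensional strictly convex function and then invoke monotonicity of the sigmoid. Fix an arbitrary feature $x$ and abbreviate $p := \sigma(f_0(x))$; since $f_0$ is bounded on $[-x_{\max},x_{\max}]$ we have $p \in (0,1)$. Because $y$ takes only the values $\pm 1$ under $\cB(x)$, the objective as a function of the scalar value $f = f(x)$ is
\begin{equation}
    g(f) := \E_{y\sim\cB(x)}\log\left(1+e^{-yf}\right) = p\log\left(1+e^{-f}\right) + (1-p)\log\left(1+e^{f}\right).
\end{equation}
So it suffices to show $g$ is minimized uniquely at $f = f_0(x)$.

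Next I would differentiate. Using $\frac{d}{df}\log(1+e^{-f}) = -\frac{e^{-f}}{1+e^{-f}} = -(1-\sigma(f))$ and $\frac{d}{df}\log(1+e^{f}) = \frac{e^{f}}{1+e^{f}} = \sigma(f)$, one gets
\begin{equation}
    g'(f) = -p\bigl(1-\sigma(f)\bigr) + (1-p)\sigma(f) = \sigma(f) - p, \qquad g''(f) = \sigma'(f) = \sigma(f)\bigl(1-\sigma(f)\bigr) > 0 .
\end{equation}
Hence $g$ is strictly convex, so it has a unique global minimizer characterized by the stationarity condition $g'(f) = 0$, i.e. $\sigma(f) = p = \sigma(f_0(x))$. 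Since $\sigma : \R \to (0,1)$ is a strictly increasing bijection and $p \in (0,1)$, the unique solution is $f = f_0(x)$, which establishes $f^\star(x) = \arg\min_f \E_{y\sim\cB(x)}\log(1+e^{-yf}) = f_0(x)$. As $x$ was arbitrary, the claim follows pointwise, and therefore also for the population loss $\bar\cL$ (which is an average/integral of these pointwise-minimized terms).

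There is essentially no serious obstacle here; the only points requiring minor care are (i) confirming $p \in (0,1)$ strictly so that a finite minimizer exists (this uses $\|f_0\|_\infty \le B$, or more generally that $f_0$ is real-valued), and (ii) noting that the expectation is over the two-point label distribution, so the reduction to the scalar function $g$ is exact. Strict convexity of $g$ is what upgrades the stationary point to the unique global minimum.
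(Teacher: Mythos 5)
Your proposal is correct and follows essentially the same route as the paper: reduce to the scalar objective $p\log(1+e^{-f})+(1-p)\log(1+e^{f})$ and solve the first-order condition, which gives $\sigma(f)=p$ and hence $f=\log\frac{p}{1-p}=f_0(x)$. Your additional remarks on strict convexity and $p\in(0,1)$ just make explicit the uniqueness that the paper leaves implicit.
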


\begin{proof}[Proof of Lemma \ref{lem:optimal}]
    Let $p=\sigma(f_0(x))=\frac{e^{f_0(x)}}{1+e^{f_0(x)}}$, $f^\star(x)$ should be 
\begin{equation}
    \begin{split}
      f^\star(x)=&\mathrm{argmin}_{f}\;p\cdot\log\left(1+e^{-f}\right)+(1-p)\cdot\log\left(1+e^f\right)\\=&\log\left(\frac{p}{1-p}\right)=f_0(x),
    \end{split}
\end{equation}
where the second equation is derived by taking derivative of the loss.
\end{proof}

\begin{remark}
    For the fixed design (feature) setting, $f^\star$ must be the same as $f_0$ for the features $x$ from the dataset, while $f^\star$ can take any values for other features. 
\end{remark}

Meanwhile, the $\log$ function has the following property.

\begin{lemma}\label{lem:log}
    For $x>0$, the following inequality holds:
    \begin{equation}
        \frac{x}{1+x}\leq\log(1+x)\leq x.
    \end{equation}
\end{lemma}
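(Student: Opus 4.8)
The plan is to prove the two inequalities separately by reducing each to a statement about the sign of an auxiliary one-variable function and invoking monotonicity from the derivative, which is the standard calculus route for such elementary logarithmic bounds. For the upper bound $\log(1+x)\le x$, I would set $g(x):=x-\log(1+x)$ on $[0,\infty)$, note $g(0)=0$, and compute $g'(x)=1-\frac{1}{1+x}=\frac{x}{1+x}\ge 0$ for $x\ge 0$; hence $g$ is nondecreasing and $g(x)\ge g(0)=0$, giving the claim. In fact this already gives the inequality for all $x>-1$, but we only need $x>0$.

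For the lower bound $\frac{x}{1+x}\le\log(1+x)$, I would similarly define $h(x):=\log(1+x)-\frac{x}{1+x}$ on $[0,\infty)$, observe $h(0)=0$, and compute $h'(x)=\frac{1}{1+x}-\frac{(1+x)-x}{(1+x)^2}=\frac{1}{1+x}-\frac{1}{(1+x)^2}=\frac{x}{(1+x)^2}\ge 0$ for $x\ge 0$; so $h$ is nondecreasing and $h(x)\ge h(0)=0$. An equally clean alternative is to apply the already-proved upper bound with the substitution $y=\frac{x}{1+x}\in(0,1)$: then $\log(1-y)\le -y$ is the same fact applied to $-y\in(-1,0)$ (or one re-derives $-\log(1-y)\ge y$ directly), and $-\log(1-y)=-\log\frac{1}{1+x}=\log(1+x)$ while $y=\frac{x}{1+x}$; I would mention this as a remark but carry out the direct derivative argument as the main line since it keeps everything on $x>0$ without worrying about the domain of the substituted variable.

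There is essentially no obstacle here: both steps are routine once one writes down the right auxiliary function, and the only thing to be slightly careful about is the domain (ensuring $1+x>0$ so that $\log(1+x)$ and the derivatives are well defined, which holds trivially for $x>0$) and the direction of monotonicity. I would present the two bullet-free paragraphs as a single short proof, chaining $\frac{x}{1+x}\le\log(1+x)\le x$ at the end.

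\begin{proof}[Proof of Lemma~\ref{lem:log}]
We prove the two inequalities separately; throughout, $x>0$ so $1+x>0$ and all expressions below are well defined.

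\emph{Upper bound.} Let $g(x):=x-\log(1+x)$. Then $g(0)=0$ and, for $x>0$,
\begin{equation}
g'(x)=1-\frac{1}{1+x}=\frac{x}{1+x}>0.
\end{equation}
Hence $g$ is strictly increasing on $[0,\infty)$, so $g(x)>g(0)=0$ for $x>0$, i.e. $\log(1+x)\le x$.

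\emph{Lower bound.} Let $h(x):=\log(1+x)-\dfrac{x}{1+x}$. Then $h(0)=0$ and, for $x>0$,
\begin{equation}
h'(x)=\frac{1}{1+x}-\frac{(1+x)-x}{(1+x)^2}=\frac{1}{1+x}-\frac{1}{(1+x)^2}=\frac{x}{(1+x)^2}>0.
\end{equation}
Hence $h$ is strictly increasing on $[0,\infty)$, so $h(x)>h(0)=0$ for $x>0$, i.e. $\dfrac{x}{1+x}\le\log(1+x)$.

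Combining the two displays yields $\dfrac{x}{1+x}\le\log(1+x)\le x$ for all $x>0$.
\end{proof}
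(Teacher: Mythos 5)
Your proof is correct and rests on the same fact the paper uses, namely $(\log(1+t))'=\tfrac{1}{1+t}$; the paper simply states this in one line, while you spell out the monotonicity argument with the auxiliary functions $g$ and $h$. No gap — it is just a more explicit version of the same approach.
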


\begin{proof}[Proof of Lemma \ref{lem:log}]
    The inequality directly results from the fact $(\log(1+t))^\prime=\frac{1}{1+t}$.
\end{proof}

\subsection{Metric Entropy of the BV Function Class}\label{app:bv1}

Recall that we assume the target function belongs to the first order bounded variation class  
\begin{equation}
f_0 \in \text{BV}^{(1)}(B,C_n) := \left\{ f:[-x_{\max},x_{\max}]\rightarrow \R \;\middle|\; \max_x |f(x)|\leq B,  \int_{-x_{\max}}^{x_{\max}} |f^{\prime\prime}(x)| d x  \leq C_n\right\}.
\end{equation}

In this part, we characterize the complexity of such function class via the notion of metric entropy. We first state the definition of metric entropy \citep{wainwright2019high} below. 

\begin{definition}
    For a set $\mathbb{T}$ with a corresponding metric $\rho(\cdot,\cdot)$, let $N(\epsilon,\mathbb{T},\rho)$ denote the $\epsilon$-covering number of $\mathbb{T}$ under metric $\rho$. Then the metric entropy of $\mathbb{T}$ with respect to $\rho$ is $\log N(\epsilon,\mathbb{T},\rho)$.
\end{definition}

For examples of metric entropy, we refer the readers to Chapter 5 of \citet{wainwright2019high}. Next we bound the metric entropy of $\text{BV}^{(1)}(1,1)$, which is helpful for bounding the metric entropy of BV(1) function class with other (larger) parameters. Regarding the metric, we consider the $\ell_\infty$ metric defined as $\rho_{\infty}(f,g)=\sup_{x\in\Omega}|f(x)-g(x)|$ over the domain $\Omega$. As a short hand, we denote $\rho_\infty$ by $\|\cdot\|_\infty$.

\begin{lemma}\label{lem:entropy}[Lemma C.4 of \citet{qiao2024stable}]
    Assume the set\\ $\mathbb{T}_1=\left\{f:[-1,1]\rightarrow\mathbb{R}\;\bigg|\;\int_{-1}^1|f^{\prime\prime}(x)|dx\leq 1, \;|f(x)|\leq 1\right\}$ and the metric is the $\ell_\infty$ distance $\|\cdot\|_\infty$, then there exists a universal constant $C_1>0$ such  that for any $\epsilon>0$, the metric entropy of $(\mathbb{T}_1,\|\cdot\|_\infty)$ satisfies 
    \begin{equation}
        \log N(\epsilon,\mathbb{T}_1,\|\cdot\|_\infty)\leq C_1\epsilon^{-\frac{1}{2}}.
    \end{equation}
\end{lemma}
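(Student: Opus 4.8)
The plan is to reduce $\mathbb{T}_1$ to a difference of two uniformly bounded, uniformly Lipschitz \emph{convex} functions, and then quote the classical $\epsilon^{-1/2}$ entropy bound for convex functions in one dimension. Here is the outline.

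\textbf{Step 1 (convex decomposition of $\mathbb{T}_1$).} Given $f\in\mathbb{T}_1$, regard $f''$ as a signed measure $\mu$ on $[-1,1]$ with $\|\mu\|_{\mathrm{TV}}=\int_{-1}^1|f''|\le 1$, and take its Jordan decomposition $\mu=\mu_+-\mu_-$ with $\mu_\pm\ge 0$ and $\mu_+([-1,1])+\mu_-([-1,1])\le 1$. Put $g_\pm(x):=\int_{-1}^x(x-t)\,d\mu_\pm(t)$, so $g_\pm$ are convex, $g_\pm(-1)=0$, and $\|g_\pm'\|_\infty=\mu_\pm([-1,1])\le 1$, hence $\|g_\pm\|_\infty\le 2$ on $[-1,1]$. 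Since $f-(g_+-g_-)$ has vanishing second derivative it equals some affine function $ax+b$, and evaluating at $x=\pm 1$ together with $\|f\|_\infty\le 1$ and $\|g_\pm\|_\infty\le 2$ gives $|a|,|b|\le 3$. Writing $\max(0,\cdot)$ for the positive part, we get $f=f_1-f_2$ with $f_1:=g_++\max(0,ax+b)$ and $f_2:=g_-+\max(0,-(ax+b))$, both convex, both bounded on $[-1,1]$ by an absolute constant $C_0$ (take $C_0=8$), and with $\int_{-1}^1 f_i''\le C_0$ (the kink of the positive part contributes at most $|a|\le 3$). Finally, a convex function $g$ on $[-1,1]$ with $\|g\|_\infty\le C_0$ and $\int_{-1}^1 g''\le C_0$ is $2C_0$-Lipschitz: by the secant mean value theorem there is $x_0$ with $|g'(x_0)|\le\tfrac12|g(1)-g(-1)|\le C_0$, and since $g'$ is nondecreasing with total increase $\le C_0$, this forces $\|g'\|_\infty\le 2C_0$. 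So $f_1,f_2$ lie in the class
\[
\mathcal{C}:=\bigl\{\, g:[-1,1]\to\R \ \big|\ g\ \text{convex},\ \|g\|_\infty\le C_0,\ \|g'\|_\infty\le 2C_0 \,\bigr\}.
\]

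\textbf{Step 2 (entropy of the convex class) and Step 3 (combine).} Invoke the classical estimate for the $\ell_\infty$ metric entropy of uniformly bounded, uniformly Lipschitz convex functions on a cube (Bronshtein; see also Guntuboyina–Sen): in dimension $d$ it is of order $\epsilon^{-d/2}$, so with $d=1$ we get $\log N(\epsilon,\mathcal{C},\|\cdot\|_\infty)\le C'\epsilon^{-1/2}$ for an absolute constant $C'$ depending only on $C_0$. If $\mathcal{N}$ is an $(\epsilon/2)$-net of $(\mathcal{C},\|\cdot\|_\infty)$, then $\{g-g':g,g'\in\mathcal{N}\}$ is an $\epsilon$-net of $\{f_1-f_2:f_1,f_2\in\mathcal{C}\}\supseteq\mathbb{T}_1$, hence $\log N(\epsilon,\mathbb{T}_1,\|\cdot\|_\infty)\le 2\log N(\epsilon/2,\mathcal{C},\|\cdot\|_\infty)\le 2\sqrt{2}\,C'\,\epsilon^{-1/2}=:C_1\epsilon^{-1/2}$, which is the claimed bound.

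\textbf{Main obstacle and alternative route.} Step 1 is elementary but must be set up carefully so that both convex pieces stay bounded and Lipschitz with \emph{absolute} constants; the genuinely nontrivial input is the $\epsilon^{-1/2}$ entropy of bounded Lipschitz convex functions in Step 2, which I would cite rather than reprove. A self-contained alternative, matching the Birman–Solomjak style, is to approximate $f$ directly by a piecewise-linear interpolant on an adaptive dyadic partition: recursively bisect any dyadic subinterval $I$ until the local interpolation-error bound $\tfrac14\,|I|\int_I|f''|$ falls below $\epsilon$; using $\int_{-1}^1|f''|\le 1$ one shows the resulting pruned dyadic tree has $O(\epsilon^{-1/2})$ leaves and depth $O(\log(1/\epsilon))$, so there are only $\exp(O(\epsilon^{-1/2}))$ such trees, and one then discretizes the interpolant leaf by leaf, using $\|f\|_\infty\le 1$ and $\TV(f')\le 1$. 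In both routes the delicate point is avoiding a spurious $\log(1/\epsilon)$ factor in the final count — this is exactly what the convexity structure (respectively, the second-order smoothness / weak summability of the local errors across scales) buys.
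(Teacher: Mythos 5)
Your proposal is correct, but it takes a genuinely different route from the paper. The paper does not prove the lemma directly: it imports it as Lemma C.4 of \citet{qiao2024stable}, and the accompanying remark explains that the underlying argument there views $\mathbb{T}_1$ as a bounded subset of the Besov space $B^2_{1,\infty}$ and invokes the metric entropy of Besov balls (Corollary 2 of \citet{nickl2007bracketing}). You instead split $f''$ (as a signed measure) via its Jordan decomposition, integrate the two nonnegative parts to get convex pieces, absorb the leftover affine part into positive-part hinges, and verify that both convex summands are bounded and Lipschitz with absolute constants; the external input is then Bronshtein's classical $\epsilon^{-d/2}$ entropy bound for bounded Lipschitz convex functions (with $d=1$), combined through a difference-of-nets argument that only costs a factor of $2$ in the exponent and a $\sqrt{2}$ in the constant. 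Your reduction checks out: the constants $|a|,|b|\leq 3$, $C_0=8$, the TV bound on $f_i''$ including the hinge kink, and the secant-plus-monotone-derivative argument for the Lipschitz constant are all sound, and the external-center covering issue is immaterial for the stated bound. What each approach buys: yours is more self-contained and rests on a very classical, widely taught result about convex bodies/functions, making the $\epsilon^{-1/2}$ rate transparent in one dimension; the Besov-space route used by the cited source is less elementary but scales more systematically to higher-order total variation classes, other norms, and bracketing entropy, which is why it is the standard reference point in this literature. Your sketched Birman--Solomjak-style adaptive piecewise-linear alternative would also work and is closer in spirit to the original proofs of such bounds, though as you note the bookkeeping needed to avoid a spurious $\log(1/\epsilon)$ factor is exactly the delicate part there.
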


\begin{remark}
    The idea behind the proof is that the set $\mathbb{T}_1$ is a bounded subset of the Besov space $B_{1,\infty}^2$, and the metric entropy of such bounded subset is summarized in Corollary 2 of \citet{nickl2007bracketing}. A rigorous proof of Lemma \ref{lem:entropy} is stated in \citet{qiao2024stable}. We refer interested readers to \citet{edmunds1996function} for a detailed introduction of Besov space, \citet{devore1993constructive} for the connection between bounded total variation class and Besov space, and \citet{nickl2007bracketing} for more discussions about the metric entropy of Besov space.
\end{remark}

\section{Calculation of Gradient and Hessian Matrix}\label{app:calculate}

In this section, we calculate the gradient and Hessian matrix of $\cL(\theta)$ with respect to $\theta$. Recall that $\cL(\theta) = \frac{1}{n}\sum_{i=1}^n\log\left(1+e^{-y_i f_\theta(x_i)}\right)$. Then the gradient of $\cL(\theta)$ is calculated as follows:

\begin{equation}
    \nabla_\theta \cL(\theta) = \frac{1}{n}\sum_{i=1}^n \frac{-y_i e^{-y_i f(x_i)}}{1+e^{-y_i f(x_i)}}\nabla_\theta f(x_i),
\end{equation}

where $f=f_{\theta}$ for the ease of presentation. Moreover, we further calculate the Hessian matrix:
\begin{equation}
\begin{split}
    \nabla_\theta^2 \cL(\theta)&=\frac{1}{n}\sum_{i=1}^n \frac{-y_i e^{-y_i f(x_i)}}{1+e^{-y_i f(x_i)}}\nabla_\theta^2 f(x_i)+\frac{1}{n}\sum_{i=1}^n \frac{y_i^2 e^{-y_i f(x_i)}}{\left(1+e^{-y_i f(x_i)}\right)^2}\nabla_\theta f(x_i)\nabla_\theta f(x_i)^T\\&=\frac{1}{n}\sum_{i=1}^n \frac{-y_i e^{-y_i f(x_i)}}{1+e^{-y_i f(x_i)}}\nabla_\theta^2 f(x_i)+\frac{1}{n}\sum_{i=1}^n \frac{e^{-y_i f(x_i)}}{\left(1+e^{-y_i f(x_i)}\right)^2}\nabla_\theta f(x_i)\nabla_\theta f(x_i)^T,
\end{split}
\end{equation}

where the second equation holds because $y_i^2=1$.

Then what remains is to calculate $\nabla_\theta f_\theta(x)$ and $\nabla^2_\theta f_\theta(x)$. Recall that $f_{\theta}(x)=\sum_{i=1}^{k} w_i^{(2)}\phi\left(w_i^{(1)}x+b_i^{(1)}\right)+b^{(2)}$ where $\phi(x)=\max\{x,0\}$. Also, we denote $\theta=(w_1^{(1)},\cdots,w_k^{(1)},b_1^{(1)},\cdots,b_k^{(1)},w_1^{(2)},\cdots,w_k^{(2)},b^{(2)})^T$.

\subsection{Calculation of $\nabla_\theta f_\theta(x)$}\label{app:calgradient}
Resulting from direct calculation, for a given $x\in[-x_{\max},x_{\max}]$ we have
\begin{equation}
    \begin{cases}
        \nabla_{w_{i}^{(1)}}f_{\theta}(x)=xw_i^{(2)}\mathds{1}\left(w_i^{(1)}x+b_i^{(1)}>0\right),\;\;\forall\;i\in[k] \\ \nabla_{b_{i}^{(1)}}f_{\theta}(x)=w_i^{(2)}\mathds{1}\left(w_i^{(1)}x+b_i^{(1)}>0\right),\;\;\forall\;i\in[k] \\ \nabla_{w_{i}^{(2)}}f_{\theta}(x)=\phi\left(w_i^{(1)}x+b_i^{(1)}\right)=\left(w_i^{(1)}x+b_i^{(1)}\right)\mathds{1}\left(w_i^{(1)}x+b_i^{(1)}>0\right),\;\;\forall\;i\in[k] \\
        \nabla_{b^{(2)}}f_{\theta}(x)=1
    \end{cases}
\end{equation}

\subsection{Calculation of $\nabla^2_\theta f_\theta(x)$}
In this part, we calculate $\nabla^2_{\theta}f_{\theta}(x)$ for a given $x\in[-x_{\max},x_{\max}]$. Below we calculate $\frac{\partial^2 f_{\theta}(x)}{\partial \theta_i \partial \theta_j}$.

First of all, if $\theta_i=b^{(2)}$ or $\theta_j=b^{(2)}$, $\frac{\partial^2 f_{\theta}(x)}{\partial \theta_i \partial \theta_j}=0$. Then it remains to calculate $\frac{\partial^2 f_{\theta}(x)}{\partial \theta_i \partial \theta^\prime_j}$ where $i,j\in[k]$ and $\theta,\theta^\prime\in\{w^{(1)},b^{(1)},w^{(2)}\}$. It is obvious that if $i\neq j$, $\frac{\partial^2 f_{\theta}(x)}{\partial \theta_i \partial \theta^\prime_j}=0$. Therefore, we only calculate the case when $j=i$. Let $\delta$ denote the Dirac function, it holds that:

\begin{equation}
    \begin{cases}
        \frac{\partial^2 f_{\theta}(x)}{\partial w_i^{(1)} \partial w_i^{(1)}}=w_i^{(2)}x^2\delta\left(w_i^{(1)}x+b_i^{(1)}\right),\;\;\forall\;i\in[k]\\ \frac{\partial^2 f_{\theta}(x)}{\partial w_i^{(1)} \partial b_i^{(1)}}=\frac{\partial^2 f_{\theta}(x)}{\partial b_i^{(1)} \partial w_i^{(1)}}=xw_i^{(2)}\delta\left(w_i^{(1)}x+b_i^{(1)}\right),\;\;\forall\;i\in[k]\\ \frac{\partial^2 f_{\theta}(x)}{\partial b_i^{(1)} \partial b_i^{(1)}}=w_i^{(2)}\delta\left(w_i^{(1)}x+b_i^{(1)}\right),\;\;\forall\;i\in[k]\\ \frac{\partial^2 f_{\theta}(x)}{\partial w_i^{(2)} \partial w_i^{(2)}}=0,\;\;\forall\;i\in[k]\\ \frac{\partial^2 f_{\theta}(x)}{\partial w_i^{(1)} \partial w_i^{(2)}}=\frac{\partial^2 f_{\theta}(x)}{\partial w_i^{(2)} \partial w_i^{(1)}}=x\mathds{1}\left(w_i^{(1)}x+b_i^{(1)}>0\right),\;\;\forall\;i\in[k]\\ \frac{\partial^2 f_{\theta}(x)}{\partial b_i^{(1)} \partial w_i^{(2)}}=\frac{\partial^2 f_{\theta}(x)}{\partial w_i^{(2)} \partial b_i^{(1)}}=\mathds{1}\left(w_i^{(1)}x+b_i^{(1)}>0\right),\;\;\forall\;i\in[k]
    \end{cases}
\end{equation}
 
Due to the existence of the Dirac function, the Hessian matrix is not well-defined in general. However, we only consider the function $f_{\theta}$ that is twice differentiable with respect to $\theta$ (\emph{i.e.} the knots of $f$ do not coincide with $x$), which implies that all the Dirac functions take the value $0$. In this case,
\begin{equation}
    \begin{cases}
        \frac{\partial^2 f_{\theta}(x)}{\partial w_i^{(1)} \partial w_i^{(1)}}=0,\;\;\forall\;i\in[k]\\ \frac{\partial^2 f_{\theta}(x)}{\partial w_i^{(1)} \partial b_i^{(1)}}=\frac{\partial^2 f_{\theta}(x)}{\partial b_i^{(1)} \partial w_i^{(1)}}=0,\;\;\forall\;i\in[k]\\ \frac{\partial^2 f_{\theta}(x)}{\partial b_i^{(1)} \partial b_i^{(1)}}=0,\;\;\forall\;i\in[k]
    \end{cases}
\end{equation}

\subsection{Upper Bounding the Operator Norm}
In this part, we upper bound the operator norm of the Hessian matrix calculated above. The following lemma provides an upper bound for $\left|v^T \nabla^2 f_{\theta}(x) v\right|$ under the constraint that $\|v\|_2=1$. 

\begin{lemma}\label{lem:operatornorm}
    Assume that $f_{\theta}(x)$ is twice differentiable with respect to $\theta$ and $x\in[-x_{\max},x_{\max}]$, for any $v$ such that $\|v\|_2=1$, it holds that 
    \begin{equation}
       \left|v^T \nabla^2 f_{\theta}(x) v\right|\leq 2\max\{x_{\max},1\}. 
    \end{equation}
\end{lemma}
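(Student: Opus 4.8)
The plan is to read off the structure of $\nabla^2_\theta f_\theta(x)$ from the Hessian computation in the preceding subsection and then bound its operator norm by elementary means. Recall that once $f_\theta$ is twice differentiable at $\theta$, all the Dirac terms vanish, so the only nonzero second partials are, for each neuron $i\in[k]$,
$$\frac{\partial^2 f_\theta(x)}{\partial w_i^{(1)}\partial w_i^{(2)}}=x\,\mathds{1}\!\left(w_i^{(1)}x+b_i^{(1)}>0\right),\qquad \frac{\partial^2 f_\theta(x)}{\partial b_i^{(1)}\partial w_i^{(2)}}=\mathds{1}\!\left(w_i^{(1)}x+b_i^{(1)}>0\right),$$
together with their symmetric counterparts; every entry that involves $b^{(2)}$, and every entry coupling two distinct neurons, is zero. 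First I would therefore observe that, after permuting the coordinates of $\theta$ into the order $(w_i^{(1)},b_i^{(1)},w_i^{(2)})_{i=1}^k$ followed by $b^{(2)}$, the matrix $\nabla^2_\theta f_\theta(x)$ is block diagonal, with a $1\times 1$ zero block for $b^{(2)}$ and, for each $i$, a $3\times 3$ block equal to $\mathds{1}(w_i^{(1)}x+b_i^{(1)}>0)\cdot M_x$, where $M_x:=\bigl(\begin{smallmatrix}0&0&x\\0&0&1\\x&1&0\end{smallmatrix}\bigr)$.

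Next I would bound the operator norm. Since the operator norm of a block-diagonal symmetric matrix is the maximum of the operator norms of its blocks, and scaling a block by the indicator $\mathds{1}(\cdot)\in\{0,1\}$ can only shrink it, it suffices to compute $\|M_x\|_2$. A direct calculation gives $M_x^2=\bigl(\begin{smallmatrix}x^2&x&0\\x&1&0\\0&0&x^2+1\end{smallmatrix}\bigr)$, whose eigenvalues are $0,\,x^2+1,\,x^2+1$ (the top-left $2\times2$ block has trace $x^2+1$ and determinant $0$); hence the symmetric matrix $M_x$ has eigenvalues $0,\pm\sqrt{x^2+1}$ and $\|M_x\|_2=\sqrt{x^2+1}$. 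Consequently $\|\nabla^2_\theta f_\theta(x)\|_2\le\sqrt{x^2+1}$. (Equivalently, one can avoid the block picture altogether: writing $v=(a_{1:k},\beta_{1:k},c_{1:k},d)$ one has $v^T\nabla^2_\theta f_\theta(x)v=2\sum_i\mathds{1}(w_i^{(1)}x+b_i^{(1)}>0)(xa_i+\beta_i)c_i$, and then the triangle inequality, dropping the indicators, and two applications of Cauchy--Schwarz together with $\|v\|_2=1$ give the same $\sqrt{x^2+1}$ bound.)

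Finally I would convert to the stated constant: $|v^T\nabla^2_\theta f_\theta(x)v|\le\|\nabla^2_\theta f_\theta(x)\|_2\,\|v\|_2^2\le\sqrt{x^2+1}\le\sqrt{x_{\max}^2+1}\le\sqrt2\,\max\{x_{\max},1\}\le 2\max\{x_{\max},1\}$, using $|x|\le x_{\max}$ and $\sqrt{x_{\max}^2+1}\le\sqrt{2}\max\{x_{\max},1\}$. I do not expect any genuine obstacle here; the only points requiring care are (i) correctly identifying from the Hessian formula that \emph{only} the $w_i^{(1)}$--$w_i^{(2)}$ and $b_i^{(1)}$--$w_i^{(2)}$ cross terms survive when $f_\theta$ is twice differentiable, and (ii) handling the indicators, which is painless because they merely scale the already-decoupled $3\times3$ blocks down by a factor in $\{0,1\}$. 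The factor-of-$2$ slack in the statement is generous — the argument in fact delivers the sharper bound $\sqrt{x_{\max}^2+1}$.
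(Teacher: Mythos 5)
Your proof is correct, and it takes a somewhat different (and sharper) route than the paper. Both arguments rest on the same structural observation — once $f_\theta$ is twice differentiable at $\theta$, the only surviving second partials are the within-neuron cross terms $\partial^2 f/\partial w_i^{(1)}\partial w_i^{(2)}=x\,\mathds{1}(\cdot)$ and $\partial^2 f/\partial b_i^{(1)}\partial w_i^{(2)}=\mathds{1}(\cdot)$ — but the paper then bounds the quadratic form directly: each nonzero entry is bounded in magnitude by $\max\{x_{\max},1\}$, so $|v^T\nabla^2 f_\theta(x)v|\leq 2\max\{x_{\max},1\}\sum_i(|\alpha_i\gamma_i|+|\beta_i\gamma_i|)$, and Cauchy--Schwarz together with $t(1-t)\leq \tfrac14$ closes the argument. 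You instead permute the coordinates to exhibit an exact block-diagonal structure with $3\times 3$ blocks $\mathds{1}(\cdot)\cdot M_x$, compute the spectrum of $M_x$ exactly (eigenvalues $0,\pm\sqrt{x^2+1}$), and conclude $\|\nabla^2_\theta f_\theta(x)\|_2\leq \sqrt{x_{\max}^2+1}$, which you then relax to the stated $2\max\{x_{\max},1\}$. What your approach buys is the sharper constant $\sqrt{x_{\max}^2+1}$ (and your parenthetical Cauchy--Schwarz variant, which keeps the pair $(x,1)$ together rather than bounding each entry separately, recovers the same sharper constant without the permutation); what the paper's buys is a shorter, purely entrywise computation that never needs the block-diagonalization or an eigenvalue calculation. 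Since the lemma is only used up to constants, the extra sharpness is cosmetic, but every step you give — the identification of the nonzero entries, the handling of the indicator as a $\{0,1\}$ scaling of decoupled blocks, and the eigenvalue computation via $M_x^2$ — checks out.
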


\begin{proof}[Proof of Lemma \ref{lem:operatornorm}]
    Assume $v=(\alpha_1,\cdots,\alpha_k,\beta_1,\cdots,\beta_k,\gamma_1,\cdots,\gamma_k,\iota)^T\in\mathbb{R}^{3k+1}$ such that $\sum_{i=1}^k (\alpha_i^2+\beta_i^2+\gamma_i^2)+\iota^2=1$. Note that the Hessian matrix $\nabla^2_{\theta}f_{\theta}(x)$ follows the structure:
    \begin{equation}
     \nabla^2_{\theta}f_{\theta}(x) =  
     \begin{pmatrix}
    A_{w^{(1)}w^{(1)}} & A_{w^{(1)}b^{(1)}} & A_{w^{(1)}w^{(2)}} & A_{w^{(1)}b^{(2)}} \\
    A_{b^{(1)}w^{(1)}} & A_{b^{(1)}b^{(1)}} & A_{b^{(1)}w^{(2)}} & A_{b^{(1)}b^{(2)}} \\
    A_{w^{(2)}w^{(1)}} & A_{w^{(2)}b^{(1)}} & A_{w^{(2)}w^{(2)}} & A_{w^{(2)}b^{(2)}} \\
    A_{b^{(2)}w^{(1)}} & A_{b^{(2)}b^{(1)}} & A_{b^{(2)}w^{(2)}} & A_{b^{(2)}b^{(2)}} 
     \end{pmatrix}
    \end{equation}
    where $A_{w^{(1)}w^{(1)}},A_{w^{(1)}b^{(1)}},A_{b^{(1)}w^{(1)}},A_{b^{(1)}b^{(1)}},A_{w^{(2)}w^{(2)}}\in\mathbb{R}^{k\times k}$, $A_{w^{(1)}b^{(2)}},A_{b^{(1)}b^{(2)}},A_{w^{(2)}b^{(2)}}\in\mathbb{R}^{k\times 1}$, $A_{b^{(2)}w^{(1)}}, A_{b^{(2)}b^{(1)}}, A_{b^{(2)}w^{(2)}}\in\mathbb{R}^{1\times k}$ and $A_{b^{(2)}b^{(2)}}\in\mathbb{R}$ are all zero matrices. Meanwhile, \\$A_{w^{(1)}w^{(2)}},A_{b^{(1)}w^{(2)}},A_{w^{(2)}w^{(1)}},A_{w^{(2)}b^{(1)}}\in\mathbb{R}^{k\times k}$ are all diagonal matrices whose non-zero elements are between $[-\max\{x_{\max},1\},\max\{x_{\max},1\}]$. Therefore, it holds that:
    \begin{equation}
    \begin{split}
        \left|v^T \nabla^2 f_{\theta}(x) v\right|&\leq 2\max\{x_{\max},1\}\sum_{i=1}^k\left(|\alpha_i\gamma_i|+|\beta_i\gamma_i|\right)\\&\leq 2\max\{x_{\max},1\}\left(\sqrt{\sum_{i=1}^k\alpha_i^2\cdot\sum_{i=1}^k\gamma_i^2}+\sqrt{\sum_{i=1}^k\beta_i^2\cdot\sum_{i=1}^k\gamma_i^2}\right)\\&\leq
        2\max\{x_{\max},1\},
    \end{split}
    \end{equation}
    where the second inequality holds because of Cauchy-Schwarz inequality. The last inequality results from $x(1-x)\leq \frac{1}{4}$.
\end{proof}

\subsection{Derivatives of Population-level Loss}\label{app:poploss}
Previously, we show that the minimizer of the population-level loss is the ground-truth function. In this part, we calculate the first and second order derivatives of the population-level loss. Recall that for a fixed feature $x$, the label $y$ is sampled from the following distribution:
\begin{equation}
    y=
    \begin{cases}
        1\;\;\;\;\;\;\;\text{with probability}\;p=\sigma(f_0(x))\\
        -1\;\;\;\;\text{with probability}\;1-p
    \end{cases}
\end{equation}
Therefore, for a fixed feature $x$ (with $p=\sigma(f_0(x))=\frac{e^{f_0(x)}}{1+e^{f_0(x)}}$) and the prediction value $f$ at $x$, the population-level loss at $x$ is defined as
\begin{equation}
    \bar{l}_x(f):=p\log(1+e^{-f})+(1-p)\log(1+e^f).
\end{equation}

Then the first order derivative of $\bar{l}_x(f)$ with respect to $f$ can be calculated as below.
\begin{equation}
    \begin{split}
    \bar{l}_x^\prime(f)=&p\cdot\frac{-1}{e^f+1}+(1-p)\cdot\frac{e^f}{1+e^f}\\=&\frac{e^f-e^{f_0(x)}}{(1+e^f)(1+e^{f_0(x)})}\\=&\frac{1}{1+e^{f_0(x)}}-\frac{1}{1+e^{f}},
    \end{split}
\end{equation} 
where the second equation holds because $p=\sigma(f_0(x))=\frac{e^{f_0(x)}}{1+e^{f_0(x)}}$. Meanwhile, we have
\begin{equation}
    \bar{l}_x^{\prime\prime}(f)=\frac{e^f}{(1+e^f)^2}>0.
\end{equation}

We highlight that the second order derivative is always positive and independent of $f_0(x)$. As $|f|$ becomes smaller, the value of $\bar{l}_x^{\prime\prime}(f)$ will become larger.

Finally, due to direct calculation, the relation between $\bar{l}_x$ and $\bar{\cL}$ is
\begin{equation}
    \bar{\cL}(f)=\E_{x\sim\cD}\E_{y\sim \cB(x)}\log\left(1+e^{-yf(x)}\right)=\E_{x\sim\cD}\;\bar{l}_x(f(x)).
\end{equation}

\section{Existence of Arbitrarily Flat yet Interpolating Solutions}\label{app:example}

In this part, we prove Theorem \ref{thm:example}. We first restate the example setup.

\noindent\textbf{Example setup.} Let $x_{\max}=1$, then the interval becomes $[-1,1]$. The design $\{x_i\}_{i=1}^n$ is chosen to be $n$ equally spaced points in $[-1,1]$. For any label set $\{y_i\}_{i=1}^n\in\{-1,1\}^n$, let $\gamma_{\max}>0$ be some arbitrarily large constant, below we consider the two-layer ReLU NN $f$ where $f(x_i)y_i=\gamma_{\max}$ and $f$ is (almost) linear between any two neighboring design points. We will show that there exists some function $f=f_\theta$ satisfying the conditions above while $\lambda_{\max}(\nabla_\theta^2 \cL(\theta))$ is small. By saying ``almost linear'', we mean that we need to perturb the knots from the design points by a small $\epsilon$ to satisfy the twice differentiable assumption.

\begin{theorem}[Restate Theorem \ref{thm:example}]\label{thm:reexample}
    For the example above, there exists a choice of $\theta$ such that $f_\theta(x_i)=y_i\gamma_{\max}$ for all $i\in[n]$, $\cL(\theta)$ is twice differentiable w.r.t. $\theta$ and $\lambda_{\max}\left(\nabla_\theta^2 \cL(\theta)\right)\leq O\left((n^2 \gamma_{\max}+1) e^{-\gamma_{\max}}\right)$.
\end{theorem}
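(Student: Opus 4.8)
\textbf{Proof proposal for Theorem~\ref{thm:reexample}.}
The plan is to exhibit one explicit $\theta$ by writing the continuous piecewise-linear function interpolating the points $(x_i,y_i\gamma_{\max})$ in the \emph{balanced} (AM--GM-optimal) ReLU parameterization, and then to bound $\lambda_{\max}(\nabla^2_\theta\cL(\theta))$ by the triangle inequality over the summands of the Hessian formula derived in Appendix~\ref{app:calculate}. The two quantities that control everything are $\|\nabla^2_\theta f_\theta(x_i)\|_2$ (bounded by Lemma~\ref{lem:operatornorm}) and $\|\nabla_\theta f_\theta(x_i)\|_2^2$ (which I will tie to $\TV^{(1)}(f_\theta)$), both multiplied by logistic coefficients of size $\Theta(e^{-\gamma_{\max}})$ because $y_i f_\theta(x_i)=\gamma_{\max}$ at every design point.

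First, the construction. Let $h=2/(n-1)$ be the design spacing, fix a small generic $\epsilon\in(0,\min\{1,h\})$, and place knots at $t_j:=x_j+\epsilon$. Writing $s_0:=0$, letting $s_j$ be the slope of $f_\theta$ just to the right of $t_j$, and $\Delta s_j:=s_j-s_{j-1}$, I would set, for $j\in[n]$ with $\Delta s_j\ne 0$,
\[
    w_j^{(1)}=\sqrt{|\Delta s_j|},\qquad w_j^{(2)}=\sign(\Delta s_j)\,\sqrt{|\Delta s_j|},\qquad b_j^{(1)}=-\sqrt{|\Delta s_j|}\,t_j,
\]
(omitting neurons with $\Delta s_j=0$) and $b^{(2)}=y_1\gamma_{\max}$, so that $w_j^{(2)}\phi(w_j^{(1)}x+b_j^{(1)})=\Delta s_j\,(x-t_j)_+$ and $f_\theta(x)=y_1\gamma_{\max}+\sum_j\Delta s_j\,(x-t_j)_+$. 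The interpolation conditions $f_\theta(x_i)=y_i\gamma_{\max}$, $i\ge 2$, become a lower-triangular linear system in $(\Delta s_j)_{j\le n-1}$ with diagonal entries $h-\epsilon\ne 0$ (and $\Delta s_n$ free; I fix it so $f_\theta$ is constant on $[t_n,\infty)$). Hence the system is uniquely solvable, its solution is continuous in $\epsilon$, and at $\epsilon=0$ it is the exact piecewise-linear interpolant, for which $s_j\in\{0,\pm 2\gamma_{\max}/h\}$ and so $|\Delta s_j|\le 4\gamma_{\max}/h=O(n\gamma_{\max})$. Thus for all small enough $\epsilon$ one gets $|\Delta s_j|=O(n\gamma_{\max})$ for every $j$, hence $\sum_j|\Delta s_j|=O(n^2\gamma_{\max})$; moreover no knot $t_j$ equals any $x_i$, so $\cL$ is twice differentiable at $\theta$, and by construction $f_\theta(x_i)=y_i\gamma_{\max}$ exactly.

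Second, the bound. Since $y_i f_\theta(x_i)=\gamma_{\max}$ for all $i$, every scalar coefficient in the Hessian formula of Appendix~\ref{app:calculate} is either $\frac{e^{-\gamma_{\max}}}{1+e^{-\gamma_{\max}}}\le e^{-\gamma_{\max}}$ (multiplying $\nabla^2_\theta f_\theta(x_i)$) or $\frac{e^{-\gamma_{\max}}}{(1+e^{-\gamma_{\max}})^2}\le e^{-\gamma_{\max}}$ (multiplying $\nabla_\theta f_\theta(x_i)\nabla_\theta f_\theta(x_i)^T$). Lemma~\ref{lem:operatornorm} gives $\|\nabla^2_\theta f_\theta(x_i)\|_2\le 2$ (here $x_{\max}=1$). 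For the first-order part, the gradient formulas of Appendix~\ref{app:calgradient} together with $|x_i|\le 1$, $|x_i-t_j|\le 3$ and $(w_j^{(2)})^2=|\Delta s_j|$ give
\[
    \|\nabla_\theta f_\theta(x_i)\|_2^2\le \sum_j\big(x_i^2+1+(x_i-t_j)^2\big)\,|\Delta s_j|\,\mathds{1}(w_j^{(1)}x_i+b_j^{(1)}>0)+1\le 11\sum_j|\Delta s_j|+1=O(n^2\gamma_{\max}+1).
\]
Summing operator norms of the $2n$ summands of $\nabla^2_\theta\cL(\theta)$ then yields
\[
    \lambda_{\max}\!\big(\nabla^2_\theta\cL(\theta)\big)\le\frac1n\sum_{i=1}^n e^{-\gamma_{\max}}\cdot 2+\frac1n\sum_{i=1}^n e^{-\gamma_{\max}}\cdot O(n^2\gamma_{\max}+1)=O\!\big((n^2\gamma_{\max}+1)e^{-\gamma_{\max}}\big),
\]
which is the claim.

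The step that carries the content — and the one I expect to be the main obstacle to get right — is the choice of parameterization. With the naive choice $w_j^{(1)}=\pm 1$ one would have $\sum_j(w_j^{(2)})^2=\sum_j(\Delta s_j)^2=O(n^3\gamma_{\max}^2)$, giving only the weaker bound $O(n^2\gamma_{\max}^2 e^{-\gamma_{\max}})$; the balanced split $|w_j^{(1)}|=|w_j^{(2)}|=\sqrt{|\Delta s_j|}$ (exactly the AM--GM-minimizing rescaling alluded to in the discussion after Theorem~\ref{thm:example}) replaces $\sum_j(\Delta s_j)^2$ by $\sum_j|\Delta s_j|=\TV^{(1)}(f_\theta)=O(n^2\gamma_{\max})$, and this is what produces the stated rate. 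The only remaining technical nuisance is the $\epsilon$-perturbation of the knots off the design grid, needed so that $\cL$ is twice differentiable while still interpolating exactly and keeping the $O(n\gamma_{\max})$ control on the slope changes; this is a routine continuity-of-a-triangular-system argument.
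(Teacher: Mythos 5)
Your proposal is correct and follows essentially the same route as the paper: the same Hessian decomposition from Appendix~\ref{app:calculate}, the $e^{-\gamma_{\max}}$ bound on the logistic coefficients, Lemma~\ref{lem:operatornorm} for the $\nabla^2_\theta f$ term, an interpolating piecewise-linear network with knots perturbed off the design points, and a weight scaling chosen so that $\max_i\|\nabla_\theta f_\theta(x_i)\|_2^2=O(n^2\gamma_{\max}+1)$. The only difference is cosmetic: you balance each neuron as $|w_j^{(1)}|=|w_j^{(2)}|=\sqrt{|\Delta s_j|}$ so the gradient norm is controlled by $\sum_j|\Delta s_j|=O(n^2\gamma_{\max})$, whereas the paper fixes $|w_i^{(2)}|=\sqrt{n\gamma_{\max}}$ uniformly (forcing $|w_i^{(1)}|=O(\sqrt{n\gamma_{\max}})$), and both choices give the same final rate.
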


\begin{proof}[Proof of Theorem \ref{thm:reexample}]
Recall that if $f=f_\theta$ is twice differentiable with respect to $\theta$ (\emph{i.e.} the knots of $f$ do not coincide with design points), we have
\begin{equation}
    \nabla_\theta^2 \cL(\theta)=\frac{1}{n}\sum_{i=1}^n \frac{-y_i e^{-y_i f(x_i)}}{1+e^{-y_i f(x_i)}}\nabla_\theta^2 f(x_i)+\frac{1}{n}\sum_{i=1}^n \frac{e^{-y_i f(x_i)}}{\left(1+e^{-y_i f(x_i)}\right)^2}\nabla_\theta f(x_i)\nabla_\theta f(x_i)^T,
\end{equation}
which further implies that
\begin{equation}\label{equ:example}
\begin{split}
    &\lambda_{\max}\left(\nabla_\theta^2 \cL(\theta)\right)\leq\lambda_{\max}\left(\frac{1}{n}\sum_{i=1}^n \frac{-y_i e^{-y_i f(x_i)}}{1+e^{-y_i f(x_i)}}\nabla_\theta^2 f(x_i)\right)+\lambda_{\max}\left(\frac{1}{n}\sum_{i=1}^n \frac{e^{-y_i f(x_i)}}{\left(1+e^{-y_i f(x_i)}\right)^2}\nabla_\theta f(x_i)\nabla_\theta f(x_i)^T\right)\\\leq&
    2e^{-\gamma_{\max}}+\lambda_{\max}\left(\frac{1}{n}\sum_{i=1}^n \frac{e^{-y_i f(x_i)}}{\left(1+e^{-y_i f(x_i)}\right)^2}\nabla_\theta f(x_i)\nabla_\theta f(x_i)^T\right)\\\leq&
    2e^{-\gamma_{\max}}+e^{-\gamma_{\max}}\cdot\max_{i}\lambda_{\max}\left(\nabla_\theta f(x_i)\nabla_\theta f(x_i)^T\right)\\=&
    2e^{-\gamma_{\max}}+e^{-\gamma_{\max}}\cdot\max_{i}\left\|\nabla_\theta f(x_i)\right\|_2^2,
\end{split}
\end{equation}
    where the first and last inequalities hold because all the matrices here are symmetric. The second inequality results from Lemma \ref{lem:operatornorm} and our choice $x_{\max}=1$.

    Let $f=f_\theta$ be a two-layer ReLU NN with $n$ active neurons such that $f_\theta$ interpolates $\{(x_i,y_i\gamma_{\max})\}_{i=1}^n$ while the knots are arbitrarily close to $\{x_i\}_{i=1}^n$. For inactive neurons (indexed by $j$), we simply let $w_j^{(1)}=w_j^{(2)}=b_j^{(1)}=0$, then the gradient w.r.t. such inactive neuron is $0$. For active neurons (indexed by $i$), we manually choose $w_i^{(2)}=\sqrt{n\gamma_{\max}}$ or $-\sqrt{n\gamma_{\max}}$, which implies that $|w_i^{(1)}|\leq O(\sqrt{n\gamma_{\max}})$ (the activation threshold is inside $[-1,1]$).    
    Then according to the calculation of $\nabla_\theta f_\theta(x)$ in Appendix \ref{app:calgradient}, we have for any data point $x_j$,
    \begin{equation}
    \begin{cases}
        \left|\nabla_{w_{i}^{(1)}}f_{\theta}(x_j)\right|=\left|x_jw_i^{(2)}\mathds{1}\left(w_i^{(1)}x_j+b_i^{(1)}>0\right)\right|\leq \sqrt{n\gamma_{\max}} \\ \left|\nabla_{b_{i}^{(1)}}f_{\theta}(x_j)\right|=\left|w_i^{(2)}\mathds{1}\left(w_i^{(1)}x_j+b_i^{(1)}>0\right)\right|\leq \sqrt{n\gamma_{\max}} \\ \left|\nabla_{w_{i}^{(2)}}f_{\theta}(x_j)\right|=\left|\phi\left(w_i^{(1)}x_j+b_i^{(1)}\right)\right|\leq 2|w_i^{(1)}|\leq O(\sqrt{n\gamma_{\max}}).
    \end{cases}
\end{equation}

In this way, we have $\max_{i}\left\|\nabla_\theta f(x_i)\right\|_2^2\leq 1+ 3n\cdot O(n \gamma_{\max})=O(n^2 \gamma_{\max}+1)$. Plugging this into \eqref{equ:example}, we finally have
\begin{equation}
    \lambda_{\max}\left(\nabla_\theta^2 \cL(\theta)\right)\leq O\left((n^2 \gamma_{\max}+1) e^{-\gamma_{\max}}\right),
\end{equation}
which finishes the proof.
\end{proof}

\begin{remark}
    Since $\lim_{\gamma\rightarrow\infty} \gamma e^{-\gamma}=0$, Theorem \ref{thm:example} shows that as the learned function tends to interpolate (\emph{i.e.} the training loss gets close to 0), the landscape is actually becoming flat. In other words, the solution that predicts $\infty$ for $y=1$ and $-\infty$ for $y=-1$ is an arbitrarily flat global minimum with $0$-training loss. This is an interesting separation between logistic loss and nonparametric regression with squared loss, where all interpolating solutions are sharp. Unfortunately, our Theorem \ref{thm:bias} and the constraint \eqref{equ:bias} (in the following section) could not exclude such ``interpolating'' solutions in logistic regression. This is because we consider the \textbf{Below} Edge of Stability regime which only requires an upper bound of the sharpness.
\end{remark}

\begin{remark}
    In our experiments (Figure \ref{fig:highlights}), even for very small learning rates (\emph{e.g.} $\eta=0.01$), the learned function does not actually converge to infinity. We remark that this is because with finite learning rate, the optimization enters the ``edge of stability'' regime before it could get to the flat global minima at infinity (see Figure \ref{fig:lossvseta}). Therefore, the learned function would stabilize around some finite function instead of converging to infinity. Motivated by such phenomenon, in the following sections, we assume that the optimization effectively minimizes the excess risk. In this way, we show the closeness of $f_\theta$ and $f_0$, thus excluding the ``interpolating solutions''.
\end{remark}

\begin{remark}
    We point out that such flat interpolating solutions do not lead to any contradictions with our Theorem \ref{thm:bias}. In the example above, although $\lambda_{\max}(\nabla^2\loss(\theta))+2x_{\max}\cL(f)$ scales as $n^2 \gamma_{\max} e^{-\gamma_{\max}}$, the threshold $\gamma$ must be at least $\gamma_{\max}$ for a nondegenerate $\cA_\gamma$ and $h_\gamma$ (in Theorem \ref{thm:bias}). Therefore, the $e^{-\gamma_{\max}}$ is canceled out by $\Gamma(\gamma)$ and the R.H.S of \eqref{equ:bias} scales as $n^2 \gamma_{\max}$. In comparison, if ignoring the constant weight function $h_\gamma$, the L.H.S of \eqref{equ:bias} can be the same order $n^2 \gamma_{\max}$ if the ground-truth function is random enough. Therefore, the example shows the tightness of our Theorem \ref{thm:bias} and the necessity of the logarithmic dependence on $\gamma$ in $\Gamma(\gamma)$.
\end{remark}

\section{Proof for Results in Section \ref{sec:gpbound1}}\label{app:pfgpbound1}

\subsection{Bounding the Generalization Gap by the Scale of Parameters}

We begin with the following lemma, which shows that given the $\ell_2$ norm of the parameter, both the TV(1) norm and the value at $x=0$ of the function can be upper bounded.

\begin{lemma}\label{lem:l2normtv}
    For $f(x)=f_\theta(x)=\sum_{i=1}^{k} w_i^{(2)}\phi\left(w_i^{(1)}x+b_i^{(1)}\right)+b^{(2)}$ and $\theta=\left[w^{(1)}_{1:k},b^{(1)}_{1:k},w^{(2)}_{1:k},b^{(2)}\right]$, we have
    \begin{equation}
        \int_{-x_{\max}}^{x_{\max}}\left|f^{\prime\prime}(x)\right|dx\leq \frac{\|\theta\|_2^2}{2}.
    \end{equation}
    Meanwhile, we have the following upper bounds on $|f(0)|,\;|f^\prime(0)|$:
    \begin{equation}
    |f(0)|\leq \frac{\|\theta\|_2^2}{2}+\|\theta\|_2,\;\;|f^\prime(0)|\leq \frac{\|\theta\|_2^2}{2}.
\end{equation}
Furthermore, if $\|\theta\|_2\geq 1$, we have $|f(0)|\leq 2\|\theta\|_2^2$.
\end{lemma}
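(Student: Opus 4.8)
The plan is to compute the weak second derivative of $f_\theta$ explicitly, recognize it as a finite sum of Dirac masses, and then apply the AM--GM inequality unit by unit. First I would recall the single-neuron computation (cf. Appendix \ref{app:calculate}): for a unit $x\mapsto w^{(2)}\phi(w^{(1)}x+b^{(1)})$ with $w^{(1)}\neq 0$, the first derivative is $w^{(1)}w^{(2)}\mathds{1}(w^{(1)}x+b^{(1)}>0)$ and the second (weak) derivative is the Dirac mass $|w^{(1)}|\,|w^{(2)}|\cdot\delta\!\left(x+b^{(1)}/w^{(1)}\right)$ located at the knot $-b^{(1)}/w^{(1)}$; units with $w^{(1)}=0$ are affine and contribute nothing. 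Consequently $|f^{\prime\prime}|$, as a measure, equals $\sum_{i}|w_i^{(1)}w_i^{(2)}|\,\delta_{-b_i^{(1)}/w_i^{(1)}}$, so integrating over $[-x_{\max},x_{\max}]$ only picks up the knots inside the interval and hence $\int_{-x_{\max}}^{x_{\max}}|f^{\prime\prime}(x)|dx\le\sum_{i=1}^{k}|w_i^{(1)}w_i^{(2)}|$. Bounding each summand by $\tfrac12\big((w_i^{(1)})^2+(w_i^{(2)})^2\big)$ via AM--GM and summing gives $\le\tfrac12\sum_i\big((w_i^{(1)})^2+(w_i^{(2)})^2\big)\le\tfrac12\|\theta\|_2^2$, which is the first claim.

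For the value and slope at the origin I would expand directly: $f(0)=\sum_i w_i^{(2)}\phi(b_i^{(1)})+b^{(2)}$, so using $\phi(t)\le|t|$, AM--GM, and $|b^{(2)}|\le\|\theta\|_2$,
$$|f(0)|\le\sum_i|w_i^{(2)}|\,|b_i^{(1)}|+|b^{(2)}|\le\tfrac12\sum_i\big((w_i^{(2)})^2+(b_i^{(1)})^2\big)+|b^{(2)}|\le\tfrac{\|\theta\|_2^2}{2}+\|\theta\|_2.$$
Likewise $f^{\prime}(0)=\sum_i w_i^{(1)}w_i^{(2)}\mathds{1}(b_i^{(1)}>0)$ (taking, say, the right derivative at any unit with $b_i^{(1)}=0$, a measure-zero nuisance), so $|f^{\prime}(0)|\le\sum_i|w_i^{(1)}w_i^{(2)}|\le\tfrac12\|\theta\|_2^2$ exactly as above.

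Finally, the refinement under $\|\theta\|_2\ge 1$ is immediate: then $\|\theta\|_2\le\|\theta\|_2^2$, hence $|f(0)|\le\tfrac{\|\theta\|_2^2}{2}+\|\theta\|_2^2=\tfrac32\|\theta\|_2^2\le 2\|\theta\|_2^2$. There is no genuine obstacle here; the only point requiring mild care is the bookkeeping of $f^{\prime\prime}$ as a sum of Dirac masses at the knots $-b_i^{(1)}/w_i^{(1)}$ and the observation that restricting the integral to $[-x_{\max},x_{\max}]$ can only drop nonnegative terms, so the AM--GM estimate over all $k$ units remains a valid upper bound. Everything else reduces to one application of AM--GM and the triangle inequality.
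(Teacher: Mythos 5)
Your proposal is correct and follows essentially the same route as the paper: bound $\int_{-x_{\max}}^{x_{\max}}|f^{\prime\prime}|$ by $\sum_i|w_i^{(1)}w_i^{(2)}|$ (the Dirac-mass view of the knots), control that sum by $\tfrac12\|\theta\|_2^2$, and evaluate $f(0)$, $f^\prime(0)$ directly with the same elementary estimates. The only cosmetic difference is that you apply AM--GM term by term while the paper applies Cauchy--Schwarz to the two squared sums and then $x(a-x)\le a^2/4$; these yield the same bound.
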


\begin{proof}[Proof of Lemma \ref{lem:l2normtv}]
    For the first part, due to the same analysis as \eqref{equ:laststep}, it holds that
    \begin{equation}\label{equ:tvbound}
        \int_{-x_{\max}}^{x_{\max}}\left|f^{\prime\prime}(x)\right|dx\leq \sum_{i=1}^k\left|w_i^{(1)}w_i^{(2)}\right|\leq\sqrt{\left(\sum_{i=1}^k \left(w_i^{(1)}\right)^2\right)\cdot \left(\sum_{i=1}^k \left(w_i^{(2)}\right)^2\right)}\leq \frac{\|\theta\|_2^2}{2},
\end{equation}
where the second inequality holds because of Cauchy-Schwarz inequality. The last inequality results from the fact that $x(a-x)\leq\frac{a^2}{4}$.

For the second conclusion, we have
\begin{equation}
    |f(0)|=\left|\sum_{i=1}^k w_i^{(2)}\phi\left(b_i^{(1)}\right)+b^{(2)}\right|\leq \sum_{i=1}^k \left|w_i^{(2)} b_i^{(1)}\right|+\left|b^{(2)}\right|\leq \frac{\|\theta\|_2^2}{2}+\|\theta\|_2,
\end{equation}
\begin{equation}
    |f^\prime(0)|\leq\sum_{i=1}^k\left|w_i^{(2)}w_i^{(1)}\right|\leq \frac{\|\theta\|_2^2}{2},
\end{equation}
where the last step in both inequalities result from the same analysis as \eqref{equ:tvbound}. The proof is complete.
\end{proof}

Define the set $\mathbb{T}=\left\{f:[-x_{\max},x_{\max}]\rightarrow\mathbb{R}\;\bigg|\;|f(0)|\leq 4C_2,\;|f^\prime(0)|\leq C_2,\;\int_{-x_{\max}}^{x_{\max}}|f^{\prime\prime}(x)|dx\leq C_2\right\}$, where $C_2>0$ is any constant. Then according to Lemma \ref{lem:l2normtv}, the function $f=f_\theta$ belongs to $\mathbb{T}$ with $C_2=\frac{\|\theta\|_2^2}{2}$ (if $\|\theta\|_2\geq 1$). Therefore, we would like to analyze the complexity of $\mathbb{T}$. We first bound the metric entropy of the following subset of $\mathbb{T}$ as an intermediate result.

\begin{lemma}\label{lem:entropy2}
    Assume the set $\mathbb{T}_2=\left\{f:[-x_{\max},x_{\max}]\rightarrow\mathbb{R}\;\bigg|\;f(0)=f^\prime(0)=0,\;\int_{-x_{\max}}^{x_{\max}}|f^{\prime\prime}(x)|dx\leq C_2\right\}$ for some constant $C_2>0$, and the metric is $\ell_\infty$ distance $\|\cdot\|_\infty$, then there exists a universal constant $C_1>0$ such that for any $\epsilon>0$, the metric entropy of $(\mathbb{T}_2,\|\cdot\|_\infty)$ satisfies 
    \begin{equation}
        \log N(\epsilon,\mathbb{T}_2,\|\cdot\|_\infty)\leq C_1\sqrt{\frac{C_2  x_{\max}}{\epsilon}},
    \end{equation}
    where $C_1$ can be chosen as the same $C_1$ in Lemma \ref{lem:entropy}.
\end{lemma}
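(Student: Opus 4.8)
The plan is to reduce the statement to Lemma~\ref{lem:entropy} by a change of variables that rescales the domain $[-x_{\max},x_{\max}]$ to $[-1,1]$ and normalizes the amplitude. Given $f\in\mathbb{T}_2$, I would define $g(u):=f(x_{\max}u)$ for $u\in[-1,1]$. By the chain rule $g''(u)=x_{\max}^2 f''(x_{\max}u)$ (interpreted in the weak/measure sense, since $f''$ is a finite signed measure for piecewise-linear $f$), so the substitution $x=x_{\max}u$ gives $\int_{-1}^1|g''(u)|\,du=x_{\max}\int_{-x_{\max}}^{x_{\max}}|f''(x)|\,dx\le x_{\max}C_2$, while $g(0)=f(0)=0$ and $g'(0)=x_{\max}f'(0)=0$.

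The key step is to upgrade these into a uniform bound on $g$. Since $g(0)=g'(0)=0$, Taylor's formula with integral remainder gives $g(u)=\int_0^u (u-t)\,g''(t)\,dt$, hence $|g(u)|\le |u|\int_{-1}^1|g''(t)|\,dt\le x_{\max}C_2$ for every $u\in[-1,1]$. Writing $M:=x_{\max}C_2$, the rescaled function $h_f:=g/M$ then satisfies $\|h_f\|_{\infty,[-1,1]}\le 1$ and $\int_{-1}^1|h_f''(u)|\,du\le 1$, i.e. $h_f\in\mathbb{T}_1$, the set in Lemma~\ref{lem:entropy}.

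Next I would observe that $f\mapsto h_f$ is injective with image contained in $\mathbb{T}_1$ and scales $\ell_\infty$ distances exactly by $1/M$: for $f_1,f_2\in\mathbb{T}_2$, $\|f_1-f_2\|_{\infty,[-x_{\max},x_{\max}]}=\sup_{u\in[-1,1]}|f_1(x_{\max}u)-f_2(x_{\max}u)|=M\,\|h_{f_1}-h_{f_2}\|_{\infty,[-1,1]}$. Hence any $(\epsilon/M)$-cover of $\mathbb{T}_1$ pulls back to an $\epsilon$-cover of $\mathbb{T}_2$, so $N(\epsilon,\mathbb{T}_2,\|\cdot\|_\infty)\le N(\epsilon/M,\mathbb{T}_1,\|\cdot\|_\infty)$, and plugging in Lemma~\ref{lem:entropy} yields $\log N(\epsilon,\mathbb{T}_2,\|\cdot\|_\infty)\le C_1(\epsilon/M)^{-1/2}=C_1\sqrt{x_{\max}C_2/\epsilon}$ with the same constant $C_1$, as claimed.

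This is mostly rescaling bookkeeping; the points requiring care are (i) justifying the identity $\int_{-1}^1|g''|=x_{\max}\int_{-x_{\max}}^{x_{\max}}|f''|$ when $f''$ is a measure rather than a function, (ii) the Taylor remainder estimate that converts the two vanishing conditions $f(0)=f'(0)=0$ together with the $\int|f''|$ bound into the $\|g\|_\infty\le M$ bound needed to land inside $\mathbb{T}_1$, and (iii) the routine monotonicity of covering numbers under restriction to a subset (a harmless constant factor if one insists on proper covers, which can be absorbed into $C_1$). I expect (ii) to be the main conceptual point, since it is exactly where the vanishing conditions at $0$ and the domain length $x_{\max}$ enter the final bound.
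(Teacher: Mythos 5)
Your proposal is correct and follows essentially the same route as the paper: rescale the domain to $[-1,1]$ and normalize by $M=C_2x_{\max}$ (the paper's map $g(x)=\tfrac{1}{C_2x_{\max}}f(x\,x_{\max})$ is exactly your $h_f$), use $f(0)=f'(0)=0$ together with the TV bound to get $\|f\|_\infty\le C_2x_{\max}$ so the rescaled function lands in $\mathbb{T}_1$, and pull back an $\epsilon/(C_2x_{\max})$-cover from Lemma \ref{lem:entropy}. The only cosmetic difference is that the paper bounds $|f'|$ and $|f|$ directly on $[-x_{\max},x_{\max}]$ rather than via Taylor's formula on $g$, which is the same estimate.
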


\begin{proof}[Proof of Lemma \ref{lem:entropy2}]
    Let the set $\mathbb{T}_1=\left\{f:[-1,1]\rightarrow\mathbb{R}\;\bigg|\;\int_{-1}^1|f^{\prime\prime}(x)|dx\leq 1, \;|f(x)|\leq 1\right\}$ (as in Lemma \ref{lem:entropy}). For a fixed $\epsilon>0$, according to Lemma \ref{lem:entropy}, there exists a $\frac{\epsilon}{C_2 x_{\max}}$-covering set of $\mathbb{T}_1$ with respect to $\|\cdot\|_\infty$, denoted as $\{h_i(x)\}_{i\in[N]}$, whose cardinality $N$ satisfies 
    \begin{equation}
        \log N\leq C_1\sqrt{\frac{C_2 x_{\max}}{\epsilon}}.
    \end{equation}
    We define $g_i(x)=C_2x_{\max}h_i(\frac{x}{x_{\max}})$ for all $i\in[N]$. Then $g_i$'s are all defined on $[-x_{\max},x_{\max}]$. Obviously, we have $\{g_i(x)\}_{i\in[N]}$ also has cardinality $N$.

    For any $f(x)\in\mathbb{T}_2$, we define $g(x)=\frac{1}{C_2 x_{\max}}f(x\cdot x_{\max})$ which is defined on $[-1,1]$. We now show that $g(x)\in\mathbb{T}_1$. First of all, for any $x\in[-x_{\max},x_{\max}]$, we have $|f^\prime(x)|\leq\int_{-x_{\max}}^{x_{\max}}|f^{\prime\prime}(x)|dx\leq C_2$. Therefore, for any $x\in[-x_{\max},x_{\max}]$, $|f(x)|\leq C_2 x_{\max}$, which implies that $|g(x)|\leq 1$ for any $x\in[-1,1]$. Meanwhile, it holds that
    \begin{equation}
        \begin{split}
            \int_{-1}^1 |g^{\prime\prime}(x)|dx=&\int_{-1}^1 \frac{1}{C_2 x_{\max}}\cdot x_{\max}^2|f^{\prime\prime}(x\cdot x_{\max})|dx\\\leq& \frac{1}{C_2}\int_{-x_{\max}}^{x_{\max}}|f^{\prime\prime}(x)|dx\leq 1.
        \end{split}
    \end{equation}
    Combining the two results, we have $g\in\mathbb{T}_1$. Therefore, there exists some $h_i$ such that $\|g-h_i\|_\infty\leq\frac{\epsilon}{C_2 x_{\max}}$. Since $f(x)=C_2 x_{\max}g(\frac{x}{x_{\max}})$, $\|g_i-f\|_\infty=C_2 x_{\max}\|h_i-g\|_\infty\leq \epsilon$.

    In conclusion, $\{g_i\}_{i\in[N]}$ is an $\epsilon$-covering of $\mathbb{T}_2$ with respect to $\|\cdot\|_\infty$. Moreover, the cardinality of $\{g_i\}_{i\in[N]}$ is $N$, which finishes the proof.
\end{proof}

With Lemma \ref{lem:entropy2}, we are ready to bound the metric entropy of $\mathbb{T}$.

\begin{lemma}\label{lem:entropy3}
    Assume the metric is $\ell_\infty$ distance $\|\cdot\|_\infty$, then the metric entropy of $(\mathbb{T},\|\cdot\|_\infty)$ satisfies 
    \begin{equation}
        \log N(\epsilon,\mathbb{T},\|\cdot\|_\infty)\leq O\left(\sqrt{\frac{C_2 x_{\max}}{\epsilon}}\right),
    \end{equation}
    where the regime is $C_2\geq 1,\;x_{\max}\geq 1,\;\epsilon\leq 1$.
\end{lemma}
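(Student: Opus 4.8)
The plan is to decompose each $f\in\mathbb{T}$ into a two-dimensional affine part plus a ``curvature-only'' part that lands in the set $\mathbb{T}_2$ of Lemma~\ref{lem:entropy2}, build a product cover from nets of the two pieces, and then observe that the (logarithmic) cost of covering the affine part is dominated by the (polynomially larger) metric entropy of $\mathbb{T}_2$ in the stated regime. So $\mathbb{T}$ is treated as a Minkowski sum of a small set and a large set.

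First I would write, for any $f\in\mathbb{T}$,
\[
f(x) = \underbrace{f(0) + f'(0)\,x}_{=:\,a + bx} \;+\; g_f(x), \qquad g_f(x):= f(x) - f(0) - f'(0)\,x .
\]
By construction $g_f(0)=g_f'(0)=0$, and since $g_f'' = f''$ we get $\int_{-x_{\max}}^{x_{\max}}|g_f''(x)|\,dx = \int_{-x_{\max}}^{x_{\max}}|f''(x)|\,dx \le C_2$, so $g_f\in\mathbb{T}_2$. Hence $\mathbb{T}\subseteq \{a+bx : |a|\le 4C_2,\ |b|\le C_2\} + \mathbb{T}_2$. Next I would cover the two summands separately at resolution $\epsilon/2$. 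On $[-x_{\max},x_{\max}]$ we have $\|(a+bx)-(a'+b'x)\|_\infty \le |a-a'| + x_{\max}|b-b'|$, so gridding $\{|a|\le 4C_2\}$ and $\{|b|\le C_2\}$ with spacings $\Theta(\epsilon)$ and $\Theta(\epsilon/x_{\max})$ respectively yields an $(\epsilon/2)$-net of the affine family of cardinality $N_{\mathrm{aff}} = O(C_2/\epsilon)\cdot O(C_2 x_{\max}/\epsilon) = O(C_2^2 x_{\max}/\epsilon^2)$, so $\log N_{\mathrm{aff}} = O(\log(C_2 x_{\max}/\epsilon))$ using $x_{\max}\ge1$. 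For $\mathbb{T}_2$, Lemma~\ref{lem:entropy2} applied with $\epsilon\mapsto\epsilon/2$ gives an $(\epsilon/2)$-net with $\log N_{\mathbb{T}_2}\le C_1\sqrt{2C_2 x_{\max}/\epsilon}$.

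Taking pairwise sums of the two nets produces a set that $\epsilon$-covers $\mathbb{T}$ in $\|\cdot\|_\infty$ (for $f=a+bx+g$, pick the nearest affine representative and the nearest $\mathbb{T}_2$-representative and add the two $\epsilon/2$ errors), with log-cardinality at most $\log N_{\mathrm{aff}} + \log N_{\mathbb{T}_2} = O(\log(C_2 x_{\max}/\epsilon)) + O(\sqrt{C_2 x_{\max}/\epsilon})$. It then remains to collapse the two terms: writing $t := C_2 x_{\max}/\epsilon$, the hypotheses $C_2\ge1$, $x_{\max}\ge1$, $\epsilon\le1$ give $t\ge1$, and the elementary inequality $\log t\le\sqrt t$ holds for every $t\ge1$ (the gap $\sqrt t-\log t$ is positive at $t=1$ and attains its unique interior minimum at $t=4$, where it equals $2-\log4>0$), so the logarithmic term is absorbed and $\log N(\epsilon,\mathbb{T},\|\cdot\|_\infty) = O(\sqrt{C_2 x_{\max}/\epsilon})$, as claimed.

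I do not expect a serious obstacle here: this is a standard ``low-dimensional set plus large set'' covering estimate. The only points deserving care are (i) verifying that the curvature-only part $g_f$ inherits the full $\TV^{(1)}$ budget (immediate from $g_f''=f''$), (ii) tracking the constants through the $\epsilon/2$ split so that Lemma~\ref{lem:entropy2} applies verbatim up to a constant, and (iii) invoking $\log t\le\sqrt t$ on $t\ge1$ to merge the affine and $\mathbb{T}_2$ rates into the single $O(\sqrt{C_2 x_{\max}/\epsilon})$ bound valid in the stated regime.
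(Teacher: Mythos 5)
Your proposal is correct and follows essentially the same route as the paper's proof: decompose $f$ into $f(0)+f'(0)x$ plus a curvature-only part lying in $\mathbb{T}_2$, cover the pieces separately (the paper splits the budget as $\epsilon/3$ per piece rather than your $\epsilon/2$ affine/$\epsilon/2$ curvature split, which is immaterial), invoke Lemma \ref{lem:entropy2}, and absorb the logarithmic cost of the low-dimensional affine cover into the $O\left(\sqrt{C_2 x_{\max}/\epsilon}\right)$ term using $C_2 x_{\max}/\epsilon \geq 1$.
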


\begin{proof}[Proof of Lemma \ref{lem:entropy3}]
    For any function $f\in\mathbb{T}$, it can be written as below:
    \begin{equation}
        f(x)=f(0)+f^\prime(0)x+g(x),
    \end{equation}
    where $g(x)=f(x)-f(0)-f^\prime(0)x$ satisfies that $g(0)=g^\prime(0)=0$ and $g^{\prime\prime}(x)=f^{\prime\prime}(x)$.
    Therefore, to cover $\mathbb{T}$ to $\epsilon$ accuracy, it suffices to cover the three parts to $\frac{\epsilon}{3}$ accuracy with respect to $\|\cdot\|_\infty$, respectively.

    For $f(0)$, since $|f(0)|\leq 4C_2$, the covering number is bounded by 
    \begin{equation}
        N_1\leq \frac{8C_2}{\epsilon/3}= \frac{24C_2}{\epsilon}.
    \end{equation}
    For $f^\prime(0)x$, since $|f^\prime(0)|\leq C_2$, the covering number is bounded by
    \begin{equation}
        N_2\leq \frac{6C_2x_{\max}}{\epsilon}.
    \end{equation}
    Finally, for $g(x)$, since $g(x)\in\mathbb{T}_2$ with constant $C_2$, the covering number is bounded according to Lemma \ref{lem:entropy2} above:
    \begin{equation}
        \log N_3\leq C_1\sqrt{\frac{3C_2 x_{\max}}{\epsilon}}.
    \end{equation}

    Combining the three parts, the metric entropy is bounded by
    \begin{equation}
        \log N(\epsilon,\mathbb{T},\|\cdot\|_\infty)\leq \log N_1+\log N_2+\log N_3\leq O\left(\sqrt{\frac{C_2 x_{\max}}{\epsilon}}\right),
    \end{equation}
    where the last inequality holds since we assume $\frac{C_2 x_{\max}}{\epsilon}\geq 1$.
\end{proof}

W.l.o.g, we also assume that the learned function $f=f_\theta$ satisfies that $\|f_\theta\|_\infty\leq B$ for some constant $B>0$. Therefore, the function set to consider is $\widetilde{\mathbb{T}}=\mathbb{T}\cap\left\{f:[-x_{\max},x_{\max}]\rightarrow\mathbb{R}\;\big|\;\|f\|_\infty\leq B\right\}$. Now we prove a generalization gap upper bound over the function set $\widetilde{\mathbb{T}}$. We consider a general class of loss functions, which includes logistic loss as a special case. Note that we still consider the fixed design setting where $\{x_i\}_{i=1}^n$ are fixed and $x_i\in[-x_{\max},x_{\max}]$. Each $y_i$ is sampled independently from some distribution conditional on $x_i$. Then the empirical loss $\cL(f) = \frac{1}{n}\sum_{i=1}^n\ell(f,(x_i,y_i))$, while the populational loss $\bar{\cL}(f)=\frac{1}{n}\sum_{i=1}^n\E_{y_i^\prime|x_i}\ell(f,(x_i,y_i^\prime))$. The following lemma shows that if the loss function is Lipschitz continuous (w.r.t $f$) and bounded, then we can derive a high-probability upper bound of $\left|\cL(f)-\bar{\cL}(f)\right|$.

\begin{lemma}\label{lem:singlegengap}
    Assume $\widetilde{\mathbb{T}}=\left\{f:[-x_{\max},x_{\max}]\rightarrow\mathbb{R}\;\bigg|\;|f(0)|\leq 4C_2,\;|f^\prime(0)|\leq C_2,\;\int_{-x_{\max}}^{x_{\max}}|f^{\prime\prime}(x)|dx\leq C_2\right\}\\\cap\left\{f:[-x_{\max},x_{\max}]\rightarrow\mathbb{R}\;\big|\;\|f\|_\infty\leq B\right\}$, where $B>0,\; C_2\geq 1$. If for constant $L_1>0$ and monotonically increasing function $L_2(\cdot)>0$, $\left|\ell(f,(x,y))-\ell(\widetilde{f},(x,y))\right|\leq L_1\left|f(x)-\widetilde{f}(x)\right|$ and $\left|\ell(f,(x,y))\right|\leq L_2(|f(x)|)$ hold for any possible data point $(x,y)$ and any possible function pairs $(f,\widetilde{f})$, then with probability $1-\delta$, for any $f\in\widetilde{\mathbb{T}}$,
    \begin{equation}
        \left|\cL(f)-\bar{\cL}(f)\right|\leq O\left(\left[\frac{L_1 L_2(B)^4 C_2 x_{\max}\log(1/\delta)^2}{n^2}\right]^{\frac{1}{5}}\right).
    \end{equation}
\end{lemma}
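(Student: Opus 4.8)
The plan is a single-scale covering argument: discretize $\widetilde{\mathbb{T}}$, concentrate on the net, union bound, then optimize the resolution. Because the metric entropy of $\widetilde{\mathbb{T}}$ scales like $\epsilon^{-1/2}$ (Lemma~\ref{lem:entropy3}), balancing the discretization error against $\sqrt{\text{entropy}/n}$ already lands on an $n^{-2/5}$-type rate, so no chaining is needed. Concretely, for a scale $\epsilon\in(0,1]$ to be chosen later I would take an \emph{internal} $\epsilon$-net $\cN\subseteq\widetilde{\mathbb{T}}$ under $\|\cdot\|_\infty$; since $\widetilde{\mathbb{T}}\subseteq\mathbb{T}$ and internal covering numbers are at most external covering numbers at half the radius, Lemma~\ref{lem:entropy3} gives $\log|\cN|\le O\big(\sqrt{C_2 x_{\max}/\epsilon}\big)$, the factor $\sqrt2$ being absorbed. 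If the eventual optimal $\epsilon$ exceeds $1$, the conclusion is vacuous: each loss value obeys $|\ell(f,(x,y))|\le L_2(|f(x)|)\le L_2(B)$ by monotonicity of $L_2$, so $|\cL(f)-\bar\cL(f)|\le 2L_2(B)$ deterministically and is absorbed by the $O(\cdot)$; hence I may assume $\epsilon\le 1$ and check this at the end.

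Next, for a fixed $g\in\cN$, write $\cL(g)-\bar\cL(g)=\frac1n\sum_{i=1}^n Z_i$ with $Z_i:=\ell(g,(x_i,y_i))-\E_{y_i'\mid x_i}\ell(g,(x_i,y_i'))$. Because the $y_i$ are drawn independently given the fixed design, the $Z_i$ are independent, mean zero, and bounded by $2L_2(B)$ in absolute value (here $g\in\widetilde{\mathbb{T}}$, so $\|g\|_\infty\le B$); Hoeffding's inequality yields a sub-Gaussian tail $2\exp\!\big(-\Omega(nt^2/L_2(B)^2)\big)$. A union bound over $\cN$ then gives, with probability $1-\delta$,
$$\max_{g\in\cN}\big|\cL(g)-\bar\cL(g)\big|\le O\!\left(L_2(B)\sqrt{\tfrac{\log(2|\cN|/\delta)}{n}}\right).$$
For an arbitrary $f\in\widetilde{\mathbb{T}}$, pick $g\in\cN$ with $\|f-g\|_\infty\le\epsilon$ and use the $L_1$-Lipschitz hypothesis on $\ell$ — directly for $\cL$, and for $\bar\cL$ by pulling the bound through $\E_{y'\mid x}$ — to get $|\cL(f)-\cL(g)|\le L_1\epsilon$ and $|\bar\cL(f)-\bar\cL(g)|\le L_1\epsilon$, hence
$$\sup_{f\in\widetilde{\mathbb{T}}}\big|\cL(f)-\bar\cL(f)\big|\le 2L_1\epsilon+O\!\left(L_2(B)\sqrt{\tfrac{\log(2|\cN|/\delta)}{n}}\right).$$

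Finally, substituting $\log|\cN|\le O(\sqrt{C_2 x_{\max}/\epsilon})$ and converting the additive $\log(1/\delta)$ into a multiplicative factor via $a+b\le 2ab$ for $a,b\ge 1$ (WLOG $\log(1/\delta)\ge1$, and $\sqrt{C_2x_{\max}/\epsilon}\ge1$ in the nontrivial regime), the right-hand side becomes $2L_1\epsilon+O\big(L_2(B)(C_2x_{\max})^{1/4}\log(1/\delta)^{1/2}\,n^{-1/2}\epsilon^{-1/4}\big)$. Choosing $\epsilon\asymp\big(L_2(B)^2(C_2x_{\max})^{1/2}\log(1/\delta)/(L_1^2 n)\big)^{2/5}$ equates the two terms and produces $O\big([L_1 L_2(B)^4 C_2 x_{\max}\log(1/\delta)^2/n^2]^{1/5}\big)$, and this $\epsilon$ satisfies $\epsilon\le1$ exactly when the claimed bound is at most a constant times $L_2(B)$ — the only regime in which the statement is not already trivial. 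I expect the only real difficulty to be bookkeeping: keeping the side conditions $C_2\ge1$, $x_{\max}\ge1$, $\epsilon\le1$, and ``all $\log$ terms $\ge1$'' mutually consistent so that Lemma~\ref{lem:entropy3} and the additive-to-multiplicative $\log$ step are legitimate, and taking care that it is the explicit bound $\|f\|_\infty\le B$ (not the weaker $\mathbb{T}$-constraints, which alone would only give $\|f\|_\infty=O(C_2 x_{\max}^2)$) that controls the Hoeffding increments through $L_2(B)$.
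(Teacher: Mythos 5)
Your proposal is correct and follows essentially the same route as the paper's proof: an $\epsilon$-cover of $\widetilde{\mathbb{T}}$ from Lemma \ref{lem:entropy3}, Hoeffding plus a union bound on the net (using $\|f\|_\infty\leq B$ and $L_2$ to bound the increments), the $L_1$-Lipschitz transfer from the net to all of $\widetilde{\mathbb{T}}$, and a final optimization over $\epsilon$ yielding the $n^{-2/5}$-type rate. Your extra bookkeeping (internal net so that cover elements satisfy $\|g\|_\infty\leq B$, the $\epsilon\leq 1$ check, and the explicit additive-to-multiplicative $\log(1/\delta)$ step) only tightens details the paper leaves implicit.
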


\begin{proof}[Proof of Lemma \ref{lem:singlegengap}]
    Since $\widetilde{\mathbb{T}}$ is a subset of $\mathbb{T}$, the metric entropy of $\widetilde{\mathbb{T}}$ is smaller than the metric entropy of $\mathbb{T}$. Therefore, for a fixed $\epsilon>0$, according to Lemma \ref{lem:entropy3}, there exists an $\epsilon$-covering set of $\widetilde{\mathbb{T}}$ (with respect to $\|\cdot\|_\infty$) whose cardinality $N$ satisfies that
    \begin{equation}
        \log N\leq O\left(\sqrt{\frac{C_2 x_{\max}}{\epsilon}}\right).
    \end{equation}
    For a fixed function $\bar{f}$ in the covering set, according to our assumption on $\ell$, for any possible $(x,y)$,
    $$\left|\ell(\bar{f},(x,y))\right|\leq L_2(|\bar{f}(x)|)\leq L_2(B).$$    
    Meanwhile, note that the features $y_i$'s are sampled independently from the data distribution. According to Hoeffding's inequality, with probability $1-\delta$, it holds that
    \begin{equation}
        \left|\cL(\bar{f})-\bar{\cL}(\bar{f})\right|\leq L_2( B)\cdot\sqrt{\frac{2\log(2/\delta)}{n}}.
    \end{equation}
    Together with a union bound over the covering set, we have with probability $1-\delta$, for all $\bar{f}$ in the covering set,
    \begin{equation}
        \begin{split}
            &\left|\cL(\bar{f})-\bar{\cL}(\bar{f})\right|\leq L_2 (B)\cdot\sqrt{\frac{2\log(2N/\delta)}{n}}\\\leq& O\left(L_2(B)\cdot\frac{\left(C_2 x_{\max}\right)^{\frac{1}{4}}\log(1/\delta)^{\frac{1}{2}}}{n^{\frac{1}{2}}\epsilon^{\frac{1}{4}}}\right).
        \end{split}
    \end{equation}
    Under such high probability event, for any $f\in\widetilde{\mathbb{T}}$, let $\bar{f}$ be a function in the covering set such that $\|f-\bar{f}\|_\infty \leq\epsilon$. Then it holds that
    \begin{equation}
    \begin{split}
        &\left|\cL(f)-\bar{\cL}(f)\right|\\\leq&
        \left|\cL(\bar{f})-\bar{\cL}(\bar{f})\right|+O(L_1 \epsilon)\\\leq&
        O(L_1 \epsilon)+O\left(L_2(B)\cdot\frac{\left(C_2 x_{\max}\right)^{\frac{1}{4}}\log(1/\delta)^{\frac{1}{2}}}{n^{\frac{1}{2}}\epsilon^{\frac{1}{4}}}\right)\\\leq&
        O\left(\left[\frac{L_1 L_2(B)^4 C_2 x_{\max}\log(1/\delta)^2}{n^2}\right]^{\frac{1}{5}}\right),
    \end{split}
    \end{equation}
    where the first inequality holds because $\left|\ell(f,(x,y))-\ell(\widetilde{f},(x,y))\right|\leq L_1\left|f(x)-\widetilde{f}(x)\right|$. The last inequality results from selecting the $\epsilon$ that minimizes the objective.
\end{proof}

With the result above, we are ready to prove a generalization gap bound based on $\|\theta\|_2$.

\begin{lemma}\label{lem:multigengap}
     If for constant $L_1>0$ and monotonically increasing function $L_2(\cdot)>0$, $\left|\ell(f,(x,y))-\ell(\widetilde{f},(x,y))\right|\leq L_1\left|f(x)-\widetilde{f}(x)\right|$ and $\left|\ell(f,(x,y))\right|\leq L_2(|f(x)|)$ hold for any possible data point $(x,y)$ and any possible function pairs $(f,\widetilde{f})$, then with probability $1-\delta$, for any $f=f_\theta$ such that $\|f\|_\infty\leq B$, it holds that
     \begin{equation}
        \left|\cL(f)-\bar{\cL}(f)\right|\leq O\left(\left[\frac{L_1 L_2(B)^4 \cdot \max\{\|\theta\|_2^2,1\}\cdot x_{\max}\log\left(\frac{2\max\{\|\theta\|_2^2,1\}}{\delta}\right)^2}{n^2}\right]^{\frac{1}{5}}\right).
    \end{equation}
\end{lemma}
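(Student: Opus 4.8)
The plan is to bootstrap the fixed-complexity generalization bound of Lemma~\ref{lem:singlegengap} into a $\|\theta\|_2$-dependent one via a dyadic stratification (``peeling'') over the scale of the learned parameter. This is needed because the complexity parameter $C_2$ that governs Lemma~\ref{lem:singlegengap} is itself data-dependent through $\theta$, so one cannot simply apply that lemma with a fixed $C_2$; instead one applies it at a geometric grid of values and takes a union bound.

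\emph{Step 1 (translate the parameter norm into membership in $\widetilde{\mathbb{T}}$).} Set $C_2(\theta):=\max\{\|\theta\|_2^2,1\}$. First I would check, using Lemma~\ref{lem:l2normtv}, that every $f=f_\theta$ with $\|f_\theta\|_\infty\le B$ lies in the class $\widetilde{\mathbb{T}}$ of Lemma~\ref{lem:singlegengap} with complexity parameter $C_2(\theta)$ (and the given $B$): Lemma~\ref{lem:l2normtv} gives $\int_{-x_{\max}}^{x_{\max}}|f''|\le\|\theta\|_2^2/2\le C_2(\theta)$, $|f'(0)|\le\|\theta\|_2^2/2\le C_2(\theta)$, and $|f(0)|\le 4C_2(\theta)$ (via the $\|\theta\|_2\ge1$ branch $|f(0)|\le 2\|\theta\|_2^2=2C_2(\theta)$ when $\|\theta\|_2\ge1$, and the crude bound $|f(0)|\le\|\theta\|_2^2/2+\|\theta\|_2<3/2\le 4C_2(\theta)$ otherwise), with $C_2(\theta)\ge1$ as required. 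Since the class $\widetilde{\mathbb{T}}$ grows monotonically in its complexity parameter, $f_\theta$ then also lies in $\widetilde{\mathbb{T}}$ with any parameter $\ge C_2(\theta)$.

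\emph{Step 2 (peeling).} For each integer $j\ge0$ I would apply Lemma~\ref{lem:singlegengap} to $\widetilde{\mathbb{T}}$ with complexity parameter $2^{j+1}$ and confidence $\delta_j:=\delta\,2^{-(j+1)}$, obtaining an event $E_j$ of probability $\ge 1-\delta_j$ on which the gap bound holds uniformly over $\widetilde{\mathbb{T}}$ with that parameter. A union bound over $j$, together with $\sum_{j\ge0}\delta_j=\delta$, shows $E:=\bigcap_{j\ge0}E_j$ has probability $\ge 1-\delta$. On $E$, for any $f=f_\theta$ with $\|f_\theta\|_\infty\le B$ pick the $j$ with $2^j\le C_2(\theta)<2^{j+1}$; then by Step~1 $f_\theta$ belongs to $\widetilde{\mathbb{T}}$ with parameter $2^{j+1}$, so $E_j$ yields the gap bound with $C_2$ replaced by $2^{j+1}$ and $\delta$ by $\delta_j$. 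Finally I would substitute $2^{j+1}\le 2C_2(\theta)$ and $\log(1/\delta_j)=(j+1)\log 2+\log(1/\delta)\le\log\bigl(2C_2(\theta)/\delta\bigr)$ (using $j\le\log_2 C_2(\theta)$), absorb the absolute constant into $O(\cdot)$, and recall $C_2(\theta)=\max\{\|\theta\|_2^2,1\}$; this reproduces exactly the claimed bound of Lemma~\ref{lem:multigengap}.

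The argument is largely routine once Lemmas~\ref{lem:singlegengap} and~\ref{lem:l2normtv} are in hand; the only point requiring care is the bookkeeping in Step~2 — the $\delta_j$'s must sum to $\delta$ while keeping $\log(1/\delta_j)$ controlled by $\log(C_2(\theta)/\delta)$ up to constants, so that the final bound depends on $\|\theta\|_2$ only through the advertised $\max\{\|\theta\|_2^2,1\}$ (inside both the polynomial factor and the logarithm) and does not pick up a spurious additive ``$\log$ of the stratification level'' term. As a side remark, Theorem~\ref{thm:multigengap} then follows by instantiating $L_1=1$ and $L_2(t)=\log(1+e^t)$, since the logistic loss $\ell(f,(x,y))=\log(1+e^{-yf(x)})$ is $1$-Lipschitz in $f(x)$ and bounded by $\log(1+e^{B})$ whenever $|f(x)|\le B$.
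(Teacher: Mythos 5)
Your proposal is correct and follows essentially the same route as the paper's own proof: the paper likewise verifies membership in $\widetilde{\mathbb{T}}$ via Lemma~\ref{lem:l2normtv} and then applies Lemma~\ref{lem:singlegengap} at the dyadic levels $C_2=2^{i-1}$ with failure probabilities $\delta/2^i$, choosing the smallest level dominating $\|\theta\|_2^2/2$ and absorbing the resulting factor-of-two and $\log(2\max\{\|\theta\|_2^2,1\}/\delta)$ slack into the $O(\cdot)$. The only differences are cosmetic (indexing of the strata and the exact constant in $C_2(\theta)$), so no gap remains.
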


\begin{proof}[Proof of Lemma \ref{lem:multigengap}]
    According to Lemma \ref{lem:singlegengap}, we have for any $i\in\mathbb{Z}^+$, with probability $1-\frac{\delta}{2^i}$,
    \begin{equation}\label{equ:eventi}
        \left|\cL(f)-\bar{\cL}(f)\right|\leq O\left(\left[\frac{L_1 L_2(B)^4 2^{i-1} x_{\max}\log(2^i/\delta)^2}{n^2}\right]^{\frac{1}{5}}\right)
    \end{equation}
    holds for all $f\in \left\{f:[-x_{\max},x_{\max}]\rightarrow\mathbb{R}\;\bigg|\;|f(0)|\leq 4\cdot2^{i-1},\;|f^\prime(0)|\leq 2^{i-1},\;\int_{-x_{\max}}^{x_{\max}}|f^{\prime\prime}(x)|dx\leq 2^{i-1},\|f\|_\infty\leq B\right\}$. The total failure probability for the events above is $\sum_{i=1}^\infty \frac{\delta}{2^i}=\delta$. Therefore, we consider the high probability event where \eqref{equ:eventi} holds for all $i\in\mathbb{Z}^+$, which happens with probability at least $1-\delta$.

    For $f=f_\theta$ such that $\|f\|_\infty\leq B$, if $\|\theta\|_2\leq \sqrt{2}$, according to Lemma \ref{lem:l2normtv}, it holds that
    $$f\in \left\{f:[-x_{\max},x_{\max}]\rightarrow\mathbb{R}\;\bigg|\;|f(0)|\leq 4,\;|f^\prime(0)|\leq 1,\;\int_{-x_{\max}}^{x_{\max}}|f^{\prime\prime}(x)|dx\leq 1,\|f\|_\infty\leq B\right\}.$$
    According to the high probability event above ($i=1$), we further have:
    \begin{equation}
        \left|\cL(f)-\bar{\cL}(f)\right|\leq O\left(\left[\frac{L_1 L_2(B)^4 x_{\max}\log(2/\delta)^2}{n^2}\right]^{\frac{1}{5}}\right).
    \end{equation}

    If $\|\theta\|_2\geq \sqrt{2}$, let $i^\star$ be the smallest integer such that $2^{i^\star-1}\geq \frac{\|\theta\|_2^2}{2}$. Lemma \ref{lem:l2normtv} implies that
    $$f\in \left\{f:[-x_{\max},x_{\max}]\rightarrow\mathbb{R}\;\bigg|\;|f(0)|\leq 4\cdot2^{i^\star-1},\;|f^\prime(0)|\leq 2^{i^\star-1},\;\int_{-x_{\max}}^{x_{\max}}|f^{\prime\prime}(x)|dx\leq 2^{i^\star-1},\|f\|_\infty\leq B\right\}.$$
    According to the high probability event above ($i=i^\star$), we further have:
    \begin{equation}
        \left|\cL(f)-\bar{\cL}(f)\right|\leq O\left(\left[\frac{L_1 L_2(B)^4 2^{i^\star-1} x_{\max}\log(2^{i^\star}/\delta)^2}{n^2}\right]^{\frac{1}{5}}\right)\leq O\left(\left[\frac{L_1 L_2(B)^4 \|\theta\|_2^2 x_{\max}\log(2\|\theta\|_2^2/\delta)^2}{n^2}\right]^{\frac{1}{5}}\right),
    \end{equation}
    where the last inequality holds since $2^{i^\star-2}\leq \frac{\|\theta\|_2^2}{2}$.

    Combining the two cases above, we have with probability $1-\delta$, for any $f=f_\theta$ such that $\|f\|_\infty\leq B$, it holds that
    \begin{equation}
        \left|\cL(f)-\bar{\cL}(f)\right|\leq O\left(\left[\frac{L_1 L_2(B)^4 \cdot \max\{\|\theta\|_2^2,1\}\cdot x_{\max}\log\left(\frac{2\max\{\|\theta\|_2^2,1\}}{\delta}\right)^2}{n^2}\right]^{\frac{1}{5}}\right),
    \end{equation}
    which finishes the proof.
\end{proof}

\noindent\textbf{The case of logistic loss.} The results above consider general loss functions, while the corollary below plugs in the constant $L_1$ and the function $L_2(\cdot)$ under the special case of logistic loss.

\begin{corollary}[Restate Theorem \ref{thm:multigengap}]\label{cor:multigengap}
    If the loss is logistic loss $\ell(f,(x,y))= \log(1+e^{-yf(x)})$ and data points $(x,y)\in[-x_{\max},x_{\max}]\times\{-1,1\}$, then with probability $1-\delta$, for any $f=f_\theta$ such that $\|f\|_\infty\leq B$, it holds that
     \begin{equation}
        \left|\cL(f)-\bar{\cL}(f)\right|\leq O\left(\left[\frac{\log(1+e^B)^4 \cdot \max\{\|\theta\|_2^2,1\}\cdot x_{\max}\log\left(\frac{2\max\{\|\theta\|_2^2,1\}}{\delta}\right)^2}{n^2}\right]^{\frac{1}{5}}\right).
    \end{equation}
\end{corollary}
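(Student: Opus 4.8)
The plan is to obtain Corollary~\ref{cor:multigengap} as an immediate specialization of Lemma~\ref{lem:multigengap}: all the substantive work --- the $\epsilon^{-1/2}$ metric-entropy estimate for the $\mathrm{BV}^{(1)}$-type class $\mathbb{T}$ (Lemma~\ref{lem:entropy3}), the covering-plus-Hoeffding-plus-union-bound argument, and the dyadic peeling over the scale $\|\theta\|_2^2$ together with Lemma~\ref{lem:l2normtv} --- is already carried out there for an abstract loss that is $L_1$-Lipschitz in its prediction and pointwise bounded by a monotone function $L_2(|f(x)|)$. So the only thing left is to check that logistic loss meets these two hypotheses with the claimed constants, and the statement follows by substitution.

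First I would verify Lipschitzness. Setting $g(t)=\log(1+e^{-t})$, we have $g'(t)=-e^{-t}/(1+e^{-t})=-\sigma(-t)$, hence $|g'(t)|\le 1$ for all $t\in\R$. Since $\ell(f,(x,y))=g(yf(x))$ and $y^2=1$, this gives $|\ell(f,(x,y))-\ell(\tilde f,(x,y))|=|g(yf(x))-g(y\tilde f(x))|\le |yf(x)-y\tilde f(x)|=|f(x)-\tilde f(x)|$ for every data point $(x,y)\in[-x_{\max},x_{\max}]\times\{-1,1\}$ and every pair of functions. Thus we may take $L_1=1$.

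Next I would verify the pointwise bound: since $t\mapsto\log(1+e^{t})$ is monotonically increasing and $-yf(x)\le |f(x)|$, we get $\ell(f,(x,y))=\log(1+e^{-yf(x)})\le \log(1+e^{|f(x)|})$, so $L_2(r):=\log(1+e^{r})$ is a valid choice --- it is positive and monotonically increasing --- and $L_2(B)=\log(1+e^{B})$. Plugging $L_1=1$ and $L_2(\cdot)=\log(1+e^{\cdot})$ into the bound of Lemma~\ref{lem:multigengap} reproduces \eqref{equ:gpbyell2norm} verbatim. There is no real obstacle at the level of this corollary; if one wants to name the technically delicate ingredient in the underlying chain, it is the $\log N(\epsilon,\mathbb{T},\|\cdot\|_\infty)=O(\sqrt{C_2 x_{\max}/\epsilon})$ bound feeding Lemma~\ref{lem:singlegengap}, together with the optimization over $\epsilon$ that balances the $L_1\epsilon$ approximation error against the $L_2(B)\,(C_2 x_{\max})^{1/4}\log(1/\delta)^{1/2} n^{-1/2}\epsilon^{-1/4}$ stochastic term and yields the $n^{-2/5}$-type exponent --- but for the corollary itself this is inherited, not re-derived.
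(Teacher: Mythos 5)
Your proposal is correct and follows essentially the same route as the paper: the paper also derives the corollary by checking that logistic loss is $1$-Lipschitz in the prediction and bounded by $\log(1+e^{|f(x)|})$, and then plugging $L_1=1$, $L_2(t)=\log(1+e^t)$ into Lemma~\ref{lem:multigengap}. Your derivative-based verification of the Lipschitz constant is just a slightly more explicit version of the paper's one-line check.
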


\begin{proof}[Proof of Corollary \ref{cor:multigengap}]
    It is easy to check that
    $$\left|\ell(f,(x,y))-\ell(\widetilde{f},(x,y))\right|\leq \left|y\left(f(x)-\widetilde{f}(x)\right)\right|=\left|f(x)-\widetilde{f}(x)\right|,$$
    $$\left|\ell(f,(x,y))\right|\leq \log\left(1+e^{|f(x)|}\right).$$
    Therefore, plugging $L_1=1$, $L_2(t)=\log(1+e^t)$ into Lemma \ref{lem:multigengap} finishes the proof.
\end{proof}

\noindent\textbf{Extension to the statistical learning setting.} In the analysis above, we consider the fixed design ($\{x_i\}_{i=1}^n$) setting for technical reasons. The results are readily applicable to the statistical learning setting where the data points (not only the features) are i.i.d. samples. Recall that the empirical loss is $\cL(f) = \frac{1}{n}\sum_{i=1}^n\ell(f,(x_i,y_i))$. Now we assume the data $(x_i,y_i)$ are i.i.d. samples from some joint distribution $\cP$ defined on $[-x_{\max},x_{\max}]\times\mathbb{R}$. Define the risk $\cR(f):=\E_{(x,y)\sim\cP}\ell(f,(x,y))$. Then with identical proof as Lemma \ref{lem:multigengap}, we have the following high probability bound on $\left|\cL(f)-\cR(f)\right|$.

\begin{corollary}\label{cor:lrclose}
    If for constant $L_1>0$ and monotonically increasing function $L_2(\cdot)>0$, $\left|\ell(f,(x,y))-\ell(\widetilde{f},(x,y))\right|\leq L_1\left|f(x)-\widetilde{f}(x)\right|$ and $\left|\ell(f,(x,y))\right|\leq L_2(|f(x)|)$ hold for any possible data point $(x,y)$ and any possible function pairs $(f,\widetilde{f})$, then with probability $1-\delta$, for any $f=f_\theta$ such that $\|f\|_\infty\leq B$, it holds that
     \begin{equation}
        \left|\cL(f)-\cR(f)\right|\leq O\left(\left[\frac{L_1 L_2(B)^4 \cdot \max\{\|\theta\|_2^2,1\}\cdot x_{\max}\log\left(\frac{2\max\{\|\theta\|_2^2,1\}}{\delta}\right)^2}{n^2}\right]^{\frac{1}{5}}\right).
    \end{equation}
\end{corollary}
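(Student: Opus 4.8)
The plan is to reproduce the proof of Lemma~\ref{lem:multigengap} (which itself invokes Lemma~\ref{lem:singlegengap}) essentially verbatim, the only change being that the concentration step is now applied to i.i.d.\ summands instead of merely independent ones.

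First I would note that the two ingredients feeding into Lemma~\ref{lem:singlegengap} --- Lemma~\ref{lem:l2normtv}, which bounds $|f(0)|$, $|f'(0)|$ and $\TV^{(1)}(f)$ in terms of $\|\theta\|_2$, and Lemma~\ref{lem:entropy3}, which bounds $\log N(\epsilon,\mathbb{T},\|\cdot\|_\infty)$ by $O(\sqrt{C_2 x_{\max}/\epsilon})$ --- are purely statements about the function class $\mathbb{T}$ and make no reference to the design points. Hence, for every fixed $C_2\geq 1$ and every $\epsilon>0$, the class $\widetilde{\mathbb{T}}=\mathbb{T}\cap\{\|f\|_\infty\leq B\}$ still admits a deterministic, data-independent $\epsilon$-cover $\{\bar f_1,\dots,\bar f_N\}\subset\widetilde{\mathbb{T}}$ under $\|\cdot\|_\infty$ with $\log N\leq O(\sqrt{C_2 x_{\max}/\epsilon})$.

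Next I would redo the concentration step with $\cR$ in place of $\bar{\cL}$. For a fixed cover element $\bar f$, since $(x_i,y_i)$ are i.i.d.\ from $\cP$ the variables $\ell(\bar f,(x_i,y_i))$ are i.i.d.\ with mean $\cR(\bar f)$, and each is bounded by $L_2(|\bar f(x_i)|)\leq L_2(B)$ because $\bar f\in\widetilde{\mathbb{T}}$ forces $\|\bar f\|_\infty\leq B$ and $L_2$ is increasing. Hoeffding's inequality then gives $|\cL(\bar f)-\cR(\bar f)|\leq L_2(B)\sqrt{2\log(2/\delta)/n}$ with probability $1-\delta$, exactly the bound used in Lemma~\ref{lem:singlegengap}; the only difference from the fixed-design case is that the summands are now identically distributed, which does no harm. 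A union bound over the $N$ cover elements, the Lipschitz estimate $|\ell(f,(x,y))-\ell(\bar f,(x,y))|\leq L_1\|f-\bar f\|_\infty\leq L_1\epsilon$ to pass from cover elements to an arbitrary $f\in\widetilde{\mathbb{T}}$, and the optimal choice of $\epsilon$ then reproduce the conclusion of Lemma~\ref{lem:singlegengap} with $\bar{\cL}$ replaced by $\cR$, i.e.\ $|\cL(f)-\cR(f)|\leq O\left(\left[L_1 L_2(B)^4 C_2 x_{\max}\log(1/\delta)^2/n^2\right]^{1/5}\right)$ uniformly over $\widetilde{\mathbb{T}}$ with probability $1-\delta$.

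Finally I would stratify over dyadic scales exactly as in the proof of Lemma~\ref{lem:multigengap}: apply the single-scale bound with $C_2=2^{i-1}$ and confidence $1-\delta/2^i$ for every $i\in\mathbb{Z}^+$, and intersect over all $i$ (total failure probability $\sum_{i\geq1}\delta/2^i=\delta$). Given $f=f_\theta$ with $\|f\|_\infty\leq B$, Lemma~\ref{lem:l2normtv} places $f_\theta$ in the scale-$i^\star$ class for $i^\star$ the least integer with $2^{i^\star-1}\geq\max\{\|\theta\|_2^2,1\}/2$; reading off the corresponding estimate and using $2^{i^\star-2}\leq\max\{\|\theta\|_2^2,1\}/2$ yields the stated bound. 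I do not expect any genuine obstacle --- the whole point is that the argument is literally the fixed-design one. The single thing worth checking is that the boundedness hypothesis behind Hoeffding, $|\ell(\bar f,(x,y))|\leq L_2(B)$ for \emph{every} $(x,y)$ in the support of $\cP$ rather than merely in expectation, holds; this is immediate because the cover is taken inside $\widetilde{\mathbb{T}}$, so all of its members have sup-norm at most $B$.
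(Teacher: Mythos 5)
Your proposal is correct and matches the paper's treatment: the paper proves Corollary \ref{cor:lrclose} simply by observing that the proof of Lemma \ref{lem:multigengap} goes through verbatim in the i.i.d.\ setting, which is exactly what you do (data-independent cover from Lemmas \ref{lem:l2normtv} and \ref{lem:entropy3}, Hoeffding on the now i.i.d.\ bounded losses, union bound plus Lipschitz step, and the dyadic peeling over $C_2=2^{i-1}$). No gaps.
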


\subsection{A Crude Excess Risk Bound}\label{app:crudeexcessrisk}

In this part, we would like to derive an upper bound for the excess risk $\text{ExcessRisk}(f)=\bar{\cL}(f)-\bar{\cL}(f_0)$, where the loss $\ell$ is the logistic loss. To derive a diminishing upper bound, we make two assumptions. First, we assume that $f=f_\theta$ is ``optimized''.

\begin{assumption}\label{ass:optimize}
    The learned function $f=f_\theta$ is ``optimized'' such that
    \begin{equation}
        \cL(f)\leq\cL(f_0).
    \end{equation}
\end{assumption}

Meanwhile, we assume that the $\ell_2$ norm of $\theta$ grows sub-linearly w.r.t $n$.

\begin{assumption}\label{ass:smalltheta}
    The learned function $f=f_\theta$ satisfies $\|\theta\|_2\leq t(n)$, where $t(n)>1$ and
    \begin{equation}
        \lim_{n\rightarrow\infty}\frac{t(n)\log(t(n))}{n}=0.
    \end{equation}
\end{assumption}

Note that Assumption \ref{ass:smalltheta} holds whenever there exists some $\alpha>0$, $t(n)=O(n^{1-\alpha})$. Based on the two assumptions, we derive the following diminishing ER bound.

\begin{lemma}\label{lem:erbound}
    Let the loss be logistic loss $\ell(f,(x,y))= \log(1+e^{-yf(x)})$ and data points $(x,y)\in[-x_{\max},x_{\max}]\times\{-1,1\}$. For $f=f_\theta$, assume that Assumption \ref{ass:optimize} and Assumption \ref{ass:smalltheta} hold, while $\max\{\|f\|_\infty,\|f_0\|_\infty\}\leq B$ for some constant $B>0$. Then with probability $1-\delta$,
    \begin{equation}
            \bar{\cL}(f)-\bar{\cL}(f_0)\leq 
            O\left((1+B)^{\frac{4}{5}}x_{\max}^{1/5}\left(\frac{t(n) \cdot\log\left(\frac{t(n)}{\delta}\right)}{n}\right)^{\frac{2}{5}}+(1+B)\cdot\sqrt{\frac{\log(1/\delta)}{n}}\right).
    \end{equation}
    The R.H.S will converge to $0$ if $n$ converges to $\infty$.
\end{lemma}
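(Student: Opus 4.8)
The plan is to bound the excess risk via the standard three-way decomposition
\[
\bar{\cL}(f)-\bar{\cL}(f_0)
= \underbrace{\big(\bar{\cL}(f)-\cL(f)\big)}_{(\mathrm{I})}
+ \underbrace{\big(\cL(f)-\cL(f_0)\big)}_{(\mathrm{II})}
+ \underbrace{\big(\cL(f_0)-\bar{\cL}(f_0)\big)}_{(\mathrm{III})},
\]
and to control each term separately. Term $(\mathrm{II})$ is nonpositive by Assumption \ref{ass:optimize}. Term $(\mathrm{III})$ is a one-sided concentration statement for the \emph{single fixed} function $f_0$: since $\|f_0\|_\infty\le B$, each summand $\ell(f_0,(x_i,y_i))=\log(1+e^{-y_if_0(x_i)})$ lies in $[0,\log(1+e^B)]$, and the labels $y_i$ are independent conditional on the design, so Hoeffding's inequality gives, with probability $1-\delta/2$, $|\cL(f_0)-\bar{\cL}(f_0)|\le \log(1+e^B)\sqrt{2\log(4/\delta)/n}=O\big((1+B)\sqrt{\log(1/\delta)/n}\big)$, using the elementary bound $\log(1+e^B)\le \log 2+B=O(1+B)$.

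Term $(\mathrm{I})$ is handled by the uniform generalization-gap bound of Theorem \ref{thm:multigengap} / Corollary \ref{cor:multigengap}. The key point is that this bound holds with probability $1-\delta/2$ \emph{simultaneously} over all two-layer ReLU networks $f=f_\theta$ with $\|f\|_\infty\le B$, so it applies to the (data-dependent) learned function without any additional argument; the right-hand side there scales with that particular network's parameter norm. By Assumption \ref{ass:smalltheta}, $\|\theta\|_2\le t(n)$ with $t(n)>1$, so $\max\{\|\theta\|_2^2,1\}=t(n)^2$, and by monotonicity of the bound in $\|\theta\|_2$ we may substitute $t(n)$. Plugging $L_1=1$ and $L_2(B)=\log(1+e^B)=O(1+B)$, and collapsing the logarithmic factor $\log(2t(n)^2/\delta)^2 = O(\log(t(n)/\delta)^2)$ (valid since $t(n)>1$ and $\delta<1$), yields
\[
(\mathrm{I})\le O\!\left(\left[\frac{(1+B)^4\,t(n)^2\,x_{\max}\,\log(t(n)/\delta)^2}{n^2}\right]^{1/5}\right)
= O\!\left((1+B)^{4/5}x_{\max}^{1/5}\Big(\tfrac{t(n)\log(t(n)/\delta)}{n}\Big)^{2/5}\right).
\]
Combining the three estimates with a union bound over the two $\delta/2$-events and absorbing constants gives exactly the claimed inequality; finally $t(n)\log t(n)/n\to 0$ forces both $\big(t(n)\log(t(n)/\delta)/n\big)^{2/5}\to 0$ and $\sqrt{\log(1/\delta)/n}\to 0$, so the bound vanishes as $n\to\infty$.

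Since the whole argument is a stitching together of already-established pieces, I do not expect a genuine mathematical obstacle. The only care required is bookkeeping: verifying that the width $k$ has already been eliminated inside Theorem \ref{thm:multigengap}, that its right-hand side is monotone in $\|\theta\|_2$ so that the upper bound $t(n)$ can be substituted, that the logarithmic factor $\log(2t(n)^2/\delta)$ collapses into $O(\log(t(n)/\delta))$, and that the substitution $L_2(B)=\log(1+e^B)\le \log 2 + B$ converts the generic bound into the exact powers of $(1+B)$ appearing in the statement.
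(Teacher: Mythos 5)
Your proposal is correct and follows essentially the same route as the paper's proof: the same three-term decomposition, Hoeffding's inequality for the single fixed function $f_0$ with the bound $\log(1+e^B)\le B+1$, the uniform generalization-gap bound of Corollary \ref{cor:multigengap} applied to the learned $f_\theta$ with $\max\{\|\theta\|_2^2,1\}$ replaced by $t(n)^2$ via Assumption \ref{ass:smalltheta}, Assumption \ref{ass:optimize} to drop the middle term, and a union bound over the two $\delta/2$ events. Your bookkeeping points (uniformity over $f_\theta$ with $\|f\|_\infty\le B$, monotonicity in $\|\theta\|_2$, collapsing the logarithmic factor) are exactly the details the paper handles.
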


\begin{proof}[Proof of Lemma \ref{lem:erbound}]
    For the function $f_0$, note that $\left|\ell(f_0,(x,y))\right|\leq \log(1+e^B)\leq B+1$. Therefore, according to Hoeffding's inequality, with probability at least $1-\frac{\delta}{2}$,
    \begin{equation}
        \left|\cL(f_0)-\bar{\cL}(f_0)\right|\leq (B+1)\cdot\sqrt{\frac{2\log(4/\delta)}{n}}.
    \end{equation}

    Meanwhile, due to Corollary \ref{cor:multigengap}, with probability $1-\frac{\delta}{2}$,
    \begin{equation}
    \begin{split}
        \left|\cL(f)-\bar{\cL}(f)\right|\leq& O\left(\left[\frac{\log(1+e^B)^4 \cdot \max\{\|\theta\|_2^2,1\}\cdot x_{\max}\log\left(\frac{4\max\{\|\theta\|_2^2,1\}}{\delta}\right)^2}{n^2}\right]^{\frac{1}{5}}\right)\\\leq&
        O\left(\left[\frac{(1+B)^4 \cdot t(n)^2 x_{\max}\cdot\log\left(\frac{t(n)}{\delta}\right)^2}{n^2}\right]^{\frac{1}{5}}\right)\\=&
        O\left((1+B)^{\frac{4}{5}}x_{\max}^{1/5}\left(\frac{t(n) \cdot\log\left(\frac{t(n)}{\delta}\right)}{n}\right)^{\frac{2}{5}}\right),
    \end{split}
    \end{equation}
    where the second inequality holds due to Assumption \ref{ass:smalltheta}.

    Combining the results with Assumption \ref{ass:optimize}, under the two high probability events above,
    \begin{equation}
        \begin{split}
            &\bar{\cL}(f)-\bar{\cL}(f_0)\leq \left|\bar{\cL}(f)-\cL(f)\right|+\cL(f)-\cL(f_0)+\left|\cL(f_0)-\bar{\cL}(f_0)\right|\\\leq&O\left((1+B)^{\frac{4}{5}}x_{\max}^{1/5}\left(\frac{t(n) \cdot\log\left(\frac{t(n)}{\delta}\right)}{n}\right)^{\frac{2}{5}}\right)+0+(B+1)\cdot\sqrt{\frac{2\log(4/\delta)}{n}}\\=&
            O\left((1+B)^{\frac{4}{5}}x_{\max}^{1/5}\left(\frac{t(n) \cdot\log\left(\frac{t(n)}{\delta}\right)}{n}\right)^{\frac{2}{5}}+(1+B)\cdot\sqrt{\frac{\log(1/\delta)}{n}}\right),
        \end{split}
    \end{equation}
    where the event holds with probability at least $1-\delta$.
    
    Finally, according to Assumption \ref{ass:smalltheta}, $\lim_{n\rightarrow\infty}\frac{t(n)\log(t(n))}{n}=0$. Therefore, the R.H.S converges to $0$ if $n$ converges to $\infty$. The proof is complete.
\end{proof}

\subsection{A Refined Analysis and Tighter Result}\label{app:refineresult}

We remark that using $\|\theta\|_2$ in Theorem \ref{thm:multigengap} is mainly for the ease of presentation. According to the same proof as Lemma \ref{lem:l2normtv}, we have
\begin{equation}
    \int_{-x_{\max}}^{x_{\max}}\left|f^{\prime\prime}(x)\right|dx\leq \sum_{i:b_i^{(1)}/w_i^{(1)}\in[-x_{\max},x_{\max}]}\left|w_i^{(1)}w_i^{(2)}\right|,
\end{equation}
where the i-th term takes the value 0 if $w_i^{(1)}=0$. Then according to the same analysis in Lemma \ref{lem:entropy3}, the metric entropy of the possible function set can be tighten to
\begin{equation}
    \log N(\epsilon)\leq O\left(\sqrt{\frac{x_{\max}\sum_{i:b_i^{(1)}/w_i^{(1)}\in[-x_{\max},x_{\max}]}\left|w_i^{(1)}w_i^{(2)}\right|}{\epsilon}}\right)+O\left(\log\left(\frac{x_{\max}\|\theta\|_2^2 }{\epsilon}\right)\right).
\end{equation}

Plugging the metric entropy bound above to Lemma \ref{lem:singlegengap} and applying the ``doubling trick'' to $\sum_{i:b_i^{(1)}/w_i^{(1)}\in[-x_{\max},x_{\max}]}\left|w_i^{(1)}w_i^{(2)}\right|$ and $\|\theta\|_2$ simultaneously in Lemma \ref{lem:multigengap}, we directly have the following result.

\begin{lemma}[Refined version of Lemma \ref{lem:multigengap}]\label{lem:multigengap2}
     If for constant $L_1>0$ and monotonically increasing function $L_2(\cdot)>0$, $\left|\ell(f,(x,y))-\ell(\widetilde{f},(x,y))\right|\leq L_1\left|f(x)-\widetilde{f}(x)\right|$ and $\left|\ell(f,(x,y))\right|\leq L_2(|f(x)|)$ hold for any possible data point $(x,y)$ and any possible function pairs $(f,\widetilde{f})$, then with probability $1-\delta$, for any $f=f_\theta$ such that $\|f\|_\infty\leq B$, it holds that
     \begin{equation}
        \left|\cL(f)-\bar{\cL}(f)\right|\leq \widetilde{O}\left(\left[\frac{L_1 L_2(B)^4 \cdot \max\left\{\sum_{i:b_i^{(1)}/w_i^{(1)}\in[-x_{\max},x_{\max}]}\left|w_i^{(1)}w_i^{(2)}\right|,1\right\}\cdot x_{\max}}{n^2}\right]^{\frac{1}{5}}\right),
    \end{equation}
    where the $\widetilde{O}(\cdot)$ absorbs $\log(n)$, $\log(x_{\max})$ and $\log(\|\theta\|_2)$.
\end{lemma}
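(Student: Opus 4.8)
The plan is to rerun the four-step argument behind Lemma~\ref{lem:multigengap} --- refined curvature bound, metric entropy, single-scale chaining, doubling --- but to replace the crude quantity $\|\theta\|_2^2$ by the sharper curvature functional $C_\theta:=\sum_{i:\,b_i^{(1)}/w_i^{(1)}\in[-x_{\max},x_{\max}]}|w_i^{(1)}w_i^{(2)}|$ wherever it controls smoothness, relegating $\|\theta\|_2$ to a purely logarithmic role. Concretely, I first mirror Lemma~\ref{lem:l2normtv}: on $[-x_{\max},x_{\max}]$ the second weak derivative $f_\theta''$ is a sum of point masses supported only at the in-interval knots $-b_i^{(1)}/w_i^{(1)}$, each carrying mass $|w_i^{(1)}w_i^{(2)}|$, so $\TV^{(1)}(f_\theta)=\int_{-x_{\max}}^{x_{\max}}|f_\theta''(x)|\,dx\le C_\theta$. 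The out-of-interval knots together with the biases contribute only an affine piece on the interval, so I may write $f_\theta=a+bx+g$ with $g(0)=g'(0)=0$, $\TV^{(1)}(g)=\TV^{(1)}(f_\theta)\le C_\theta$, and $|a|\le\|\theta\|_2^2/2+\|\theta\|_2$, $|b|\le\|\theta\|_2^2/2$ by the same Cauchy--Schwarz and $x(a-x)\le a^2/4$ estimates as in Lemma~\ref{lem:l2normtv}.

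Next I refine the metric entropy as in Lemma~\ref{lem:entropy3}: cover the curvature piece $g$, which lies in the class $\mathbb{T}_2$ of Lemma~\ref{lem:entropy2} with $C_2=C_\theta$, to accuracy $\epsilon/3$ with $\log$-cardinality $O(\sqrt{C_\theta x_{\max}/\epsilon})$ via the rescaling trick of Lemma~\ref{lem:entropy2} (built on Lemma~\ref{lem:entropy}), and cover the two scalars $a$ and $b$ (the latter through the linear map $x\mapsto bx$ on $[-x_{\max},x_{\max}]$) to accuracy $\epsilon/3$ in $\ell_\infty$, which costs only $O\!\bigl(\log(x_{\max}\|\theta\|_2^2/\epsilon)\bigr)$. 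Adding these gives $\log N(\epsilon)\le O(\sqrt{C_\theta x_{\max}/\epsilon})+O(\log(x_{\max}\|\theta\|_2^2/\epsilon))$; this is precisely the gain over Lemma~\ref{lem:multigengap}, whose polynomial dependence on $\|\theta\|_2^2$ entered through using $\|\theta\|_2^2/2$ as the $\TV^{(1)}$ scale, now replaced by the potentially far smaller $C_\theta$.

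With this entropy bound I repeat the chaining proof of Lemma~\ref{lem:singlegengap} on the class intersected with $\{\|f\|_\infty\le B\}$: on an $\epsilon$-net each $\bar f$ has $|\ell(\bar f,(x,y))|\le L_2(B)$, so Hoeffding plus a union bound over the net yields $|\cL(\bar f)-\bar\cL(\bar f)|\le L_2(B)\sqrt{2\log(2N/\delta)/n}$ simultaneously, and the Lipschitz hypothesis propagates this to all $f$ in the class up to an additive $O(L_1\epsilon)$; optimizing over $\epsilon$ (the logarithmic entropy term being dominated by the $\sqrt{\cdot}$ one) gives, for every $\theta$ with $C_\theta\le C$ and $\|\theta\|_2^2\le R$, a bound $\widetilde O\bigl([L_1 L_2(B)^4\max\{C,1\}x_{\max}/n^2]^{1/5}\bigr)$ with the tilde hiding $\log n,\log x_{\max},\log R,\log(1/\delta)$. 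Finally I make this uniform over all $f_\theta$ by a two-dimensional peeling argument: apply the previous bound with failure probability $\delta\,2^{-j-k}$ to the class $\{C_\theta\le 2^j,\ \|\theta\|_2^2\le 2^k\}$ for each $(j,k)\in\mathbb{Z}_{\ge1}^2$; since $\sum_{j,k\ge1}2^{-j-k}=1$ the total failure probability is $\delta$, and on the surviving event any $f_\theta$ lands in the cell with $2^{j^\star}\le 2\max\{C_\theta,1\}$ and $2^{k^\star}\le 2\max\{\|\theta\|_2^2,1\}$, so $2^{k^\star}$ enters only through the tilde as $\log\|\theta\|_2$ while $2^{j^\star}$ supplies the $\max\{C_\theta,1\}$ factor --- exactly the claim.

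The step I expect to require the most care is the last one, the two-parameter doubling: peeling over a quantity that appears \emph{polynomially} in the per-scale bound would reintroduce it into the final estimate, so the whole argument hinges on having isolated $\|\theta\|_2$ inside a single logarithm at the metric-entropy stage, after which peeling over its dyadic scales costs only an extra additive $O(\log\|\theta\|_2)$, harmlessly absorbed into $\widetilde O$. A minor secondary point is confirming that covering the affine part $a+bx$ in $\ell_\infty$ over $[-x_{\max},x_{\max}]$ genuinely costs only $O(\log(\cdot))$ bits --- its covering \emph{number} is polynomial but its metric \emph{entropy} is logarithmic --- so that it never competes with the $\sqrt{C_\theta x_{\max}/\epsilon}$ term after the $\epsilon$-optimization.
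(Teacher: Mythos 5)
Your proposal is correct and follows essentially the same route as the paper: the sharper $\TV^{(1)}$ bound via the in-interval coefficient sum, the refined metric entropy with the affine part contributing only a $\log(x_{\max}\|\theta\|_2^2/\epsilon)$ term (as in Lemmas \ref{lem:l2normtv}--\ref{lem:entropy3}), the single-scale Hoeffding-plus-covering argument of Lemma \ref{lem:singlegengap}, and the simultaneous doubling over both $\sum_{i:b_i^{(1)}/w_i^{(1)}\in[-x_{\max},x_{\max}]}|w_i^{(1)}w_i^{(2)}|$ and $\|\theta\|_2$. Your two-parameter peeling with failure probabilities $\delta 2^{-j-k}$ is exactly the "doubling trick applied simultaneously" that the paper invokes.
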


Therefore, as a direct corollary, we get a refined version of Corollary \ref{cor:multigengap} when applied to the special case of logistic loss.

\begin{corollary}[Refined version of Corollary \ref{cor:multigengap}]\label{cor:multigengap2}
    If the loss is logistic loss $\ell(f,(x,y))= \log(1+e^{-yf(x)})$ and data points $(x,y)\in[-x_{\max},x_{\max}]\times\{-1,1\}$, then with probability $1-\delta$, for any $f=f_\theta$ such that $\|f\|_\infty\leq B$, it holds that
     \begin{equation}
        \left|\cL(f)-\bar{\cL}(f)\right|\leq \widetilde{O}\left(\left[\frac{\log(1+e^B)^4 \cdot \max\left\{\sum_{i:b_i^{(1)}/w_i^{(1)}\in[-x_{\max},x_{\max}]}\left|w_i^{(1)}w_i^{(2)}\right|,1\right\}\cdot x_{\max}}{n^2}\right]^{\frac{1}{5}}\right),
    \end{equation}
    where the $\widetilde{O}(\cdot)$ absorbs $\log(n)$, $\log(x_{\max})$ and $\log(\|\theta\|_2)$.
\end{corollary}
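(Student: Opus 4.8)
The plan is to obtain Corollary~\ref{cor:multigengap2} as an immediate consequence of Lemma~\ref{lem:multigengap2}, by checking that the logistic loss fits the two structural hypotheses that lemma requires: a uniform Lipschitz bound in the prediction, and a monotone growth bound on the loss value. So the only work is to instantiate the constants $L_1$ and the function $L_2(\cdot)$ and then read off the conclusion.

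First I would verify the Lipschitz property. Since $\ell(f,(x,y)) = \log(1+e^{-yf(x)})$ and the softplus map $t \mapsto \log(1+e^t)$ has derivative $\sigma(t) \in (0,1)$, for any predictions $f(x),\widetilde f(x)$ and any label $y\in\{-1,1\}$ we get $\left|\ell(f,(x,y)) - \ell(\widetilde f,(x,y))\right| \le \left|(-yf(x)) - (-y\widetilde f(x))\right| = |y|\cdot|f(x)-\widetilde f(x)| = |f(x)-\widetilde f(x)|$, so we may take $L_1 = 1$. Next, the value bound: $\ell(f,(x,y)) = \log(1+e^{-yf(x)}) \le \log\!\left(1+e^{|f(x)|}\right)$, so $L_2(t) = \log(1+e^t)$ works and is monotonically increasing as needed. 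In particular $L_2(B) = \log(1+e^B)$ since $\|f\|_\infty \le B$.

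Plugging $L_1 = 1$ and $L_2(B) = \log(1+e^B)$ into Lemma~\ref{lem:multigengap2} then yields exactly the stated bound, with the $\widetilde O(\cdot)$ still absorbing $\log n$, $\log x_{\max}$, and $\log\|\theta\|_2$. There is essentially no obstacle at this stage: all the substantive work — the refined metric-entropy bound for the function class controlled by $\sum_{i:\,b_i^{(1)}/w_i^{(1)}\in[-x_{\max},x_{\max}]}\left|w_i^{(1)}w_i^{(2)}\right|$ (which upper bounds $\TV^{(1)}$ of $f_\theta$ restricted to the support), together with the simultaneous ``doubling trick'' over both that quantity and $\|\theta\|_2$ — was already carried out in Lemma~\ref{lem:multigengap2}. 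The one point worth keeping straight is that $\|\theta\|_2$ enters only through the low-order $|f(0)|,|f'(0)|$ contributions to the covering number, hence only logarithmically, which is precisely why that dependence is relegated to the $\widetilde O(\cdot)$ rather than appearing in the polynomial part of the rate.
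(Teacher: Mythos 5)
Your proposal is correct and matches the paper's argument: the paper also obtains this corollary by verifying that logistic loss satisfies the hypotheses of the refined lemma with $L_1=1$ (via the softplus being $1$-Lipschitz) and $L_2(t)=\log(1+e^t)$, and then plugging these into Lemma~\ref{lem:multigengap2}. Your closing remark about $\|\theta\|_2$ entering only through the logarithmic covering-number terms is likewise exactly how the paper justifies relegating that dependence to the $\widetilde{O}(\cdot)$.
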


\section{Proof of Theorem \ref{thm:bias}}\label{app:pfmain}

In this section, we prove the implicit bias of minima stability. We restate Theorem \ref{thm:bias} below. 

\begin{theorem}[Restate Theorem \ref{thm:bias}]\label{thm:rebias}
For a threshold $\gamma>0$, define the ``uncertain'' data points with respect to $f_\theta$ as $\cA_\gamma = \left\{x_i\in\cD\; \bigg|\; \left|f_\theta(x_i)\right|\leq \gamma \right\}$ and let $n_\gamma=\left|\cA_\gamma\right|$. Define the function $\Gamma(\gamma)=\frac{e^{-\gamma}}{\left(1+e^{-\gamma}\right)^2}=\Omega(e^{-\gamma})$. Then for any function $f=f_\theta$ such that the training loss $\cL$ is twice differentiable at $\theta$ and any $\gamma>0$\footnote{W.l.o.g. we assume that $x_{\max} \geq 1$. If this does not hold, we can directly replace $x_{\max}$ in the bounds by $1$.},
\begin{equation}\label{equ:rebias}
   1+2\int_{-x_{\max}}^{x_{\max}} \left|f^{\prime\prime}(x)\right|h_\gamma(x)dx\leq \frac{n}{n_\gamma \Gamma(\gamma)}\left(\lambda_{\max}(\nabla^2\loss(\theta))+2x_{\max}\cL(f)\right),
\end{equation}
where $h_\gamma(x)=\min\left\{h^+_\gamma(x),h^-_\gamma(x)\right\}$ and
$$h^+_\gamma(x)=\P^2\left(X>x\;\big|\;X\in\cA_{\gamma}\right)\cdot\sqrt{1+\left(\E\left[X\;\big|\;X\in\cA_{\gamma},\;X>x\right]\right)^2}\cdot\E\left[X-x\;\bigg|X\in\cA_{\gamma},\;X>x\right],$$
$$
h^-_\gamma(x)=\P^2\left(X<x\;\big|\;X\in\cA_{\gamma}\right)\cdot\sqrt{1+\left(\E\left[X\;\big|\;X\in\cA_{\gamma},\;X<x\right]\right)^2}\cdot\E\left[x-X\;\bigg|X\in\cA_{\gamma},\;X<x\right],$$
where for both functions, $X$ is a random sample from the dataset under the uniform distribution. Moreover, if $f=f_\theta$ is a stable solution in $\cF(\eta,\cD)$, we can replace $\lambda_{\max}(\nabla^2\loss(\theta))$ in \eqref{equ:rebias} by $\frac{2}{\eta}$.
\end{theorem}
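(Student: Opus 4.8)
The plan is to test the Hessian $\nabla_\theta^2\loss(\theta)$ against a single, carefully chosen unit vector $v$, lower-bound the resulting quadratic form, and then reorganize it into the claimed weighted total-variation inequality.

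First I would reduce to the Gram matrix restricted to $\cA_\gamma$. Using the Hessian formula from Appendix~\ref{app:calculate},
\[
v^T\nabla_\theta^2\loss(\theta)v = \frac1n\sum_{i=1}^n \frac{-y_i e^{-y_i f(x_i)}}{1+e^{-y_i f(x_i)}}\,v^T\nabla_\theta^2 f(x_i)v + \frac1n\sum_{i=1}^n \frac{e^{-y_i f(x_i)}}{(1+e^{-y_i f(x_i)})^2}\,(v^T\nabla_\theta f(x_i))^2 .
\]
Since $\loss$ is twice differentiable at $\theta$, no knot coincides with a data point, so only the $w^{(1)}$–$w^{(2)}$ and $b^{(1)}$–$w^{(2)}$ blocks of $\nabla_\theta^2 f(x_i)$ survive, and Lemma~\ref{lem:operatornorm} gives $|v^T\nabla_\theta^2 f(x_i)v|\le 2x_{\max}$ for unit $v$ (taking $x_{\max}\ge1$ w.l.o.g.). Combined with $\frac{e^{-y_if(x_i)}}{1+e^{-y_if(x_i)}}\le\log(1+e^{-y_if(x_i)})=\ell_i(\theta)$ from Lemma~\ref{lem:log}, the first sum is at least $-2x_{\max}\cL(f)$. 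For the second sum, $\frac{e^{-t}}{(1+e^{-t})^2}=\Gamma(|t|)$ is even and decreasing in $|t|$, so each coefficient with $x_i\in\cA_\gamma$ is at least $\Gamma(\gamma)$; discarding the remaining (nonnegative) terms yields
\[
\lambda_{\max}(\nabla^2\loss(\theta)) \ge v^T\nabla^2\loss(\theta)v \ge -2x_{\max}\cL(f) + \frac{\Gamma(\gamma)\,n_\gamma}{n}\,\E\big[(v^T\nabla_\theta f(X))^2 \,\big|\, X\in\cA_\gamma\big].
\]
After rearranging, it suffices to exhibit, for each $\gamma>0$, a unit $v$ with $\E[(v^T\nabla_\theta f(X))^2\mid X\in\cA_\gamma]\ge 1+2\int_{-x_{\max}}^{x_{\max}}|f''(x)|\,h_\gamma(x)\,dx$.

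For the test vector I would take $v\propto e_{b^{(2)}}+u$, where $e_{b^{(2)}}$ is the output-bias coordinate (so $\nabla_{b^{(2)}}f\equiv1$, which supplies the leading $1$) and $u$ is supported on the first-layer coordinates $\{w_j^{(1)},b_j^{(1)}\}$ of the neurons whose knot $t_j$ lies in $(-x_{\max},x_{\max})$, with entries of size proportional to $|w_j^{(1)}|$ and signs chosen so that $u^T\nabla_\theta f(x)\ge0$ for all $x$. Expanding $\E[(v^T\nabla_\theta f(X))^2\mid\cA_\gamma] = \big(1+2\,\E[u^T\nabla_\theta f(X)\mid\cA_\gamma]+\E[(u^T\nabla_\theta f(X))^2\mid\cA_\gamma]\big)/(1+\|u\|_2^2)$, and using that $\nabla_{b_j^{(1)}}f(x)=w_j^{(2)}\,\mathds{1}(\text{neuron } j \text{ active at } x)$ is supported on one side of $t_j$ (to the right if $w_j^{(1)}>0$, to the left otherwise), the cross term contributes, per neuron, $|w_j^{(1)}w_j^{(2)}|$ times a conditional-probability-weighted mean excess of $X$ past $t_j$ on the relevant side. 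Optimizing the overall scale of $u$ and using Cauchy–Schwarz to absorb the $\sqrt{1+\bar x^2}$ normalization (with $\bar x$ the conditional mean of $X$ on that side) produces precisely the factor $h^{+}_\gamma(t_j)$ or $h^{-}_\gamma(t_j)$ in front of $|w_j^{(1)}w_j^{(2)}|$; since a knot at a given location may come from neurons of either orientation, the guarantee that holds uniformly is $\min\{h^+_\gamma(t_j),h^-_\gamma(t_j)\}=h_\gamma(t_j)$. Finally $\int|f''(x)|h_\gamma(x)\,dx\le\sum_j |w_j^{(1)}w_j^{(2)}|\,h_\gamma(t_j)$ by the triangle inequality over the point masses of $f''$, which closes the argument and gives \eqref{equ:rebias}; replacing $\lambda_{\max}(\nabla^2\loss(\theta))$ by $\tfrac2\eta$ for $f_\theta\in\cF(\eta,\cD)$ is immediate from \eqref{eq:masterset}.

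The hard part is this last construction: the bookkeeping that turns $\E[(u^T\nabla_\theta f(X))^2\mid\cA_\gamma]$ and $\|u\|_2^2$, after the optimal choice of the scale and signs of $u$, into the exact product structure $\P^2(\cdot)\sqrt{1+\bar x^2}\,\E[X-x\mid\cdot]$ defining $h^\pm_\gamma$. The calibration must be tight enough that $\sum_j|w_j^{(1)}w_j^{(2)}|h_\gamma(t_j)$ — which is what a single rank-one test direction can detect — does not overshoot $\lambda_{\max}$, and this is precisely why the probability factor appears squared and why one is forced to the minimum of the two one-sided weights. A minor additional point is verifying that neurons with no knot in $(-x_{\max},x_{\max})$, or with $w_j^{(1)}=0$, act like extra bias terms and may be assigned zero weight in $u$ without affecting anything.
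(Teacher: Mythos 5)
Your reduction is fine and matches the paper: for any unit $v$, Lemma \ref{lem:operatornorm} plus Lemma \ref{lem:log} give $v^T\nabla^2_\theta\cL(\theta)v\geq -2x_{\max}\cL(f)+\frac{\Gamma(\gamma)n_\gamma}{n}\E\big[(v^T\nabla_\theta f(X))^2\mid X\in\cA_\gamma\big]$, so it indeed suffices to exhibit one unit direction with $\E[(v^T\nabla_\theta f(X))^2\mid\cA_\gamma]\geq 1+2\int|f''|h_\gamma$. The gap is in the construction you flag as ``the hard part'': a test vector supported only on $e_{b^{(2)}}$ and the first-layer coordinates $\{w_j^{(1)},b_j^{(1)}\}$ cannot certify this bound, no matter how you tune the scale and signs. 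The reason is a layer-asymmetry in the gradient: the first-layer entries $\nabla_{w_j^{(1)}}f=xw_j^{(2)}\mathds{1}_j$, $\nabla_{b_j^{(1)}}f=w_j^{(2)}\mathds{1}_j$ are proportional to $w_j^{(2)}$, while the factor $|w_j^{(1)}|$ in the target enters only through the omitted second-layer entry $\nabla_{w_j^{(2)}}f=\phi(w_j^{(1)}x+b_j^{(1)})$. Concretely, take one neuron with $w^{(1)}=M$, $w^{(2)}=s/M$ (so $f(x)=s\,\phi(x-\tau)+b^{(2)}$ is bounded, $\cA_\gamma$ can be the whole dataset, and $|w^{(1)}w^{(2)}|h_\gamma(\tau)=s\,h_\gamma(\tau)$ is fixed). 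With your $v\propto e_{b^{(2)}}+u$, $u=\beta|w^{(1)}|(a,b)$ on the first-layer block, one gets $\E[(v^T\nabla f)^2\mid\cA_\gamma]\leq\frac{1+2\beta s\,O(1)+\beta^2s^2O(1)}{1+\beta^2M^2}$, whose supremum over $\beta$ is $1+O(s^2/M^2)$; this falls below the required $1+2s\,h_\gamma(\tau)$ once $M$ is large. So the claimed cross-term/quadratic-term bookkeeping cannot produce the $|w_j^{(1)}w_j^{(2)}|\,h_\gamma(t_j)$ terms in general.

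The paper avoids this by working in the dual of the Gram term: $\lambda_{\max}\big(\tfrac1n\Phi D\Phi^T\big)=\tfrac1n\max_{u\in S^{n-1}}\|\Phi D^{1/2}u\|_2^2$ with $u$ indexed by \emph{data points}, chosen as $u_j=\alpha D_{j,j}^{-1/2}$ on $\cA_\gamma$ (equations \eqref{equ:u}--\eqref{equ:i}); in primal terms this is testing against $v\propto\E[\nabla_\theta f(X)\mid X\in\cA_\gamma]$, which has components in \emph{all} blocks, including $w^{(2)}$ and $b^{(2)}$. The product $|w_i^{(1)}w_i^{(2)}|$ then arises from an AM--GM step between the $(w_i^{(1)},b_i^{(1)})$-block contribution $\propto (w_i^{(2)})^2\P^2(\cC_i\mid\cA_\gamma)(1+\E[X\mid\cC_i]^2)$ and the $w_i^{(2)}$-block contribution $\propto\P^2(\cC_i\mid\cA_\gamma)\E[w_i^{(1)}X+b_i^{(1)}\mid\cC_i]^2$, which is also where the squared probability, the factor $\sqrt{1+\E[X\mid\cC_i]^2}$, and the mean excess $\E[X-\tau_i\mid\cC_i]$ all come from; the minimum defining $h_\gamma$ and the final step $\sum_i|w_i^{(1)}w_i^{(2)}|h_\gamma(\tau_i)\geq\int|f''|h_\gamma$ are as you describe. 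To repair your argument, replace your prescribed ansatz (first-layer support, entries $\propto|w_j^{(1)}|$) by this conditional-mean-gradient direction (or equivalently run the data-space dual argument); the rest of your outline then goes through.
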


Below we prove the theorem. We begin with a decomposition of the Hessian matrix.

\subsection{Decomposition of the Hessian Matrix}

Suppose the logistic loss $\cL(\theta)=\cL(f_\theta)$ is twice differentiable at $\theta$, it holds that
\begin{equation}\label{equ:decomp}
    \begin{split}
        &\lambda_{\max}\left(\nabla^2_\theta \cL(\theta)\right)
        \geq v^T \nabla^2_\theta \cL(\theta) v
        \\=&\underbrace{\lambda_{\max}\left(\frac{1}{n}\sum_{i=1}^n \frac{e^{-y_i f(x_i)}}{\left(1+e^{-y_i f(x_i)}\right)^2}\nabla_\theta f(x_i)\nabla_\theta f(x_i)^T\right)}_{(i)}+\underbrace{\frac{1}{n}\sum_{i=1}^n \frac{-y_i e^{-y_i f(x_i)}}{1+e^{-y_i f(x_i)}}v^T\nabla_\theta^2 f(x_i)v}_{(ii)},
    \end{split}
\end{equation}
where $v$ is the unit eigenvector of the largest eigenvalue of $\frac{1}{n}\sum_{i=1}^n \frac{e^{-y_i f(x_i)}}{\left(1+e^{-y_i f(x_i)}\right)^2}\nabla_\theta f(x_i)\nabla_\theta f(x_i)^T$. Then it remains to handle the terms (i) and (ii).

\subsection{Upper Bounding the Term (ii)}
We upper bound the absolute value of (ii) by the empirical loss. It holds that
\begin{equation}\label{equ:ii}
    \begin{split}
        &|(ii)|\leq \frac{1}{n}\sum_{i=1}^n \frac{e^{-y_i f(x_i)}}{1+e^{-y_i f(x_i)}}\left|v^T\nabla_\theta^2 f(x_i)v\right|\\\leq&
        2\max\{x_{\max},1\}\cdot\frac{1}{n}\sum_{i=1}^n \log\left(1+e^{-y_i f(x_i)}\right)\\=& 2\max\{x_{\max},1\}\cdot\cL(f),
    \end{split}
\end{equation}
where the second inequality results from the uniform upper bound of $\left|v^T\nabla_\theta^2 f(x)v\right|$ (Lemma \ref{lem:operatornorm}) and Lemma \ref{lem:log}. As the optimization converges and the empirical loss $\cL(f)$ becomes small, the term $|(ii)|$ will also be small.

\subsection{Handling the Term (i)}
Let $\Phi=\left[\nabla_\theta f_\theta(x_1),\cdots,\nabla_\theta f_\theta(x_n)\right]\in\R^{(3k+1)\times n}$ and $D=\mathrm{Diag}\left\{\frac{e^{-y_i f(x_i)}}{\left(1+e^{-y_i f(x_i)}\right)^2}\right\}_{i=1}^n\in\R^{n\times n}$, a diagonal matrix whose $i$-th diagonal element is $\frac{e^{-y_i f(x_i)}}{\left(1+e^{-y_i f(x_i)}\right)^2}$. Then we have $(i)=\lambda_{\max}\left(\frac{1}{n}\Phi D \Phi^T\right)$, and we want to connect it to $\int_{-x_{\max}}^{x_{\max}}\left|f^{\prime\prime}(x)\right|dx$. Lemma 4 in \citet{mulayoff2021implicit} states that $\lambda_{\max}\left(\frac{1}{n}\Phi\Phi^T\right)\geq 1+2\int_{-x_{\max}}^{x_{\max}}\left|f^{\prime\prime}(x)\right|g(x)dx$ for some weight function $g$. However, in our case, the diagonal components of $D$ can be arbitrarily close to $0$, and therefore $\lambda_{\max}\left(\frac{1}{n}\Phi D\Phi^T\right)$ can be much smaller than $\lambda_{\max}\left(\frac{1}{n}\Phi\Phi^T\right)$, raising some technical challenges.

Note that the value of $\frac{e^{-y_i f(x_i)}}{\left(1+e^{-y_i f(x_i)}\right)^2}$ being large is equivalent to the value of $|f(x_i)|$ being small. Therefore, we only consider the data points (\emph{i.e.} the corresponding diagonal components in $D$) where $|f_\theta(x)|$ is small (\emph{i.e.} the prediction is not very certain). For the other components in $D$, we simply lower bound them by $0$. More specifically, we choose a threshold $\gamma>0$ which is a small constant, and define the ``uncertain set'' $\cA_\gamma$ as:
\begin{equation}
    \cA_\gamma = \left\{x_i\in\cD\; \bigg|\; \left|f_\theta(x_i)\right|\leq \gamma \right\}=\left\{x_i\in\cD\; \bigg|\; \frac{e^{-y_i f(x_i)}}{\left(1+e^{-y_i f(x_i)}\right)^2}\geq \frac{e^{-\gamma}}{\left(1+e^{-\gamma}\right)^2}:=\Gamma(\gamma) \right\}.
\end{equation}

According to direct calculation, we have
\begin{equation}
    \lambda_{\max}\left(\frac{1}{n}\Phi D\Phi^T\right)=\max_{v\in S^{3k}}\frac{1}{n}v^T \Phi D\Phi^T v=\frac{1}{n}\max_{v\in S^{3k}}\left\|\left(\Phi D^{\frac{1}{2}}\right)^T v\right\|_2^2=\frac{1}{n}\max_{u\in S^{n-1}}\left\|\Phi D^{\frac{1}{2}}u\right\|_2^2.
\end{equation}
We can choose $u=[u_1,\cdots,u_n]^T\in\R^n$ to be (for some constant $\alpha>0$):
\begin{equation}\label{equ:u}
u_j = 
\begin{cases}
    \alpha D_{j,j}^{-\frac{1}{2}}, \;\;\;\; x_j\in\cA_\gamma,\\
    0,\;\;\;\;\;\;\;\;\;\;\;\;\text{otherwise},
\end{cases}
\end{equation}
such that $\|u\|_2=1$. Defining $n_\gamma=|\cA_\gamma|$ (the number of uncertain data points), it is clear that $\alpha\geq \sqrt{\frac{\Gamma(\gamma)}{n_\gamma}}$. 

Let $I_{j,i}=\mathds{1}\left(w_i^{(1)}x_j+b_i^{(1)}>0\right)$, according to a similar analysis as \citet{mulayoff2021implicit}, we have
\begin{equation}
    \begin{split}
        &\lambda_{\max}\left(\frac{1}{n}\Phi D\Phi^T\right)=\frac{1}{n}\max_{u\in S^{n-1}}\left\|\Phi D^{\frac{1}{2}}u\right\|_2^2\\\geq& \frac{n_\gamma \Gamma(\gamma)}{n}+
        \frac{\alpha^2}{n}\cdot\sum_{i=1}^k\left[\left(\sum_{x_j\in\cA_\gamma}x_j I_{j,i}w_i^{(2)}\right)^2+\left(\sum_{x_j\in\cA_\gamma}I_{j,i}w_i^{(2)}\right)^2+\left(\sum_{x_j\in\cA_\gamma}\phi\left(w_i^{(1)}x_j+b_i^{(1)}\right)\right)^2\right]\\\geq& \frac{n_\gamma \Gamma(\gamma)}{n}+
        \frac{2\Gamma(\gamma)}{n\cdot n_\gamma}\cdot\sum_{i=1}^k\left|w_i^{(2)}\right|\sqrt{\left(\sum_{x_j\in\cA_\gamma}x_j I_{j,i}\right)^2+\left(\sum_{x_j\in\cA_\gamma}I_{j,i}\right)^2}\cdot\left|\sum_{x_j\in\cA_\gamma}\phi\left(w_i^{(1)}x_j+b_i^{(1)}\right)\right|,
    \end{split}
\end{equation}
where the first inequality holds from the choice of $u$ \eqref{equ:u}. The second inequality holds because of the fact $\alpha\geq \sqrt{\frac{\Gamma(\gamma)}{n_\gamma}}$ and AM-GM inequality.

\begin{remark}
The case in \citet{mulayoff2021implicit} (and also \citet{qiao2024stable}) can be considered as a special case of the analysis above. Plugging in $D_{j,j}=1$, $\Gamma(\gamma)=1$, $\cA_\gamma=\{x_i,\;i\in[n]\}$ and $n_\gamma=n$, our analysis will recover the analysis in \citet{mulayoff2021implicit}.
\end{remark}

For $i\in [k]$, define $\cC_{i}=\left\{x_j\in\cA_\gamma\;\big|\;I_{j,i}=1\right\}$, the data points in $\cA_\gamma$ such that the $i$-th neuron is active. Let $n_{i}=\sum_{x_j\in\cA_\gamma}I_{j,i}=|\cC_i|$ for $i\in[k]$. According to a reformulation, we have
\begin{equation}\label{equ:main}
    \begin{split}
        &\frac{2\Gamma(\gamma)}{n\cdot n_\gamma}\cdot\sum_{i=1}^k\left|w_i^{(2)}\right|\sqrt{\left(\sum_{x_j\in\cA_\gamma}x_jI_{j,i}\right)^2+\left(\sum_{x_j\in\cA_\gamma}I_{j,i}\right)^2}\cdot\left|\sum_{x_j\in\cA_\gamma}\phi\left(w_i^{(1)}x_j+b_i^{(1)}\right)\right|\\=&
        2\Gamma(\gamma)\cdot\frac{n_\gamma}{n}\cdot\sum_{i=1}^k\left|w_i^{(2)}\right|\cdot \P^2\left(X\in\cC_{i}\;\big|\;X\in\cA_{\gamma}\right)\cdot\sqrt{1+\left(\E\left[X\;\big|\;X\in\cC_{i}\right]\right)^2}\cdot\E\left[w_i^{(1)}X+b_i^{(1)}\;\bigg|\;X\in\cC_{i}\right],
    \end{split}
\end{equation}
where $X$ is a random sample from the dataset under the uniform distribution. We define the threshold of the $i$-th neuron as: for $i\in[k]$,
\begin{equation}
    \tau_i=\begin{cases}
        -\frac{b_i^{(1)}}{w_i^{(1)}},\;\;\;\;\;\;w_i^{(1)}\neq0,\\
        0,\;\;\;\;\;\;\;\;\;\;\;\;\;w_i^{(1)}=0.
    \end{cases}
\end{equation}

Then it holds that
\begin{equation}\label{equ:1}
    \begin{split}
        &
        2\Gamma(\gamma)\cdot\frac{n_\gamma}{n}\cdot\sum_{i=1}^k\left|w_i^{(2)}\right|\cdot \P^2\left(X\in\cC_{i}\;\big|\;X\in\cA_{\gamma}\right)\cdot\sqrt{1+\left(\E\left[X\;\big|\;X\in\cC_{i}\right]\right)^2}\cdot\E\left[w_i^{(1)}X+b_i^{(1)}\;\bigg|\;X\in\cC_{i}\right]\\\geq&2\Gamma(\gamma)\cdot\frac{n_\gamma}{n}\cdot\sum_{i=1}^k\left|w_i^{(2)}w_i^{(1)}\right|\cdot \P^2\left(X\in\cC_{i}\;\big|\;X\in\cA_{\gamma}\right)\cdot\sqrt{1+\left(\E\left[X\;\big|\;X\in\cC_{i}\right]\right)^2}\cdot\left|\E\left[X-\tau_i\;\bigg|\;X\in\cC_{i}\right]\right|,
    \end{split}
\end{equation}
where the inequality is because for some neurons, $w_i^{(1)}$ may be $0$, while $\E\left[w_i^{(1)}X+b_i^{(1)}\;\bigg|\;X\in\cC_{i}\right]$ can still be positive. Note that when $w_i^{(1)}\neq0$, $\cC_i$ can be either $\{X>\tau_i,\;X\in\cA_\gamma\}$ or $\{X<\tau_i,\;X\in\cA_\gamma\}$, and we choose the minimum among these two to lower bound the R.H.S of \eqref{equ:1}. More specifically, we define the following two functions with respect to the two choices of $\cC_i$:  
\begin{equation}\label{equ:+}
h^+_\gamma(x)=\P^2\left(X>x\;\big|\;X\in\cA_{\gamma}\right)\cdot\sqrt{1+\left(\E\left[X\;\big|\;X\in\cA_{\gamma},\;X>x\right]\right)^2}\cdot\E\left[X-x\;\bigg|X\in\cA_{\gamma},\;X>x\right],
\end{equation}
\begin{equation}\label{equ:-}
h^-_\gamma(x)=\P^2\left(X<x\;\big|\;X\in\cA_{\gamma}\right)\cdot\sqrt{1+\left(\E\left[X\;\big|\;X\in\cA_{\gamma},\;X<x\right]\right)^2}\cdot\E\left[x-X\;\bigg|X\in\cA_{\gamma},\;X<x\right],
\end{equation}
where for both functions, $X$ is a random sample from the dataset under the uniform distribution. Moreover, let $h_\gamma(x)=\min\left\{h^+_\gamma(x),h^-_\gamma(x)\right\}$, we have
\begin{equation}
\P^2\left(X\in\cC_{i}\;\big|\;X\in\cA_{\gamma}\right)\cdot\sqrt{1+\left(\E\left[X\;\big|\;X\in\cC_{i}\right]\right)^2}\cdot\left|\E\left[X-\tau_i\;\bigg|\;X\in\cC_{i}\right]\right|\geq h_\gamma(\tau_i).
\end{equation}

Plugging this into \eqref{equ:1}, it holds that
\begin{equation}\label{equ:laststep}
\begin{split}
    &2\Gamma(\gamma)\cdot\frac{n_\gamma}{n}\cdot\sum_{i=1}^k\left|w_i^{(2)}\right|\cdot \P^2\left(X\in\cC_{i}\;\big|\;X\in\cA_{\gamma}\right)\cdot\sqrt{1+\left(\E\left[X\;\big|\;X\in\cC_{i}\right]\right)^2}\cdot\E\left[w_i^{(1)}X+b_i^{(1)}\;\bigg|\;X\in\cC_{i}\right]\\\geq&
    2\Gamma(\gamma)\cdot\frac{n_\gamma}{n}\cdot\sum_{i=1}^k\left|w_i^{(1)}w_i^{(2)}\right|\cdot h_\gamma(\tau_i)\geq2\Gamma(\gamma)\cdot\frac{n_\gamma}{n}\cdot\int_{-x_{\max}}^{x_{\max}} \left|f^{\prime\prime}(x)\right|h_\gamma(x)dx,
\end{split}
\end{equation}

where the last inequality holds since the sum of absolute values is not smaller than the absolute value of the sum. As a result, we can lower bound (i) as:
\begin{equation}\label{equ:i}
    (i)=\lambda_{\max}\left(\frac{1}{n}\Phi D \Phi^T\right)\geq \frac{n_\gamma \Gamma(\gamma)}{n}+2\Gamma(\gamma)\cdot\frac{n_\gamma}{n}\cdot\int_{-x_{\max}}^{x_{\max}} \left|f^{\prime\prime}(x)\right|h_\gamma(x)dx.
\end{equation}

\subsection{Putting Everything Together}
Combining \eqref{equ:decomp}, \eqref{equ:ii} and \eqref{equ:i}, we have for any choice of $\gamma>0$,
\begin{equation}
    \frac{n_\gamma \Gamma(\gamma)}{n}\left(1+2\int_{-x_{\max}}^{x_{\max}} \left|f^{\prime\prime}(x)\right|h_\gamma(x)dx\right)\leq \lambda_{\max}\left(\nabla^2_\theta \cL(\theta)\right)+2\max\{x_{\max},1\}\cdot\cL(f).
\end{equation}

Or equivalently,
\begin{equation}\label{equ:appmain}
    1+2\int_{-x_{\max}}^{x_{\max}} \left|f^{\prime\prime}(x)\right|h_\gamma(x)dx\leq \frac{n}{n_\gamma \Gamma(\gamma)}\left(\lambda_{\max}\left(\nabla^2_\theta \cL(\theta)\right)+2\max\{x_{\max},1\}\cdot\cL(f)\right),
\end{equation}
which finishes the proof of the first part.

If $f=f_\theta$ is a stable solution of GD with learning rate $\eta$, according to the definition of $\cF(\eta,\cD)$ \eqref{eq:masterset}, we can replace the $\lambda_{\max}(\nabla^2\loss(\theta))$ by $\frac{2}{\eta}$.

\subsection{Some discussions about the parameters}\label{app:discussparam}
For reference, the notations are listed below.
\begin{enumerate}
    \item $\cA_\gamma = \left\{x_i\in\cD\; \bigg|\; \left|f_\theta(x_i)\right|\leq \gamma \right\}$, $n_\gamma=\left|\cA_\gamma\right|$.
    \item $\Gamma(\gamma)=\frac{e^{-\gamma}}{\left(1+e^{-\gamma}\right)^2}$.
    \item $\eta$ is the learning rate of GD, $\cL(f)$ is the empirical logistic loss.
    \item $h_\gamma(x)=\min\left\{h^+_\gamma(x),h^-_\gamma(x)\right\}$, defined in \eqref{equ:+} and \eqref{equ:-}.
\end{enumerate}

\noindent\textbf{Some discussions.} For the result to be meaningful, it is required that $h_\gamma(x)$ is nondegenerate and the R.H.S of \eqref{equ:appmain} is not very large. Therefore, the result will be meaningful under the following conditions: 

\begin{enumerate}
    \item $\gamma>0$ is a constant that is not very large, such that $\Gamma(\gamma)$ is not very small.
    \item For the chosen $\gamma$, the function $f_\theta$ and the dataset satisfy that $\frac{n_\gamma}{n}$ is not very small, \emph{i.e.} for at least a constant portion of the dataset, the prediction of $f_\theta$ is not very certain.
    \item A constant portion of the data points in $\cA_\gamma$ is close to the boundary of $[-x_{\max},x_{\max}]$. In this way, for $x$ in the interior of $\cA_\gamma$, the value of $h_\gamma(x)$ can be lower bounded.
\end{enumerate}

\section{Proof of Theorem \ref{thm:tvb}}\label{app:pftvb}
Recall that in Theorem \ref{thm:bias}, we have for any $\gamma>0$, the learned function $f=f_\theta$ satisfies 
\begin{equation}
    1+2\int_{-x_{\max}}^{x_{\max}} \left|f^{\prime\prime}(x)\right|h_\gamma(x)dx\leq \frac{n}{n_\gamma \Gamma(\gamma)}\left(\lambda_{\max}(\nabla^2\loss(\theta))+2x_{\max}\cL(f)\right),
\end{equation}

where both the L.H.S ($h_\gamma$) and the R.H.S ($n_\gamma$) depend on the output function $f_\theta$. Therefore, the result is not stable since the output function $f_\theta$ can be arbitrary functions. Below, we will derive a (weighted) TV bound independent of the output function, based on the assumption that the excess risk is diminishing as the number of data points increases.

\begin{assumption}[Restate Assumption \ref{ass:errorsmall}]
    We assume that the learned function $f=f_\theta$ satisfies $\bar{\cL}(f)-\bar{\cL}(f_0)\leq \epsilon(n)$, \emph{i.e.}
    $$\E_{x\sim\cD}\E_{y\sim \cB(x)}\log\left(1+e^{-yf(x)}\right)\leq \E_{x\sim\cD}\E_{y\sim \cB(x)}\log\left(1+e^{-yf_0(x)}\right)+\epsilon(n),$$
    for some function $\epsilon(n)$ such that $\lim_{n\rightarrow\infty}\epsilon(n)=0$.
\end{assumption}

First of all, similar to the ``uncertain set'' $\cA_\gamma$, we define the uncertain set $\bar{\cA}_\gamma$ with respect to $f_0$. The set $\bar{\cA}_\gamma$ contains the region where the prediction of $f_0$ is not very certain.
 
\noindent\textbf{Uncertain set of $f_0$.} For any $\gamma>0$, we define the uncertain set $\bar{\cA}_\gamma=\left\{x\in[-x_{\max},x_{\max}]\;\bigg|\;|f_0(x)|\leq \gamma\right\}$, which includes the data points whose labels have more randomness. Below we will show that if $\epsilon(n)$ is small, then $f_\theta$ and $f_0$ will be close to each other. As a result, the two ``uncertain sets'' $\cA_\gamma$ and $\bar{\cA}_\gamma$ will also be similar.

\noindent\textbf{Closeness of two functions.} In the following lemma, we prove that if we set $\zeta>0$ to be a relatively small constant compared to $\gamma$ (the threshold of the uncertain set $\bar{\cA}$), then the following two probabilities $\P_{x\sim\cD}\left(|f_0(x)|\leq \gamma-\zeta,\;|f_\theta(x)|>\gamma\right)$ and $\P_{x\sim\cD}\left(|f_0(x)|> \gamma+\zeta,\;|f_\theta(x)|\leq\gamma\right)$ are bounded, where $x\sim\cD$ means that $x$ is a uniform sample from the dataset $\{x_i\}_{i=1}^n$.

\begin{lemma}\label{lem:closeness}
    Assume that Assumption \ref{ass:errorsmall} holds for $\epsilon(n)$, then for any $\gamma>\zeta>0$,
    \begin{equation}\label{equ:pr}
    \P_{x\sim\cD}\left(|f_0(x)|\leq \gamma-\zeta,\;|f_\theta(x)|>\gamma\right)+\P_{x\sim\cD}\left(|f_0(x)|> \gamma+\zeta,\;|f_\theta(x)|\leq\gamma\right)\leq O\left(\epsilon(n)\cdot e^{\gamma+\zeta}\cdot \zeta^{-2}\right).
\end{equation}
\end{lemma}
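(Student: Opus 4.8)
The plan is to exploit the strong convexity of the population loss $\bar l_x(f)$ in the prediction value $f$, combined with Assumption \ref{ass:errorsmall}, to show that $f_\theta$ and $f_0$ cannot disagree too much on too large a fraction of the data. Recall from Appendix \ref{app:poploss} that $\bar l_x''(f)=\frac{e^f}{(1+e^f)^2}>0$, so $\bar l_x$ is strongly convex on any bounded interval, and that $f_0(x)$ is the exact minimizer of $f\mapsto \bar l_x(f)$ (Lemma \ref{lem:optimal}). Therefore, by a second-order Taylor expansion around $f_0(x)$ (using $\bar l_x'(f_0(x))=0$), we have, for each dataset point $x_i$, a pointwise excess-risk lower bound $\bar l_{x_i}(f_\theta(x_i))-\bar l_{x_i}(f_0(x_i))\ge \tfrac12 \min_{t\in[f_\theta(x_i)\wedge f_0(x_i),\, f_\theta(x_i)\vee f_0(x_i)]}\bar l_{x_i}''(t)\cdot (f_\theta(x_i)-f_0(x_i))^2$. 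Averaging over $i\in[n]$ and invoking Assumption \ref{ass:errorsmall} gives $\frac1n\sum_i [\text{pointwise curvature}]\cdot (f_\theta(x_i)-f_0(x_i))^2 \le 2\epsilon(n)$.

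The key quantitative step is then to restrict attention to the two ``bad'' events in \eqref{equ:pr} and show each contributes a definite amount to the left-hand side above. On the event $\{|f_0(x)|\le \gamma-\zeta,\ |f_\theta(x)|>\gamma\}$, we have $|f_\theta(x)-f_0(x)|>\zeta$, so $(f_\theta(x)-f_0(x))^2>\zeta^2$; moreover the whole segment between $f_0(x)$ and $f_\theta(x)$ need not stay small, but the portion of it lying in, say, $[-\gamma,\gamma]$ has length at least $\zeta$, and there $\bar l_x''(t)\ge \bar l_x''(\gamma)=\Gamma(\gamma)=\Omega(e^{-\gamma})$. A cleaner route: since $\bar l_x(f)-\bar l_x(f_0(x))$ is convex in $f$ and zero-derivative at $f_0(x)$, it is lower bounded by the integral of $\bar l_x''$, which on the sub-segment from the nearer of $\{f_0(x),\pm\gamma\}$ to $f_\theta(x)$ of length $\ge\zeta$ contributes $\ge \int$ of $\bar l_x''\ge \zeta^2\cdot\Gamma(\gamma+\zeta)/2 = \Omega(\zeta^2 e^{-\gamma-\zeta})$ (the argument $\gamma+\zeta$ appears because on $\{|f_0|> \gamma+\zeta,\ |f_\theta|\le\gamma\}$ the relevant values of the integrand can be as small as $\bar l_x''(\gamma+\zeta)$, whereas the worst case over both events is governed by $\Gamma(\gamma+\zeta)$). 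Hence for any $x$ in either bad event, $\bar l_x(f_\theta(x))-\bar l_x(f_0(x))\ge c\,\zeta^2 e^{-\gamma-\zeta}$ for a universal constant $c>0$.

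Combining, $c\,\zeta^2 e^{-\gamma-\zeta}\cdot \P_{x\sim\cD}(\text{bad events})\le \frac1n\sum_i [\bar l_{x_i}(f_\theta(x_i))-\bar l_{x_i}(f_0(x_i))]=\bar{\cL}(f_\theta)-\bar{\cL}(f_0)\le \epsilon(n)$, which rearranges to $\P_{x\sim\cD}(\text{bad events})\le O(\epsilon(n)\,e^{\gamma+\zeta}\,\zeta^{-2})$, as claimed. The main obstacle is the care needed in the curvature lower bound: when $|f_0(x)|$ is large the local curvature $\bar l_x''(f_0(x))$ is exponentially small, so one cannot simply Taylor-expand at $f_0(x)$ and must instead argue via the portion of the segment between $f_\theta(x)$ and $f_0(x)$ on which the prediction magnitude is controlled (at most $\gamma+\zeta$), where the integrated curvature is of order $\zeta^2 \Gamma(\gamma+\zeta)$; getting the book-keeping right so that $\Gamma(\gamma+\zeta)$ (rather than $\Gamma(\gamma-\zeta)$) appears, matching the stated bound, is the delicate part. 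I would present it with the integral form of the convexity lower bound to sidestep any issue with the segment leaving the region where curvature is under control.
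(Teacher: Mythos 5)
Your proposal is correct and follows essentially the same route as the paper: lower-bound the pointwise population excess loss $\bar l_x(f_\theta(x))-\bar l_x(f_0(x))$ on each of the two bad events by $\Omega\left(\zeta^2 \Gamma(\gamma+\zeta)\right)=\Omega\left(\zeta^2 e^{-\gamma-\zeta}\right)$ using the curvature of $\bar l_x$ on the sub-segment where the argument has magnitude at most $\gamma+\zeta$, then average over the data and invoke Assumption \ref{ass:errorsmall}; the paper executes the same idea by first reducing to the extremal configuration ($f_0(x)=\gamma\mp\zeta$, $f_\theta(x)=\gamma$) and then using monotonicity of $\bar l_x^\prime$ and $\bar l_x^{\prime\prime}$, arriving at the same $\Gamma(\gamma+\zeta)\left(\zeta/2\right)^2$ bound. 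Your integral-form convexity argument correctly sidesteps the pitfall of Taylor-expanding at $f_0(x)$ when $|f_0(x)|$ is large, which is precisely the delicate point.
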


\begin{proof}[Proof of Lemma \ref{lem:closeness}]
Recall that $\bar{l}_x(f):=p\log(1+e^{-f})+(1-p)\log(1+e^f)$, where $p=\sigma(f_0(x))=\frac{e^{f_0(x)}}{1+e^{f_0(x)}}$. Then under the case where $|f_0(x)|\leq \gamma-\zeta,\;|f_\theta(x)|>\gamma$ holds, it holds that
\begin{equation}
\begin{split}
    &\left|\bar{\ell}_x(f_\theta(x))-\bar{\ell}_x(f_0(x))\right|\geq \left|\bar{\ell}_x(\gamma)-\bar{\ell}_x(\gamma-\zeta)\right|\\\geq&\left|\bar{\ell}_x^\prime\left(\gamma-\frac{\zeta}{2}\right)\right|\cdot\frac{\zeta}{2}\\\geq&\left|\bar{\ell}_x^{\prime\prime}\left(\gamma-\frac{\zeta}{2}\right)\right|\cdot\left(\frac{\zeta}{2}\right)^2\\\geq&\frac{e^\gamma}{(1+e^\gamma)^2}\cdot\left(\frac{\zeta}{2}\right)^2,
\end{split}
\end{equation}

where the first inequality holds because the L.H.S takes the minimal value when $f_0(x)=\gamma-\zeta,\;f_\theta(x)=\gamma$. Note that after the first inequality, the $\bar{\ell}_x$ satisfies $f_0(x)=\gamma-\zeta$. Then the second to the last inequalities result from the monotonicity of $\bar{\ell}_x^\prime$ and $\bar{\ell}_x^{\prime\prime}$. Detailed calculations of $\bar{\ell}_x^\prime$ and $\bar{\ell}_x^{\prime\prime}$ can be found in Appendix \ref{app:poploss}.

Similarly, if $|f_0(x)|>\gamma+\zeta,\;|f_\theta(x)|\leq\gamma$ holds, we have 
\begin{equation}
\begin{split}
    &\left|\bar{\ell}_x(f_\theta(x))-\bar{\ell}_x(f_0(x))\right|\geq \left|\bar{\ell}_x(\gamma+\zeta)-\bar{\ell}_x(\gamma)\right|\\\geq&\left|\bar{\ell}_x^\prime\left(\gamma+\frac{\zeta}{2}\right)\right|\cdot\frac{\zeta}{2}\\\geq&\left|\bar{\ell}_x^{\prime\prime}(\gamma+\zeta)\right|\cdot\left(\frac{\zeta}{2}\right)^2\\=&\frac{e^{\gamma+\zeta}}{(1+e^{\gamma+\zeta})^2}\cdot\left(\frac{\zeta}{2}\right)^2,
\end{split}
\end{equation}

where the first inequality holds because the L.H.S takes the minimal value when $f_0(x)=\gamma+\zeta,\;f_\theta(x)=\gamma$. Note that after the first inequality, the $\bar{\ell}_x$ satisfies $f_0(x)=\gamma+\zeta$. Then the second to the last inequalities result from the monotonicity of $\bar{\ell}_x^\prime$ and $\bar{\ell}_x^{\prime\prime}$.

Combining with Assumption \ref{ass:errorsmall}, we have that
\begin{equation}
\begin{split}
    &\epsilon(n)\geq\E_{x\sim\cD}\left|\bar{\ell}_x(f_\theta(x))-\bar{\ell}_x(f_0(x))\right|\\\geq&\left(\P_{x\sim\cD}\left(|f_0(x)|\leq \gamma-\zeta,\;|f_\theta(x)|>\gamma\right)+\P_{x\sim\cD}\left(|f_0(x)|> \gamma+\zeta,\;|f_\theta(x)|\leq\gamma\right)\right)\cdot\frac{e^{\gamma+\zeta}}{(1+e^{\gamma+\zeta})^2}\cdot\left(\frac{\zeta}{2}\right)^2.
\end{split}
\end{equation}

Reformulating the inequality, we have
\begin{equation}
    \P_{x\sim\cD}\left(|f_0(x)|\leq \gamma-\zeta,\;|f_\theta(x)|>\gamma\right)+\P_{x\sim\cD}\left(|f_0(x)|> \gamma+\zeta,\;|f_\theta(x)|\leq\gamma\right)\leq O\left(\epsilon(n)\cdot e^{\gamma+\zeta}\cdot \zeta^{-2}\right),
\end{equation}
which finishes the proof.
\end{proof}

To transfer \eqref{equ:bias} to an inequality independent of $f_\theta$, what remains is to upper bound the R.H.S and lower bound the L.H.S. In another word, we need to lower bound the weight function $h_\gamma(x)$ and the cardinality $n_\gamma$. We will first derive a result for any finite $n$ that depends only on $f_0$ and the feature set. When considering the asymptotic performance (\emph{i.e.} $n\rightarrow\infty$), we assume that the features $\{x_i\}_{i=1}^n$ are i.i.d. samples from some underlying distribution $\cP_x$ supported on $[-x_{\max},x_{\max}]$. We begin with the lower bound of $n_\gamma$.

\noindent\textbf{Lower bound of $n_\gamma/n$.} Recall that $n_\gamma$ is the cardinality of $\cA_\gamma$ in Theorem \ref{thm:bias}. Below we incorporate Lemma \ref{lem:closeness} to replace the term with the number of data points in $\bar{\cA}$.

\begin{lemma}\label{lem:ngamma}
    For any fixed constants $\gamma>\zeta>0$, if Assumption \ref{ass:errorsmall} holds with $\epsilon(n)$, then
    \begin{equation}
        \frac{n_\gamma}{n}=\frac{\left|\cA_\gamma\right|}{n}\geq\P_{X\sim\cD}\left(X\in\bar{\cA}_{\gamma-\zeta}\right)-O\left(\epsilon(n)\cdot e^{\gamma+\zeta}\cdot \zeta^{-2}\right).
    \end{equation}
    Furthermore, when $n$ converges to infinity, the R.H.S will converge in probability:
    \begin{equation}
      \P_{X\sim\cD}\left(X\in\bar{\cA}_{\gamma-\zeta}\right)-O\left(\epsilon(n)\cdot e^{\gamma+\zeta}\cdot \zeta^{-2}\right)\overset{p}{\rightarrow}\P_{x\sim\cP_x}\left(x\in\bar{\cA}_{\gamma-\zeta}\right).
    \end{equation}
\end{lemma}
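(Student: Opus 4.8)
The plan is to lower bound $n_\gamma/n=\P_{X\sim\cD}(|f_\theta(X)|\le\gamma)$ (where $X\sim\cD$ denotes a uniform draw from the design, so this probability equals $|\cA_\gamma|/n$) by restricting to the subevent on which $f_0$ is also uncertain, and then correcting the overlap using Lemma \ref{lem:closeness}. Since the event $\{|f_\theta(X)|\le\gamma\}$ contains $\{|f_0(X)|\le\gamma-\zeta\}\cap\{|f_\theta(X)|\le\gamma\}$, and the latter has probability $\P_{X\sim\cD}(|f_0(X)|\le\gamma-\zeta)-\P_{X\sim\cD}(|f_0(X)|\le\gamma-\zeta,\ |f_\theta(X)|>\gamma)$, we obtain
$$\frac{n_\gamma}{n}\ \ge\ \P_{X\sim\cD}\big(X\in\bar\cA_{\gamma-\zeta}\big)-\P_{X\sim\cD}\big(|f_0(X)|\le\gamma-\zeta,\ |f_\theta(X)|>\gamma\big),$$
using the definition $\bar\cA_{\gamma-\zeta}=\{x:|f_0(x)|\le\gamma-\zeta\}$ for the first term. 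The subtracted term is exactly the first of the two nonnegative summands on the left-hand side of the bound in Lemma \ref{lem:closeness}, hence it is at most $O(\epsilon(n)e^{\gamma+\zeta}\zeta^{-2})$; this yields the first claimed inequality.

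For the asymptotic statement, I would argue as follows. Because $\gamma$ and $\zeta$ are fixed constants, $e^{\gamma+\zeta}\zeta^{-2}$ is a fixed constant, so the correction term $O(\epsilon(n)e^{\gamma+\zeta}\zeta^{-2})$ tends to $0$ deterministically as $n\to\infty$ by Assumption \ref{ass:errorsmall}. The remaining term is $\P_{X\sim\cD}(X\in\bar\cA_{\gamma-\zeta})=\frac1n\sum_{i=1}^n\mathbf{1}(x_i\in\bar\cA_{\gamma-\zeta})$; since $x_1,\dots,x_n$ are i.i.d.\ from $\cP_x$, the summands are i.i.d.\ Bernoulli with mean $\P_{x\sim\cP_x}(x\in\bar\cA_{\gamma-\zeta})$, so the weak law of large numbers gives $\P_{X\sim\cD}(X\in\bar\cA_{\gamma-\zeta})\overset{p}{\rightarrow}\P_{x\sim\cP_x}(x\in\bar\cA_{\gamma-\zeta})$. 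Adding the deterministically vanishing correction sequence (Slutsky) preserves convergence in probability to the same limit, which is the second claim.

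There is essentially no substantive obstacle here: the lemma reduces to a one-line deterministic rearrangement fed into the already-established Lemma \ref{lem:closeness}, followed by a textbook law-of-large-numbers argument. The only points needing care are applying the event inclusion in the direction that makes the bound go the right way, and invoking the hypothesis that $\gamma,\zeta$ are fixed so that the prefactor $e^{\gamma+\zeta}\zeta^{-2}$ can be pulled outside the limit.
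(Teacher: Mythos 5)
Your proposal is correct and follows essentially the same route as the paper: lower bound $n_\gamma/n$ via the intersection $\cA_\gamma\cap\bar\cA_{\gamma-\zeta}$, control the discarded piece $\P_{X\sim\cD}(|f_0(X)|\le\gamma-\zeta,\,|f_\theta(X)|>\gamma)$ with Lemma \ref{lem:closeness}, and then combine $\epsilon(n)\to 0$ with the law of large numbers for the Binomial count $|\bar\cA_{\gamma-\zeta}\cap\cD|$ to get the convergence in probability. No gaps.
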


\begin{proof}[Proof of Lemma \ref{lem:ngamma}]
    If Assumption \ref{ass:errorsmall} holds with $\epsilon(n)$, it holds that
\begin{equation}
\begin{split}
    &\frac{n_\gamma}{n}=\frac{|\cA_\gamma|}{n}=\frac{\text{number of features in }\cA_\gamma}{n}\\\geq&\frac{\text{number of features in }\cA_\gamma\cap\bar{\cA}_{\gamma-\zeta}}{n}\\=&\frac{\text{number of features in }\bar{\cA}_{\gamma-\zeta}}{n}-\frac{\text{number of features in }\bar{\cA}_{\gamma-\zeta}\cap\cA_\gamma^\complement}{n}
    \\=&\frac{\left|\bar{\cA}_{\gamma-\zeta}\cap\cD\right|}{n}-\P_{x\sim\cD}\left(|f_0(x)|\leq \gamma-\zeta,\;|f_\theta(x)|>\gamma\right)
    \\\geq&\frac{\left|\bar{\cA}_{\gamma-\zeta}\cap\cD\right|}{n}-O\left(\epsilon(n)\cdot e^{\gamma+\zeta}\cdot \zeta^{-2}\right)\\=&\P_{X\sim\cD}\left(X\in\bar{\cA}_{\gamma-\zeta}\right)-O\left(\epsilon(n)\cdot e^{\gamma+\zeta}\cdot \zeta^{-2}\right),
\end{split}
\end{equation}
where the last inequality holds because of Lemma \ref{lem:closeness}. 

According to Assumption \ref{ass:errorsmall}, $\lim_{n\rightarrow\infty}\epsilon(n)=0$. Therefore,  $\lim_{n\rightarrow\infty}O\left(\epsilon(n)\cdot e^{\gamma+\zeta}\cdot \zeta^{-2}\right)=0$. Meanwhile, note that $\left|\bar{\cA}_{\gamma-\zeta}\cap\cD\right|$ follows the Binomial distribution $\text{Binomial}\left(n,\P_{x\sim\cP_x}\left(x\in\bar{\cA}_{\gamma-\zeta}\right)\right)$, which implies that $\left|\bar{\cA}_{\gamma-\zeta}\cap\cD\right|/n$ will converge (in probability) to $\P_{x\sim\cP_x}\left(x\in\bar{\cA}_{\gamma-\zeta}\right)$ as $n\rightarrow\infty$. Combining the results, the proof for the last conclusion is complete. 
\end{proof}

\noindent\textbf{Lower bound of $h_\gamma(x)$.} Recall that $h_\gamma(x)=\min\left\{h^+_\gamma(x),h^-_\gamma(x)\right\}$ and
$$h^+_\gamma(x)=\P^2\left(X>x\;\big|\;X\in\cA_{\gamma}\right)\cdot\sqrt{1+\left(\E\left[X\;\big|\;X\in\cA_{\gamma},\;X>x\right]\right)^2}\cdot\E\left[X-x\;\bigg|X\in\cA_{\gamma},\;X>x\right],$$
$$
h^-_\gamma(x)=\P^2\left(X<x\;\big|\;X\in\cA_{\gamma}\right)\cdot\sqrt{1+\left(\E\left[X\;\big|\;X\in\cA_{\gamma},\;X<x\right]\right)^2}\cdot\E\left[x-X\;\bigg|X\in\cA_{\gamma},\;X<x\right],$$
where for both functions, $X$ is a random sample from the dataset under the uniform distribution. Similar to the analysis above, we derive a lower bound for $h_\gamma(x)$ which only depends on $f_0$.

\begin{lemma}\label{lem:hgamma}
    For any fixed constants $\gamma>\zeta>0$, if Assumption \ref{ass:errorsmall} holds with $\epsilon(n)$, then we have
    $h_\gamma(x)=\min\left\{h^+_\gamma(x),h^-_\gamma(x)\right\}\geq\min\left\{\bar{h}^+_{\gamma,\zeta}(x,n),\bar{h}^-_{\gamma,\zeta}(x,n)\right\}=\bar{h}_{\gamma,\zeta}(x,n)$, where 
    $$\bar{h}_{\gamma,\zeta}^+(x,n)=\left(\frac{\P_{X\sim\cD}(X>x,\;X\in\bar{\cA}_{\gamma-\zeta})-p(n)}{\P_{X\sim\cD}(X\in\bar{\cA}_{\gamma+\zeta})+p(n)}\right)^2\cdot\frac{\E_{X\sim\cD}\left[(X-x)\mathds{1}\left(X>x,\;X\in\bar{\cA}_{\gamma-\zeta}^{p(n)}\right)\right]}{\P_{X\sim\cD}\left[X>x,\;X\in\bar{\cA}_{\gamma+\zeta}\right]+p(n)},$$
    $$\bar{h}_{\gamma,\zeta}^-(x,n)=\left(\frac{\P_{X\sim\cD}(X<x,\;X\in\bar{\cA}_{\gamma-\zeta})-p(n)}{\P_{X\sim\cD}(X\in\bar{\cA}_{\gamma+\zeta})+p(n)}\right)^2\cdot\frac{\E_{X\sim\cD}\left[(x-X)\mathds{1}\left(X<x,\;X\in\bar{\cA}_{\gamma-\zeta}^{p(n)}\right)\right]}{\P_{X\sim\cD}\left[X<x,\;X\in\bar{\cA}_{\gamma+\zeta}\right]+p(n)}.$$
    For the two functions above, $p(n)$ is the $O\left(\epsilon(n)\cdot e^{\gamma+\zeta}\cdot \zeta^{-2}\right)$ term in Lemma \ref{lem:closeness}, while
    $$\bar{\cA}_\gamma^p:= \left\{x\in\bar{\cA}_\gamma\;\bigg |\; \min\left(\P_{X\sim\cD}(X\in\bar{\cA}_\gamma,\;X>x),\;\P_{X\sim\cD}(X\in\bar{\cA}_\gamma,\;X<x)\right)\geq p\right\}.$$
    Furthermore, when $n$ converges to infinity, $\bar{h}_{\gamma,\zeta}(x,n)$ will converge (in probability) to $\bar{h}_{\gamma,\zeta}(x)=\min\left\{\bar{h}^+_{\gamma,\zeta}(x),\bar{h}^-_{\gamma,\zeta}(x)\right\}$ where 
    $$\bar{h}_{\gamma,\zeta}^+(x)=\left(\frac{\P_{X\sim\cP_x}(X>x,\;X\in\bar{\cA}_{\gamma-\zeta})}{\P_{X\sim\cP_x}(X\in\bar{\cA}_{\gamma+\zeta})}\right)^2\cdot\frac{\E_{X\sim\cP_x}\left[(X-x)\mathds{1}\left(X>x,\;X\in\bar{\cA}_{\gamma-\zeta}\right)\right]}{\P_{X\sim\cP_x}\left[X>x,\;X\in\bar{\cA}_{\gamma+\zeta}\right]},$$
    $$\bar{h}_{\gamma,\zeta}^-(x)=\left(\frac{\P_{X\sim\cP_x}(X<x,\;X\in\bar{\cA}_{\gamma-\zeta})}{\P_{X\sim\cP_x}(X\in\bar{\cA}_{\gamma+\zeta})}\right)^2\cdot\frac{\E_{X\sim\cP_x}\left[(x-X)\mathds{1}\left(X<x,\;X\in\bar{\cA}_{\gamma-\zeta}\right)\right]}{\P_{X\sim\cP_x}\left[X<x,\;X\in\bar{\cA}_{\gamma+\zeta}\right]}.$$
\end{lemma}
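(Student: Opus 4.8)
The plan is to lower bound $h_\gamma^+(x)$ and $h_\gamma^-(x)$ separately, replacing the uncertain set $\cA_\gamma$ of the arbitrary learned function $f_\theta$ by the uncertain sets $\bar{\cA}_{\gamma-\zeta}\subseteq\bar{\cA}_{\gamma+\zeta}$ of $f_0$ at the cost of the additive slack $p(n)$ supplied by Lemma \ref{lem:closeness}, and then to let $n\to\infty$ for the asymptotic form. Throughout I write $\mu=\P_{X\sim\cD}$ for the uniform (empirical) measure on $\{x_i\}_{i=1}^n$, so that the conditional probabilities and conditional expectations defining $h_\gamma^\pm$ are ratios of $\mu$-integrals.

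First I would dispose of the middle factor via $\sqrt{1+t^2}\geq 1$ and write $h_\gamma^+(x)\geq (q/Q)^2\cdot(m/q)$, where $q=\mu(\{X>x\}\cap\cA_\gamma)$, $Q=\mu(\cA_\gamma)$ and $m=\E_\mu[(X-x)\mathds{1}(X>x,\,X\in\cA_\gamma)]$. Lemma \ref{lem:closeness} gives $\mu(\bar{\cA}_{\gamma-\zeta}\setminus\cA_\gamma)\leq p(n)$ and $\mu(\cA_\gamma\setminus\bar{\cA}_{\gamma+\zeta})\leq p(n)$ (these are precisely the two events appearing there), whence
\[
q\geq \mu(\{X>x\}\cap\bar{\cA}_{\gamma-\zeta})-p(n),\qquad q\leq \mu(\{X>x\}\cap\bar{\cA}_{\gamma+\zeta})+p(n),\qquad Q\leq \mu(\bar{\cA}_{\gamma+\zeta})+p(n).
\]
Feeding the first and third bounds into $(q/Q)^2$ and the second bound into $m/q$ reproduces exactly the squared prefactor and the denominator $\mu(\{X>x\}\cap\bar{\cA}_{\gamma+\zeta})+p(n)$ of $\bar h^+_{\gamma,\zeta}(x,n)$ (the inclusion $\bar{\cA}_{\gamma-\zeta}\subseteq\bar{\cA}_{\gamma+\zeta}$ keeps every quantity non-negative; in the degenerate regime $\mu(\{X>x\}\cap\bar{\cA}_{\gamma-\zeta})<p(n)$ one checks that a positive $\bar m$ already forces $\mu(\{X>x\}\cap\bar{\cA}_{\gamma-\zeta})\geq p(n)$, so $\bar h^+_{\gamma,\zeta}(x,n)=0$ and the inequality is trivial). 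The remaining --- and main --- step is to show $m\geq \bar m:=\E_\mu[(X-x)\mathds{1}(X>x,\,X\in\bar{\cA}_{\gamma-\zeta}^{p(n)})]$, which is where the truncation is essential: deleting the bad set $B:=\bar{\cA}_{\gamma-\zeta}\setminus\cA_\gamma$, of $\mu$-mass $\leq p(n)$, from $\bar{\cA}_{\gamma-\zeta}$ can lower $\E_\mu[(X-x)\mathds{1}(X>x,\,X\in\bar{\cA}_{\gamma-\zeta})]$ by at most the contribution of the largest-$(X-x)$ (equivalently largest-$X$) points carrying $p(n)$ mass --- an extremal/rearrangement observation --- and that top $p(n)$-mass of $\bar{\cA}_{\gamma-\zeta}$ is exactly what $\bar{\cA}_{\gamma-\zeta}^{p(n)}$ has removed. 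The mirror-image argument with $\{X<x\}$ gives $h_\gamma^-(x)\geq \bar h^-_{\gamma,\zeta}(x,n)$, and taking minima yields $h_\gamma(x)\geq \bar h_{\gamma,\zeta}(x,n)$.

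For the asymptotic claim I would use that $\epsilon(n)\to 0$ forces $p(n)\to 0$, and that --- since $x_1,\dots,x_n$ are i.i.d.\ from $\cP_x$ --- every empirical probability and every empirical expectation entering $\bar h^\pm_{\gamma,\zeta}(x,n)$ converges a.s.\ (hence in probability) to its $\cP_x$-counterpart by the law of large numbers. The one delicate point is the data- and $n$-dependent truncation $\bar{\cA}_{\gamma-\zeta}^{p(n)}$: sandwiching it between $\bar{\cA}_{\gamma-\zeta}^{p_0}$ and $\bar{\cA}_{\gamma-\zeta}$ for a fixed $p_0>0$, letting $n\to\infty$, and then $p_0\downarrow 0$ (together with convergence of the empirical $p_0$-quantiles of $\bar{\cA}_{\gamma-\zeta}$) pins down the limit; Slutsky's theorem then gives $\bar h_{\gamma,\zeta}(x,n)\overset{p}{\rightarrow}\bar h_{\gamma,\zeta}(x)$.

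I expect the main obstacle to be making the rearrangement step rigorous: controlling $\E_\mu[(X-x)\mathds{1}(X>x,\,X\in B)]$ by the truncated-tail contribution while coping with the discreteness of $\mu$ (ties at the $p(n)$-quantile, and the removed tail having $\mu$-mass only approximately $p(n)$), and keeping the bookkeeping of the degenerate regimes (empty $\bar{\cA}_{\gamma-\zeta}^{p(n)}$, small $\mu(\bar{\cA}_{\gamma-\zeta})$) clean enough that the stated inequality holds with no hidden constants.
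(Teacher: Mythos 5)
Your proposal is correct and follows essentially the same route as the paper: lower bound the three factors of $h_\gamma^{\pm}(x)$ separately (using $\sqrt{1+t^2}\geq 1$ for the middle one), trade $\cA_\gamma$ for $\bar{\cA}_{\gamma-\zeta}$ and $\bar{\cA}_{\gamma+\zeta}$ via the two $p(n)$-mass bounds from Lemma \ref{lem:closeness}, control the conditional-expectation numerator through the truncated set $\bar{\cA}_{\gamma-\zeta}^{p(n)}$, and obtain the asymptotic statement from $p(n)\to 0$ together with the law of large numbers. If anything, you are more explicit than the paper on the two delicate points --- the rearrangement argument behind $m\geq \E_\mu\bigl[(X-x)\mathds{1}\bigl(X>x,\,X\in\bar{\cA}_{\gamma-\zeta}^{p(n)}\bigr)\bigr]$ and the degenerate regime $\P_{X\sim\cD}(X>x,\;X\in\bar{\cA}_{\gamma-\zeta})<p(n)$, where you correctly observe that $\bar{h}^+_{\gamma,\zeta}(x,n)=0$ so the squared (possibly negative) prefactor causes no trouble --- both of which the paper dispatches with a one-line appeal to Lemma \ref{lem:closeness}.
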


\begin{proof}[Proof of Lemma \ref{lem:hgamma}]
    We first provide a lower bound for the function $h_\gamma^+(x)$ via lower bounding the three parts separately. For the term $\P_{X\sim\cD}(X>x\;|\;X\in\cA_\gamma)$, we connect the probability to the probability w.r.t $f_0$. Denote the R.H.S in Lemma \ref{lem:closeness}: $O\left(\epsilon(n)\cdot e^{\gamma+\zeta}\cdot \zeta^{-2}\right)$ by $p(n)$, it holds that 
\begin{equation}
\begin{split}
    &\P_{X\sim\cD}(X>x\;\big|\;X\in\cA_\gamma)=\frac{\P_{X\sim\cD}(X>x,\;X\in\cA_\gamma)}{\P_{X\sim\cD}(X\in\cA_\gamma)}\\\geq&
    \frac{\P_{X\sim\cD}(X>x,\;X\in\bar{\cA}_{\gamma-\zeta})-\P_{X\sim\cD}(X>x,\;X\in\bar{\cA}_{\gamma-\zeta},\;X\in\cA_\gamma^\complement)}{\P_{X\sim\cD}(X\in\bar{\cA}_{\gamma+\zeta})+\P_{X\sim\cD}(X\in\cA_\gamma,\;X\in\bar{\cA}_{\gamma+\zeta}^\complement)}\\\geq&
    \frac{\P_{X\sim\cD}(X>x,\;X\in\bar{\cA}_{\gamma-\zeta})-O\left(\epsilon(n)\cdot e^{\gamma+\zeta}\cdot \zeta^{-2}\right)}{\P_{X\sim\cD}(X\in\bar{\cA}_{\gamma+\zeta})+O\left(\epsilon(n)\cdot e^{\gamma+\zeta}\cdot \zeta^{-2}\right)}\\=&\frac{\P_{X\sim\cD}(X>x,\;X\in\bar{\cA}_{\gamma-\zeta})-p(n)}{\P_{X\sim\cD}(X\in\bar{\cA}_{\gamma+\zeta})+p(n)},
\end{split}
\end{equation}
where the last inequality results from applying Lemma \ref{lem:closeness} to both the numerator and denominator.

Meanwhile, it is straightforward that
$\sqrt{1+\left(\E\left[X\;\big|\;X\in\cA_{\gamma},\;X>x\right]\right)^2}\geq 1$.

To handle the last term $\E_{X\sim\cD}\left[X-x\;\bigg|\;X>x,\;X\in\cA_{\gamma}\right]$, we have the following:
\begin{equation}
\begin{split}
    \E_{X\sim\cD}\left[X-x\;\bigg|\;X>x,\;X\in\cA_{\gamma}\right]=&\frac{\E_{X\sim\cD}\left[(X-x)\mathds{1}\left(X>x,\;X\in\cA_{\gamma}\right)\right]}{\P_{X\sim\cD}\left[X>x,\;X\in\cA_{\gamma}\right]}\\\geq&
    \frac{\E_{X\sim\cD}\left[(X-x)\mathds{1}\left(X>x,\;X\in\cA_{\gamma}\right)\right]}{\P_{X\sim\cD}\left[X>x,\;X\in\bar{\cA}_{\gamma+\zeta}\right]+p(n)},
\end{split}    
\end{equation}

where the inequality holds because of Lemma \ref{lem:closeness} and our definition of $p(n)$. To deal with the numerator above, we define the truncated version of the uncertain set with respect to $f_0$, where the truncation here means removing a constant portion of features close to the boundary of the interval. For a constant $p>0$, we define the truncated uncertain set $\bar{\cA}_\gamma^p$ as:
\begin{equation}
    \bar{\cA}_\gamma^p:= \left\{x\in\bar{\cA}_\gamma\;\bigg |\; \min\left(\P_{X\sim\cD}(X\in\bar{\cA}_\gamma,\;X>x),\;\P_{X\sim\cD}(X\in\bar{\cA}_\gamma,\;X<x)\right)\geq p\right\}.
\end{equation}

With the definition above, we directly have
\begin{equation}
    \E_{X\sim\cD}\left[X-x\;\bigg|\;X>x,\;X\in\cA_{\gamma}\right]\geq
    \frac{\E_{X\sim\cD}\left[(X-x)\mathds{1}\left(X>x,\;X\in\bar{\cA}_{\gamma-\zeta}^{p(n)}\right)\right]}{\P_{X\sim\cD}\left[X>x,\;X\in\bar{\cA}_{\gamma+\zeta}\right]+p(n)},
\end{equation}

where the inequality holds because of the fact that $\P_{x\sim\cD}\left(|f_0(x)|\leq \gamma-\zeta,\;|f_\theta(x)|>\gamma\right)\leq p(n)$ (Lemma \ref{lem:closeness}).

Combining the three inequalities for the three parts, we have 
\begin{equation}
    h_\gamma^+(x)\geq \left(\frac{\P_{X\sim\cD}(X>x,\;X\in\bar{\cA}_{\gamma-\zeta})-p(n)}{\P_{X\sim\cD}(X\in\bar{\cA}_{\gamma+\zeta})+p(n)}\right)^2\cdot\frac{\E_{X\sim\cD}\left[(X-x)\mathds{1}\left(X>x,\;X\in\bar{\cA}_{\gamma-\zeta}^{p(n)}\right)\right]}{\P_{X\sim\cD}\left[X>x,\;X\in\bar{\cA}_{\gamma+\zeta}\right]+p(n)}=\bar{h}_{\gamma,\zeta}^+(x,n).
\end{equation}

In addition, according to Assumption \ref{ass:errorsmall}, $\lim_{n\rightarrow\infty}p(n)=0$. Therefore, as $n$ converges to infinity, the R.H.S will converge (in probability) to 
\begin{equation}
    \bar{h}_{\gamma,\zeta}^+(x,n)\overset{p}{\rightarrow}\left(\frac{\P_{X\sim\cP_x}(X>x,\;X\in\bar{\cA}_{\gamma-\zeta})}{\P_{X\sim\cP_x}(X\in\bar{\cA}_{\gamma+\zeta})}\right)^2\cdot\frac{\E_{X\sim\cP_x}\left[(X-x)\mathds{1}\left(X>x,\;X\in\bar{\cA}_{\gamma-\zeta}\right)\right]}{\P_{X\sim\cP_x}\left[X>x,\;X\in\bar{\cA}_{\gamma+\zeta}\right]}=\bar{h}_{\gamma,\zeta}^+(x).
\end{equation}

Similarly, we can also prove that
\begin{equation}
\P_{X\sim\cD}(X<x\;\big|\;X\in\cA_\gamma)\geq\frac{\P_{X\sim\cD}(X<x,\;X\in\bar{\cA}_{\gamma-\zeta})-p(n)}{\P_{X\sim\cD}(X\in\bar{\cA}_{\gamma+\zeta})+p(n)}.
\end{equation}
\begin{equation}
\sqrt{1+\left(\E\left[X\;\big|\;X\in\cA_{\gamma},\;X<x\right]\right)^2}\geq 1.
\end{equation}
\begin{equation}
    \E_{X\sim\cD}\left[x-X\;\bigg|\;X<x,\;X\in\cA_{\gamma}\right]\geq
    \frac{\E_{X\sim\cD}\left[(x-X)\mathds{1}\left(X<x,\;X\in\bar{\cA}_{\gamma-\zeta}^{p(n)}\right)\right]}{\P_{X\sim\cD}\left[X<x,\;X\in\bar{\cA}_{\gamma+\zeta}\right]+p(n)}.
\end{equation}

Therefore, combining the three inequalities above, we have
\begin{equation}
    h_\gamma^-(x)\geq \left(\frac{\P_{X\sim\cD}(X<x,\;X\in\bar{\cA}_{\gamma-\zeta})-p(n)}{\P_{X\sim\cD}(X\in\bar{\cA}_{\gamma+\zeta})+p(n)}\right)^2\cdot\frac{\E_{X\sim\cD}\left[(x-X)\mathds{1}\left(X<x,\;X\in\bar{\cA}_{\gamma-\zeta}^{p(n)}\right)\right]}{\P_{X\sim\cD}\left[X<x,\;X\in\bar{\cA}_{\gamma+\zeta}\right]+p(n)}=\bar{h}_{\gamma,\zeta}^-(x,n).
\end{equation}

In addition, as $n$ converges to infinity, the R.H.S will converge (in probability) to 
\begin{equation}
    \bar{h}_{\gamma,\zeta}^-(x,n)\overset{p}{\rightarrow}\left(\frac{\P_{X\sim\cP_x}(X<x,\;X\in\bar{\cA}_{\gamma-\zeta})}{\P_{X\sim\cP_x}(X\in\bar{\cA}_{\gamma+\zeta})}\right)^2\cdot\frac{\E_{X\sim\cP_x}\left[(x-X)\mathds{1}\left(X<x,\;X\in\bar{\cA}_{\gamma-\zeta}\right)\right]}{\P_{X\sim\cP_x}\left[X<x,\;X\in\bar{\cA}_{\gamma+\zeta}\right]}=\bar{h}_{\gamma,\zeta}^-(x).
\end{equation}

Finally, we have
\begin{equation}
    h_\gamma(x)=\min\left\{h^+_\gamma(x),h^-_\gamma(x)\right\}\geq\min\left\{\bar{h}^+_{\gamma,\zeta}(x,n),\bar{h}^-_{\gamma,\zeta}(x,n)\right\}=\bar{h}_{\gamma,\zeta}(x,n).
\end{equation}
Meanwhile, $\lim_{n\rightarrow\infty}\bar{h}_{\gamma,\zeta}(x,n)=\min\left\{\bar{h}^+_{\gamma,\zeta}(x),\bar{h}^-_{\gamma,\zeta}(x)\right\}=\bar{h}_{\gamma,\zeta}(x)$. The proof is complete.
\end{proof}

\noindent\textbf{Putting everything together.} Now we are ready to plug Lemma \ref{lem:ngamma} and Lemma \ref{lem:hgamma} into Theorem \ref{thm:bias}, which leads to a (weighted) TV bound that only depends on the ground-truth function $f_0$. 

\begin{theorem}[Complete version of Theorem \ref{thm:tvb}]\label{thm:tvb1}
    For a threshold $\gamma>0$ and probability $p>0$, define the uncertain set and the truncated uncertain set (w.r.t. $f_0$) as $\bar{\cA}_\gamma:=\left\{x\in[-x_{\max},x_{\max}]\;\bigg|\;|f_0(x)|\leq \gamma\right\}$ and $\bar{\cA}_\gamma^p:= \left\{x\in\bar{\cA}_\gamma\;\bigg |\; \min\left(\P_{X\sim\cD}(X\in\bar{\cA}_\gamma,\;X>x),\;\P_{X\sim\cD}(X\in\bar{\cA}_\gamma,\;X<x)\right)\geq p\right\}$. Define $\Gamma(\gamma)=\frac{e^{-\gamma}}{\left(1+e^{-\gamma}\right)^2}=\Omega(e^{-\gamma})$. Then for any function $f=f_\theta$ such that the training loss $\cL$ is twice differentiable at $\theta$ and any $\gamma>\zeta>0$, if Assumption \ref{ass:errorsmall} holds with $\epsilon(n)$, for some function $p(n)=O\left(\epsilon(n)\cdot e^{\gamma+\zeta}\cdot \zeta^{-2}\right)$,
    \begin{equation}\label{equ:tvb1}
        \Gamma(\gamma)\cdot\left(\P_{X\sim\cD}\left(X\in\bar{\cA}_{\gamma-\zeta}\right)-p(n)\right)\cdot\left(1+2\int_{-x_{\max}}^{x_{\max}} \left|f^{\prime\prime}(x)\right|\bar{h}_{\gamma,\zeta}(x,n)dx\right) \leq \lambda_{\max}(\nabla^2\loss(\theta))+2x_{\max}\cL(f),
    \end{equation}
    where $\bar{h}_{\gamma,\zeta}(x,n)=\min\left\{\bar{h}^+_{\gamma,\zeta}(x,n),\bar{h}^-_{\gamma,\zeta}(x,n)\right\}$ and
    $$\bar{h}_{\gamma,\zeta}^+(x,n)=\left(\frac{\P_{X\sim\cD}(X>x,\;X\in\bar{\cA}_{\gamma-\zeta})-p(n)}{\P_{X\sim\cD}(X\in\bar{\cA}_{\gamma+\zeta})+p(n)}\right)^2\cdot\frac{\E_{X\sim\cD}\left[(X-x)\mathds{1}\left(X>x,\;X\in\bar{\cA}_{\gamma-\zeta}^{p(n)}\right)\right]}{\P_{X\sim\cD}\left[X>x,\;X\in\bar{\cA}_{\gamma+\zeta}\right]+p(n)},$$
    $$\bar{h}_{\gamma,\zeta}^-(x,n)=\left(\frac{\P_{X\sim\cD}(X<x,\;X\in\bar{\cA}_{\gamma-\zeta})-p(n)}{\P_{X\sim\cD}(X\in\bar{\cA}_{\gamma+\zeta})+p(n)}\right)^2\cdot\frac{\E_{X\sim\cD}\left[(x-X)\mathds{1}\left(X<x,\;X\in\bar{\cA}_{\gamma-\zeta}^{p(n)}\right)\right]}{\P_{X\sim\cD}\left[X<x,\;X\in\bar{\cA}_{\gamma+\zeta}\right]+p(n)}.$$
    Furthermore, assume that the features $\{x_i\}_{i=1}^n$ are i.i.d. samples from some distribution $\cP_x$, then as $n\rightarrow\infty$, the asymptotic total variation guarantee will be 
    \begin{equation}\label{equ:tvb2}
        \Gamma(\gamma)\cdot\P_{X\sim\cP_x}\left(X\in\bar{\cA}_{\gamma-\zeta}\right)\cdot\left(1+2\int_{-x_{\max}}^{x_{\max}} \left|f^{\prime\prime}(x)\right|\bar{h}_{\gamma,\zeta}(x)dx\right) \leq \lambda_{\max}(\nabla^2\loss(\theta))+2x_{\max}\cL(f),
    \end{equation}
    where $\bar{h}_{\gamma,\zeta}(x)=\min\left\{\bar{h}^+_{\gamma,\zeta}(x),\bar{h}^-_{\gamma,\zeta}(x)\right\}$ and
    $$\bar{h}_{\gamma,\zeta}^+(x)=\left(\frac{\P_{X\sim\cP_x}(X>x,\;X\in\bar{\cA}_{\gamma-\zeta})}{\P_{X\sim\cP_x}(X\in\bar{\cA}_{\gamma+\zeta})}\right)^2\cdot\frac{\E_{X\sim\cP_x}\left[(X-x)\mathds{1}\left(X>x,\;X\in\bar{\cA}_{\gamma-\zeta}\right)\right]}{\P_{X\sim\cP_x}\left[X>x,\;X\in\bar{\cA}_{\gamma+\zeta}\right]},$$
    $$\bar{h}_{\gamma,\zeta}^-(x)=\left(\frac{\P_{X\sim\cP_x}(X<x,\;X\in\bar{\cA}_{\gamma-\zeta})}{\P_{X\sim\cP_x}(X\in\bar{\cA}_{\gamma+\zeta})}\right)^2\cdot\frac{\E_{X\sim\cP_x}\left[(x-X)\mathds{1}\left(X<x,\;X\in\bar{\cA}_{\gamma-\zeta}\right)\right]}{\P_{X\sim\cP_x}\left[X<x,\;X\in\bar{\cA}_{\gamma+\zeta}\right]}.$$
    Moreover, if $f=f_\theta$ is a stable solution in $\cF(\eta,\cD)$, we can replace $\lambda_{\max}(\nabla^2\loss(\theta))$ in \eqref{equ:tvb1} and \eqref{equ:tvb2} by $\frac{2}{\eta}$.
\end{theorem}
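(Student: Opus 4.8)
The plan is to obtain~\eqref{equ:tvb1} by inserting the two closeness estimates (Lemma~\ref{lem:ngamma} for $n_\gamma/n$ and Lemma~\ref{lem:hgamma} for the weight function $h_\gamma$) into the raw implicit-bias inequality of Theorem~\ref{thm:bias}, and then to obtain the asymptotic version~\eqref{equ:tvb2} by letting $n\to\infty$ and using that the relevant empirical quantities converge to their population counterparts.

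First I would rewrite inequality~\eqref{equ:rebias} of Theorem~\ref{thm:bias} in the product form
\begin{equation*}
\frac{n_\gamma\,\Gamma(\gamma)}{n}\left(1+2\int_{-x_{\max}}^{x_{\max}}\left|f''(x)\right|h_\gamma(x)\,dx\right)\leq \lambda_{\max}(\nabla^2\loss(\theta))+2x_{\max}\cL(f),
\end{equation*}
valid for any $\gamma>0$ and any $f=f_\theta$ at which $\cL$ is twice differentiable. Since Assumption~\ref{ass:errorsmall} is in force, Lemma~\ref{lem:ngamma} gives $n_\gamma/n\geq \P_{X\sim\cD}(X\in\bar{\cA}_{\gamma-\zeta})-p(n)$ with $p(n)=O(\epsilon(n)e^{\gamma+\zeta}\zeta^{-2})$, and Lemma~\ref{lem:hgamma} gives the pointwise bound $h_\gamma(x)\geq \bar{h}_{\gamma,\zeta}(x,n)$ with $\bar{h}_{\gamma,\zeta}$ and $\bar{\cA}_\gamma^{p}$ exactly as recorded there. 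Because $|f''|$ is a nonnegative measure (for a finite-width ReLU net, $\int|f''|h=\sum_i|w_i^{(1)}w_i^{(2)}|\,h(\tau_i)$, a finite sum over knots), and because $1+2\int|f''|h_\gamma\geq 1>0$ while $\Gamma(\gamma)>0$, both lower bounds can be substituted monotonically into the left-hand side using the elementary fact that $ab\geq a'b'$ whenever $a\geq a'\geq 0$ and $b\geq b'\geq 0$; in the degenerate regime $\P_{X\sim\cD}(X\in\bar{\cA}_{\gamma-\zeta})\leq p(n)$ the claimed left-hand side is nonpositive and the bound is vacuous. This yields~\eqref{equ:tvb1}. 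The substitution of $\lambda_{\max}(\nabla^2\loss(\theta))$ by $2/\eta$ for $f_\theta\in\cF(\eta,\cD)$ is then immediate from the definition~\eqref{eq:masterset}.

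For the asymptotic guarantee~\eqref{equ:tvb2} I would invoke the ``furthermore'' parts of Lemmas~\ref{lem:ngamma} and~\ref{lem:hgamma}. As $n\to\infty$, Assumption~\ref{ass:errorsmall} forces $p(n)\to 0$; and since each event $\{X\in\bar{\cA}_{\gamma'}\}$, $\{X>x,\,X\in\bar{\cA}_{\gamma'}\}$, $\{X<x,\,X\in\bar{\cA}_{\gamma'}\}$ is a fixed set depending only on $f_0$, $x$ and $\gamma'$, the empirical probabilities and truncated expectations over $X\sim\cD$ converge in probability (law of large numbers / Glivenko--Cantelli) to their population analogues over $X\sim\cP_x$. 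Hence the left factor $\Gamma(\gamma)\big(\P_{X\sim\cD}(X\in\bar{\cA}_{\gamma-\zeta})-p(n)\big)$ converges to $\Gamma(\gamma)\,\P_{X\sim\cP_x}(X\in\bar{\cA}_{\gamma-\zeta})$ and $\bar{h}_{\gamma,\zeta}(x,n)\to\bar{h}_{\gamma,\zeta}(x)$ pointwise; as the integral against $|f''|$ is a finite sum over the (finitely many) knots of $f$, the limit passes through termwise and~\eqref{equ:tvb2} follows, with the $2/\eta$ replacement again immediate.

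The main obstacle here is bookkeeping rather than depth: one must verify that the two lower bounds are simultaneously inserted into a product of nonnegative factors without flipping the inequality, treat the vacuous case $\P_{X\sim\cD}(X\in\bar{\cA}_{\gamma-\zeta})\leq p(n)$ honestly, and, for~\eqref{equ:tvb2}, be explicit that the convergence is in probability for each fixed solution $f$ (which is what makes exchanging the $n\to\infty$ limit with the $|f''|$-integral harmless, since the integral is a finite sum) rather than a statement uniform over all of $\cF(\eta,\cD)$.
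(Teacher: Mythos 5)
Your proposal is correct and follows essentially the same route as the paper: the paper's own proof is precisely "plug Lemma \ref{lem:ngamma} and Lemma \ref{lem:hgamma} into Theorem \ref{thm:bias}" (in its product form), with the asymptotic statement \eqref{equ:tvb2} coming from the convergence-in-probability clauses of those two lemmas and the $2/\eta$ replacement from the definition \eqref{eq:masterset}. Your extra bookkeeping (nonnegativity of both factors, the vacuous case $\P_{X\sim\cD}(X\in\bar{\cA}_{\gamma-\zeta})\leq p(n)$, and the finite-sum structure of $\int|f''|\,\bar h$ when passing to the limit) is consistent with, and slightly more explicit than, the paper's argument.
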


\begin{proof}[Proof of Theorem \ref{thm:tvb1}]
    Directly plugging Lemma \ref{lem:ngamma} and Lemma \ref{lem:hgamma} into Theorem \ref{thm:bias}, the proof is complete.
\end{proof}

\subsection{Some discussions}\label{app:disparam}
\noindent\textbf{Trade-off between parameters.} In addition to the same tradeoff in the choice of $\gamma$ as in Theorem \ref{thm:bias}, there is also tradeoff in the choice of $\zeta$. Note that the parameter $\zeta<\gamma$ accounts for the discrepancy of the two uncertain sets $\bar{\cA}_{\gamma+\zeta}$ and $\bar{\cA}_{\gamma-\zeta}$. As the choice of $\zeta$ becomes smaller, both $\bar{\cA}_{\gamma+\zeta}$ and $\bar{\cA}_{\gamma-\zeta}$ will converge to $\bar{\cA}_{\gamma}$. As a result, $\P_{X\sim\cP_x}\left(X\in\bar{\cA}_{\gamma-\zeta}\right)$ will increase and converge to $\P_{X\sim\cP_x}\left(X\in\bar{\cA}_{\gamma}\right)$, while both the support and the value of the asymptotic weight function ($\bar{h}_{\gamma,\zeta}(x)$ in Theorem \ref{thm:tvb1}) will become larger. In other words, a smaller $\zeta$ will lead to a better asymptotic guarantee. However, this is not necessarily the case for finite $n$. Note that $p(n)$ scales as $\zeta^{-2}$, and small $\zeta$ could lead to larger $p(n)$, and thus do harm to the finite-$n$ TV bound \eqref{equ:tvbmain}. Moreover, although the asymptotic result for smaller $\zeta$ is better, the convergence rate could be worse due to the $p(n)$ term. In conclusion, it is important to select a $\zeta$ that balances the asymptotic result and the convergence speed to the asymptotic result.

\noindent\textbf{Comparison with nonparametric regression.} Different from the nonparametric regression setting \citep{qiao2024stable} where the smoothness constraint holds over the whole interior of the data support, in logistic regression the smoothness guarantee is restricted to the interior of the ``uncertain region''. For the boundary part of the interval with small randomness (\emph{i.e.} large $|f_0(x)|$ value), the smoothness constraint is weak since a large $\gamma$ is required to incorporate such region in $\bar{\cA}_\gamma$. Such result poses an interesting separation of logistic regression with (nearly) separable data and non-separable data. Previously, \citet{wu2024large} shows that GD with large step size could help with the convergence of logistic regression with (linearly) separable data. As a complement, our results show that large step size also helps with the generalization ability of the trained NN in the non-separable regime.

\section{Proof of Theorem \ref{thm:erbound}}\label{app:pferbound}

In this part, we prove Theorem \ref{thm:erbound}. Note that with the weighted TV(1) bound in Theorem \ref{thm:tvb}, we are ready to refine the excess risk bound in Section \ref{sec:gpbound1} over the region where $\widetilde{h}_{\gamma,\zeta}(x,n):=\Gamma(\gamma)\left(\P_{X\sim\cD}\left(X\in\bar{\cA}_{\gamma-\zeta}\right)-p(n)\right)\bar{h}_{\gamma,\zeta}(x,n)$ in Theorem \ref{thm:tvb} is lower bounded. Recall that the features $\{x_i\}_{i=1}^n$ are i.i.d. samples from some distribution $\cP_x$ supported on $[-x_{\max},x_{\max}]$.

\begin{theorem}[Restate Theorem \ref{thm:erbound}]\label{thm:reerbound}
    For any interval $\cI\subset[-x_{\max},x_{\max}]$ such that there exists $\gamma>\zeta>0$ satisfying that with probability at least $1-\frac{\delta}{2}$, $\widetilde{h}_{\gamma,\zeta}(x,n)\geq c,\;\forall\; x\in\cI$ for some constant $c>0$, for any stable solution $f=f_\theta$ in $\cF(\eta,\cD)$ such that the training loss $\cL$ is twice differentiable at $\theta$ and $\|f\|_\infty\leq B$, if $f$ is ``optimized'' over $\cI$, i.e., $\sum_{x_i\in\cI}\ell(f,(x_i,y_i))\leq\sum_{x_i\in\cI}\ell(f_0,(x_i,y_i))$ and $f_0\in\mathrm{BV}^{(1)}\left(B,\frac{1}{c}\left(\frac{1}{\eta}+x_{\max}(1+B)\right)\right)$, then with probability $1-\delta$,
    \begin{equation}
        \mathrm{ExcessRisk}_{\cI}(f)=\bar{\cL}_{\cI}(f)-\bar{\cL}_{\cI}(f_0)\leq O\left(\left[\frac{(B+1)^4 \left(\frac{x_{\max}}{\eta}+x_{\max}^2(1+B)\right)\log(1/\delta)^2}{n_{\cI}^2}\right]^{\frac{1}{5}}\right),
    \end{equation}
    where $\bar{\cL}_{\cI}(f)=\frac{1}{n_{\cI}}\sum_{x_i\in\cI}\E_{y_i^\prime\sim \cB(x_i)}\ell(f,(x_i,y_i^\prime))$, $n_{\cI}$ is the number of features $x_i$ in $\cI$ and the $O(\cdot)$ also absorbs the constant $\frac{1}{c}$ and $\frac{1}{|\cI|}$.
\end{theorem}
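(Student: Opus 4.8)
The plan is to reduce the statement to two ingredients that are already in hand. First, Theorem~\ref{thm:tvb}, localized to $\cI$ through the hypothesis $\widetilde h_{\gamma,\zeta}(x,n)\ge c$, pins down $\TV^{(1)}$ of $f$ restricted to $\cI$ and thereby places $f|_\cI$ inside a fixed $\mathrm{BV}^{(1)}$ ball; this is exactly what rules out the flat-but-interpolating solutions on $\cI$. Second, the covering-number generalization-gap argument of Lemma~\ref{lem:singlegengap} (with the logistic-loss constants $L_1=1$, $L_2(t)=\log(1+e^t)$), applied to that $\mathrm{BV}^{(1)}$ ball with the $n_\cI$ design points falling in $\cI$, converts the ``optimized over $\cI$'' hypothesis into the near-optimal rate. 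Throughout I would condition on the draw $x_1,\dots,x_n$ and treat the label randomness separately.

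\emph{Step 1: a $\TV^{(1)}$ bound restricted to $\cI$.} Work on the event $E_1$ (probability $\ge 1-\delta/2$ over $x_1,\dots,x_n$) that $\widetilde h_{\gamma,\zeta}(x,n)\ge c$ for all $x\in\cI$; since $\Gamma(\gamma)>0$ and $\bar h_{\gamma,\zeta}\ge 0$, this forces $\P_{X\sim\cD}(X\in\bar\cA_{\gamma-\zeta})-p(n)>0$. From $\|f\|_\infty\le B$ we get $\cL(f)\le\log(1+e^B)\le B+1$. Plugging $\lambda_{\max}(\nabla^2\cL(\theta))\le 2/\eta$ (valid since $f\in\cF(\eta,\cD)$) into the inequality \eqref{equ:tvbmain} of Theorem~\ref{thm:tvb}, using $\bar h_{\gamma,\zeta}\ge 0$ and $\cI\subseteq[-x_{\max},x_{\max}]$ to lower-bound its left side by $2\int_\cI|f''(x)|\,\widetilde h_{\gamma,\zeta}(x,n)\,dx\ge 2c\int_\cI|f''(x)|\,dx$, and upper-bounding its right side by $\tfrac2\eta+2x_{\max}(B+1)$,
\[
\int_\cI|f''(x)|\,dx\ \le\ \frac1c\Big(\frac1\eta+x_{\max}(1+B)\Big)\ =:\ C .
\]
Hence $f|_\cI\in\mathrm{BV}^{(1)}(B,C)$ on $\cI$; a bound on $|f'|$ over $\cI$ (needed for the auxiliary $|f(0)|,|f'(0)|$ constraints in Lemma~\ref{lem:singlegengap}) follows from $\|f\|_\infty\le B$, $\int_\cI|f''|\le C$ and the mean value theorem, after translating the origin into $\cI$. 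By hypothesis $f_0\in\mathrm{BV}^{(1)}(B,C)$ globally, so $f_0|_\cI$ lies in the same ball.

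\emph{Step 2: generalization gap on $\cI$.} Writing $\cL_\cI(g)=\tfrac1{n_\cI}\sum_{x_i\in\cI}\ell(g,(x_i,y_i))$, decompose
\[
\mathrm{ExcessRisk}_\cI(f)=\big[\bar\cL_\cI(f)-\cL_\cI(f)\big]+\big[\cL_\cI(f)-\cL_\cI(f_0)\big]+\big[\cL_\cI(f_0)-\bar\cL_\cI(f_0)\big],
\]
in which the middle bracket is $\le 0$ by the ``optimized over $\cI$'' hypothesis. Conditional on $\{x_i\}$, the labels $\{y_i:x_i\in\cI\}$ are independent, and $\mathrm{BV}^{(1)}(B,C)$ on $\cI$ has $\ell_\infty$ metric entropy $O(\sqrt{Cx_{\max}/\epsilon})$ up to additive lower-order (logarithmic, $B$-dependent) terms by Lemma~\ref{lem:entropy3}; hence Lemma~\ref{lem:singlegengap} with $L_1=1$, $L_2(B)=\log(1+e^B)\le B+1$, $C_2=O(C)$ and sample size $n_\cI$ gives, with probability $\ge 1-\delta/2$ over the labels and uniformly over $g$ in the ball,
\[
\big|\cL_\cI(g)-\bar\cL_\cI(g)\big|\ \le\ O\!\left(\Big[\tfrac{(B+1)^4\,C\,x_{\max}\,\log(1/\delta)^2}{n_\cI^{\,2}}\Big]^{1/5}\right).
\]
Applying this to $g=f$ (legitimate on $E_1$, where $f|_\cI$ is in the ball) and to $g=f_0$, summing the three brackets, substituting $C=\tfrac1c(\tfrac1\eta+x_{\max}(1+B))$ so that $Cx_{\max}=\tfrac1c(\tfrac{x_{\max}}{\eta}+x_{\max}^2(1+B))$, absorbing $\tfrac1c$ and $\tfrac1{|\cI|}$ into $O(\cdot)$, and union-bounding $E_1$ with the two label events (total failure probability $\le\delta$) would yield the claimed bound.

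\emph{Main obstacle.} The genuinely hard step — turning parameter-space flatness into a prediction-\emph{independent} weighted $\TV^{(1)}$ bound — is already discharged by Theorem~\ref{thm:tvb}, so the remaining work is mostly bookkeeping. The two points that will need care are (a) keeping the $x$-randomness (which governs $E_1$ and the localization $\widetilde h_{\gamma,\zeta}\ge c$) cleanly separated from the $y$-randomness (which governs the empirical-process term) and chaining the two union bounds correctly; and (b) verifying that $f|_\cI$ genuinely lands in the $\mathrm{BV}^{(1)}$ ball — including the auxiliary $|f(0)|,|f'(0)|$ constraints that enter the covering number — so that the uniform deviation bound legitimately applies to $f$ itself rather than only to a fixed comparator.
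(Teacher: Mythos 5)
Your proposal is correct and follows essentially the same route as the paper: localize the weighted TV$^{(1)}$ bound of Theorem \ref{thm:tvb} on the event $\widetilde{h}_{\gamma,\zeta}\geq c$ to place both $f|_\cI$ and $f_0|_\cI$ in the same BV$^{(1)}$ ball, then run the covering-number/Hoeffding uniform deviation bound over the $n_\cI$ points in $\cI$ and combine with the ``optimized'' hypothesis via the three-term decomposition and a union bound. The midpoint-translation and $|f'|$-control step you flag is exactly what the paper carries out in Lemma \ref{lem:metricentropy} before invoking Lemma \ref{lem:entropy3}.
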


 We prove the theorem based on the high probability event that $\widetilde{h}_{\gamma,\zeta}(x,n)\geq c$ for any $x\in\cI$. In this way, it holds that 
 \begin{equation}
     \begin{split}
         &\Gamma(\gamma)\cdot\left(\P_{X\sim\cD}\left(X\in\bar{\cA}_{\gamma-\zeta}\right)-p(n)\right)\cdot\left(1+2\int_{-x_{\max}}^{x_{\max}} \left|f^{\prime\prime}(x)\right|\bar{h}_{\gamma,\zeta}(x,n)dx\right)\\\geq&
         2\int_{-x_{\max}}^{x_{\max}} \left|f^{\prime\prime}(x)\right|\widetilde{h}_{\gamma,\zeta}(x,n)dx\geq 2c\int_{\cI} \left|f^{\prime\prime}(x)\right|dx.
     \end{split}
 \end{equation}

Meanwhile, since $f=f_\theta$ is a stable solution in $\cF(\eta,\cD)$, $\lambda_{\max}(\nabla^2\loss(\theta))\leq\frac{2}{\eta}$. Note that $\|f\|_\infty\leq B$, which implies that $2x_{\max}\cL(f)\leq 2x_{\max}(1+B)$.

Combining the results with Theorem \ref{thm:tvb}, we have
\begin{equation}
    \int_{\cI} \left|f^{\prime\prime}(x)\right|dx\leq \frac{1}{c}\left(\frac{1}{\eta}+x_{\max}(1+B)\right).
\end{equation}

Define the set $\mathbb{T}_3$ as $\mathbb{T}_3:=\left\{f:\cI\rightarrow\R\;\bigg|\;\int_{\cI} \left|f^{\prime\prime}(x)\right|dx\leq \frac{1}{c}\left(\frac{1}{\eta}+x_{\max}(1+B)\right),\;\|f\|_\infty\leq B\right\}$. Then we have both $f_\theta,f_0\in\mathbb{T}_3$. Below we analyze the metric entropy of $\mathbb{T}_3$.

\begin{lemma}\label{lem:metricentropy}
    Assume the metric is $\ell_\infty$ distance $\|\cdot\|_\infty$, then the metric entropy of $(\mathbb{T}_3,\|\cdot\|_\infty)$ satisfies 
    \begin{equation}
        \log N(\epsilon,\mathbb{T}_3,\|\cdot\|_\infty)\leq O\left(\sqrt{\frac{\frac{x_{\max}}{\eta}+x_{\max}^2(1+B)}{\epsilon}}\right),
    \end{equation}
    where the $O(\cdot)$ also absorbs the constant $c$ and $\frac{1}{|\cI|}$.
\end{lemma}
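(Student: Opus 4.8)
## Proof Plan for Lemma \ref{lem:metricentropy}

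The plan is to reduce the metric entropy of $\mathbb{T}_3$ to the already-established bounds on BV$^{(1)}$ function classes, exactly as was done for $\mathbb{T}$ in Lemma \ref{lem:entropy3}. The set $\mathbb{T}_3$ consists of functions on the interval $\cI$ with $\TV^{(1)}$ norm bounded by $C := \frac{1}{c}\left(\frac{1}{\eta}+x_{\max}(1+B)\right)$ and sup-norm bounded by $B$. First I would decompose any $f \in \mathbb{T}_3$ about a fixed anchor point $x_0 \in \cI$ (say the left endpoint of $\cI$) as $f(x) = f(x_0) + f'(x_0)(x-x_0) + g(x)$, where $g(x_0)=g'(x_0)=0$ and $g'' = f''$. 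The constant term $|f(x_0)| \leq B$ contributes a covering number of order $B/\epsilon$; the linear term has slope $|f'(x_0)| \leq \int_{\cI}|f''| + \frac{2B}{|\cI|}$ (since $f$ is bounded by $B$ on an interval of length $|\cI|$, its derivative at any point is controlled by the total variation of $f'$ plus a term of order $B/|\cI|$), so this contributes a covering number of order $(C + B/|\cI|)|\cI|/\epsilon$. The residual $g$ lies in the class $\mathbb{T}_2$ from Lemma \ref{lem:entropy2} with constant $C$ over an interval of length $|\cI| \leq 2x_{\max}$, so by Lemma \ref{lem:entropy2} its metric entropy is at most $C_1\sqrt{C|\cI|/\epsilon} \leq C_1\sqrt{2 C x_{\max}/\epsilon}$.

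Then I would combine the three parts: covering each to accuracy $\epsilon/3$ and multiplying the covering numbers, the total log covering number is at most
\begin{equation}
\log N(\epsilon, \mathbb{T}_3, \|\cdot\|_\infty) \leq O\!\left(\log\frac{B + (C + B/|\cI|)|\cI|}{\epsilon}\right) + C_1\sqrt{\frac{2 C x_{\max}}{\epsilon}}.
\end{equation}
Since $C = \frac{1}{c}\left(\frac{1}{\eta}+x_{\max}(1+B)\right)$, we have $C x_{\max} = \Theta\!\left(\frac{x_{\max}}{\eta} + x_{\max}^2(1+B)\right)$ up to the absorbed constant $c$, and in the relevant regime (small $\epsilon$, and the argument of the polynomial term $\gtrsim 1$) the square-root term dominates the logarithmic term. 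This yields
\begin{equation}
\log N(\epsilon, \mathbb{T}_3, \|\cdot\|_\infty) \leq O\!\left(\sqrt{\frac{\frac{x_{\max}}{\eta}+x_{\max}^2(1+B)}{\epsilon}}\right),
\end{equation}
with the $O(\cdot)$ absorbing $c$ and $1/|\cI|$ as claimed.

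I do not anticipate a genuine obstacle here — the lemma is essentially a restatement of Lemma \ref{lem:entropy3} with $C_2$ replaced by the explicit $\TV^{(1)}$ bound coming out of Theorem \ref{thm:tvb} and Lemma \ref{lem:entropy2} applied on $\cI$ rather than $[-x_{\max},x_{\max}]$. The only point requiring a little care is bounding $|f'(x_0)|$ at the anchor: one cannot simply quote $|f'(0)| \leq \frac{\|\theta\|_2^2}{2}$ as in Lemma \ref{lem:l2normtv} because we only have the $\TV^{(1)}$ and $\|\cdot\|_\infty$ control on $\cI$, not the parametric form. Instead I would use the mean value theorem: there exists a point $\xi \in \cI$ with $|f'(\xi)| \leq 2B/|\cI|$, and then $|f'(x_0)| \leq |f'(\xi)| + \int_{\cI}|f''(x)|\,dx \leq 2B/|\cI| + C$, which is exactly why the $1/|\cI|$ factor appears inside the absorbed constant. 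Everything downstream (plugging into Lemma \ref{lem:singlegengap}-style chaining to get the excess risk bound in Theorem \ref{thm:erbound}) then proceeds as in Appendix \ref{app:pfgpbound1}.
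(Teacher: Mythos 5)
Your proposal is correct and takes essentially the same route as the paper: both control the anchored values via $|f(\text{anchor})|\leq B$ and $|f'(\text{anchor})|\leq 2B/|\cI| + \frac{1}{c}\left(\frac{1}{\eta}+x_{\max}(1+B)\right)$, and then reduce to the BV$^{(1)}$ entropy bounds of Lemmas \ref{lem:entropy2}--\ref{lem:entropy3}, with the square-root term dominating and $c$, $1/|\cI|$ absorbed. The only cosmetic differences are that you anchor at the left endpoint and use a mean-value argument for the derivative bound (the paper anchors at the midpoint and argues via monotonicity), and you inline the constant/linear/residual decomposition of Lemma \ref{lem:entropy3} directly on $\cI$, whereas the paper extends functions from $\cI$ to a full-length interval and cites Lemma \ref{lem:entropy3} wholesale — a step you would implicitly need too, since Lemma \ref{lem:entropy2} is stated for the symmetric interval anchored at $0$, but this translation/extension is routine.
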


\begin{proof}[Proof of Lemma \ref{lem:metricentropy}]
    Assume the interval $\cI$ is $\cI=[x_{left},x_{right}]$ and $\widetilde{x}=(x_{left}+x_{right})/2$ is the middle point of $\cI$. Then for any $f\in\mathbb{T}_3$, it is obvious that $f(\widetilde{x})\leq B\leq x_{\max}(1+B)$.

    Meanwhile, if $\left|f^\prime(\widetilde{x})\right|\geq\frac{1}{c}\left(\frac{1}{\eta}+x_{\max}(1+B)\right)$, then $f$ is monotonic over the interval $\cI$ and $\left|f^\prime(x)\right|\geq\left|f^\prime(\widetilde{x})\right|-\frac{1}{c}\left(\frac{1}{\eta}+x_{\max}(1+B)\right)$ for any $x\in\cI$. As a result,
    \begin{equation}
        2B\geq \left|f(x_{right})-f(x_{left})\right|\geq |\cI|\cdot\left(\left|f^\prime(\widetilde{x})\right|-\frac{1}{c}\left(\frac{1}{\eta}+x_{\max}(1+B)\right)\right).
    \end{equation}
    Solving the inequality above, we have
    \begin{equation}
        \left|f^\prime(\widetilde{x})\right|\leq \frac{2B}{|\cI|}+\frac{1}{c}\left(\frac{1}{\eta}+x_{\max}(1+B)\right).
    \end{equation}

    Let $C_3=\frac{2B}{|\cI|}+\frac{1}{c}\left(\frac{1}{\eta}+x_{\max}(1+B)\right)$ and $\mathbb{T}_4=\left\{f:\cI\rightarrow\mathbb{R}\;\bigg|\;|f(\widetilde{x})|\leq C_3,\;|f^\prime(\widetilde{x})|\leq C_3,\;\int_{\cI}|f^{\prime\prime}(x)|dx\leq C_3\right\}$. Then according to the analysis above, we have $f\in\mathbb{T}_4$. As a result, $\mathbb{T}_3\subset\mathbb{T}_4$, which further implies that $\log N(\epsilon,\mathbb{T}_3,\|\cdot\|_\infty)\leq \log N(\epsilon,\mathbb{T}_4,\|\cdot\|_\infty)$.

Define $\mathbb{T}_5=\left\{f:[\widetilde{x}-x_{\max},\widetilde{x}+x_{\max}]\rightarrow\mathbb{R}\;\bigg|\;|f(\widetilde{x})|\leq C_3,\;|f^\prime(\widetilde{x})|\leq C_3,\;\int_{\widetilde{x}-x_{\max}}^{\widetilde{x}+x_{\max}}|f^{\prime\prime}(x)|dx\leq C_3\right\}$. Note that for any $f\in\mathbb{T}_4$, the linear extension of $f$ to $[\widetilde{x}-x_{\max},\widetilde{x}+x_{\max}]$ belongs to $\mathbb{T}_5$. Therefore, we have $\log N(\epsilon,\mathbb{T}_4,\|\cdot\|_\infty)\leq \log N(\epsilon,\mathbb{T}_5,\|\cdot\|_\infty)$. At the same time, the metric entropy of $\mathbb{T}_5$ is the same as the set $\left\{f:[-x_{\max},x_{\max}]\rightarrow\mathbb{R}\;\bigg|\;|f(0)|\leq C_3,\;|f^\prime(0)|\leq C_3,\;\int_{-x_{\max}}^{x_{\max}}|f^{\prime\prime}(x)|dx\leq C_3\right\}$, which is a subset of the set $\mathbb{T}$ (in Lemma \ref{lem:entropy3}) with $C_2=C_3$. According to Lemma \ref{lem:entropy3}, it holds that
$\log N(\epsilon,\mathbb{T}_5,\|\cdot\|_\infty)\leq \log N(\epsilon,\mathbb{T},\|\cdot\|_\infty)\leq O\left(\sqrt{\frac{C_3 x_{\max}}{\epsilon}}\right)$.

Combining the results above, we finally have
\begin{equation}
\begin{split}
    &\log N(\epsilon,\mathbb{T}_3,\|\cdot\|_\infty)\leq \log N(\epsilon,\mathbb{T}_4,\|\cdot\|_\infty)\leq \log N(\epsilon,\mathbb{T}_5,\|\cdot\|_\infty)\\\leq& \log N(\epsilon,\mathbb{T},\|\cdot\|_\infty)\leq O\left(\sqrt{\frac{C_3 x_{\max}}{\epsilon}}\right) =O\left(\sqrt{\frac{\frac{x_{\max}}{\eta}+x_{\max}^2(1+B)}{\epsilon}}\right),
\end{split}
\end{equation}
where the last $O(\cdot)$ also absorbs the constant $c$ and $\frac{1}{|\cI|}$.  
\end{proof}

With the metric entropy upper bound of $\mathbb{T}_3$ above, we are ready to provide a high probability bound for the generalization gap over the data points in $\cI$. For a fixed dataset $\{(x_i,y_i)\}_{i=1}^n$ and a fixed interval $\cI$, define the empirical loss on $\cI$ as $\cL_{\cI}(f)=\frac{1}{n_{\cI}}\sum_{x_i\in\cI}\ell(f,(x_i,y_i))$, where $\ell$ is the logistic loss and $n_{\cI}=\sum_{i=1}^n \mathds{1}(x_i\in\cI)$ is the number of features in $\cI$. Meanwhile, the populational loss is $\bar{\cL}_{\cI}(f)=\frac{1}{n_{\cI}}\sum_{x_i\in\cI}\E_{y_i^\prime|x_i}\ell(f,(x_i,y_i^\prime))$, where $\cdot|x$ is the conditional distribution of the label given $x$. Below we derive a high probability bound for $\left|\cL_{\cI}(f)-\bar{\cL}_{\cI}(f)\right|$ over the function set $\mathbb{T}_3$.

\begin{lemma}\label{lem:gengaponi}
    Recall that $\mathbb{T}_3:=\left\{f:\cI\rightarrow\R\;\bigg|\;\int_{\cI} \left|f^{\prime\prime}(x)\right|dx\leq \frac{1}{c}\left(\frac{1}{\eta}+x_{\max}(1+B)\right),\;\|f\|_\infty\leq B\right\}$. If the loss $\ell$ is logistic loss, then with probability $1-\frac{\delta}{2}$, for any $f\in\mathbb{T}_3$,
    \begin{equation}
        \left|\cL_{\cI}(f)-\bar{\cL}_{\cI}(f)\right|\leq O\left(\left[\frac{(B+1)^4 \left(\frac{x_{\max}}{\eta}+x_{\max}^2(1+B)\right)\log(1/\delta)^2}{n_{\cI}^2}\right]^{\frac{1}{5}}\right),
    \end{equation}
    where the $O(\cdot)$ also absorbs the constant $c$ and $\frac{1}{|\cI|}$.
\end{lemma}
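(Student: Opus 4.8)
\textbf{Proof plan for Lemma \ref{lem:gengaponi}.} The plan is to repeat the argument of Lemma \ref{lem:singlegengap} verbatim, but with the metric entropy bound of Lemma \ref{lem:metricentropy} in place of Lemma \ref{lem:entropy3}, and with the empirical average over all $n$ points replaced by the average over the $n_{\cI}$ points falling in $\cI$. Throughout I condition on the design $\{x_i\}_{i=1}^n$, so that $n_{\cI}$ is deterministic and the only randomness left is in the labels $\{y_i\}$, which are mutually independent given the features; this is exactly the setting in which Hoeffding's inequality applies to the restricted sum $\cL_{\cI}(f)=\frac{1}{n_{\cI}}\sum_{x_i\in\cI}\ell(f,(x_i,y_i))$.

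First I would fix a resolution $\epsilon>0$ and invoke Lemma \ref{lem:metricentropy} to obtain an $\epsilon$-net $\{\bar f_j\}_{j=1}^N$ of $(\mathbb{T}_3,\|\cdot\|_\infty)$ with $\log N\leq O\big(\sqrt{(x_{\max}/\eta+x_{\max}^2(1+B))/\epsilon}\big)$, the $O(\cdot)$ absorbing $c$ and $1/|\cI|$. For each net element $\bar f_j$, since $\|\bar f_j\|_\infty\leq B$ the logistic loss obeys $|\ell(\bar f_j,(x_i,y_i))|\leq\log(1+e^B)\leq B+1$, so $\cL_{\cI}(\bar f_j)$ is an average of $n_{\cI}$ independent random variables bounded in absolute value by $B+1$. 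Hoeffding's inequality together with a union bound over the net then gives, with probability $1-\delta/2$, simultaneously for all $j$,
\begin{equation}
\left|\cL_{\cI}(\bar f_j)-\bar{\cL}_{\cI}(\bar f_j)\right|\leq (B+1)\sqrt{\frac{2\log(2N/\delta)}{n_{\cI}}}\leq O\!\left(\frac{(B+1)\big(\tfrac{x_{\max}}{\eta}+x_{\max}^2(1+B)\big)^{1/4}\log(1/\delta)^{1/2}}{n_{\cI}^{1/2}\,\epsilon^{1/4}}\right).
\end{equation}

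Next comes the discretization step: for an arbitrary $f\in\mathbb{T}_3$ pick $\bar f_j$ with $\|f-\bar f_j\|_\infty\leq\epsilon$; since the logistic loss is $1$-Lipschitz in its first argument, $|\cL_{\cI}(f)-\cL_{\cI}(\bar f_j)|\leq\epsilon$ and likewise $|\bar{\cL}_{\cI}(f)-\bar{\cL}_{\cI}(\bar f_j)|\leq\epsilon$, so on the above event $|\cL_{\cI}(f)-\bar{\cL}_{\cI}(f)|\leq O(\epsilon)+O\big((B+1)(x_{\max}/\eta+x_{\max}^2(1+B))^{1/4}\log(1/\delta)^{1/2}n_{\cI}^{-1/2}\epsilon^{-1/4}\big)$ for every $f\in\mathbb{T}_3$. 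Finally I would optimize over the free parameter $\epsilon$, balancing the two terms by choosing $\epsilon\asymp\big[(B+1)^4(x_{\max}/\eta+x_{\max}^2(1+B))\log(1/\delta)^2/n_{\cI}^2\big]^{1/5}$, which yields exactly the claimed rate $O\big(\big[(B+1)^4(\tfrac{x_{\max}}{\eta}+x_{\max}^2(1+B))\log(1/\delta)^2/n_{\cI}^2\big]^{1/5}\big)$. The argument is essentially routine once Lemma \ref{lem:metricentropy} is available; the only point requiring a little care is the bookkeeping that conditioning on the design keeps $n_{\cI}$ fixed and the labels independent, so that Hoeffding applies unchanged to the restricted average.
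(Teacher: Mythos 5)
Your proposal is correct and follows essentially the same route as the paper's proof: an $\epsilon$-net of $\mathbb{T}_3$ from Lemma \ref{lem:metricentropy}, Hoeffding plus a union bound over the net using the pointwise bound $|\ell(\bar f,(x,y))|\leq B+1$, the $1$-Lipschitzness of the logistic loss for the discretization step, and a final optimization over $\epsilon$. The explicit remark about conditioning on the design so that $n_{\cI}$ is fixed and the labels remain independent is a fair point of bookkeeping that the paper leaves implicit, but it does not change the argument.
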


\begin{proof}[Proof of Lemma \ref{lem:gengaponi}]
    For a fixed $\epsilon>0$, according to Lemma \ref{lem:metricentropy}, there exists an $\epsilon$-covering set of $\mathbb{T}_3$ (with respect to $\|\cdot\|_\infty$) whose cardinality $N$ satisfies that
    \begin{equation}
        \log N\leq O\left(\sqrt{\frac{\frac{x_{\max}}{\eta}+x_{\max}^2(1+B)}{\epsilon}}\right).
    \end{equation}
    For a fixed function $\bar{f}$ in the covering set, since the loss is logistic loss, for any possible $(x,y)$,
    $$\left|\ell(\bar{f},(x,y))\right|\leq \log\left(1+e^{|\bar{f}(x)|}\right)\leq B+1.$$    
    Meanwhile, note that the features $y_i$'s are sampled independently from the data distribution. According to Hoeffding's inequality, with probability $1-\delta$, it holds that
    \begin{equation}
        \left|\cL_{\cI}(\bar{f})-\bar{\cL}_{\cI}(\bar{f})\right|\leq( B+1)\cdot\sqrt{\frac{2\log(2/\delta)}{n_{\cI}}}.
    \end{equation}
    Together with a union bound over the covering set, we have with probability $1-\frac{\delta}{2}$, for all $\bar{f}$ in the covering set,
    \begin{equation}
        \begin{split}
            &\left|\cL_{\cI}(\bar{f})-\bar{\cL}_{\cI}(\bar{f})\right|\leq (B+1)\cdot\sqrt{\frac{2\log(4N/\delta)}{n_{\cI}}}\\\leq& O\left((B+1)\cdot\frac{\left(\frac{x_{\max}}{\eta}+x_{\max}^2(1+B)\right)^{\frac{1}{4}}\log(1/\delta)^{\frac{1}{2}}}{n_{\cI}^{\frac{1}{2}}\epsilon^{\frac{1}{4}}}\right).
        \end{split}
    \end{equation}
    Under such high probability event, for any $f\in\mathbb{T}_3$, let $\bar{f}$ be a function in the covering set such that $\|f-\bar{f}\|_\infty \leq\epsilon$. Then it holds that
    \begin{equation}
    \begin{split}
        &\left|\cL_{\cI}(f)-\bar{\cL}_{\cI}(f)\right|\\\leq&
        \left|\cL_{\cI}(\bar{f})-\bar{\cL}_{\cI}(\bar{f})\right|+O(\epsilon)\\\leq&
        O(\epsilon)+O\left((B+1)\cdot\frac{\left(\frac{x_{\max}}{\eta}+x_{\max}^2(1+B)\right)^{\frac{1}{4}}\log(1/\delta)^{\frac{1}{2}}}{n_{\cI}^{\frac{1}{2}}\epsilon^{\frac{1}{4}}}\right)\\\leq&
        O\left(\left[\frac{(B+1)^4 \left(\frac{x_{\max}}{\eta}+x_{\max}^2(1+B)\right)\log(1/\delta)^2}{n_{\cI}^2}\right]^{\frac{1}{5}}\right),
    \end{split}
    \end{equation}
    where the first inequality holds because $\left|\ell(f,(x,y))-\ell(\widetilde{f},(x,y))\right|\leq \left|f(x)-\widetilde{f}(x)\right|$. The last inequality results from selecting the $\epsilon$ that minimizes the objective.
\end{proof}

Finally, under the assumption that $f=f_\theta$ is ``optimized'' over the interval $\cI$, we can upper bound the excess risk over $\cI$.

\begin{lemma}\label{lem:excessriskoni}
    Under the high probability event in Lemma \ref{lem:gengaponi}, if $\cL_{\cI}(f)\leq \cL_{\cI}(f_0)$, then the excess risk over $\cI$ is bounded by
    \begin{equation}
        \mathrm{ExcessRisk}_{\cI}(f)=\bar{\cL}_{\cI}(f)-\bar{\cL}_{\cI}(f_0)\leq O\left(\left[\frac{(B+1)^4 \left(\frac{x_{\max}}{\eta}+x_{\max}^2(1+B)\right)\log(1/\delta)^2}{n_{\cI}^2}\right]^{\frac{1}{5}}\right),
    \end{equation}
    where $n_{\cI}$ is the number of features in $\cI$ and the $O(\cdot)$ also absorbs the constant $c$ and $\frac{1}{|\cI|}$.
\end{lemma}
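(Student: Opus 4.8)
The plan is to run the standard empirical-risk-minimization decomposition, exploiting that the uniform generalization control has already been established in Lemma~\ref{lem:gengaponi}. The one fact I need up front — recorded in the proof of Theorem~\ref{thm:reerbound} just before this lemma — is that \emph{both} the learned function $f=f_\theta$ and the ground truth $f_0$ belong to the class $\mathbb{T}_3=\{g:\cI\to\R \mid \int_\cI |g''(x)|\,dx \le \tfrac{1}{c}(\tfrac{1}{\eta}+x_{\max}(1+B)),\ \|g\|_\infty \le B\}$. The membership $f_\theta\in\mathbb{T}_3$ follows from the weighted $\TV^{(1)}$ bound of Theorem~\ref{thm:tvb} together with the hypothesis $\widetilde h_{\gamma,\zeta}(x,n)\ge c$ on $\cI$, the stability fact $\lambda_{\max}(\nabla^2\cL(\theta))\le 2/\eta$, and $2x_{\max}\cL(f)\le 2x_{\max}(1+B)$; the membership $f_0\in\mathbb{T}_3$ is immediate from $f_0\in\mathrm{BV}^{(1)}(B,\tfrac1c(\tfrac1\eta+x_{\max}(1+B)))$, since restricting $\TV^{(1)}$ to $\cI\subset[-x_{\max},x_{\max}]$ only decreases it and $\|f_0\|_\infty\le B$.

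Given this, I would write the exact identity
$$\bar{\cL}_{\cI}(f) - \bar{\cL}_{\cI}(f_0) = \bigl(\bar{\cL}_{\cI}(f) - \cL_{\cI}(f)\bigr) + \bigl(\cL_{\cI}(f) - \cL_{\cI}(f_0)\bigr) + \bigl(\cL_{\cI}(f_0) - \bar{\cL}_{\cI}(f_0)\bigr),$$
and bound the three pieces separately. The middle term is $\le 0$ by the ``optimized over $\cI$'' hypothesis $\cL_{\cI}(f)\le\cL_{\cI}(f_0)$. The first and third terms are each at most $\bigl|\cL_{\cI}(g)-\bar{\cL}_{\cI}(g)\bigr|$ for $g=f$ and $g=f_0$ respectively; since we condition on the high-probability event of Lemma~\ref{lem:gengaponi} and $f,f_0\in\mathbb{T}_3$, each of these is at most $O\bigl([(B+1)^4(\tfrac{x_{\max}}{\eta}+x_{\max}^2(1+B))\log(1/\delta)^2/n_\cI^2]^{1/5}\bigr)$. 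Adding the two identical contributions and folding the constant $2$ into $O(\cdot)$ yields exactly the claimed bound.

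There is essentially no obstacle in this last step — it is pure bookkeeping — precisely because all the substance was front-loaded: the weighted $\TV^{(1)}$ bounds (Theorems~\ref{thm:bias} and \ref{thm:tvb}) are what force $f_\theta$ into the small class $\mathbb{T}_3$ in the first place, and the metric-entropy estimate (Lemma~\ref{lem:metricentropy}) combined with Hoeffding and a covering-number union bound (Lemma~\ref{lem:gengaponi}) is what converts ``$f_\theta$ lies in a $\TV^{(1)}$-ball'' into a $\widetilde O(n_\cI^{-2/5})$ generalization rate. The only care required when this lemma is invoked inside Theorem~\ref{thm:reerbound} is the probability accounting: the Lemma~\ref{lem:gengaponi} event holds with probability $1-\delta/2$ and the event $\widetilde h_{\gamma,\zeta}(\cdot,n)\ge c$ on $\cI$ (needed to place $f_\theta\in\mathbb{T}_3$) with probability $1-\delta/2$, so a union bound gives the final $1-\delta$ guarantee, with $|\cI|$ and $c$ absorbed into the $O(\cdot)$ exactly as in Lemma~\ref{lem:metricentropy}.
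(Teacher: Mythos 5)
Your proposal is correct and matches the paper's own argument: the same three-term decomposition with the middle term killed by the ``optimized over $\cI$'' hypothesis and the two deviation terms controlled by Lemma \ref{lem:gengaponi} applied to $f,f_0\in\mathbb{T}_3$. Your extra remarks on establishing $f,f_0\in\mathbb{T}_3$ and on the $\delta/2+\delta/2$ probability accounting are exactly what the paper does in the surrounding proof of Theorem \ref{thm:erbound} rather than inside the lemma itself.
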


\begin{proof}[Proof of Lemma \ref{lem:excessriskoni}]
    According to direct calculation, we have
     \begin{equation}
        \begin{split}
            &\bar{\cL}_{\cI}(f)-\bar{\cL}_{\cI}(f_0)\leq \left|\bar{\cL}_{\cI}(f)-\cL_{\cI}(f)\right|+\cL_{\cI}(f)-\cL_{\cI}(f_0)+\left|\cL_{\cI}(f_0)-\bar{\cL}_{\cI}(f_0)\right|\\\leq&\left|\bar{\cL}_{\cI}(f)-\cL_{\cI}(f)\right|+0+\left|\cL_{\cI}(f_0)-\bar{\cL}_{\cI}(f_0)\right|\\\leq&
            O\left(\left[\frac{(B+1)^4 \left(\frac{x_{\max}}{\eta}+x_{\max}^2(1+B)\right)\log(1/\delta)^2}{n_{\cI}^2}\right]^{\frac{1}{5}}\right),
        \end{split}
    \end{equation}
    where the last inequality holds due to Lemma \ref{lem:gengaponi} and the fact that $f,f_0\in\mathbb{T}_3$.
\end{proof}

Note that the failure probability is at most $\delta$ ($\frac{\delta}{2}$ for $\widetilde{h}_{\gamma,\zeta}(x,n)\geq c$ and $\frac{\delta}{2}$ for Lemma \ref{lem:gengaponi}), combining Lemma \ref{lem:gengaponi} and Lemma \ref{lem:excessriskoni}, the proof of Theorem \ref{thm:erbound} is complete.

\end{document}